\newcommand{\colt}[1]{\iftoggle{coltformat}{#1}{}} 
\newcommand{\arxiv}[1]{\iftoggle{coltformat}{}{#1}}  
\newcommand{\Phistar}{\Phi^\star}
\newcommand{\aggCstar}{\aggC^\star}
\def\eqref#1{equation~\ref{#1}}
\def\1{\bm{1}}
\DeclareMathAlphabet{\mathsfit}{\encodingdefault}{\sfdefault}{m}{sl}
\SetMathAlphabet{\mathsfit}{bold}{\encodingdefault}{\sfdefault}{bx}{n}
\def\gA{{\mathcal{A}}}
\def\gD{{\mathcal{D}}}
\def\gF{{\mathcal{F}}}
\def\gG{{\mathcal{G}}}
\def\gI{{\mathcal{I}}}
\def\gJ{{\mathcal{J}}}
\def\gT{{\mathcal{T}}}
\def\gX{{\mathcal{X}}}
\def\gZ{{\mathcal{Z}}}
\newcommand{\E}{\mathbb{E}}
\DeclareMathOperator*{\argmax}{arg\,max}
\DeclareMathOperator*{\argmin}{arg\,min}
\theoremstyle{plain}
\newtheorem{property}{Property}
\newtheorem*{theorem*}{Theorem}
\xpatchcmd{\proof}{\itshape}{\normalfont\proofnameformat}{}{}
\newcommand{\proofnameformat}{\bfseries}
\newcommand{\pref}[1]{\prettyref{#1}}
\newcommand{\savehyperref}[2]{\texorpdfstring{\hyperref[#1]{#2}}{#2}}
\newcommand\numberthis{\addtocounter{equation}{1}\tag{\theequation}}
\DeclarePairedDelimiter{\abs}{\lvert}{\rvert} 
\DeclarePairedDelimiter{\brk}{[}{]}
\DeclarePairedDelimiter{\crl}{\{}{\}}
\DeclarePairedDelimiter{\prn}{(}{)}
\DeclarePairedDelimiter{\nrm}{\|}{\|}
\let\Pr\undefined
\DeclareMathOperator{\En}{\mathbb{E}}
\DeclareMathOperator{\Pr}{Pr}
\newcommand{\ldef}{\vcentcolon=}
\newcommand{\wt}[1]{\widetilde{#1}}
\newcommand{\wh}[1]{\widehat{#1}}
\newcommand{\mb}[1]{\boldsymbol{#1}}
\def\ddefloop#1{\ifx\ddefloop#1\else\ddef{#1}\expandafter\ddefloop\fi}
\def\ddef#1{\expandafter\def\csname bb#1\endcsname{\ensuremath{\mathbb{#1}}}}
\def\ddefloop#1{\ifx\ddefloop#1\else\ddef{#1}\expandafter\ddefloop\fi}
\def\ddef#1{\expandafter\def\csname b#1\endcsname{\ensuremath{\mathbf{#1}}}}
\def\ddef#1{\expandafter\def\csname c#1\endcsname{\ensuremath{\mathcal{#1}}}}
\def\ddef#1{\expandafter\def\csname h#1\endcsname{\ensuremath{\widehat{#1}}}}
\def\ddef#1{\expandafter\def\csname hc#1\endcsname{\ensuremath{\widehat{\mathcal{#1}}}}}
\def\ddef#1{\expandafter\def\csname t#1\endcsname{\ensuremath{\widetilde{#1}}}}
\def\ddef#1{\expandafter\def\csname tc#1\endcsname{\ensuremath{\widetilde{\mathcal{#1}}}}}
\newcommand{\sr}[1][n]{\sigma_{1:#1}}
\newcommand{\proman}[1]{\prn*{\romannumeral #1}}
\newcommand{\overleq}[1]{\overset{ #1}{\leq{}}}
\newcommand{\overgeq}[1]{\overset{#1}{\geq{}}}
\newcommand{\overeq}[1]{\overset{#1}{=}}
\newcommand{\algcomment}[1]{\textcolor{blue!70!black}{\footnotesize{\texttt{\textbf{//
          #1}}}}}
\newcommand{\algcommentbig}[1]{\textcolor{blue!70!black}{\footnotesize{\texttt{\textbf{/*
          #1~*/}}}}}
\newcommand{\CWremove}[1]{}
 \newcommand{\B}{\mathfrak{B}}
\newcommand{\pieval}{{\pi_{\mathrm{e}}}} 
\newcommand{\ripieval}{{\rich{\pi}_{\mathrm{e}}}}
\newcommand{\ripiexp}{{\rich{\pi}_{\mathrm{b}}}}
\newcommand{\aggpieval}{{\agg{\pi}_{\mathrm{e}}}} 
\newcommand{\pioff}{{\pi_{b}}} 
\newcommand{\piexp}{{\pi_{b}}}  
\newcommand{\piother}{{\pi_{c}}}  
\newcommand{\wtpieval}{{{\wt\pi}_{\mathrm{e}}}} 
\newcommand{\wtpiexp}{{{\wt \pi}_{b}}}
\newcommand{\Yhat}{\widehat{Y}} 
\newcommand{\Zhat}{\widehat{Z}} 
\newcommand{\That}{\widehat{\gT}}
\newcommand{\indic}{\mathbb{I}}
\newcommand{\calX}{\mathcal{X}}
\newcommand{\calA}{\mathcal{A}}
\newcommand{\calZ}{\mathcal{Z}}
\newcommand{\fout}{\widehat{f}}
\newcommand{\fstar}{f^{\star}}
\newcommand{\agg}[1]{\bar{#1}} 
\newcommand{\ophatC}[4]{\agg{\mathsf{C}}_{#4}(#1, #2, #3)}
\newcommand{\aggC}{\agg{\mathsf{C}}}
\newcommand{\rich}[1]{\check{#1}}
\newcommand{\Cpush}{\mathsf{C}_{\textup{pf}}}
\newcommand{\CpushX}{\mathsf{C}_{\mathcal{X}}}
\newcommand{\CpushA}{\mathsf{C}_{\mathcal{A}}}
\newcommand{\Cupper}{\mathsf{C}_{\textup{upper}}}
\newcommand{\Clower}{\mathsf{C}_{\textup{lower}}}
\newcommand{\order}{O} 
\newcommand{\tabC}{\mathsf{C}}
\newcommand{\TV}{D_{\mathrm{TV}}}
\newcommand{\su}{{u}}
\newcommand{\sv}{{v}}
\newcommand{\sw}{{w}} 
\newcommand{\sz}{{z}}
\newcommand{\sx}{{x}}
\newcommand{\sy}{{y}}
\newcommand{\sa}{{a}}
\newcommand{\ssp}{{p}}
\newcommand{\sq}{{q}}
\renewcommand{\sr}{{r}}
\newcommand{\estat}{\epsilon_{\textup{stat}}}
\newcommand{\eappr}{\epsilon_{\textup{appr}}}
\newcommand{\Phimax}{\Phi_{\max}}
\newcommand{\poly}{\textup{poly}}
\newcommand{\Phiopt}{\gI}  
\newcommand{\up}[1]{^{(#1)}}
\newcommand{\shp}{\bar{\sx}} 
\newcommand{\Block}{Block}
\newcommand{\converter}{{\textsc{admissible-to-trajectory}}\xspace} 
\newcommand{\At}{\textsc{Alg}_{\textsc{traj}}}   
\newcommand{\Dt}{\mathcal{D}^{\textsc{traj}}}
\newcommand{\Da}{\mathcal{D}^{\textsc{adm}}} 
\renewcommand{\epsilon}{\varepsilon}
\newcommand{\ind}[1]{^{\scriptscriptstyle(#1)}} 
\newcommand{\inds}[1]{^{\scriptscriptstyle[#1]}}
\newcommand{\Dti}[1]{\brk{\Delta T\ind{#1}}} 
\newcommand{\MTM}{\textsc{MTM}\xspace}  
\newcommand{\MDP}{\textsc{MDP}\xspace} 
\newcommand{\BVFT}{\textsc{BVFT}\xspace} 
\newcommand{\Replicator}{\textsc{Replicator}\xspace}  
\newcommand{\opeg}{\mathfrak{g}}
\newcommand{\famGg}{\mathfrak{G}}  
\newcommand{\famGa}{\mathfrak{G}_{\textsc{adm}}} 
\newcommand{\famGt}{\mathfrak{G}_{\textsc{traj}}} 
\newcommand{\OPE}{\textsc{OPE}\xspace}
\newcommand{\aone}{\mathfrak{a}_1}
\newcommand{\atwo}{\mathfrak{a}_2} 
\newcommand{\Term}{\mathfrak{R}}  
\author{
 Zeyu Jia$^{1}$\footnote{Authors are listed in alphabetical order by last name.}\\[-0pt] 
 \scalebox{0.79}{\texttt{zyjia@mit.edu}}
 \and 
 Alexander Rakhlin$^{1}$ \\[-0pt]
 \scalebox{0.79}{\texttt{rakhlin@mit.edu}}
 \and 
 Ayush Sekhari$^{1}$ \\[-0pt]
 \scalebox{0.79}{\texttt{sekhari@mit.edu}}
 \and  
 Chen-Yu Wei$^{2}$ \\[-0pt]
 \scalebox{0.79}{\texttt{chenyu.wei@virginia.edu}}
  \and 
  \\
  $^{1}$Massachusetts Institute of Technology, \hspace{2mm} $^{2}$University of Virginia   
} 
\title[Offline RL: Role of State Aggregation and Trajectory Data]{Offline Reinforcement Learning: \\ Role of State Aggregation and Trajectory Data}  
\date{}
	\newenvironment{proofof}[1][\unskip]{%
		\par\medskip\noindent{\bfseries\upshape Proof of #1. \/ }%
	}{\jmlrQED}
	\newenvironment{proofof}[1][\unskip]{%
		\par\medskip\noindent{\bfseries\upshape Proof of #1. \/ }%
	}{\hfill $\blacksquare$}   
\begin{document} 

\maketitle  

\begin{abstract} 
   We revisit the problem of offline reinforcement learning with value function realizability but without Bellman completeness. Previous work by \cite{xie2021batch} and \cite{foster2022offline} left open the question whether a bounded concentrability coefficient along with trajectory-based offline data admits a polynomial sample complexity. In this work, we provide a negative answer to this question for the task of offline policy evaluation. In addition to addressing this question, we provide a rather complete picture for offline policy evaluation with only value function realizability. Our primary findings are threefold: 1) The sample complexity of offline policy evaluation is governed by the concentrability coefficient in an aggregated Markov Transition Model jointly determined by the function class and the offline data distribution, rather than that in the original MDP. This unifies and generalizes the ideas of \cite{xie2021batch} and \cite{foster2022offline},  2) The concentrability coefficient in the aggregated Markov Transition Model may grow exponentially with the horizon length, even when the concentrability coefficient in the original \MDP is small and the offline data is \emph{admissible} (i.e., the data distribution equals the occupancy measure of some policy), 3) Under value function realizability, there is a generic reduction that can convert any hard instance with admissible data to a hard instance with trajectory data, implying that trajectory data offers no extra benefits over admissible data. These three pieces jointly resolve the open problem, though each of them could be of independent interest. 
\end{abstract}  
\section{Introduction}

In offline Reinforcement Learning (RL), the learner aims to either find the optimal policy (policy optimization) or evaluate a given policy (policy evaluation) based on pre-collected (offline) data, i.e.~without any live interaction with the environment. This approach is relevant in situations where real-time engagement is either impractical or costly, such as in safety-critical applications like healthcare and autonomous driving. 
The main hurdle in offline RL is the discrepancy between the state-action distribution induced by the given offline data and the target policy, which can significantly compromise our ability to evaluate policies in the environment. Consequently, much of the offline RL research focuses on addressing this data-policy mismatch.  

To deal with large state spaces in RL, function approximation is used to generalize the acquired knowledge across states. A key approach in this direction is \textit{model-based offline RL}, which involves constructing explicit models for transition and reward functions. 
Theoretical research studies the sample complexity under \emph{model realizability}, where we assume access to a class that contains the underlying model \citep{uehara2021pessimistic}. 
Thus, the sample complexity of model-based offline RL depends on the concentrability coefficient and the model class complexity, where the former quantifies the data-policy distribution mismatch (formally defined in \pref{sec: concen coeff}) and the latter quantifies the diversity within the model class. Unfortunately, in many tasks, model-based offline RL is not applicable due to the prohibitively large model complexity needed to represent the ground truth model.

Alternatively, \textit{value-based offline RL} simplifies the process by approximating only the target policy's value function, typically resulting in lower complexity. Despite being a simpler and more direct approach, achieving polynomial sample complexity under this approach has historically required additional assumptions beyond a bounded concentrability coefficient and value function realizability, such as Bellman completeness \citep{chen2019information}, $\beta$-incompleteness with $\beta<1$ \citep{zanette2023realizability}, density-ratio realizability \citep{xie2020q}, pushforward concentrability \citep{xie2021batch}, etc. The feasibility of attaining polynomial sample complexity for offline RL with solely bounded concentrability coefficient and realizability remained open for a long time.

The recent work of \cite{foster2022offline} provided a negative answer to this question, showing that realizability and concentrability alone cannot ensure polynomial sample complexity in offline RL. Their lower bound construction, however, relies on the offline data being generated somewhat unnaturally---e.g. the offline data includes samples from trajectories that can never be visited by any policy or only includes samples from just the first two steps in an episode, with the later steps hidden from the learner. Thus, the following question remained open: 

\begin{center}
\begin{minipage}{0.95\textwidth}  
\centering  
\textit{Is offline RL statistically efficient under value function realizability, bounded concentrability, and natural offline data distribution such as trajectories generated by a single behavior policy?} 
\end{minipage}  
\end{center} 

Our work answers this question in the negative for \emph{offline policy evaluation} (\OPE). Specifically, we show that: There exist \MDP instances where even though the offline data consists of \emph{trajectories} generated by a single behavior policy (used to collect offline data), the value function is realized by a small value function class, action space is finite (binary), and the concentrability coefficient is polynomial in the horizon length, the sample complexity for offline policy evaluation scales exponentially with the horizon. 

En route to establishing the above result, we provide a comprehensive characterization for offline policy evaluation with value function approximation and show that to achieve sample efficiency, the concept of ``concentrability'' needs to be strengthened. In particular, we show that the worst-case sample complexity for offline policy evaluation is both upper and lower bounded via the ``aggregated concentrability coefficient'' that denotes the data-policy distribution mismatch in an aggregated transition model obtained by clubbing together transitions from the states that have indistinguishable value functions under the given value function class (formally defined in \pref{sec: general lower bound}). On the lower bound side, our results generalize the lower bounds of \cite{foster2022offline}  since we can show that the aggregated concentrability coefficient is exponentially large in their construction. On the upper bound side, our work recovers the results of  \cite{xie2021batch} since we can show that aggregated concentrability coefficient is upper bounded by the pushforward concentrability coefficient, a notion of data-policy distribution mismatch used in their upper bound. 



Having identified that the aggregated concentrability coefficient is the key quantity that governs the sample complexity of offline policy evaluation, we provide a simple example where the offline data distribution is admissible (i.e., the offline data distribution equals the occupancy measure of an offline policy), action space is binary, the standard concentrability coefficient is $\order(H^3)$, but the aggregated concentrability is $2^{\Omega(H)}$, where $H$ is the horizon length. Since  aggregated concentrability governs the sample complexity of offline policy evaluation, 
this implies that realizability, concentrability, and admissibility are not sufficient for sample-efficient OPE. 

Finally, to obtain the lower bound for trajectory data, we provide a generic reduction that converts the aforementioned hard instance for admissible data into a hard instance for trajectory data, thus showing that in the worst case, \OPE is no easier for trajectory data than for admissible data. 




\section{Preliminaries}
\subsection{Markov Decision Process} 
We consider the finite horizon episodic setting. A \textbf{Markov Transition Model} (\MTM), denoted by $M = \MTM(\cX, \cA, T, H, \rho)$, is parameterized by a state space $\gX$, an action space $\gA = \crl{\aone, \atwo, \dots}$, a transition kernel $T: \cX \times \cA \mapsto \Delta(\cX)$, horizon length $H \in \bbN$, and initial distribution $\rho \in \Delta(\cX)$. We assume that the state space \(\cX\) is layered across time, i.e., $\gX=\gX_1\cup \gX_2 \cup \dots \cup\gX_H$ with $\gX_i\cap \gX_j=\emptyset$ for any $i\neq j$. The initial distribution $\rho\in\Delta(\gX_1)$ specifies the state distribution that every episode starts with. The transition kernel $T(\sx' \mid{} \sx, \sa)$ for $\sx\in\gX\setminus \gX_H$, $\sx'\in\gX$ and $\sa\in\gA$ specifies the probability of transitioning to state $\sx'$ if the learner takes action $\sa$ on state $\sx$. Due to the layering structure, $T(\cdot \mid{} \sx, \sa)$ is supported on $\gX_h$ if $\sx\in\gX_{h-1}$. 

For a \(\MTM\) $M$ and policy $\pi: \gX\rightarrow \triangle(\gA)$, we let $\mathbb{E}^{M,\pi}[\cdot]$  denote the expectation under the  process: $\sx_1\sim \rho$, and  for $h=1,\ldots, H$, action $\sa_h\sim \pi(\cdot \mid{} \sx_h)$, and next state $\sx_{h+1}\sim T(\cdot \mid{} \sx_h, \sa_h)$. Additionally, the state occupancy measure for a particular layer $h$ is defined as $d_h^{\pi}(\sx; M) \ldef{}  \mathbb{E}^{M,\pi}[ \indic\{\sx_h=\sx\}]$ 
, and the state-action occupancy measure is defined as $d_h^\pi(\sx, \sa; M) \ldef{} d_h^\pi(\sx; M)\pi(\sa\mid \sx)$. 

 A \textbf{Markov Decision Process} (\MDP), denoted by $M=\MDP(\gX, \gA, T, r, H, \rho)$, is a Markov Transition Model augmented with a reward function $r: \calX\times \calA\to \Delta([-1, 1])$. For \(h \in [H]\) and policy \(\pi\), the state value function \(V_h^\pi(\cdot; M): \cX_h \mapsto \bbR\) is defined such that  for any \(x_h \in \cX_h\), $V_h^\pi(x_h; M)$ is the total cumulative reward obtained by starting at state \(x_h\) at timestep \(h\), and acting according to the policy \(\pi\) till the end of the episode, i.e. $V_h^\pi(\sx; M) = \mathbb{E}^{M,\pi}\big[\sum_{k=h}^H r(\sx_{k}, a_{k})\mid \sx_h=\sx\big]$. We similarly define the state-action value function \(Q_h^\pi: \cX_h \times \cA \mapsto \bbR\) such that for any \(x_h \in \cX_h\), $Q_h^\pi(\sx, \sa)=\E^{M,\pi}[\sum_{h'=h}^H  r(\sx_{h'}, \sa_{h'})\mid(\sx_h, \sa_h)=(\sx,\sa)]$. We let \(\Pi\) denote the set of all non-stationary Markovian policies in the underlying MDP. 
 

For the ease of notation, for any policy \(\pi\), we define $T(\sx'\mid\sx, \pi)=\E_{\sa\sim \pi(\cdot\mid\sx)}[T(\sx'\mid\sx, \sa)]$ and $r(\sx, \pi)=\E_{\sa\sim \pi(\cdot\mid\sx)}[r(\sx, \sa)]$, and if \(\pi\) is deterministic (i.e., $ \pi(\cdot\mid\sx)$ is always supported on a single action for every \(x \in \cX\)), we use $\pi(\sx)\in\calA$ to denote the action it chooses on the state $\sx$. Furthermore, whenever clear from the context, we overload the notation and use $r(\sx, \sa)$ to denote the expected value of reward distribution $r(\sx, \sa)$. Furthermore, whenever apparent, we omit the dependency on $M$ and simply write $d_h^\pi(\sx), V_h^\pi(\sx)$ and $V^\pi(\rho)$, etc. to denote the corresponding occupancy measures and value functions in the \MDP \(M\).


\subsection{Offline RL Preliminaries} \label{sec:offline_RL_prelims} 
Throughout the paper, we consider the offline RL setting. In this setting, the learner is equipped with an offline data distribution\footnote{Various works in the offline RL literature assume that instead of direct sampling access to the offline distribution \(\mu\), the learner is given an offline dataset \(D\) of \(n\) samples drawn from \(\mu\). These two settings are equivalent upto sampling.} \(\mu\) and can only gather data about the MDP by i.i.d.~ sampling from this offline distribution (however, the learner does not know the density function of \(\mu\)). We consider three types of offline data models: 
\begin{enumerate}[label=\(\bullet\)] 
    \item \textbf{General Data:} The offline dataset is characterized by an offline distribution \(\mu = \prn{\mu_1,  \dots, \mu_{H-1}}\) where \(\mu_h \in \Delta(\cX_h \times \cA)\) for \(h \leq H - 1\). An offline sample comprises of \(H-1\) many tuples $(\sx_h, \sa_h, \sr_h, \sx_{h+1}')$, where for each \(h \in [H-1]\), $(\sx_h, \sa_h)\in\calX_h\times\calA$ is drawn from $\mu_h$, $\sr_h\in [-1, 1]$ is drawn from $r(\sx_h, \sa_h)$, and $\sx_{h+1}'\in\calX_{h+1}$ is drawn from $T(\cdot \mid{} \sx_h, \sa_h)$. 
    
    \item  \textbf{Admissible  Data:} Similar to the General Data setting, but with the addition requirement that there exists an offline policy \(\pioff \in \Pi\) such that 
    for all  \(\sx \in \cX\) and \(\sa \in \cA\), \(\mu_h(\sx, \sa) = d_h^\pioff(\sx, \sa)\). 
    
    
    \item \textbf{Trajectory  Data:} Each offline sample is a complete trajectory $(\sx_1, \sa_1, \sr_1, \sx_2, \sa_2, \sr_2,\ldots,$ $\sx_H)$ sampled in the underlying \MDP using an offline policy \(\pioff\). 
\end{enumerate} 

The goal of the learner is to estimate the value of a given evaluation policy \(\pieval\) by collecting samples from the offline data distribution. In particular, given access to the offline distribution \(\mu\), the learner would like to estimate  
$V^{\pieval}(\rho)$ up to an accuracy of~$\epsilon$, i.e. return a \(\wh V\)  such that 
\begin{align} 
\abs*{\wh V - V^{\pieval}(\rho)} \leq \epsilon, \label{eq:goal}  
\end{align}
where \(V^{\pieval}(\rho) = \En_{x_1 \sim \rho} \brk*{V^{\pieval}(x_1)}\). We are interested in quantifying the amount of data required to achieve \pref{eq:goal} with high probability.  It is not hard to see that learning with general or admissible data is more challenging than learning with trajectory data, because from a trajectory dataset one can generate a $(\sx, \sa,\sr,\sx')$ dataset with $\mu_h(\sx, \sa)=d_h^{\pioff}(\sx, \sa)$, but not vice versa. 

\paragraph{Comparison to the Definition of Admissible Data in \cite{foster2022offline}.}  A result in \cite{foster2022offline} also claims to provide an exponential lower bound for admissible data but in the discounted MDP setting. Their data distribution $\mu(\sx,\sa) = \frac{1}{2}(d_1^\pioff(\sx,\sa) + d_2^\pioff(\sx,\sa))$ is not considered as admissible in our definition above\footnote{Since \cite{foster2022offline} consider the discounted setting, data from the first two steps is enough to provide bounded concentrability coefficient. This is not true in our finite-horizon case. }. We note that their lower bound construction heavily relies on samples being drawn only from the first two steps, and no information from the third step or later should be revealed. On the other hand, our admissible data definition forces the information to be revealed on all steps. We also remark that our lower bound for our notion of admissible data serves as an important step towards the lower bound for trajectory data (see \pref{sec: trajectory lower bound}). 



\subsubsection{Concentrability Coefficient} \label{sec: concen coeff} 
Throughout the paper, for simplicity, we consider a fixed and deterministic evaluation policy $\pieval$ that takes the action \(\aone\) on all the states, i.e. \(\pieval(x) = \aone\) for all \(x \in \cX\). The \emph{concentrability coefficient} (\cite{munos2003error, chen2019information}) of policy $\pieval$ in an \MDP $M=\MDP(\gX, \gA, T, r, H, \rho)$ with respect to offline data distribution $\mu$ is defined as 
\begin{align}
    \tabC(M, \mu, \pieval) = \max_h \max_{\sx\in\calX_h, \sa\in\calA} \frac{d_h^\pieval(\sx, \sa)}{\mu_h(\sx, \sa)}.   \label{eq: concentrability coeff} 
\end{align}
Whenever clear  from the context, we skip \(\pieval\) and use the notation $\tabC(M, \mu)$ to denote $\tabC(M, \mu, \pieval)$. 


\subsubsection{Function Approximation}  
The learner is given a function set $\gF$ that consists of functions of the form $\gX\times \gA\rightarrow [-1, 1]$. We make the following realizability assumption that the Q-function belongs to the function class $\cF$.
 
\begin{assumption}[Realizability] 
\label{ass:Q_realizability} 
We have \(Q^{\pieval} \in \cF\).  
\end{assumption}

Given an offline data distribution $\mu$ and a value function class that satisfies realizability, a key question in offline policy evaluation is whether we can design an algorithm that achieves \pref{eq:goal}, with probability at least $1-\delta$, by only using $\poly(\log |\gF|, H,  \nicefrac{1}{\delta}, \nicefrac{1}{\epsilon}, \tabC(M,\mu))$ offline samples. In the following sections, we argue that this is unfortunately impossible, unless we replace $\tabC(M,\mu)$ with a different notion of concentrability (see \pref{sec: general lower bound} for details). 

\subsubsection{Offline Policy Evaluation Problem} 
An Offline Policy Evaluation (\OPE) problem \(\opeg\) is given by a tuple  $(M, \pieval, \mu, \cF)$ where \(M\) denotes the underlying \MDP, \(\mu\) denotes the offline data distribution, \(\pieval\) denotes the evaluation policy, and \(\cF\) denotes a state-action value function class. Given an \(\OPE\) instance \(\opeg\) and a parameter \(\epsilon > 0\), the goal of the learner is to estimate the value of the policy \(\pieval\) in the MDP \(M\) upto precision \(\epsilon\) in expectation, by only relying on samples drawn from the offline distribution \(\mu\). 

We say that the \OPE problem \(\opeg\) is \textit{realizable} if \(Q^\pieval(\cdot; M) \in \cF\). Furthermore, whenever \(\mu\) is admissible and there exists a policy \(\pioff\) such that \(\mu_h = d_h^\pioff\) for all \(h \leq H\), we often denote the \OPE problem as \(\opeg = (M, \pieval, \pioff, \cF)\). Finally, in the case of trajectory data, we still use the notation \(\opeg = (M, \pieval, \pioff, \cF)\) to denote the \(\OPE\) problem  but explicitly clarify, whenever invoked, that the learner now has access to complete trajectories sampled using \(\pioff\).

\section{State Aggregation in Offline RL} \label{sec: general lower bound}




We start by considering the offline policy evaluation problem with general offline data, and introduce useful tools and notation for our main lower bounds for admissible and trajectory data, and our upper bound, in the following sections. 

There is a rich literature on understanding the right structural assumptions for offline RL with general offline data. For a warm-up, when the underlying \MDP is tabular, i.e. has a small number of states and actions, it is well-known that the concentrability coefficient governs the statistical complexity of offline policy evaluation. To give some intuition for this claim, and to set the foundation for what follows, let \(x^\star\) denote the state that maximizes the right-hand side in the definition of the concentrability coefficient in \pref{eq: concentrability coeff}, and for simplicity, suppose that \(d^\pieval(x^\star) \geq \epsilon\). Now, consider two scenarios, the first where the \MDP has a reward of \(+1\) by taking any action in the state \(x^\star\), and the second where the \MDP has a reward of \(-1\) by taking any action in \(x^\star\); in both cases, we assume zero reward on all other states. Thus, to estimate the value of \(\pieval\) up to precision \(\epsilon\), the learner needs to distinguish between the two scenarios, and the only way to do so is to observe a transition from \(x^\star\) in the given offline dataset, which requires at least $\tfrac{1}{\mu(\sx^\star, \pieval(\sx^\star))} \geq \tabC$ offline samples from \(\mu\), in expectation. To conclude, in tabular MDPs, the learner can explicitly keep track of different states in the MDP, and use the corresponding transition and reward behavior on these states to evaluate \(\pieval\), and thus the worst case scenarios for offline policy evaluation is when the offline data does not provide enough information about the parts of the \MDP where \(\pieval\) has high visitation probability, and thus concentrability coefficient governs the statistical complexity. 

The offline policy evaluation problem unfortunately becomes more challenging when the \MDP has a large state space and the learner has to rely on function approximation. For this regime, previous works by \cite{xie2021batch} and \cite{foster2022offline} hint that the difficulty of offline policy evaluation comes from the hardness of distinguishing states that have different transition behaviors but the same values. Recall that every piece of data in the offline dataset is of the form $(\sx, \sa, \sr, \sx')$. If $\sx_1$ and $\sx_2$ are two states appearing in the dataset such that $\gF$ does not provide any information to distinguish them, i.e., $f(\sx_1, \cdot)=f(\sx_2, \cdot)$ for all $f\in\gF$, then the learner has no guidance from $\gF$ whether they are essentially the same state or not in terms of their rewards or dynamics behavior. There are also no clues from other parts of the dataset, since with high probability, every state only appears at most once in the dataset due to the large state space. Under such a challenging scenario, intuitively, the best the learner can do is to aggregate these two states together and treat them as the same item, to get the most out of the offline dataset and the given value function class. 
This algorithmic idea of ``aggregation'' is precisely what is used in the \BVFT algorithm of \cite{xie2021batch}. In this following section, we formalize the argument that aggregating indistinguishable states is indeed \emph{the best the learner can do} by showing a general lower bound in terms of aggregated concentrability coefficient. To establish our lower bound, in the next section, we formally define the notion of state aggregation and aggregated concentrability coefficient. 

\subsection{Aggregated Concentrability Coefficient}\label{sec: state aggregation} 
Over a given state space $\gX=\gX_1 \cup \dots \cup  \gX_H$, we can define a \emph{state aggregation scheme} $\Phi=\Phi_1\cup\dots\cup\Phi_H$ as below. For any $h$, $\Phi_h$ defines a partition of $\gX_h$ so that the following hold:
\begin{enumerate}[label=\(\arabic*)\)]  
    \item Every element $\phi\in\Phi_h$ is a subset of $\gX_h$; 
    \item  The subsets are disjoint, i.e., $\phi\cap \phi'=\emptyset$ for all  $\phi, \phi'\in\Phi_h$;  
    \item The subsets cover $\gX_h$, i.e., $\bigcup_{\phi\in\Phi_h}\phi=\gX_h$.  
\end{enumerate}


An aggregated Markov Transition Model $\agg{M}$ is defined via a underlying Markov Transition Model $M=(\gX, \gA, T, H, \rho)$, state aggregation schemes $\Phi_h$, and offline data distributions $\mu_h: \gX_h\times \gA\rightarrow \mathbb{R}_{\geq 0}$ for $1\le h\le H-1$. We write $\agg{M}=(M, \Phi, \mu)$. The aggregated transition dynamics for a  policy $\pi$ are defined by 

\begin{equation}\label{eq:agg-transition-model}
     \agg{T}(\phi'\mid{}\phi, \pi; \agg{M}) = \frac{\sum_{\sx\in\phi}\sum_{\sx'\in\phi'} \sum_{\sa\in\gA} \pi(\sa\mid \sx) \mu_h(\sx,\sa)T(\sx' \mid{} \sx, \sa)}{\sum_{\sx\in\phi} \sum_{\sa\in\gA}  \pi(\sa\mid\sx)\mu_h(\sx, \sa) } 
\end{equation}
for $\phi\in\Phi_h$, $\phi'\in\Phi_{h+1}$. 
The aggregated occupancy measure for a policy $\pi$ is defined as 
\begin{align*}
    \agg{d}_h^\pi(\phi; \agg{M}) \ldef{}  \mathbb{E}\left[ \indic\{\phi_h=\phi\}~\bigg|~ \phi_1\sim \agg{\rho}(\cdot), \phi_{i+1}\sim \agg{T}(\cdot \mid{} \phi_i,\pi; \agg{M}), \forall 1\le i\le h-1\right], 
\end{align*}
where the initial distribution $\agg{\rho}$ is defined as $\agg{\rho}(\phi)\ldef{}  \sum_{\sx\in \phi} \rho(\sx)$. 

Note that in general, it may not be meaningful to define aggregated transitions with respect to \emph{actions}, i.e., $\agg{T}(\phi'\mid \phi, \sa; \agg{M})$. This is because states in the same aggregation may not even share the same action space. However, in the special case where states within the same aggregation share the same action space, the quantity $\agg{T}(\phi'\mid \phi, \sa; \agg{M})$ can be defined, which could be useful in simplifying the notation. We use this notation in our lower bound proof (\pref{app:proof1}). 


\begin{definition}[Aggregated Concentrability Coefficient]\label{def:agg-concentrability}
    For an aggregated \MDP $\agg{M}=(M,\Phi, \mu)$ with underlying \MDP $M$, aggregation scheme $\Phi$, and offline distribution $\mu$, we define the aggregated concentrability coefficient $\aggC_\epsilon(M, \Phi, \mu)$ as
\begin{align*}
    \aggC_\epsilon(M, \Phi, \mu) = \max_h \max_{\cI} \left\{ \frac{\sum_{\phi\in \cI} \agg{d}_h^\pieval(\phi)}{\sum_{\phi\in \cI}\sum_{\sx\in\phi} \mu_h(\sx, \pieval(\sx))}  ~~\Bigg|~~ \gI \subseteq  \Phi_h, \ \ \sum_{\phi\in \gI} \agg{d}_h^\pieval(\phi)\ge \epsilon \right\}. 
\end{align*}
\end{definition}

The aggregated concentrability coefficient is analogous to the standard concentrability coefficient defined in \pref{eq: concentrability coeff}, but now under the aggregated transition model. 
The reason why the sum of aggregated occupancy measure is restricted to be at least $\epsilon$ above is because those $\phi$ with extremely small occupancy can be fully ignored during the policy evaluation process, while making no impact in the estimation error even if the above ratio is large.







\subsection{A General Lower Bound in Terms of Aggregated Concentrability Coefficient}

We now have all the necessary tools to state our first lower bound. The following theorem provides a general reduction that lifts any given instance of a Markov Transition Model, evaluation policy, offline data distribution, and aggregation scheme into a class of offline policy evaluation problems, and provides a statistical lower bound for offline policy evaluation for this class in terms of the aggregated concentrability coefficient. 

\begin{theorem} 
\label{thm: informal general data}
Let \(\epsilon \in (0, 1)\), $M$ be a Markov Transition Model, \(\Phi\) be an aggregation scheme over the states of \(M\), \(\pieval\) be a deterministic evaluation policy in \(M\) such that for any aggregation \(\phi \in \Phi\) and states \(x, x' \in \phi\) it holds that \(\pieval(x) = \pieval(x')\),  and \(\mu\) be a general offline data distribution with standard concentrability coefficient $\tabC(M, \mu)$ and aggregated concentrability coefficient $\aggC_\epsilon(M,\Phi, \mu)$. Then, there exists a class \(\famGg\) of realizable \OPE problems such that for every \OPE problem \(\opeg = \OPE\prn*{M\ind{\opeg}, \pieval\ind{\opeg}, \mu\ind{\opeg}, \cF\ind{\opeg}}\) in \(\famGg\), 
\begin{enumerate}[label=\((\alph*)\)]  
    \item  The function class \(\cF\ind{\opeg}\) satisfies \(Q^{\pieval}(\cdot ; M\ind{\opeg}) \in \cF\ind{\opeg}\) (\pref{ass:Q_realizability}), and \(\abs{\cF\ind{\opeg}} = 2\). 
    \item Any pair of states \(x, x'\) that belong to the same aggregation \(\phi \in \Phi\) satisfy \(f(x, \cdot) = f(x', \cdot)\) for all \(f \in \cF\ind{g}\). 
    \item The  concentrability coefficient $\tabC(M\ind{\opeg}, \mu\ind{\opeg})=\Theta(\tabC(M, \mu))$. 
\end{enumerate}
Furthermore, any offline policy evaluation algorithm that guarantees to estimate the value of \(\pieval\ind{\opeg}\) in the \MDP \(M\ind{\opeg}\) 
up to precision \(\epsilon\), in expectation, for every \OPE problem \(\opeg \in \famGg\) must use  
\begin{align*}
\wt \Omega\prn*{\frac{\aggC_\epsilon(M,\Phi, \mu)}{\epsilon}}
\end{align*}
offline samples from \(\mu\ind{\opeg}\) in some \OPE problem \(\opeg \in \famGg\). 
\end{theorem}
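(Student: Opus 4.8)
The plan is to prove this via a two-point (Le Cam) reduction built on top of the given triple $(M,\Phi,\mu)$. First I would fix a layer $h^\star$ and a collection of aggregations $\cI\subseteq\Phi_{h^\star}$ attaining the maximum in \pref{def:agg-concentrability}, so that $\sum_{\phi\in\cI}\agg d^{\pieval}_{h^\star}(\phi)\ge\epsilon$ and the total offline mass $p:=\sum_{\phi\in\cI}\sum_{\sx\in\phi}\mu_{h^\star}(\sx,\pieval(\sx))=\agg d^{\pieval}_{h^\star}(\cI)/\aggC_\epsilon(M,\Phi,\mu)$. The two hard instances $M\ind1,M\ind2$ will share all transitions and differ only in the rewards placed on the states lying in the aggregations of $\cI$: in $M\ind1$ every such state gets reward $+\eta$ and in $M\ind2$ reward $-\eta$, with $\eta:=\epsilon/\agg d^{\pieval}_{h^\star}(\cI)\in(0,1]$, and all other original states carry zero reward. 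Since $\pieval$ reaches $\cI$ with aggregated probability $\agg d^{\pieval}_{h^\star}(\cI)$ and the reward is constant across each aggregation, the root value gap is exactly $\abs{V^{\pieval}(\rho;M\ind1)-V^{\pieval}(\rho;M\ind2)}=2\eta\,\agg d^{\pieval}_{h^\star}(\cI)=2\epsilon$, so any estimator that is $\epsilon$-accurate on both instances must distinguish them.

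The central construction must guarantee properties (a)--(c) simultaneously. To force the value functions of both instances to be constant on every aggregation (yielding $\abs{\cF\ind\opeg}=2$ with $\cF\ind\opeg=\{Q^{\pieval}(\cdot;M\ind1),Q^{\pieval}(\cdot;M\ind2)\}$ and property (b)), I would augment each layer $h\le h^\star$ with two absorbing ``gadget'' states $\su,\sv$ of reward $+1$ and $-1$, and reroute an $\order(1/H)$ fraction of each state's transition mass into $\{\su,\sv\}$. Proceeding by backward induction from $h^\star$, for each state $\sx$ in a predecessor aggregation I would solve for the gadget probabilities $T\ind i(\su\mid\sx,\pieval),T\ind i(\sv\mid\sx,\pieval)$ so that $V^{\pieval}_h(\sx;M\ind i)$ equals a target depending only on $\sx$'s aggregation; the gadgets' extreme values $\pm1$ make the linear system feasible for the $\order(\eta)$ corrections that arise. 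I would simultaneously impose that the $\mu$-weighted aggregate flow $\sum_{\sx\in\phi}\mu_h(\sx,\pieval(\sx))\,T\ind i(\su\mid\sx,\pieval)$ be identical for $i=1,2$, which is exactly the freedom remaining after fixing the per-state value targets. Because only $\order(1/H)$ mass is diverted per layer and gadgets receive matching $\mu$-mass, the standard concentrability is preserved up to a constant, giving (c), while realizability (a) holds by construction.

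With the instances in hand I would run the information-theoretic argument. Conditioned on the sampled $(\sx,\sa)$, the two data laws differ only through (i) the reward on the $\cI$-states, whose means differ by $2\eta$, and (ii) the gadget transitions, whose per-state discrepancy the backward construction keeps at $\order(\eta/H)$. Tensorizing KL over the $n$ i.i.d.\ offline samples, the reward term contributes $\order(n\,p\,\eta^2)$, and---this is the part needing care---the gadget term must be shown to be of the same order after summing the $\order((\eta/H)^2)$ per-state contributions against $\mu$ across all predecessor layers. Granting this, $\KL\bigl(\bbP\ind1\,\|\,\bbP\ind2\bigr)=\order(n\,p\,\eta^2)$, so by Bretagnolle--Huber the two instances are statistically indistinguishable unless $n=\Omega(1/(p\eta^2))$. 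Substituting $p=\agg d^{\pieval}_{h^\star}(\cI)/\aggC_\epsilon$ and $\eta=\epsilon/\agg d^{\pieval}_{h^\star}(\cI)$ gives $1/(p\eta^2)=\aggC_\epsilon\,\agg d^{\pieval}_{h^\star}(\cI)/\epsilon^2\ge\aggC_\epsilon(M,\Phi,\mu)/\epsilon$, using $\agg d^{\pieval}_{h^\star}(\cI)\ge\epsilon$. Hence $\epsilon$-accurate evaluation on all of $\famGg$ forces $\wt\Omega(\aggC_\epsilon/\epsilon)$ samples in the worst case.

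The main obstacle is the second step: making the value functions of both instances \emph{exactly} piecewise-constant on the aggregation while keeping the two offline data laws close in KL. These goals pull against each other---equalizing predecessor values forces the gadget transitions to differ between $M\ind1$ and $M\ind2$, yet those very differences are what a learner could exploit---so the quantitative heart of the proof is showing that the $\order(1/H)$ gadget budget suffices to satisfy every value constraint while keeping each per-state transition gap at $\order(\eta/H)$, so that the cumulative gadget leakage stays within a constant factor of the reward signal $p\eta^2$ rather than dominating the counting.
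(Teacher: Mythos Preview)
Your latent-MDP construction with gadget states $\su,\sv$ and the constraint that $\mu$-weighted aggregate flows to $\su$ coincide for $i=1,2$ matches the paper's Sections~A.2--A.3 exactly, and you correctly flag the central tension in your last paragraph. But the resolution you propose---a direct two-point Le Cam bound on $\KL(\bbP^{(1)}\|\bbP^{(2)})$ in this latent space, with per-state gadget gaps of order $\eta/H$---does not work. The per-state difference $T^{(1)}(\su\mid z,a)-T^{(2)}(\su\mid z,a)=2[\Delta T^{(1)}](z)$ is generically $\Theta(\sup_\phi|V^{\bar\pi_e}(\phi;\bar M^{(1)})|)=\Theta(\eta)$, not $O(\eta/H)$: it measures how far $z$'s own successor-value distribution sits from the $\mu$-weighted average over its aggregation, and in the interesting instances (e.g.\ \pref{eg:arbitrary_behavior}) different states in one aggregation transition to \emph{different} successor aggregations. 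Feeding a $\Theta(\eta)$ gap through the $\Theta(1/H)$ gadget probability gives a per-sample $\chi^2$ contribution of order $H\eta^2$ per layer, hence $\Theta(nH^2\eta^2)$ overall, while the reward term is only $\Theta(np\eta^2)$ with $p\le 1$. The gadget leakage strictly dominates and carries no factor of $p=\bar d(\cI)/\aggC_\epsilon$, so the resulting lower bound has no dependence on $\aggC_\epsilon$ at all.

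The paper closes this gap with an ingredient you are missing entirely: it lifts $M^{(1)},M^{(2)}$ to block MDPs with rich observations, indexed by a decoder $\psi:\cX\to\cZ$ that assigns observations to latent states \emph{within each aggregation}, and takes $\famGg=\bigcup_{\psi\in\Psi}\{\opeg^{(1)}_\psi,\opeg^{(2)}_\psi\}$ over a large family of decoders (Section~A.4). The information-theoretic argument then compares the two \emph{mixtures} $\frac{1}{|\Psi|}\sum_\psi\bbP^{(i)}_\psi$ via a $\chi^2$ bound (Lemma~9). Under this mixture an observation $\sx\in\cX(\phi)$ reveals only the aggregation $\phi$, not the latent state, so the only observable transition into $\su$ is the aggregate $\sum_{z\in\phi}\mu(z,a)T^{(i)}(\su\mid z,a)/\sum_{z\in\phi}\mu(z,a)$---which is identical for $i=1,2$ by precisely the constraint you imposed (paper's Lemma~4(c)). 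This anonymization is what eliminates the gadget leakage and leaves the reward on $\cI$ as the sole distinguishing signal; without the rich-observation mixture, the construction reveals the instance through the per-state transitions long before the reward becomes informative.
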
 



The proof of \pref{thm: informal general data} is deferred to \pref{app:proof1}. In the proof, instead of directly using the given \MTM $M$ to construct the class \(\famGg\), we construct Block MDPs \(\crl{M\ind{\opeg}}_{\opeg\in \famGg}\) with latent state dynamics given by \(M\) (with three additional new latent states per layer). As shown in the appendix, this reduction ensures that the standard concentrability remains unchanged. Furthermore, we note that the function class \(\cF\ind{\opeg}\) and the evaluation policy \(\pieval\ind{\opeg}\) are the same for all instances \(\opeg \in \cG\), and that 
the aggregated concentrability coefficient in \(M\ind{g}\) is $\Theta(\aggC_\epsilon(M, \Phi, \mu))$ (see \pref{prop: rich aggregated concentrability}). We also have the following property.  

\CWremove{the aggregated concentrability coefficient in \(M\ind{g}\) w.r.t.~an appropriately defined lifting of \(\Phi\) to rich observations is $\Theta(\tabC(M, \mu))$ (\pref{prop: rich aggregated concentrability}).}  

\begin{property} 
\label{propr:admissibility} 
In the construction in \pref{thm: informal general data}, if the offline distribution \(\mu\) is admissible for the Markov Transition Model \(M\), then  for every \(\OPE\) problem \(\opeg \in \famGg\), the offline distribution \(\mu\ind{\opeg}\) is also admissible for the corresponding \MDP \(M\ind{\opeg}\). 
\end{property} 



\newcolumntype{C}[1]{>{\centering\arraybackslash}m{#1}}
\subsection{Can Aggregated Concentrability be Larger than Standard Concentrability?} \label{sec:concentrability_larger}  

\begin{figure}[h!]
    \centering
    \includegraphics[width=0.85\textwidth]{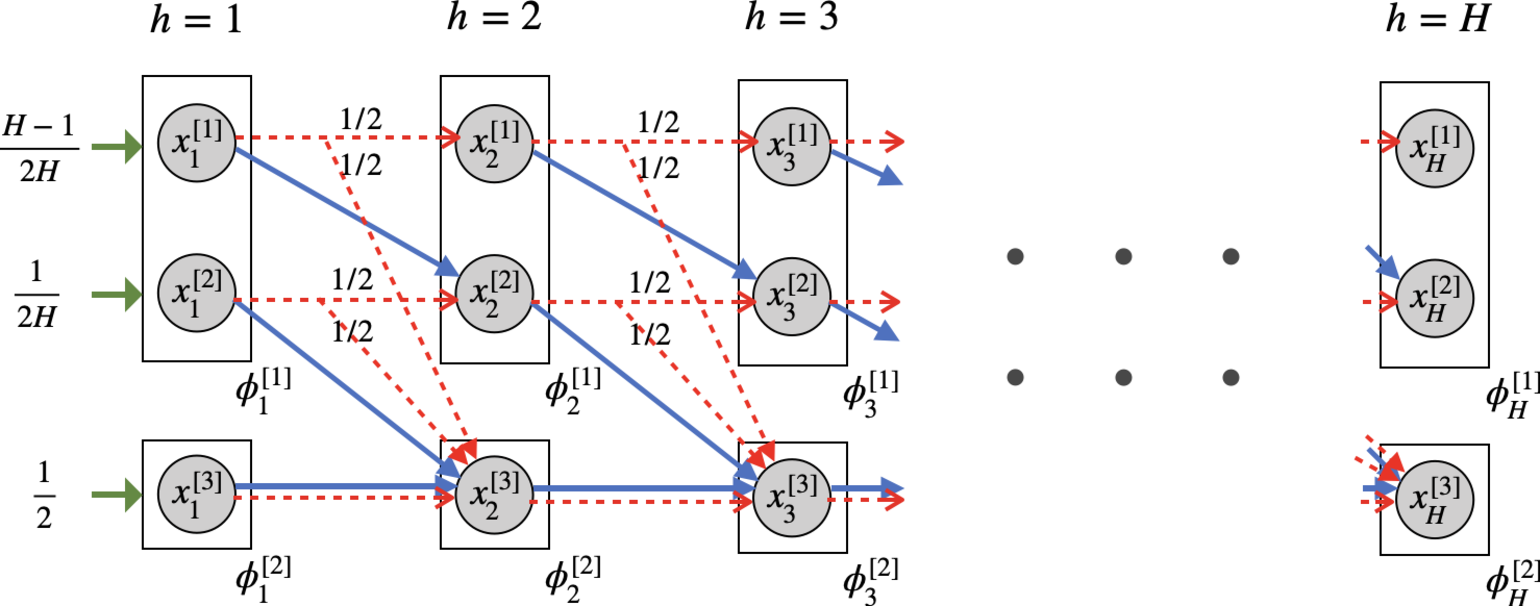} 
    \caption{Markov Transition Model and aggregation scheme in  \pref{eg:arbitrary_behavior}. The {blue} arrows represent the transitions under action $\aone$, and the {red} arrows represent the transitions under $\atwo$. The green arrows denote the initial distribution \(\rho\).} 
    \label{fig:example_MTM}
\end{figure}

\begin{figure}[h!] 
    \centering
\includegraphics[width=0.85\textwidth, trim={0cm 0cm 0cm 0cm},clip]{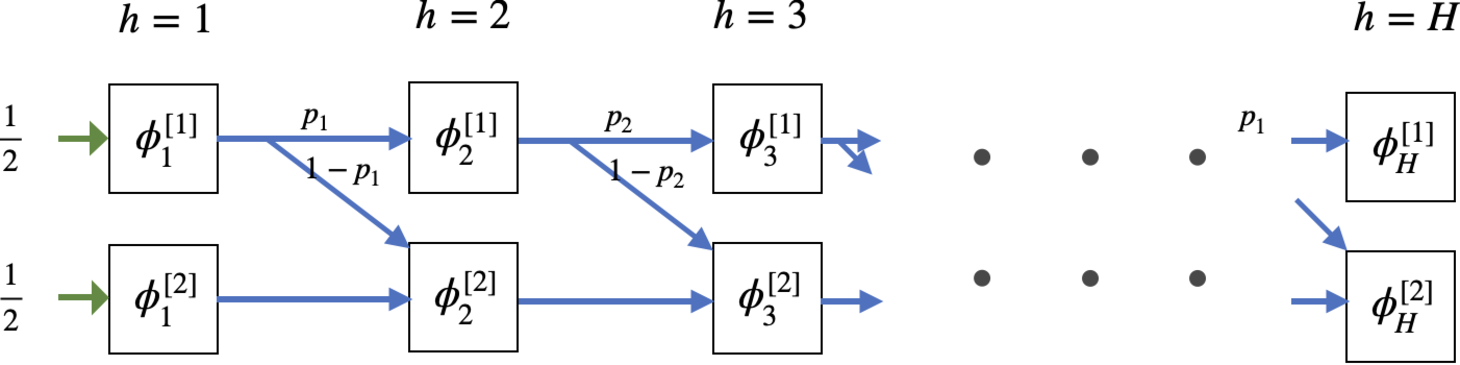} 
    \caption{Dynamics for policy \(\pieval\) in the aggregated \MDP \(\agg{M}\) in \pref{eg:arbitrary_behavior}, where $p_h\ldef{} \agg{T}\prn*{\phi_{h+1}\inds{1}\mid \phi_{h}\inds{1}, \pieval; \agg{M}}$. As shown in \pref{lem:sec-admissible-mu-2}, $p_h\ge \frac{H-1}{H+2}$ for all $1\le h\le H-1$. The green arrows denote the initial distribution \(\rho\) in the aggregated MDP.} 
    \label{fig:example_aggT} 
\end{figure}



\noindent
\pref{thm: informal general data} indicates that the sample complexity of offline policy evaluation (with general data) grows with the aggregated concentrability coefficient $\aggC_\epsilon(M,\Phi,\mu)$ instead of the standard concentrability coefficient $\tabC(M,\mu)$. Given this lower bound, one may wonder how large can the aggregated concentrability be in comparison to the standard concentrability. In this section, we will demonstrate via an example that the gap could indeed be exponential.  

\begin{example}
\label{eg:arbitrary_behavior} Consider the example represented in \pref{fig:example_MTM},  where 
\begin{enumerate}[label=\(\bullet\), itemsep=1mm] 
\item \textbf{Markov Transition Model} \(M\) consists of three states \(\crl{x_h\inds{1}, x_h\inds{2}, x_h\inds{3}}\) in each layer \(h \in [H]\), and two actions \(\cA = \crl{\aone, \atwo}\). The initial distribution \(\rho\) (denoted by the solid green arrows) is defined such that   
\(\rho\prn*{x_1\inds{1}} = \nicefrac{(H - 1)}{2H}\), \(\rho\prn*{x_1\inds{2}} = \nicefrac{1}{2H}\) and \(\rho\prn*{x_1\inds{3}} = \nicefrac{1}{2}\). 

The transition dynamics are identical for every layer  \(h \in [H-1]\), and is defined such that on taking action \(\aone\) the agent transitions deterministically according to the solid blue arrows, and  on taking action \(\atwo\), the agent transitions stochastically according to the dotted red arrows.


\item For every \(h \in [H]\), the \textbf{aggregation scheme} \(\Phi_h = \crl{\phi_h\inds{1}, \phi_h\inds{2}}\) with the aggregation \(\Phi_h\inds{1} = \crl{x_h\inds{1}, x_h\inds{2}}\) and \(\phi_h\inds{2} = \crl{x_h\inds{3}}\).  The aggregated states are denoted by rectangular blocks. 

\item The \textbf{offline distribution} \(\mu\) is the occupancy measure of an offline policy \(\pioff\) defined such that \(\pioff(x) = \frac{1}{H^2} \delta_{\aone}(\cdot) + \frac{H^2 - 1}{H^2} \delta_{\atwo}(\cdot)\) for all \(x \in \cX\)
\item The \textbf{evaluation policy} \(\pieval\) such that  \(\pieval(x) = \delta(\aone)\) for all \(x \in \cX\). 
\end{enumerate}

\end{example}

\begin{proposition} 
\label{prop:toy_example} 
For $\epsilon\le \nicefrac{1}{15}$, the Markov Transition Model \(M\), aggregation scheme \(\Phi\), evaluation policy \(\pieval\) and offline distribution \(\mu\) given in \pref{eg:arbitrary_behavior},   the standard concentrability coefficient \(\tabC(M,\mu)=O(H^3)\), whereas the aggregated concentrability coefficient \(\aggC_{\epsilon}(M, \Phi, \mu)= \wt \Omega(2^H)\). 
\end{proposition}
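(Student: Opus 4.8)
The plan is to prove the two estimates separately, and the conceptual heart of the argument is that the aggregated transition model $\agg{T}(\cdot\mid\cdot,\pieval;\agg{M})$ is \emph{misspecified}. Because $\agg{T}$ reweights the two states inside $\phi_h\inds{1}=\crl{x_h\inds{1},x_h\inds{2}}$ according to the offline measure $\mu$ rather than according to the true occupancy $d_h^\pieval$, the aggregated chain stays inside $\phi\inds{1}$ with probability $p_h\approx 1$ and therefore retains $\Omega(1)$ mass for $\Theta(H)$ layers, even though (i) the \emph{true} dynamics under $\pieval$ abandon $\phi\inds{1}$ within a constant number of steps, and (ii) the behavior policy $\pioff$ places only exponentially small mass on $\phi\inds{1}$ at late layers. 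The exponential separation then appears by evaluating \pref{def:agg-concentrability} at $\cI=\crl{\phi_h\inds{1}}$ for a layer $h=\Theta(H)$, comparing the $\Omega(\epsilon)$ aggregated numerator against the $2^{-\Omega(H)}$ offline denominator, while the standard coefficient stays polynomial precisely because it never ``sees'' the exponentially small block mass at the layers where $\pieval$ has already left $\phi\inds{1}$.

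For the standard concentrability bound I would first compute both occupancies explicitly. Since $\pieval$ deterministically plays $\aone$ and the $\aone$-dynamics are deterministic, $d_h^\pieval$ is supported on the (at most three) deterministic orbits of the initial states $x_1\inds{1},x_1\inds{2},x_1\inds{3}$, so each $d_h^\pieval(x)$ is an explicit constant. The behavior occupancy $d_h^\pioff$ is obtained by propagating $\rho$ through the mixed kernel $\tfrac1{H^2}T(\cdot\mid\cdot,\aone)+\tfrac{H^2-1}{H^2}T(\cdot\mid\cdot,\atwo)$. Using $\mu_h(x,\aone)=\tfrac1{H^2}d_h^\pioff(x)$ and $d_h^\pieval(x,\atwo)=0$, the ratio in \pref{eq: concentrability coeff} collapses to $H^2\cdot d_h^\pieval(x)/d_h^\pioff(x)$. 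It then remains to show $d_h^\pioff(x)\ge\Omega(d_h^\pieval(x)/H)$ on every state $x$ that $\pieval$ actually visits: the factor $H^2$ is inherent (since $\pioff$ plays $\aone$ only with probability $1/H^2$), and the extra $O(H)$ — hence the final $O(H^3)$ — comes from states that $\pieval$ reaches with $\Omega(1)$ probability but $\pioff$ only reaches through an $\atwo$-branch of probability $\Theta(1/H)$, while at late layers all $\pieval$-orbits collapse onto $x\inds{3}$, which $\pioff$ covers with $\Omega(1)$ mass.

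For the aggregated lower bound I would proceed in two steps. For the numerator, invoke $p_h\ge\tfrac{H-1}{H+2}$ (\pref{lem:sec-admissible-mu-2}, as quoted in \pref{fig:example_aggT}) together with $\agg{\rho}(\phi_1\inds{1})=\rho(x_1\inds{1})+\rho(x_1\inds{2})=\tfrac12$ to get
\[
\agg{d}_h^\pieval(\phi_h\inds{1})=\tfrac12\prod_{i=1}^{h-1}p_i\ge\tfrac12\Big(\tfrac{H-1}{H+2}\Big)^{h-1},
\]
which stays at least $\epsilon$ as long as $h-1\le\tfrac{H}{3}\ln\tfrac1{2\epsilon}$, i.e. up to some $h^\star=cH$ with $c\in(0,1)$ (for $\epsilon\le\tfrac1{15}$ this gives $c\approx 0.67>0$). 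For the denominator, show that the block occupancy of $\pioff$ decays geometrically, $d_h^\pioff(\phi_h\inds{1})\le\beta^{\,h}$ for a constant $\beta<1$: under $\pioff$ the mass in $\phi\inds{1}$ is retained only through the rare $\aone$-branch and a controlled part of the $\atwo$-branch, while a constant fraction leaks into the absorbing block $\phi\inds{2}=\crl{x\inds{3}}$ at each step. Since $\sum_{x\in\phi_h\inds{1}}\mu_h(x,\aone)=\tfrac1{H^2}d_h^\pioff(\phi_h\inds{1})$, the denominator is at most $\tfrac1{H^2}\beta^{\,h}$. Combining at $h^\star=\Theta(H)$ yields
\[
\aggC_\epsilon(M,\Phi,\mu)\ge\frac{\agg{d}_{h^\star}^\pieval(\phi_{h^\star}\inds{1})}{\tfrac1{H^2}d_{h^\star}^\pioff(\phi_{h^\star}\inds{1})}\ge\frac{\epsilon\,H^2}{\beta^{\,h^\star}}=\wt{\Omega}(2^H).
\]

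I expect the main obstacle to be the denominator estimate together with its reconciliation with the standard bound: one must solve (or carefully bound) the behavior chain restricted to $\phi\inds{1}$ and certify a constant geometric leak rate, while \emph{simultaneously} ensuring $\pioff$ covers $\pieval$'s visited states up to a factor $O(H)$. The reconciliation is exactly what makes the example work — $\pieval$ abandons $\phi\inds{1}$ after a constant number of layers (so the standard coefficient never penalizes the exponentially small block mass at late layers), whereas the misspecified aggregated chain keeps pretending it remains in $\phi\inds{1}$, which is the discrepancy \pref{def:agg-concentrability} charges for. A secondary point is checking that the constant $c$ in $h^\star=cH$ is compatible with $\epsilon\le\tfrac1{15}$, so that the constraint $\sum_{\phi\in\cI}\agg{d}_{h^\star}^\pieval(\phi)\ge\epsilon$ is genuinely satisfied at the chosen layer.
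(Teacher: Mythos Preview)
Your plan is essentially the paper's proof: factor out the $H^2$ action-probability for the standard coefficient and bound $d_h^{\pieval}/d_h^{\pioff}$ state-by-state; for the aggregated coefficient, lower-bound $\agg{d}_h^{\pieval}(\phi_h\inds{1})$ via $p_h\ge\tfrac{H-1}{H+2}$ and upper-bound the denominator by the geometric leak $d_h^{\pioff}(\phi_h\inds{1})\le 2^{-(h-1)}$. The paper packages the denominator estimate as a lemma showing $2d_h^\pi(x_h\inds{1})+d_h^\pi(x_h\inds{2})$ halves at each layer for \emph{every} policy $\pi$, which is a clean way to get both the $\beta=1/2$ rate and (applied with $\pi=\pieval$-like policies) the state-occupancy ratio for the standard bound.

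One point in your sketch does not give the stated $\wt\Omega(2^H)$. You stop at $h^\star=cH$ with $c\approx 0.67$ so as to maintain $\agg{d}_{h^\star}^{\pieval}(\phi\inds{1})\ge\epsilon$, and then write $\epsilon H^2/\beta^{h^\star}=\wt\Omega(2^H)$. With $\beta\approx 1/2$ and $h^\star\approx 0.67H$ this is only $2^{0.67H}$, i.e.\ $2^{\Omega(H)}$ but not $\wt\Omega(2^H)$. The paper avoids this by noting that the product $\prod_{i=1}^{H-1}p_i$ does not tend to zero: since $\ln\tfrac{H-1}{H+2}\ge -\tfrac{3}{H-1}$, one has $\big(\tfrac{H-1}{H+2}\big)^{H-1}\ge e^{-3}$ uniformly in $H$, hence $\agg{d}_H^{\pieval}(\phi_H\inds{1})\ge \tfrac{1}{2}e^{-3}$ is a fixed positive constant. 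You may therefore take $h^\star=H$ (or $H-1$) directly, and the ratio becomes $\Omega(1)/2^{-(H-1)}=2^{H-O(1)}$, which is the full $\wt\Omega(2^H)$. This is also why a fixed threshold on $\epsilon$ suffices rather than an $H$-dependent one; your estimate $h^\star\le\tfrac{H}{3}\ln\tfrac{1}{2\epsilon}$ is correct but unnecessarily conservative once you know the product converges.
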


We now give a sketch for the proof of \pref{prop:toy_example}, with the full details deferred to \pref{app: admissible lb}. We first calculate the upper bound on the standard concentrability coefficient. 
First, we argue that for $h\geq 3$, $d^{\pieval}_h(\sx_h)=1$ if $\sx_h=\sx_h\inds{3}$ and $0$ otherwise. This can be easily observed from the transition of $\pieval$ (blue arrow) in \pref{fig:example_MTM}---following the blue arrow, the policy must stay in $\sx_h\inds{3}$ for $h\geq 3$.
\CWremove{Note that, for \(h = 1\), 
$$d_1^\pieval(x_1) = d_1^\pioff(x_1) =  \rho(x_1) = \crl*{\tfrac{H-1}{2H} ~\text{for}~x_1 = x_1\inds{1},\tfrac{1}{2H} ~\text{for}~x_1 = x_1\inds{2}, \text{and}~  \tfrac{1}{2} ~\text{for}~x_1 = x_1\inds{3}}.$$  
Now, since \(\pieval\) always takes action \(\aone\) which follows the transitions denoted by the solid blue arrow, \(d^\pieval\) will evolve such that  }
\CWremove{\begin{align*} 
d_2^\pieval(x_2) = \frac{H-1}{2H} \cdot \indic\crl{x_2 = x_2\inds{2}} +  \frac{H+1}{2H} \cdot \indic\crl{x_2 = x_2\inds{3}}, 
\end{align*}
and for \(h \geq 3\), $d_h^\pieval(x_h)  = 1$ if \(x_h = x_h\inds{3}\)  and \(0\) otherwise. 
}
Next, we lower bound the state occupancy under $\pioff$. We claim that 
\begin{align*} 
d_h^{\pioff}\prn*{\sx_h\inds{1}} \geq \Omega \left(\frac{1}{2^h}\right), \quad d_h^{\pioff}\prn*{\sx_h\inds{2}} \geq \Omega \left(\frac{1}{H2^h}\right),~\quad \text{\ and\ } \quad d_h^{\pioff}\prn*{\sx_h\inds{3}} \geq  \frac{1}{2}.  \numberthis \label{eq:toy_pioff_occupancy} 
\end{align*}
   The third inequality in \pref{eq:toy_pioff_occupancy} is easy to see since the occupancy on 
$\sx_h\inds{3}$ is non-decreasing w.r.t.~$h$ under any policy (\pref{fig:example_MTM}).  To see the first two inequalities in \pref{eq:toy_pioff_occupancy}, notice that since $\pioff$ chooses $\atwo$ with probability $1-\frac{1}{H^2}$, and $\atwo$ carries $\frac{1}{2}$ of the weights from $\sx_h\inds{1}$ to $\sx_{h+1}\inds{1}$ (depicted by the red arrow in \pref{fig:example_MTM}), we have $d_h^{\pioff}(\sx_h^{\inds{1}})\geq \rho(\sx_1^{\inds{1}})\prn*{\frac{1}{2}(1-\frac{1}{H^2})}^{h-1}= \Omega\prn*{\frac{1}{2^h}}$. Similarly, $d_h^{\pioff}(\sx_h^{\inds{2}})\geq \rho(\sx_1^{\inds{2}})\prn*{\frac{1}{2}(1-\frac{1}{H^2})}^{h-1}= \Omega\prn*{\frac{1}{H2^h}}$. With the above calculation  and that $\tfrac{\pieval(\sa|\sx)}{\pioff(\sa|\sx)}\leq H^2$, we conclude that 
$
\tabC \leq H^2 \max_h \max_{\sx} \tfrac{d_h^{\pieval}(\sx)}{d_h^{\pioff}(\sx)} \leq \order(H^3)
$.  

\CWremove{On the other hand, the offline policy \(\pioff\) is more complicated and takes action \(\aone\) with probability \(\nicefrac{1}{H^2}\) and \(\atwo\) with probability \(1 - \nicefrac{1}{H^2}\). While this choice of \(\pioff\) ensures that for any \(x\), the action concentrability \(\sup_a \tfrac{\pieval(a \mid x)}{\mu(a \mid x)} \leq H^2\), the corresponding evolution of the state occupancy is more complicated, but satisfies  
\begin{align*} 
d_h^{\pioff}(x) \geq \frac{1}{2^{h+2}}\cdot \indic\crl{x = x_h\inds{1}} + \frac{1}{H2^{h+2}}\cdot \indic\crl{x = x_h\inds{2}} + \frac{1}{4}\cdot \indic\crl{x = x_h\inds{3}},  \numberthis \label{eq:toy_pioff_occupancy} 
\end{align*}
for all \(h \in [H]\). For an intuition of the above bound, note that since \(\pioff\) takes action \(\aone\) with a very small probability of \(\nicefrac{1}{H^2}\), the evolution under \(\pioff\) will be dominated by the transitions under action \(\atwo\). Since on taking the action \(\atwo\) at  \(\sx_h\inds{1}\), the agent transitions  to \(\sx_{h+1}\inds{1}\) and  \(\sx_{h+1}\inds{3}\) with equal probabilities (and similarly for \(\sx_h\ind{2}\)), we have that \(d_h^\pioff(s_h\inds{1})\) and \(d_h^\pioff(s_h\inds{2})\) will decrease by a factor of \(\nicefrac{1}{2}\) with \(h\) (approximately), which implies the above claim. Furthermore, since the ratio \(\nicefrac{d_h^\pioff(s_h\inds{1})}{d_h^\pioff(s_h\inds{2})}\) does not change on taking action \(\atwo\), we can also show that under the mixture distribution \(\pioff\), we have 

\begin{align*}
\text{For all \(h \in [H]\),} \qquad \frac{d_h^\pioff(x_h\inds{1})}{d_h^\pioff(x_h\inds{2})} \geq \frac{H-1}{3}. \numberthis \label{eq:ratio_toy_example}    
\end{align*}
The above bounds on \(d_h^\pioff\) and \(d_h^\pieval\) imply that the standard concentrability is \(O(H^3)\). 
} 


\CWremove{
\begin{align*}
\text{For all \(h \in [H]\),} \qquad \frac{d_h^\pioff(x_h\inds{1})}{d_h^\pioff(x_h\inds{2})} \geq \frac{H-1}{3}. \numberthis \label{eq:ratio_toy_example}    
\end{align*}
}

We now proceed to the lower bound on aggregated concentrability coefficient. From \pref{eq:agg-transition-model}, we know that the aggregated dynamic for \(\pieval\), shown in \pref{fig:example_aggT}, is constructed by reweighting the transition in the original transition model using \(\mu_h(\cdot)=d_h^\piexp(\cdot)\). 
Since $\pioff$ takes action $\atwo$ with large probability, and $\atwo$ does not change the relative weight $\tfrac{d_h^\pioff(x_h\inds{1})}{d_h^\pioff(x_h\inds{2})}$ (see the red arrows in \pref{fig:example_MTM}), it can be shown that $\tfrac{d_h^\pioff(x_h\inds{1})}{d_h^\pioff(x_h\inds{2})} \geq  \tfrac{\rho(x_1\inds{1})}{3 \rho(x_1\inds{2})}= \tfrac{H-1}{3}$ for all $h$. 
This gives 
\begin{align*}
    p_h\ldef{} \agg{T}(\phi_{h+1}\inds{1}\mid \phi_h\inds{1}, \pieval)= \frac{d_h^\pioff(x_h\inds{1})\cdot 1 + d_h^\pioff(x_h\inds{2}) \cdot (\nicefrac{1}{2})}{d_h^\pioff(x_h\inds{1}) + d_h^\pioff(x_h\inds{2})} \geq \frac{H-1}{H+2}, 
\end{align*}
where the factors of $1$ and $\nicefrac{1}{2}$ are the probability of transitioning to $\phi_{h+1}\inds{1}$ from $\sx_h\inds{1}$ and $\sx_h\inds{2}$ following $\pieval$.  This further implies 
$
\agg{d}_h^\pieval(\phi_h\inds{1}) \geq \frac{1}{2}\prn*{\frac{H-1}{H+2}}^h$. 
Using a similar argument as for \pref{eq:toy_pioff_occupancy}, we have \(\agg{d}_h^\pioff(\phi_h\inds{1}) \leq \tfrac{1}{2^h}\). These two bounds together imply that the aggregated concentrability is \(2^{\Omega(H)}\). 


\CWremove{To conclude, the exponential gap between the standard and aggregated concentrability is due to the difference in the dynamics of \(\pieval\)  under the given model \(M\) and its aggregation \(\agg{M}\). In particular, on following \(\pieval\) in \(M\), the agent immediately transitions to the states \(s_h\inds{3}\) and then stays there, whereas in \(\agg{M}\), the agent stays in \(\phi_h\inds{1}\) since \(p \geq \nicefrac{H-1}{H+2}\). }

%

 \section{Main Lower Bounds for Offline Policy Evaluation} \label{sec: lower bound} 

\subsection{Admissible Data}\label{sec: admissible data}

\pref{eg:arbitrary_behavior} provides an instance of a Markov Transition Model, aggregation scheme, evaluation policy and offline distribution for which the standard concentrability is \(O(H^3)\) whereas the aggregated concentrated is \(2^{\Omega(H)}\). Since the offline distribution \(\mu\) in \pref{eg:arbitrary_behavior} is the occupancy measure \(d^\pioff\) for the policy \(\pioff\), plugging \pref{eg:arbitrary_behavior}  in \pref{thm: informal general data} implies the following lower bound for offline policy evaluation with admissible offline data.

\begin{theorem}\label{thm:admissible}
Let   $\epsilon\le \nicefrac{1}{15}$, and horizon \(H \geq 1\). Then, there exits a class \(\famGa\) of realizable \OPE problems, such that for every \OPE problem \(\opeg = \prn*{M\ind{\opeg}, \pieval\ind{\opeg}, \mu\ind{\opeg}, \cF\ind{\opeg}} \in \famGa\),  the concentrability coefficient of \(\pieval\ind{\opeg}\) w.r.t.~ \(\mu\ind{\opeg}\) is \(O(H^3)\), the offline distribution \(\mu\ind{\opeg}\) is admissible for the \MDP \(M\ind{\opeg}\), and \(\abs{\cF\ind{\opeg}} = 2\). 

Furthermore, any offline policy evaluation algorithm that guarantees to estimate the value of \(\pieval\ind{\opeg}\) in the \MDP \(M\ind{\opeg}\) 
up to precision \(\epsilon\), in expectation, for every \OPE problem \(\opeg \in \famGa\) must use $ 2^{\Omega(H)}$ offline samples in some \(\opeg \in \famGa\). 
\end{theorem}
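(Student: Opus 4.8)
The plan is to obtain \pref{thm:admissible} as a direct corollary of the general reduction in \pref{thm: informal general data} instantiated with the concrete construction of \pref{eg:arbitrary_behavior}, whose concentrability parameters are exactly quantified in \pref{prop:toy_example}. Concretely, I would take $M$, $\Phi$, $\pieval$, and $\mu$ to be the Markov Transition Model, aggregation scheme, evaluation policy, and offline data distribution described in \pref{eg:arbitrary_behavior}. The first thing to verify is that this instance satisfies every hypothesis of \pref{thm: informal general data}: the evaluation policy $\pieval$ takes action $\aone$ on all states and hence is constant on every aggregation $\phi \in \Phi$ (so $\pieval(x) = \pieval(x')$ whenever $x,x'$ lie in the same $\phi$), and $\mu$ is a valid general offline data distribution. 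By \pref{prop:toy_example}, for $\epsilon \le \nicefrac{1}{15}$ this instance has standard concentrability $\tabC(M,\mu) = O(H^3)$ and aggregated concentrability $\aggC_\epsilon(M,\Phi,\mu) = \wt\Omega(2^H)$.

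Next I would apply \pref{thm: informal general data} to lift this single instance into the class $\famGa$ of realizable \OPE problems. The theorem guarantees a class $\famGg$ (which I rename $\famGa$) such that each $\opeg = (M\ind{\opeg}, \pieval\ind{\opeg}, \mu\ind{\opeg}, \cF\ind{\opeg})$ is realizable with $\abs{\cF\ind{\opeg}} = 2$, and the standard concentrability is preserved up to constants: $\tabC(M\ind{\opeg}, \mu\ind{\opeg}) = \Theta(\tabC(M,\mu)) = O(H^3)$ by part (c) of \pref{thm: informal general data} combined with the bound from \pref{prop:toy_example}. The sample-complexity lower bound of the theorem then reads $\wt\Omega(\aggC_\epsilon(M,\Phi,\mu)/\epsilon) = \wt\Omega(2^H/\epsilon)$, which for constant $\epsilon \le \nicefrac{1}{15}$ is $2^{\Omega(H)}$; this is exactly the claimed lower bound, so any algorithm achieving $\epsilon$-accuracy in expectation on all of $\famGa$ must use $2^{\Omega(H)}$ samples on some instance.

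The one remaining point, which is precisely where admissibility (as opposed to merely general data) enters, is to argue that every $\mu\ind{\opeg}$ in the lifted class is admissible for its \MDP $M\ind{\opeg}$. Here I would invoke \pref{propr:admissibility}: since the offline distribution $\mu$ in \pref{eg:arbitrary_behavior} is by construction the occupancy measure $d^\pioff$ of the offline policy $\pioff$ (and hence admissible for $M$), the property guarantees that $\mu\ind{\opeg}$ is admissible for $M\ind{\opeg}$ for every $\opeg \in \famGa$. Assembling these facts---realizability and $\abs{\cF\ind{\opeg}} = 2$ from part (a), $O(H^3)$ concentrability from part (c) plus \pref{prop:toy_example}, admissibility from \pref{propr:admissibility}, and the $2^{\Omega(H)}$ sample lower bound from the main conclusion of \pref{thm: informal general data}---yields the statement.

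The genuine content of this theorem lives entirely in the two ingredients it cites rather than in the corollary itself, so I do not expect a substantive obstacle in the proof. The only thing requiring care is confirming that the hypotheses of \pref{thm: informal general data} are met by \pref{eg:arbitrary_behavior}---in particular the constancy of $\pieval$ on each aggregation block, which is immediate since $\pieval \equiv \aone$---and that the parameter regime $\epsilon \le \nicefrac{1}{15}$ demanded by \pref{prop:toy_example} is consistent with the $\epsilon$ appearing in the sample bound. The main conceptual burden (the exponential separation between standard and aggregated concentrability, and the admissibility-preserving lifting) has already been discharged in \pref{prop:toy_example} and \pref{propr:admissibility}, so the proof is essentially a matter of correctly threading these results together and simplifying $\wt\Omega(2^H/\epsilon)$ to $2^{\Omega(H)}$ for the stated constant $\epsilon$.
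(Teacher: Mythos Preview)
Your proposal is correct and matches the paper's approach essentially line for line: the paper also derives \pref{thm:admissible} by instantiating \pref{thm: informal general data} with \pref{eg:arbitrary_behavior}, invoking \pref{prop:toy_example} for the $O(H^3)$ versus $2^{\Omega(H)}$ concentrability gap, and using the admissibility-preservation property (\pref{propr:admissibility}) to pass admissibility of $\mu=d^{\pioff}$ through to each $\mu\ind{\opeg}$. The only cosmetic difference is that the appendix proof of the stronger version (\pref{thm:admissible-stronger}) establishes admissibility via an explicit domination argument $d^{\ripiexp}(\cdot;\rich{M})\le \rich{\mu}_\psi$ rather than quoting \pref{propr:admissibility} as a black box, but this is the same content.
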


The construction of the class \(\famGa\), and the proof of \pref{thm:admissible}, are deferred to \pref{app: admissible lb}. We remark that in all the \OPE problem instances \(\opeg \in \famGa\), the corresponding MDPs \(M\ind{\opeg}\) share the same action space \(\cA = \crl{a_1, a_2}\) (binary actions), state space \(\cX\) and horizon \(H\),  however, the transition dynamics, reward function and initial distribution could change with the instance. Furthermore, the policy \(\pieval\ind{\opeg}\) and the state-action value function class \(\cF\ind{\opeg}\) are also same across all instances \(\opeg \in \famGa\). 

Our lower bound in \pref{thm:admissible} considers admissible offline data distributions, where for any instance \(\opeg \in \famGa\) and \(h \leq H\), the offline distribution \(\mu_h\ind{\opeg} = d_h^{\pioff\ind{\opeg}}(\cdot; M\ind{\opeg})\), and the offline algorithm can draw samples of the form \((x_h, a_h, r_h, x_{h+1})\) from the process \((x_h, a_h) \sim \mu_h\ind{\opeg}\), \(r_h \sim r\ind{\opeg}(x_h, a_h)\) and \(x_{h+1} \sim T\ind{\opeg}(\cdot \mid x_h, a_h)\). Thus, \pref{thm:admissible} strengthens over the results of \cite{foster2022offline}, in which the offline data distribution is not equal to the occupancy measure of a single policy. 

Having shown that bounded concentrability coefficient and realizability alone are not sufficient for statistically efficient offline policy evaluation, even if the offline distribution \(\mu_h = d^{\pioff}_h\) is admissible, we now ask what happens if the learner has access to complete offline trajectories \(\prn{x_1, a_1, r_1, s_2, \dots, r_{H}, s_{H}}\) sampled using \(\pioff\). Unfortunately, for this scenario, the result of \pref{thm:admissible} no longer holds. This is because the reduction in \pref{thm: informal general data}, which is a key tool in the proof of \pref{thm:admissible}, does not prevent from leaking additional information when the learner has access to trajectories of length more than \(2\). In particular, by looking at the conditional distributions of \(x_3\) after fixing actions \(a_1\) and \(a_2\) for the first two timesteps in that construction (which can be computed when given trajectory data that covers the first two timesteps), the learner can infer the value of \(\pieval\) in the underlying MDP. In the next section, we develop additional tools to handle trajectory data.

\subsection{Trajectory Data} \label{sec: trajectory lower bound} 
  

In many real world  applications, the offline dataset is collected by sampling trajectories of the form \((\sx_1, \sa_1, \sr_1, \sx_2,  \dots, \sx_{H}, \sa_{H}, \sr_{H})\) and it remains to address whether access to the entire \(H\)-length trajectory instead of just the tuples \((\sx, \sa, \sr, \sx')\) can allow the learner to circumvent the challenges introduced in previous subsections. In fact, \citet{foster2022offline} left it as an open problem whether access to trajectory data can make offline RL statistically tractable. In this section, we answer this in the negative and show that in the worst case, access to trajectory data does not overcome the statistical inefficiencies of offline RL with just bounded concentrability coefficient and realizability.

\begin{theorem} 
\label{thm:trajectory_lower_bound} Let $\epsilon\le \nicefrac{1}{15}$, and horizon \(H \geq 1\). Then, there exits a class \(\famGt\) of realizable \OPE problems, such that for every \OPE problem \(\opeg = \prn*{M\ind{\opeg}, \pieval\ind{\opeg}, \pioff\ind{\opeg}, \cF\ind{\opeg}} \in \famGt\),  the learner has access to offline trajectories sampled using \(\pioff\ind{\opeg}\), the concentrability coefficient of \(\pieval\ind{\opeg}\) w.r.t.~ \(\pioff\ind{\opeg}\) is $O(H^3)$, and \(\abs{\cF\ind{\opeg}} = 2\). 

Furthermore, any offline policy evaluation algorithm that estimates the value of \(\pieval\ind{\opeg}\) in the \MDP \(M\ind{\opeg}\) 
up to precision \(\epsilon/(16H)\), in expectation, for every \OPE problem \(\opeg \in \famGt\) must use $2^{\Omega(H)}$ offline trajectories in some \(\opeg \in \famGt\). 
\end{theorem}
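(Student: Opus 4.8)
The plan is not to prove a new lower bound from scratch, but to \emph{reduce} the trajectory-data problem to the admissible-data lower bound of \pref{thm:admissible}. Concretely, I would exhibit a transformation \converter from admissible instances to trajectory instances, together with a \emph{simulation} argument showing that offline trajectories drawn from a transformed instance can be generated out of admissible tuples of the original instance, at the cost of only an $\order(H)$ blow-up in the number of samples. Given such a simulation, any trajectory algorithm $\At$ that achieves accuracy $\epsilon/(16H)$ on the family $\famGt$ can be turned into an admissible algorithm $\Aa$ achieving accuracy $\epsilon$ on $\famGa$ using an $\order(H)$ factor more samples; since \pref{thm:admissible} forces $\Aa$ to use $2^{\Omega(H)}$ samples, the same bound (absorbing the $H$) must hold for $\At$.

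The role of \converter is to destroy exactly the information that distinguishes a trajectory from a bag of tuples, namely the correlation of the latent identity across consecutive layers within a single aggregation. I would achieve this with a ``decorrelation gadget'': whenever the behavior policy $\pioff$ takes the action it almost always takes (the one \emph{not} used by $\pieval$), the dynamics route the state through a bottleneck that, conditioned only on the current aggregation $\phi_h$, resamples the next latent state from its $\mu_h$-marginal within $\phi_h$. Because $\pieval$ uses the other action, the value-carrying correlations along $\pieval$'s path are left intact, so the value gap between the two MDPs underlying $\famGa$ is preserved up to the $\Theta(H)$ rescaling that keeps rewards bounded---this is the origin of the $\epsilon/(16H)$ precision. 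Under $\pioff$, however, every resampled step makes the next aggregation conditionally independent of the past given $\phi_h$, so that---except on the rare trajectories that take the $\pieval$-action twice in a row---the law of the visited aggregation sequence factorizes into a product of the one-step aggregated transitions of \pref{eq:agg-transition-model}.

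This factorization is what powers the simulation. Bucketing the admissible dataset $\Da$ by (layer, aggregation, action), I would generate a trajectory of $\famGt$ layer by layer: at aggregation $\phi_h$, draw a fresh bucketed tuple $(\sx_h,\sa_h,\sr_h,\sx_{h+1})$ matching $\phi_h$ and the policy's action, emit $\sr_h$, and move to aggregation $\Phi(\sx_{h+1})$; the resampling in the gadget guarantees that only this aggregation label, not the specific $\sx_{h+1}$, must be carried forward. This reproduces the exact offline law of $\famGt$ on all but a $1/\poly(H)$ fraction of trajectories---the ones containing consecutive $\pieval$-actions, where conditioning on a specific latent state would be required and admissible data cannot supply it---and each simulated trajectory consumes $H$ tuples. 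The inherited structural guarantees ($\abs{\cF\ind{\opeg}}=2$, realizability $Q^{\pieval}\in\cF\ind{\opeg}$, binary actions, and $\tabC(M\ind{\opeg},\pioff\ind{\opeg})=\order(H^3)$) follow because the gadget adds only $\order(1)$ latent states per layer and scales the dynamics and $\mu$ by a common factor, exactly as in the lifting of \pref{thm: informal general data}.

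The main obstacle is making the decorrelation gadget satisfy three competing demands at once: under $\pioff$ it must factorize the trajectory law so that the simulation from $\Da$ is \emph{exact} rather than merely approximate; it must leave $\pieval$'s value gap across $\famGt$ essentially unchanged; and it must not inflate the concentrability beyond $\order(H^3)$. These tensions are genuine---any resampling aggressive enough to decorrelate also threatens to wash out the value signal, while protecting that signal risks opening a coverage hole. I expect the delicate part to be bounding the contribution of the rare ``consecutive-$\pieval$-action'' trajectories on which the simulation is inexact, and showing that this residual leakage only degrades the achievable precision by the stated $\Theta(H)$ factor rather than collapsing the lower bound. I would therefore isolate an exactness-up-to-rare-events statement as a separate lemma establishing the correctness of \converter before invoking \pref{thm:admissible}.
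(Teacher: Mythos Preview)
Your high-level strategy is exactly right and matches the paper: reduce the trajectory problem to the admissible lower bound of \pref{thm:admissible} by manufacturing trajectories in a transformed instance out of admissible tuples, then arguing that a too-good trajectory algorithm would contradict \pref{thm:admissible}. The gap is in the gadget itself.

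You propose a single-pass decorrelation gadget that resamples \emph{within the current aggregation} whenever $\pioff$ plays its dominant action $\atwo$, while leaving the $\aone$-transition untouched so that $V^{\pieval}$ is preserved. You then assert the simulation is exact except on trajectories containing \emph{consecutive} $\pieval$-actions. This is where the argument fails: a \emph{single} $\aone$-step already breaks your simulation. Suppose the emitted observation at step $h$ is $\sx_h$ (the $\sx'$-component of the previous tuple) and the next action is $\aone$. In the true transformed MDP the next state is $\sim T(\cdot\mid\sx_h,\aone)$, which depends on the latent of that specific $\sx_h$. But your simulation draws a fresh tuple from the bucket $\phi(\sx_h)$ and emits its successor, which is $\sim T(\cdot\mid \sx'_h,\aone)$ for an \emph{independent} $\sx'_h\sim\mu_h(\cdot\mid\phi_h)$. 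In the hard instance of \pref{eg:arbitrary_behavior} the two latent states in $\phi_h^{[1]}$ transition under $\aone$ to \emph{different} aggregations, so this is a constant-TV error per $\aone$-step. With $\pioff(\aone\mid\cdot)=1/H^2$ over $H$ steps, the per-trajectory TV is $\Theta(1/H)$, hence $\Theta(n/H)$ over $n$ trajectories---the reduction collapses once $n\gtrsim H$, nowhere near $2^{\Omega(H)}$.

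The paper's fix is structural rather than a sharper rare-event bound. It introduces a second parameter $K$ and \emph{replicates} each layer into $K$ sub-layers (the \Replicator protocol). Inside a block, the $\pieval$-action is redefined to be the \emph{identity} transition (stay at the same state), and the other action resamples from the full $d_h^{\pioff}$---no aggregation conditioning at all; only the last sub-layer fires the original transition. Making $\aone$ the identity is precisely what dissolves your obstacle: the simulation can replay an $\aone$-sub-step without needing a tuple whose first coordinate equals the currently emitted $\sx_h$. The only bad event is now ``$\atwo$ is never taken in some block,'' which has probability $H\cdot 2^{-(K-1)}$ and is driven below any threshold by taking $K=\Theta(\log(nH/\epsilon))$. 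So the sample blow-up is $K$ (logarithmic), not $H$, and aggregations play no role in the converter.
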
 

\begin{figure}[h!]
    \centering
    \includegraphics[width=0.85\textwidth]{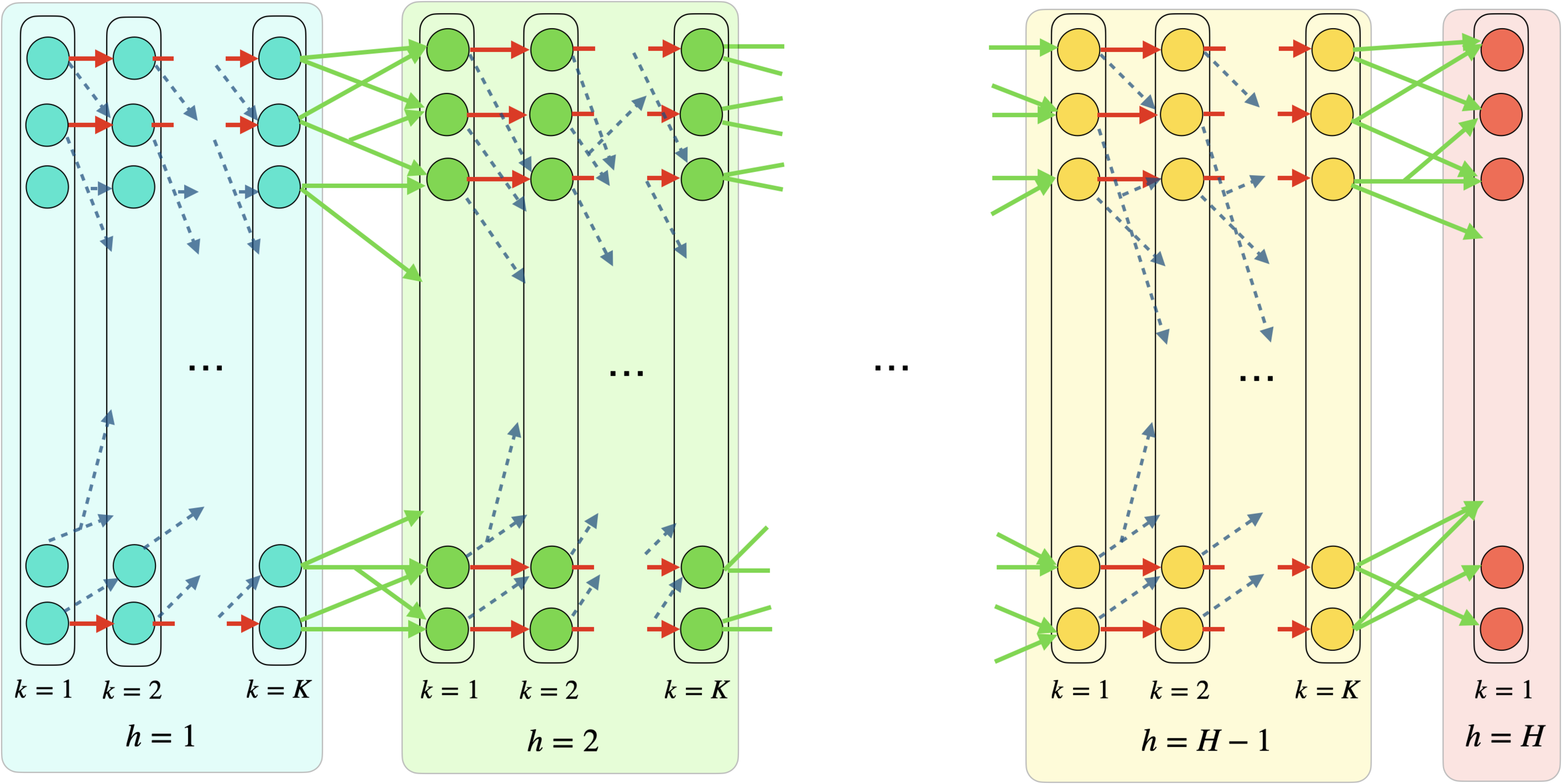} 
    \caption{Lower bound construction for the proof of \pref{thm:realizable-H}. For each \(h \in [H]\), the corresponding block denotes the \(K\) layers that are obtained using \Replicator by replicating the \(h\)-th layer in the given \MDP $M$ for \(K\) many times. The solid {red} arrows represent the transitions under the action $\aone$,  the dotted {blue} arrows represent the transitions under the $\atwo$ (under which we resample from the admissible distribution \(\mu_h\)), the solid {green} arrows denote the transitions according to the original \MDP \(M\).} 
    \label{fig: trajectory}
\end{figure} 

While the full proof is deferred to \pref{app: proof of reduction}, we present the main ideas and the key tools below. The primary reason why the lower bound from \pref{thm:admissible} does not hold under trajectory data is that access to trajectories spanning more than two timesteps in the underlying \MDP{}s in that construction leaks additional information, which can be exploited by the learner to evaluate \(\pieval\). In particular, given trajectory data, the learner can compute the marginal distribution over \(x_{h+2}\) given actions \(a_{h}\) and \(a_{h+1}\), for \(h \leq H - 2\), which can be used to identify the underlying instance in the class \(\famGa\) and thus compute \(\pieval\). Our key insight in the proof is to fix this problem of information leakage by introducing a general-purpose reduction from offline RL with admissible data to offline RL with trajectory data, which may be of independent interest. This reduction is obtained by using two new protocols called (a) the \Replicator~protocol, and (b) the \converter~protocol, which we describe below. 


\paragraph{\Replicator:} Given in \pref{alg:replicator} in the appendix, the \Replicator~protocol  takes as input a realizable \OPE problem \(\opeg = (M, \pioff, \pieval, \cF)\) where the \MDP \(M\) has horizon \(H\), and a parameter \(K\), and converts it into another realizable \OPE problem \(\wt \opeg = (\wt M, \wt \pioff, \wtpieval, \cF)\) where the new \MDP $\wt M$ has horizon \(\wt H = (H-1)K + 1\). We require that \Replicator satisfies the following property. 


\begin{property}[Informal; Formal version in \pref{lem:sec-trajectory-concentrability}]
\label{propr:claim1} 
The realizable \OPE problem \(\wt \opeg \leftarrow \Replicator(\opeg, K)\) satisfies the following: 
\begin{enumerate}[label=\((\alph*)\)] 
	\item Concentrability coefficient $\sup_{\wt h \leq \wt H} \nrm*{\frac{d_h^{\wtpieval}}{d_h^{\wt \piexp}}}_\infty \leq  2 \sup_{h \leq H} \nrm*{\frac{d_h^{\pieval}}{d_h^\piexp}}_\infty.$ 
\item The value of the policy \(\wtpieval\) in \(\wt M\) is equal to the value of \(\pieval\) in \(M\). 
\end{enumerate}
\end{property}

Our construction of \Replicator essentially replicates each layer in the given MDP $M$ for $K$ times (except for the last layer); see \pref{fig: trajectory} for illustration. In the following, we call these replicated layers as \emph{sub-layers}. We first define the transition function. For the last sub-layer (i.e., for $k=K$) of each layer $h \leq H-1$, the transition is exactly the same as that in the MDP from layer \(h\) to \(h +1\) (denoted by the green arrows in \pref{fig: trajectory}). For other sublayers with \(k < K\), the transitions are designed such that: if the action $\aone$ is taken, then the state transitions to the same state in the next sub-layer (red arrows in \pref{fig: trajectory}); and if the action $\atwo$ is taken, the next state is sampled independently from the offline data distribution $\mu_h = d^{\pi_b}_h$ (blue arrows in \pref{fig: trajectory}). Furthermore,  the evaluation policy $\wtpieval$ in the new MDP is the same as $\pieval$, and takes action \(\aone\) on all states. The offline policy $\wt\pioff$ is set as $\pioff$ for the last sub-layer (i.e. $k=K$), and is set as $\frac{1}{2} \prn*{\delta_{\aone} + \delta_{\atwo}}$ for the intermediate sub-layers with $k \leq K-1$.  

The rationale behind this design is that since $\wt\pioff=\frac{1}{2} \prn*{\delta_{\aone}+\delta_{\atwo}}$, for each $h$, with probability $1-2^{-K+1}$ the offline policy will choose $\atwo$ at least once in sub-layers $k=1, \ldots, K-1$. If $\atwo$ is chosen at least once, then the state distribution at $k=K$ is equal to $\mu_h=d^{\pioff}$ and independent from all previous layers $1, \ldots, h-1$.  As long as $K$ is large enough, this happens with very high probability, which makes the offline data distribution at sub-layer $k=K$ resemble admissible data with distribution $\mu_h=d^{\pioff}_h$, even when the data is actually a complete trajectory. It can be shown that this conversion preserves the concentrability coefficient up to a constant factor. 



\paragraph{\converter:} Given in \pref{alg:convert} in the appendix, this protocol takes as input \(K\) tuples of the form \((\sx_h, \sa_h, \sr_h, \sx_{h+1})\) sampled from an admissible offline distribution \(\mu_h = d_h^{\pioff}\), for every \(h \in [H]\), and returns a trajectory \(\wt \tau\) of length \(\wt H\) in \(\wt M\). We require that \converter satisfies the following property.

\begin{property}[Informal; Formal version in \pref{lem:sec-trajectory-convert}] 
\label{propr:claim2}
For a large class of offline policies \(\pioff\), 
the distribution of trajectory \(\wt \tau\), constructed by \(\converter\) using offline data tuples from \(d^\pioff\), is close to the distribution of trajectories \(\wt \tau'\) obtained using \(\wt \pioff\) in \(\wt M\). 



\end{property} 

The idea of \converter is straightforward: We already argue that \Replicator can simulate admissible data using trajectory data. Hence, with a reverse process, given an admissible dataset, we can create a new trajectory dataset in a new MDP that simulates the original admissible dataset. 

With the above two protocols, the reduction of offline RL with admissible data to trajectory data is straightforward and is stated in \pref{alg:convert-admissible} in the appendix. At a high level, given a realizable \OPE problem \(g\) with admissible offline distribution \(d^{\pioff}\), for some large enough \(K\), we use the \Replicator protocol to create a realizable \OPE problem \(\wt g\) and use the \converter protocol to generate trajectory data corresponding to \(\wt \pioff\) in \(\wt M\). Since, \pref{propr:claim1}-(a) implies that the concentrability coefficient stays bounded and \pref{propr:claim1}-(b) implies that the value to be evaluated remains unchanged, the above reduction provides a way to solve offline RL with admissible data by invoking an offline algorithm that requires trajectory data. Thus, if trajectory data with bounded concentrability coefficient is tractable, then so is admissible data by leveraging \pref{alg:convert-admissible}, which contradicts \pref{thm:admissible}. This implies that offline RL with trajectory data must also be statistically inefficient. The formal proof is deferred to  \pref{app: proof of reduction}.

\section{Upper Bound} 
\label{sec: upper bound} 
Our lower bounds show that the worst-case sample complexity of offline policy evaluation grows with the aggregated concentrability. In this section, we complement our lower bounds with an upper bound of the form $\poly(\aggC, H, \epsilon^{-1}, \log|\gF|)$. Taken together, the lower and upper bound suggest that aggregated concentrability, but not the standard concentrability, characterize the worst-case sample complexity of offline policy evaluation with value function approximation. 


\begin{theorem} 
\label{thm:main_BVFT_upper_bound}Let \(\epsilon > 0\), \(\cF\)  be a state-action value function class that satisfies \pref{ass:Q_realizability}, \(\pieval\) be an evaluation policy and \(\mu\) be an offline (general) data distribution. Then, \pref{alg: BVFT} (a adaptation of the \BVFT algorithm of \cite{xie2021batch}) returns a \(\wh V\) such that  $\abs*{\E_{\sx\sim \rho} \brk{V^\pieval(\sx) - \wh V}} = \order\left( \epsilon\right)$ after collecting 
\begin{align*}
n = \cO \prn*{\frac{\aggC\cdot H^6\log(|\gF|/\delta)}{\epsilon^4}} 
\end{align*} 
many (offline) samples from \(\mu\), where $\aggC:=\max_{f,f'\in\gF}\aggC_{\epsilon^2/H^2}(M, \Phi(f,f'), \mu)$ and \(\Phi(f,f')\) is the state aggregation scheme determined by  $f,f'\in\gF$ (see \pref{def: partition induced by two f} for the precise definition). 
\end{theorem}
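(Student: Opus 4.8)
The plan is to analyze the tournament procedure in \pref{alg: BVFT}, adapting the \BVFT analysis of \cite{xie2021batch} to the policy-evaluation setting while carrying the aggregated concentrability through every step. First I would fix a discretization resolution $\eta$ of order $\epsilon/H$ and, for each pair $f, f' \in \cF$, define the aggregation $\Phi(f,f')$ by binning the states of each layer according to the discretized value pair $(\lfloor f/\eta\rfloor, \lfloor f'/\eta\rfloor)$, so that states sharing a cell agree on both $f$ and $f'$ up to $\eta$. On this finite partition the policy Bellman backup $(\mathcal{T}^\pieval f)(x,a) = r(x,a) + \E_{x'\sim T(\cdot\mid x,a)}[f(x',\pieval(x'))]$ is replaced by an \emph{aggregated} backup that is piecewise constant on cells; crucially, because the functions are nearly constant within a cell, this aggregated backup depends on the data only through empirical cell averages of rewards and of next-state values, and is therefore estimable from $(x,a,r,x')$ tuples drawn from $\mu$.

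Second I would prove a uniform concentration guarantee: with probability $1-\delta$, simultaneously over all pairs $(f,f') \in \cF \times \cF$, the empirical aggregated Bellman error $\wh{\mathcal{E}}(f;f')$ is within $O(\epsilon)$ of its population value when restricted to cells whose evaluation occupancy exceeds the floor $\epsilon^2/H^2$. The union bound over the $\abs{\cF}^2$ pairs produces the $\log(\abs{\cF}/\delta)$ factor; the variance of each cell average, reweighted from $\mu$ to the evaluation occupancy via \pref{def:agg-concentrability}, contributes the factor $\aggC$; and balancing the $O(\eta^{-2})$ cells per layer, the horizon accumulation, and the occupancy floor against the target accuracy yields the stated $H^6/\epsilon^4$ dependence.

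Third, for the tournament itself, realizability (\pref{ass:Q_realizability}) guarantees that $Q^\pieval \in \cF$ is a Bellman fixed point, so its aggregated backup reproduces it up to the discretization bias $O(\eta H)$; hence $\max_{f'}\wh{\mathcal{E}}(Q^\pieval; f') = O(\epsilon)$, and the minimizer $\wh f = \argmin_f \max_{f'} \wh{\mathcal{E}}(f; f')$ inherits worst-case empirical score $O(\epsilon)$. Evaluating this score at the discriminator $f' = Q^\pieval$, the partition $\Phi(\wh f, Q^\pieval)$ resolves both $\wh f$ and $Q^\pieval$, so a small aggregated Bellman error of $\wh f$ on this partition, together with the fixed-point property of $Q^\pieval$, certifies that $\wh f$ is close to $Q^\pieval$ on the aggregated model. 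I would then convert this into a value-estimation error by a layer-by-layer telescoping (performance-difference) argument along the occupancy of $\pieval$: summing the per-layer aggregated residuals against $\agg{d}_h^\pieval$ and applying the change of measure from \pref{def:agg-concentrability}, which replaces $\agg{d}_h^\pieval$ by $\mu_h$ at the cost of the factor $\aggC$, transfers the offline-measured error onto the evaluation measure and gives the final $O(\epsilon)$ bound.

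The step I expect to be the main obstacle is the self-consistency argument linking the tournament score to the \emph{true} closeness of $\wh f$ to $Q^\pieval$: because each partition $\Phi(f,f')$ is itself determined by the pair of functions, one must argue that a small score certifies genuine proximity on the aggregated dynamics rather than being an artifact of an overly coarse aggregation. Controlling this requires carefully handling the interplay between the data-dependent partition, the discretization bias of size $O(\eta H)$, and the occupancy floor $\epsilon^2/H^2$ below which cells are discarded, ensuring in particular that the discarded low-occupancy mass contributes at most $O(\epsilon)$ to the final value error.
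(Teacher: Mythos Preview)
Your plan mirrors the paper's approach: tournament, discriminate at $f' = Q^\pieval$, change measure via the aggregated concentrability, and recurse layer by layer. Two points need sharpening. First, the discretization must be at scale $\eta = \Theta(\epsilon/(H\sqrt{\aggC}))$ rather than $\epsilon/H$: the $O(\eta H)$ bias you quote is measured in the \emph{offline} norm $\|\cdot\|_\nu$, and after the change of measure to $\agg{d}$ it picks up a $\sqrt{\aggC}$ factor, so your choice would leave a residual $\sqrt{\aggC}\,\epsilon$ in the final error (the paper's appendix in fact arrives at an $\aggC^2$ sample complexity for exactly this reason, the finer grid inflating $\Phimax$ by a factor $\aggC$). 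Second, the conversion step is not a performance-difference telescope along the true occupancy $d^\pieval$---that measure does not interface with the aggregated concentrability, and a telescope along $\agg{d}^\pieval$ would control the value in the aggregated \MDP rather than the true one. The paper instead runs a direct $\ell_2(\agg{d})$ recursion on $\|\fout - Q^\pieval\|_{\agg{d},h}$, using that both functions lie in $\gG(\fout,Q^\pieval)$ together with the non-expansion $\|\gT_\gG \fout - \gT_\gG Q^\pieval\|_{\agg{d},h} \le \|\fout - Q^\pieval\|_{\agg{d},h+1}$, which holds precisely because $\agg{d}$ is the occupancy generated by the aggregated transition $\agg{T}$. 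That non-expansion, combined with bounds on $\|\fout - \gT_\gG\fout\|_{\agg{d},h}$ and $\|Q^\pieval - \gT_\gG Q^\pieval\|_{\agg{d},h}$ obtained from the tournament score via change of measure from $\nu$ to $\agg{d}$, is exactly the self-consistency step you correctly identified as the crux.
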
 

The proof of \pref{thm:main_BVFT_upper_bound} is deferred to \pref{app: upper bound}, wherein we also provide a generalization of this result that accounts for misspecification error in \pref{ass:Q_realizability}, and an upper bound for a slightly more challenging scenario where the learner only has access to the state value function class (instead of the state-action value function class). Note that the upper bound depends on the aggregations scheme \(\Phi(f, f')\). The appearance of such an aggregation scheme in the upper bound is not surprising. In our lower bound in \pref{thm: informal general data}, while \(\Phi\) is given as an input, the corresponding value function class \(\cF\) constructed for the class \(\famGg\) satisfies that \(\Phi(f, f') = \Phi\) (see item-(b) in \pref{thm: informal general data}).  

\pref{alg: BVFT} is an adaptation of the \BVFT algorithm \citep{xie2021batch} for offline policy evaluation. At a high-level, the key idea in the algorithm is to solve a minimax problem (with the objective determined by Bellman error) over pairs \((f, f') \in \cF\times \cF\), where for each pair, the algorithm creates a ``tabular problem'' by aggregating states with the same $(f(\sx), f'(\sx))$ value, and estimates the Bellman error for this tabular problem. Intuitively, this is probably the best the learner can do, since besides the value of $(f(\sx))_{f\in\gF}$, the learner has no other ways to distinguish states in the large state space. Thus, due to aggregation, the upper bounds depends on aggregated concentrability coefficeint rather than the standard concentrability coefficient. 

  
 We remark that while \cite{xie2021batch} do not present their upper bound in terms of the aggregated concentrability, this quantity already appears in their analysis (see Appendix C in \cite{xie2021batch}). However, their final bound is represented with a stronger version of concentrability coefficient $\Cpush$ (\emph{pushforward concentrability coefficient}, formally defined in \pref{def:strong-coverability} in the appendix). It is  straightforward to show 
$\aggC\leq \Cpush$ (\pref{lem: C and Cpush}). Our analysis follows theirs, but along the way does not relax $\aggC$ to $\Cpush$.

\section{Conclusion} \arxiv{\label{app:discussions}} 


Our paper considers the problem of offline policy evaluation with value function approximation,  where the function class does not satisfy Bellman completeness, and shows that its  sample complexity is characterized by the aggregated concentrability coefficient---a notion of distribution mismatch in an aggregated \MDP obtained by clubbing together transitions from the states that have indistinguishable value functions under the given value function class (formal details in \pref{sec: general lower bound}). We provide an example of an MDP where the aggregated concentrability coefficient could be exponentially larger than the concentrability coefficient, using which we conclude that statistically efficient offline policy evaluation is not possible with bounded concentrability coefficient even if we assume access to trajectory data. This result thus highlights the necessity for further research into designing more effective strategies for dealing with the complexities inherent in offline reinforcement learning environments. 
\arxiv{We conclude with some discussion:} 
\colt{Further discussion of our results and related works are given in \pref{app:discussions} and \pref{app:related_works}  respectively.}
\arxiv{\colt{\section{Discussion} \label{app:discussions}}

\paragraph{Could the Aggregated Concentrability be Smaller Than the Standard Concentrability?} In \pref{sec:concentrability_larger}, we demonstrated via an example that the aggregated concentrability can be exponentially larger than the standard concentrability. However, it is also quite easy to come up with situations where the aggregated concentrability is actually smaller than the standard one. For example, suppose the aggregations scheme  \(\Phi_h = \crl{\phi_h}\) with \(\phi_h = \cX_h\), i.e. all states belong to a single aggregation. Here, the aggregated concentrability coefficient is exactly \(1\) since each layer has only one aggregation, whereas the standard concentrability coefficient could be arbitrary. 


\paragraph{Gap Between Upper and Lower Bounds in terms of Dependence on \(\epsilon\).}  
The sample complexity in our upper bound (\pref{thm:main_BVFT_upper_bound}) scales with $\frac{1}{\epsilon^4}$ instead of the more common $\frac{1}{\epsilon^2}$. This is similar to \cite{xie2021batch} and is because we divide the state space into $\order(\frac{1}{\epsilon^2})$ aggregations, each of which consists of states having the same value functions up to an accuracy  $\epsilon$. On the other hand, 
our lower bound has a \(\Omega(\frac{1}{\epsilon})\) dependence instead of the more common  $\Omega(\frac{1}{\epsilon^2})$. 
Improving the dependence on \(\epsilon\) in either the upper or lower bounds is an interesting future research direction.

\paragraph{Connections to Other Notations of Concentrability.} Various other notions of concentrability like pushforward concentrability  $\Cpush$, and action concentrability \(\CpushA\) \pref{def:strong-coverability}  are considered in the literature \citep{xie2021batch}.  We show that $\aggC\leq \Cpush$ (\pref{lem: C and Cpush}) and $\aggC\leq (\CpushA)^H$ (\pref{lem: C and BH}). Note that the sample complexity bound of $\order\prn*{(\CpushA)^H}$ is also what we get by using \emph{importance sampling} to perform offline policy evaluation. 

\paragraph{Single Policy vs.~All Policy Concentrability.} The notions of \emph{realizability} and \emph{concentrability coefficient} adopted in our paper are only with respect to the given \emph{evaluation  policy}. This is also called \textit{single policy concentrability} and is the standard assumption in offline policy evaluation literature. An alternative assumption that is used in the offline policy optimization literature is that of \textit{all policy concentrability}, which states that the concentrability coefficient for all possible policies (in the \MDP) w.r.t.~the given offline data is bounded. While we restrict our discussions to the former, our construction in \pref{eg:arbitrary_behavior} has bounded concentrability coefficient for all policies (\pref{app: admissible lb}). An interesting future research direction is to extend our lower bounds to the policy optimization setting. Perhaps, in the policy optimization setting, one can also ask whether generative or online access helps in addition to offline data and concentrability. 

\paragraph{Role of Realizable Value Function Class in Offline RL.} In this paper, we considered the realizable setting (\pref{ass:Q_realizability}) where the learner has access to a value function class that contains \(Q^{\pieval}\), 
and showed that the statistical complexity of offline policy evaluation is governed by the aggregated concentrability coefficient for the aggregation scheme induced by the given function class. However, how important is this access to the value function class? In particular, is statistically efficient offline RL feasible in the agnostic setting where the learner does not have any value function class? Unfortunately, as we show in \pref{app:realizable_function_class}, agnostic offline policy evaluation is not statistically tractable in the worst case even when the learner is given trajectory offline data that has bounded pushforward concentrability coefficient (From \pref{lem: C and Cpush} recall that this implies bounded aggregated concentrability coefficient for any aggregation scheme). Hence, further structural assumptions on the underlying MDP or the policies are needed for tractable learning. \citet{sekhari2021agnostic, jia2024agnostic} explored some structural assumptions that enable agnostic learning in the online RL setting, and extending their work to the offline setting is an interesting future research direction. 

\paragraph{How to Benefit from Trajectory Offline Data?} Our work indicates that in the worst-case, trajectory offline data provides no additional statistical benefit over General or Admissible offline data in the standard offline RL setting with value function approximation and bounded concentrability coefficient. But not all MDPs are the worst-case. Can we expect some instance-dependent benefit from access to trajectory data in offline RL? Alternately, can we make further assumptions on the underlying MDP or the value function classes, that are benign enough to capture real-world scenarios, but allow the learner to better exploit trajectory data. Furthermore, it is also interesting to study whether we can get statistical or computational improvements under trajectory data when the Bellman Completeness property holds. 

 


\arxiv{\section*{Acknowledgement} 
AS thanks Dylan Foster and Wen Sun for useful discussions. We acknowledge support from the NSF through award DMS-2031883, DOE through award DE-SC0022199, and ARO through award W911NF-21-1-0328. }  
}

\newpage  
\bibliography{refs, reference} 

\clearpage 

\setlength{\parindent}{0in} 
\setlength{\parskip}{3pt}  

\appendix 

\renewcommand{\contentsname}{Contents of Appendix} 
\tableofcontents 
\addtocontents{toc}{\protect\setcounter{tocdepth}{3}} 

\clearpage 

\colt{}
\section{Related Works}  \label{app:related_works}

Offline RL is challenging due to lack of direct interaction with the environment. Existing theoretical works that provide polynomial sample complexity guarantees often rely on multiple assumptions to be satisfied simultaneously. Specifically, in the realm of value function approximation, three pivotal assumptions stand out: (value function) realizability, concentrability, and Bellman completeness (i.e. $\mathcal{T}_h f_{h+1}\subset \mathcal{F}_h$ for all \(f_{h+1} \in \cF_{h+1}\)). The first two assumptions can be further categorized into single-policy concentrability (i.e., only the  target policy has bounded concentrability) and all-policy concentrability (all policies in the \MDP have bounded concentrability). 

\paragraph{Bellman Completeness.} If Bellman completeness holds, either all-policy realizability with single-policy concentrability \citep{xie2021bellman} or single-policy realizability with all-policy concentrability \citep{chen2019information} can guarantee polynomial sample complexity for policy optimization. Furthermore, other classical algorithms like Fitted Q-Iteration (FQI) \citep{munos2003error, munos2008finite, antos2008learning} are proved to have finite sample guarantee in terms of concentrability. 
The Bellman completeness assumption, however, is deemed rather undesirable because it is non-monotone in the function class and thus may be severely violated when a rich function class is used. Several efforts have been made to remove this assumption, though all requiring new assumptions: \cite{xie2021batch} showed that if a stronger version of concentrability, called \emph{pushforward} concentrability, holds, then with only single-policy realizability, polynomial sample complexity can be achieved without Bellman completeness. \cite{xie2020q}, \cite{zhan2022offline}, and \cite{ozdaglar2023revisiting} introduced the notion of \emph{density-ratio realizability} (different from value function realizability), and showed that this along with single-policy realizability and single-policy concentrability ensures polynomial sample complexity. \cite{zanette2023realizability} relaxed Bellman completeness to the notion of \emph{$\beta$-incompleteness} where Bellman completeness corresponds to $\beta=0$. He proved that $\beta<1$ along with realizability and concentrability admits polynomial sample complexity for policy evaluation.  

The question of whether just realizability and concentrability alone are sufficient for sample efficient offline RL remained open until the work of \cite{foster2022offline}, who answered this in the negative. They gave two examples where polynomial samples is insufficient even with all-policy realizability and all-policy concentrability. However, their lower bounds heavily rely on the offline data distribution being non-admissible, leaving the admissible and the trajectory cases open (see definitions and comparison in \pref{sec:offline_RL_prelims}).  

Further works on offline RL include \cite{liu2018breaking, uehara2020minimax, uehara2021finite} that focus on policy evaluation, and \cite{zhan2022offline, huang2022convergence, chen2022offline, rashidinejad2022optimal, ozdaglar2023revisiting} that focus on policy optimization. 



\paragraph{Other Lower Bounds in Offline RL.} There is another line of works showing exponential lower bound / impossibility results for offline policy evaluation with linear function approximation, but with concentrability replaced by other weaker notions of coverage \citep{wang2020statistical, amortila2020variant, zanette2021exponential}, e.g. the linear coverability assumption that $\lambda_{\min}(\mathbb{E}_{(s, a)\sim \mu} \phi(s, a)\phi(s, a)^T) \ge 1/d$. However, their alternate assumptions do not imply concentrability; Furthermore, these prior works also do not consider trajectory data, as in our results. More positive results can be found in the literature of \emph{model-based} approaches, for which we refer the reader to \cite{uehara2021pessimistic} and the related works therin. 

\paragraph{Online RL.} In online RL, while value function realizability and Bellman completeness is still a common assumption, the bounded concentrability coefficient assumption can be replaced by some low rank  structure on the Bellman error or its estimator \citep{jiang2017contextual, zanette2020learning, du2021bilinear, jin2021bellman}, which allow for efficient exploration. Recently, \cite{xie2022role} identified a new structural assumption called coverability which resembles all-policy concentrability and ensures polynomial sample complexity when combined with Bellman completeness. There have been various works in online RL that attempt to relax the Bellman completeness assumption by instead assuming density ratio realizability \cite{amortila2024harnessing}, occupancy realizability \cite{huang2023reinforcement}. Additionally, \cite{krishnamurthy2016pac, du2019provably, misra2020kinematic, zhang2022efficient, mhammedi2023representation}, focus on the simpler setting of block MDPs (which is a special case of density ratio realizability). It is an interesting direction to further unify the common notions used in online and offline RL.

\section{Additional Definitions and Notation}
In this section, we provide additional definitions and notations used in the appendix. 


\begin{definition}[Pushforward Concentrability Coefficient and Action Concentrability Coefficient; {\citet[Assumption 1]{xie2021batch}}]  \label{def:strong-coverability} For a distribution $\mu\in\Delta(\calX\times\calA)$, if we further assume that  
    \begin{enumerate}[label=\((\alph*)\)]
        \item There exists some $\CpushA > 0$ such that   $\mu(\sa\mid\sx)\ge 1/{\CpushA}$ 
        for any $\sx\in\calX, \sa\in\calA$,
        \item There exists some $\CpushX > 0$ such that the transition model satisfies $T(\sx'\mid\sx, \sa)/\mu(\sx') < \CpushX$, and the initial distribution $\rho$ satisfies $\rho(\sx)/\mu(\sx) < \CpushX$  for any $\sx, \sx'\in\calX, \sa\in\calA$,
    \end{enumerate} 
    then we say that the \MDP's pushforward concentrability coefficient with respect to $\mu$ is $\Cpush = \CpushX\CpushA$, and the \MDP's action concentrability coefficient with respect to $\mu$ is $\CpushA$.
\end{definition}

\paragraph{Aggregated Transitions with  Actions.} We further define the aggregated transitions with actions:
\begin{equation}\label{eq:agg-transition-model-Ta}
     \agg{T}(\phi'\mid{}\phi, \sa; \agg{M}) \ldef{} \frac{\sum_{\sx\in\phi}\sum_{\sx'\in\phi'}\mu(\sx, \sa)T(\sx' \mid{} \sx, \sa)}{\sum_{\sx\in\phi} \mu(\sx, \sa) } 
\end{equation}
Notice that when $\pi(\sx) = \delta_{\sa}(\cdot)$, i.e. $\pi$ takes action $\sa$ with probability $1$ at all states, $\agg{T}(\phi'\mid{}\phi, \sa; \agg{M})$ in \pref{eq:agg-transition-model-Ta} agrees with $\agg{T}(\phi'\mid{}\phi, \pi; \agg{M})$ in \pref{eq:agg-transition-model}.

\begin{definition}[Block MDP; \cite{du2019provably, misra2020kinematic}]
    A block \MDP is defined on top of a latent \MDP $M = (\calZ, \calA, T, r, H, \rho)$, a rich observation state space $\calX$ (partitioned into disjoint blocks $\calX_\sz$ for each latent state $\sz$), a decoder function $\xi$ and a conditional distribution $q(\cdot\mid \sz)\in \Delta(\calX_\sz)$. The block \MDP $\rich{M} = (\calX, \calA, \rich{T}, \rich{r}, H, \rich{\rho})$ with $\rich{T}(\sx\mid \sx, \sa) = q(\sx'\mid \xi(\sx'))T(\xi(\sx')\mid \xi(\sx), \sa)$, $\rich{r}(\sx, \sa) = r(\xi(\sx), \sa)$ and $\rich{\rho}(\sx) = \rho(\xi(\sx))q(\sx\mid\xi(\sx))$.
\end{definition}


\begin{definition}[$W$-function of \OPE problems]\label{def: W-function}
    Given an \OPE problem $(M, \mu, \pieval, \cF)$, the $W$-function: $W^\pieval(\cdot; \mu, M): [H]\to \mathbb{R}$ is defined as
    \begin{align*}
    W^\pieval(h; \mu, M) = \sum_{\sz\in\calZ_h}\mu_{h}(\sz, \pieval(\sz))Q_h^\pieval(\sz, \pieval(\sz); M). \numberthis \label{eq:W_definition} 
    \end{align*}
    Whenever clear from the context, the dependence on $\pieval$, \(\mu\) and \(M\) will be ignored. 
\end{definition}

\paragraph{Additional Notation.}  For $n \in \bbN$, we write $[n] = \{1, \dots, 
n\}$. For a countable set $\cS$, we write $\Delta(\cS)$ for the set of 
probability distributions on $\cS$. For any function \(u: \cX \times \cA \mapsto \bbR\) and distribution \(\rho \in  \Delta(\cX \times \cA)\), we define the norms \(\nrm{u}_{1, \rho} = \En_{(x, a) \sim \rho} \brk*{\abs{u(x, a)}}\) and  \(\nrm{u}_{2, \rho} = \sqrt{\En_{(x, a) \sim \rho} \brk*{u^2(x, a)}}\). For a distribution $\mathbb{P}\in \Delta(\calX)$, we define the cross product of $\mathbb{P}^{\otimes n}$ to be a distribution over $\calX^n$ such that $\mathbb{P}^{\otimes n}((\sx_1, \cdots, \sx_n)) = \prod_{i=1}^n\mathbb{P}(\sx_i)$, where $\sx_i\in \calX$. We use $\TV(p, q)$ and $D_{\chi^2}(p \|q )$ to denote the TV distance and $\chi^2$-divergence between two distribution $p$ and $q$. 


 
%
%
%
%
 
\section{Proof of \pref{thm: informal general data}} \label{app:proof1} 


\par Suppose we are given the Markov Transition Model (MTM) $M = \MTM(\calZ, \calA, T, H, \rho)$, and a distribution $\mu$ over $\calZ\times \calA$. $\Phi$ is an aggregated scheme so that every $\sz\in \calZ$ belongs to exact one of $\phi\in \Phi$, written as $\sz\in \phi$ (also all the latent states in $\phi$ should be at the same layer). We further define the aggregated function $\zeta: \calZ\to \Phi$, where for any $\sz\in\phi$, 
\begin{align}\label{eq:def-zeta}
    \zeta(\sz) \ldef{}  \phi
\end{align}

In the proof we will construct two class of offline policy evaluation (OPE) problems $\mathfrak{G}\ind{1}$ and $\mathfrak{G}\ind{2}$ from the given \MDP $M$ and distribution $\mu$. And we will prove \pref{thm: informal general data}  by showing that there exists an \OPE problem in $\mathfrak{G} = \mathfrak{G}\ind{1}\cup \mathfrak{G}\ind{2}$ that requires $\Omega(\aggC/\epsilon)$ number of samples for each layers to achieve accuracy $1/2$. The constructive proof is divided into three parts: 
\begin{enumerate}[label=(\roman*)]
    \item Construct aggregated MDPs $\bar{M}\ind{1}$ and $\bar{M}\ind{2}$ according to $M$ and $\Phi$ such that the concentrability coefficients of $\bar{M}\ind{1}$ and $\bar{M}\ind{2}$ are of order $\aggC$ (\pref{sec:aggregate}).
    \item Construct two \OPE problems $\mathfrak{g}\ind{1} = \OPE(M\ind{1}, \pieval, \mu', \cF)$ and $\mathfrak{g}\ind{2} = \OPE(M\ind{2}, \pieval, \mu', \cF)$ (\pref{sec:latent}), where \MDP $M\ind{1}$ and $M\ind{2}$ are obtained by adding three states $\su_h, \sv_h$ and $\sw_h$ in each layers. Distribution $\mu'$ is obtained from $\mu$ after rearranging some probability to $\su_h, \sv_h$ and $\sv_h$. And we can show that the concentrability coefficients of $\bar{M}\ind{1}$ and $\bar{M}\ind{2}$ can translate to the ratio between difference of value functions and difference of rewards between $M\ind{1}$ and $M\ind{2}$.
    
    \item Construct two class of \OPE problems $\mathfrak{G}\ind{1}$ and $\mathfrak{G}\ind{2}$ by lifting \OPE problems $\mathfrak{g}\ind{1}$ and $\mathfrak{g}\ind{2}$ into rich observations (\pref{sec:rich}).
\end{enumerate}

\subsection{Construction Sketch} 
In this subsection, we give a high-level sketch for the proof of \pref{thm: informal general data}. The full proof is detailed in the follow-up subsections. 


Suppose we are given an arbitrary Markov Transition Model $M$ with state space $\gZ$, transition dynamics \(T\) and initial distribution $\rho$, any offline data distribution $\mu$, and any state aggregation scheme $\Phi$ (see \pref{fig: original M}). Let $\gI$ and $h^*$ denote the set of aggregations and horizon that attain the maximum for $\aggC_\epsilon(M,\Phi,\mu)$ given in \pref{def:agg-concentrability}. In \pref{fig: original M}, $\gI$ is represented with the bold rectangle (for simplicity, in \pref{fig: lower bound general}, \(\cI\) only includes a single aggregation that contains a single latent state $\sz^\star$, but in general $\cI$ may include multiple aggregations each with multiple latent states). Based on $M$, we will construct two MDPs $M\ind{1}$ (with transitions $T\ind{1}$ and reward $r\ind{1}$) and $M\ind{2}$ (with transitions $T\ind{2}$ and reward $r\ind{2}$), and will argue that it is difficult for the learner to tell them apart when the MDPs are lifted to \emph{block MDPs}.


\begin{figure}[t]
\begin{center}
\subfigure[Given Markov Transition Model $M$ and aggregation scheme $\Phi$.]{\includegraphics[width=0.42\linewidth]{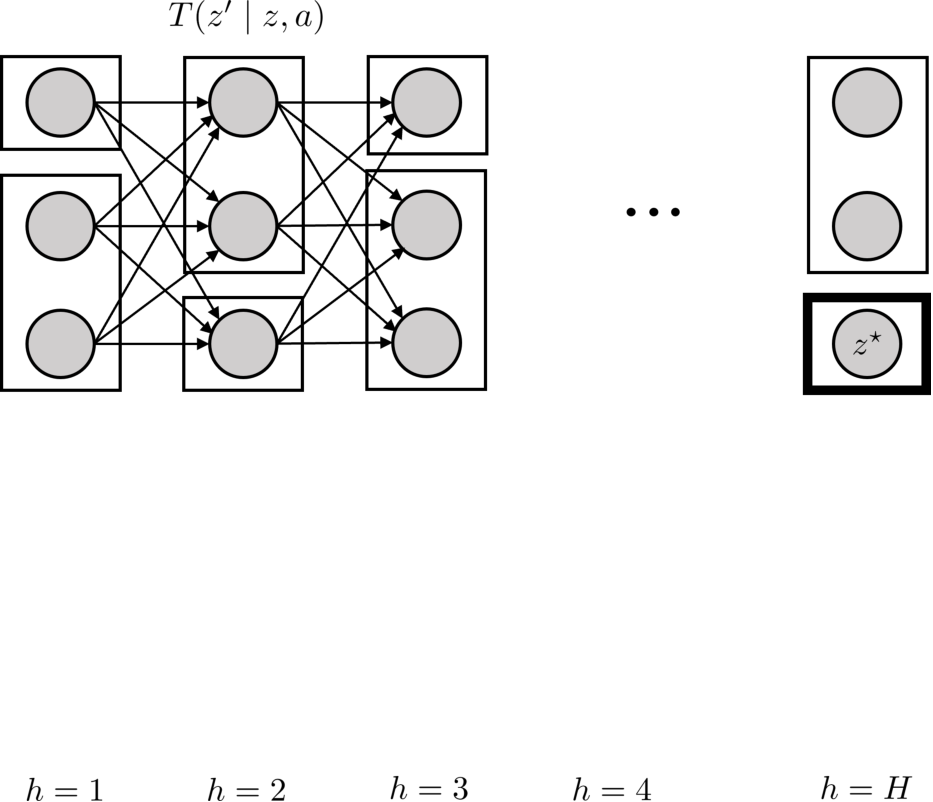} \label{fig: original M}}
\hfill  \subfigure[Augmented Markov Transition Model $M'$ and aggregation scheme $\Phi'$.] {\includegraphics[width=0.42\linewidth]{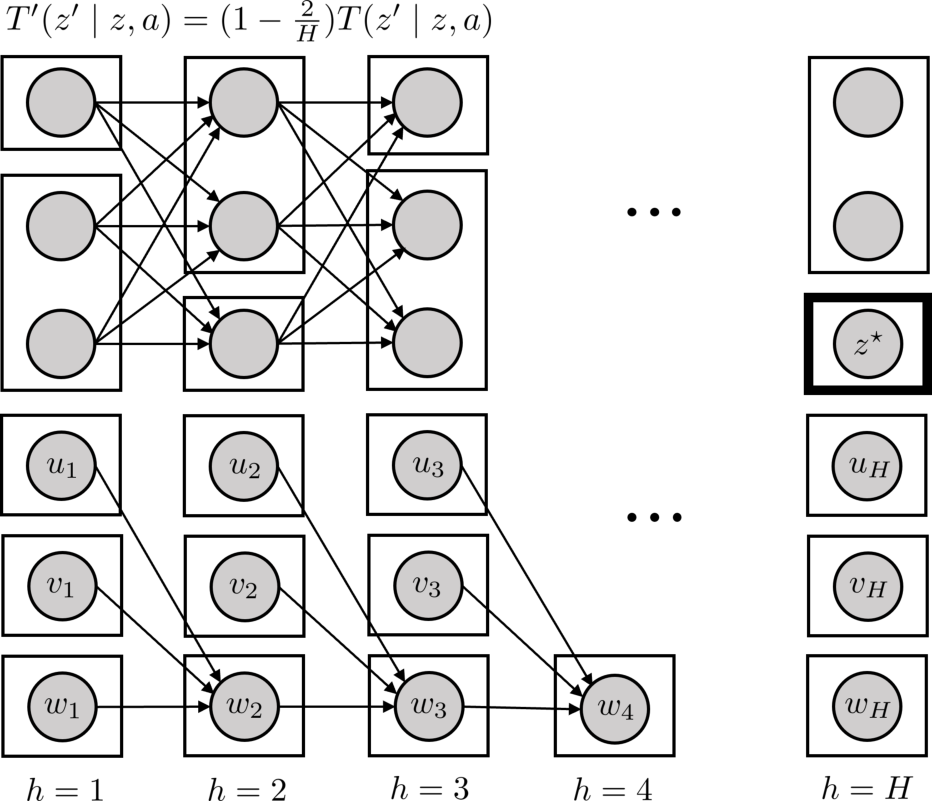} \label{fig: modified M} } \\ \vspace{50pt}   

\subfigure[Determine the reward and value functions in the layer \(H\). ]{\includegraphics[width=0.45\linewidth]{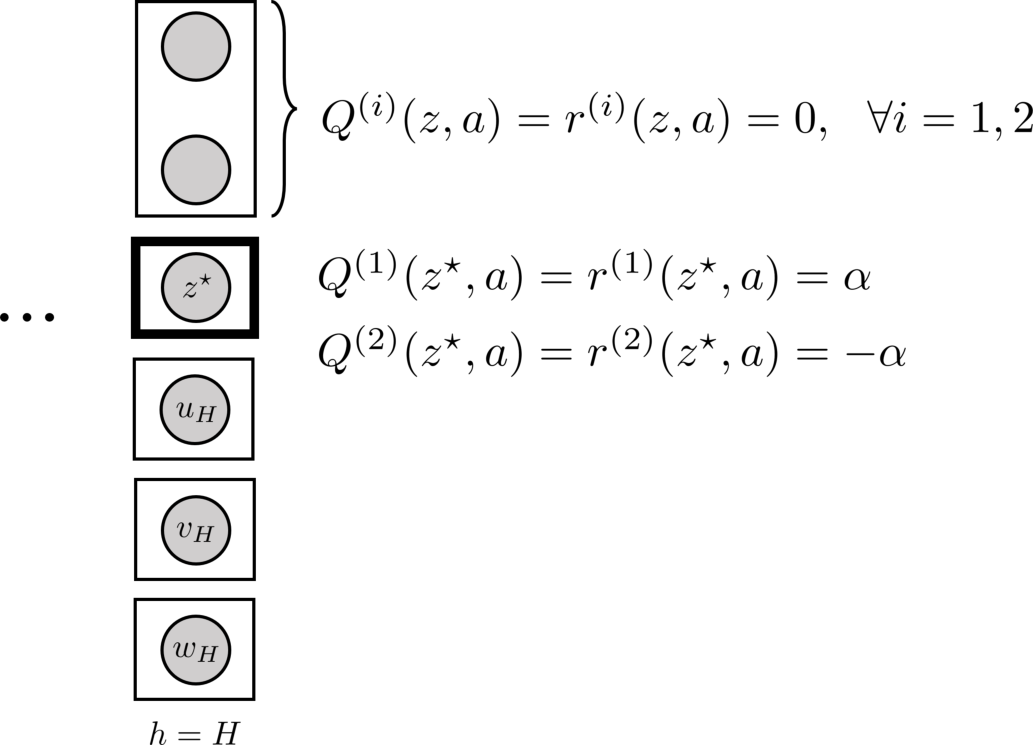} \label{fig: last layer}} 
\hfill \subfigure[Determine the value functions inductively assuming $Q\ind{i}$ are already determined on all layers $h'\geq h+1$. ] {\makebox[0.43\linewidth]{\includegraphics[width=0.26\linewidth]{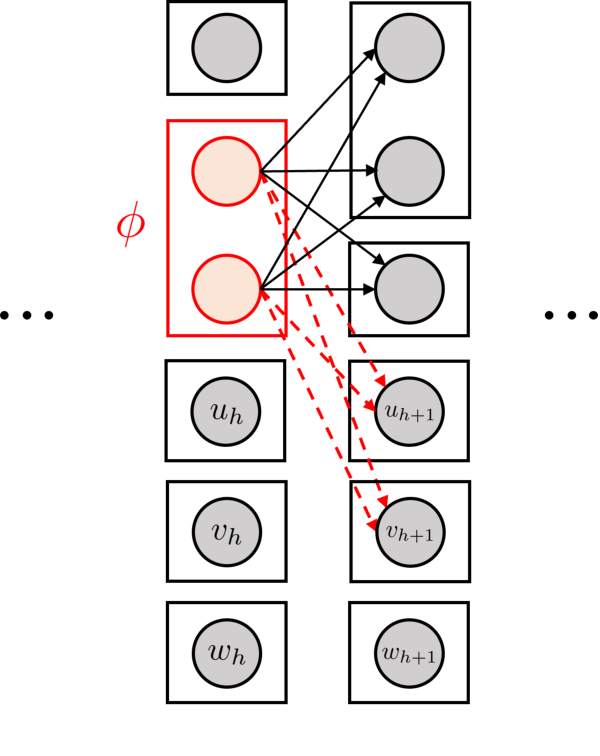} } \label{fig: back prob} }  \\

\caption{Lower bound construction used in the  proof sketch of \pref{thm: informal general data}. States are represented with circles and the corresponding state aggregations are represented with rectangles. We use the bold rectangle to denote the set of aggregations $\gI$ that attains the maximum in the definition of $\aggC_\epsilon(M,\Phi,\mu)$ (see \pref{def:agg-concentrability}). For simplicity, in the above figure \(\cI\) only contains a single aggregation that contains a single latent state $\sz^\star$, while in general $\cI$ may include multiple aggregations each with multiple latent states. } \label{fig: lower bound general} 
\end{center}
\end{figure}

\begin{enumerate}[leftmargin=14pt]
	\item \textit{Modified Markov Transition Model (\MTM) \(M'\)} (\pref{sec:latent}): We construct an \(\MTM\) $M'$ with state space \(\cZ'\) that comprises of the state space \(\cZ\) (corresponding to \(M\)) along with three additional states $\su_h, \sv_h, \sw_h$ on each layer \(h \in [H]\) (see \pref{fig: modified M}). The transition dynamics $T'$ in the \(M'\) is defined such that 
	\begin{enumerate}
		\item Each of $\su_h$, $\sv_h$, and $\sw_h$ deterministically transitions to $\sw_{h+1}$ under any action. 
		\item For any \(\sz_1,  \sz_2 \in \cZ\) and \(a \in \cA\),  \(T'(\sz_2 \mid \sz_1, a) = \prn*{1 - \nicefrac{2}{H}} T(\sz_2 \mid \sz_1, a).\) In particular, the probability of each transition from \(\cZ_h\) to \(\cZ_{h+1}\) is decreased by a  factor of \((1 - \nicefrac{2}{H})\). 
		\item The remaining  \(\nicefrac{2}{H}\) probability mass in \(T'\) is assigned to transitions from \(\cZ_h\) to \(u_{h+1}\) and \(v_{h+1}\). These transitions are different for \(M\ind{1}\) and \(M\ind{2}\), and will be specified later. 
	\end{enumerate} 
    Finally, we also define a new a modified aggregation scheme \(\Phi'\) that comprises of \(\Phi\) along with $3H$ more singleton aggregations, each consisting of $\su_h, \sv_h, \sw_h$ for \(h \in [H]\). 
    
	\item \textit{Reward functions} (\pref{sec:aggregate} and \pref{sec:latent}): 
	We create reward functions \(r\ind{1}\) and \(r\ind{2}\), for MDPs \(M\ind{1}\) and \(M\ind{2}\) respectively, such that non-zero rewards are only given to states in aggregation \(\cI\) and to \(\prn{\su_h, \sv_h, \sw_h}_{h \in [H]}\) (see \pref{fig: last layer}). In particular, we set 
	\begin{enumerate}[label=\(\bullet\)] 
	\item $r\ind{1}(\sz, \pieval(\sz))=\alpha$ and $r\ind{2}(\sz, \pieval(\sz))=-\alpha$ for any state \(\sz \in \gI\), for some properly chosen constant $\alpha$.  
	\item \(r\ind{i}(\su_h, a) = 1\), \(r\ind{i}(\sv_h, a) = -1\)  and \(r\ind{i}(\sw_h, a) = 0\) for any \(h \in [H]\) and \(a \in \cA\). 
	\item \(r\ind{i}(\sz, a) = 0\) for all other \(\sz \in \cZ\) and \(a \in \cA\). 
	\end{enumerate}
	\item \textit{Value functions and missing transitions} (\pref{sec:aggregate} and \pref{sec:latent}): We now proceed to the construction of state-action value functions  \(Q\ind{1}\) and \(Q\ind{2}\) for the evaluation policy \(\pieval\), and the transition probabilities \(T\ind{1}\) and \(T\ind{2}\), for \(M\ind{1}\) and \(M\ind{2}\) respectively. These quantities are constructed so as to ensure that: 
	\begin{enumerate}
	\item  All states that belong to the same aggregation have the same value in both $M\ind{1}$ and $M\ind{2}$, and are thus indistinguishable via the value functions, i.e. for any aggregation \(\phi \in \Phi'\), states \(\sz_1, \sz_2 \in \phi\), and \(a \in \cA\), 
\begin{align*}
	Q\ind{1}(\sz_1, \sa) = Q\ind{1}(\sz_2, \sa) \quad \text{and} \quad Q\ind{2}(\sz_1, \sa) = Q\ind{2}(\sz_2, \sa). \numberthis  \label{eq:Q_indistinguishable} 
\end{align*}
	\item From any aggregation, the probability of  transitioning to states \(\su_h\) (or to states \(\sv_h\))  is same between \(M\ind{1}\) and \(M\ind{2}\), i.e. for any \(\phi \in \Phi\) and \(h \in [H]\), 
	    \begin{align}
         &\forall \sa\in \gA, \quad \sum_{\sz\in\phi}\mu_h(\sz, \sa) T\ind{1}(\su_{h+1}\mid \sz, \sa) = \sum_{\sz\in\phi}\mu_h(\sz, \sa) T\ind{2}(\su_{h+1}\mid \sz, \sa), \nonumber  \\
         &\forall \sa\in \gA, \quad  \sum_{\sz\in\phi}\mu_h(\sz, \sa) T\ind{1}(\sv_{h+1}\mid \sz, \sa) = \sum_{\sz\in\phi}\mu_h(\sz, \sa) T\ind{2}(\sv_{h+1}\mid \sz, \sa).   \label{eq: indistinguishable}
    \end{align}
    \item For any \(\sz_1, \sz_2 \in \cZ\), we have \(T\ind{i}(\sz_2 \mid \sz_1, a) = T'(\sz_2 \mid \sz_1, a)\) for all \(a \in \cA\). 
	\end{enumerate} 
	Value functions and transitions that satisfy the above constraints are inductively constructed from time step \(h = H\) to \(1\). Since each of the above constraints is a linear equation, the corresponding solutions can be obtained by solving a system of linear equations. At a high level, the reason why we added $\su_{h+1}$ and $\sv_{h+1}$ --- 
    by splitting out some transition probabilities to $\su_{h+1}$ and $\sv_{h+1}$, and adjust their differences properly (notice that $Q\ind{i}(\su_{h+1}, \sa)=1$ and $Q\ind{i}(\sv_{h+1}, \sa)=-1$), we can calibrate the state-action values in $\phi$, making them all equal. 
    
    Jointly solving \pref{eq:Q_indistinguishable}, \pref{eq: indistinguishable}, and using the condition $T\ind{i}(\su_{h+1}\mid \sz, \sa) + T\ind{i}(\sv_{h+1}\mid \sz, \sa) =\frac{2}{H}$, we can obtain the following solution: 
    \begin{align}
        Q\ind{i}(\phi, \sa) = \sum_{\phi'\in\Phi_{h+1}} \agg{T}(\phi'\mid \phi, \sa) V\ind{i}(\phi'),  \label{eq: aggregation picture}
    \end{align}
    where $Q\ind{i}(\phi, \sa)$ is the value of $Q\ind{i}(\sz, \sa)$ shared by all $\sz\in\phi$, $\Phi_{h+1}$ is the set of aggregations on layer $h+1$, and $\agg{T}(\phi'\mid \phi, \sa) = \frac{\sum_{\sz'\in \phi'}\sum_{\sz\in \phi}\mu_h(\sz, \sa) T'(\sz'\mid\sz, \sa)}{\sum_{\sz\in \phi}\mu_h(\sz, \sa)}$ is the aggregated transition. This is where the aggregated transition comes into the picture. With \pref{eq: aggregation picture}, the argument that the aggregated transition plays a role in the sample complexity is similar to the argument in the tabular case as outlined in \pref{sec: general lower bound}. For formal proofs, see \pref{lem:sec-lower-bound-latent-MDP}\ref{lem:sec-lower-bound-latent-MDP-a}-\pref{lem:sec-lower-bound-latent-MDP}\ref{lem:sec-lower-bound-latent-MDP-c}  
    

	\item \textit{Construction of offline distribution \(\mu'\)} (\pref{sec:latent}): For any \(\sz  \neq \su_h, \sv_h, \sw_h\), we set  $\mu'_h(\sz, \sa) = \frac{1}{2}\mu_h(\sz, \sa)$. 
 Furthermore, for $\sz = \su_h, \sv_h, \sw_h$, we define $\mu'_h(\sz, \sa) = \frac{1}{6}$ 
 This construction ensures that both $\tabC$ and $\aggC$ remain unchanged up to constant factors in the original $M$ and in the modified $M\ind{1}$ and \(M\ind{2}\). See \pref{lem:sec-lower-bound-latent-MDP}\ref{lem:sec-lower-bound-latent-MDP-d} and \pref{corr:sec-lower-bound-con} for formal proofs.  
	

	\item \textit{Lifting to block MDPs} (\pref{sec:rich}): We finally lift $M\ind{1}$ and $M\ind{2}$ to block MDPs where every state $\sz$ serves as a \emph{latent state} invisible to the learner. Instead of observing the latent state $\sz$, the learner only observes a \emph{rich observation} from the set \(\cX\) corresponding to latent state $\sz$.  
\end{enumerate}
For the rest of this section, we provide a formal proof for \pref{thm: informal general data}. 

\subsection{Construction of Aggregated MDPs}  \label{sec:aggregate} 

We first construct two aggregated \MDP{}s\footnote{Throughout this proof, we use a "bar" over the variables, e.g.~in \(\bar{M}\), \(\bar{r}\), etc., to signify that they correspond to aggregated \MDP{s}.}  $\agg{M}\ind{1} = (\Phi, \calA, \agg{T}, \agg{r}\ind{1}, H, \agg{\rho})$ and $\agg{M}^{(2)} = (\Phi, \calA, \agg{T}, \agg{r}\ind{2}, H, \agg{\rho})$ of horizon \(H\), whose state space is \(\Phi\) and action space is \(\cA\). Furthermore, both of them have identical transition models and initial distributions given by: 
\begin{enumerate}[label=\(\bullet\)]
\item State Space \(\Phi\), and action space \(\cA\). 
\item \textbf{Transition model} $\agg{T}$ is defined as $\agg{T}(\phi\mid\phi, \sa)\ldef{}  \agg{T}(\phi\mid\phi, \sa; \agg{M})$, where $\agg{T}(\phi\mid\phi, \sa; \agg{M})$ is given in \pref{eq:agg-transition-model}.
\item \textbf{Initial distribution $\mb{\agg{\rho}}$} is defined as $\agg{\rho}(\phi)\ldef{}  \sum_{\sz\in \phi}\rho(\sz)$. 
\end{enumerate}

Suppose $h^*\in [H-1]$ and set $\Phiopt \subset \Phi_{h^*}$ attains the maximum in \pref{def:agg-concentrability}. The reward function of \(\agg{M}\ind{1}\) and \(\agg{M}\ind{2}\) are given by: 
\begin{enumerate}[label=\(\bullet\)]
\item \textbf{Reward function \(\agg{r}\ind{1}\) for \(\agg{M}\ind{1}\).}  We set the reward to be \(0\) for all states \(\phi \notin \gI\). Furthermore, for \(\phi \in \gI\), we set the reward to be \(0\) for actions that would not have been chosen by \(\pieval\). On the remaining \((\phi, a)\) tuples, we set a non-zero reward given by: 
   \begin{align*}
    \agg{r}\ind{1}(\phi, \sa) = \frac{\epsilon}{2H\sum_{\phi\in \Phiopt}d_{h^*}^\aggpieval(\phi; \agg{M})} \cdot \indic\{\phi\in \Phiopt, \sa = \pieval(\phi)\},
   \end{align*}
   where $h^*$ and $\Phiopt$ are the maximizer in \pref{def:agg-concentrability}. The key intuition in the above choice of reward function is to ensure that only those states-action contribute to non-zero rewards for which \(\phi \in \gI\) and \(a = \pieval(\phi)\); Hence, in order to receive a non-zero return, any agent in this \MDP needs to first find states in \(\gI\) and then play action given by \(\pieval\) on them. The denominator just consists of additional normalizing factors to ensure that the value is bounded by \(\nicefrac{\epsilon}{2H}\). 
\item \textbf{Reward function \(\agg{r}\ind{2}\) for \(\agg{M}\ind{2}\).} The reward function is similar to \(\agg{r}\ind{1}\), but with the negative sign. In particular, we define 
\begin{align*}
   \agg{r}\ind{2}(\phi, \sa) =  - \frac{\epsilon}{2H\sum_{\phi\in \Phiopt}d_{h^*}^\aggpieval(\phi; \agg{M})} \cdot \indic\{\phi\in \Phiopt, \sa = \pieval(\phi)\}.
\end{align*}
\end{enumerate} 

\begin{definition}[Aggregated MDP]\label{def:sec-lower-bound-aggMDP}
    We define aggregated MDPs $\agg{M}\ind{1} = (\Phi, \calA, \agg{T}, \agg{r}\ind{1}, H, \agg{\rho})$ and $\agg{M}^{(2)} = (\Phi, \calA, \agg{T}, \agg{r}\ind{2}, H, \agg{\rho})$ as follows: transition model $\agg{T}$ is defined as $\agg{T}(\phi\mid\phi, \sa)\ldef{}  \agg{T}(\phi\mid\phi, \sa; \agg{M})$, where $\agg{T}(\phi\mid\phi, \sa; \agg{M})$ is given in \pref{eq:agg-transition-model}, reward functions are defined as 
    \begin{align*}
    \agg{r}\ind{1}(\phi, \sa) = \frac{\epsilon}{2H\sum_{\phi\in \Phiopt}d_{h^*}^\aggpieval(\phi; \agg{M})} \cdot \indic\{\phi\in \Phiopt, \sa = \pieval(\phi)\}, \intertext{and,}
    \agg{r}\ind{2}(\phi, \sa) =  - \frac{\epsilon}{2H\sum_{\phi\in \Phiopt}d_{h^*}^\aggpieval(\phi; \agg{M})} \cdot \indic\{\phi\in \Phiopt, \sa = \pieval(\phi)\}.
    \end{align*}
\end{definition}

\begin{lemma}\label{lem:opposite}
    The value functions of $\bar{M}\ind{1}$ and $\bar{M}\ind{2}$ satisfies that for any policy $\pi$ and $\phi\in \Phi$,
    $$V^\pi(\phi; \bar{M}\ind{1}) = - V^\pi(\phi; \bar{M}\ind{@}).$$
\end{lemma}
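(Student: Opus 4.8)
The plan is to exploit the fact that $\bar{M}\ind{1}$ and $\bar{M}\ind{2}$ are identical in every primitive except the reward, and that the two rewards are exact pointwise negatives of one another. Comparing the two reward definitions in \pref{def:sec-lower-bound-aggMDP}, we have $\agg{r}\ind{2}(\phi, \sa) = -\agg{r}\ind{1}(\phi, \sa)$ for every $\phi \in \Phi$ and $\sa \in \calA$, while the transition kernel $\agg{T}$ and the initial distribution $\agg{\rho}$ are shared between the two models. Since, for fixed transition dynamics and fixed policy, the value function is a linear functional of the reward, negating the reward should simply negate the value function. I would make this precise via a backward induction on the layer index $h$, which is well defined because each $\phi \in \Phi$ lies in a unique layer.

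Concretely, for any policy $\pi$ and any $\phi \in \Phi_h$, the value functions obey the Bellman recursion
\begin{align*}
V^\pi(\phi; \bar{M}\ind{i}) = \En_{\sa \sim \pi(\cdot \mid \phi)}\brk*{\agg{r}\ind{i}(\phi, \sa) + \sum_{\phi' \in \Phi_{h+1}} \agg{T}(\phi' \mid \phi, \sa)\, V^\pi(\phi'; \bar{M}\ind{i})}.
\end{align*}
The base case is the last layer $h = H$, where there are no further transitions and $V^\pi(\phi; \bar{M}\ind{i}) = \En_{\sa \sim \pi}\brk*{\agg{r}\ind{i}(\phi, \sa)}$; the identity $\agg{r}\ind{2} = -\agg{r}\ind{1}$ then immediately yields $V^\pi(\phi; \bar{M}\ind{1}) = -V^\pi(\phi; \bar{M}\ind{2})$. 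For the inductive step, assuming the claim holds throughout layer $h+1$, I substitute $\agg{r}\ind{1} = -\agg{r}\ind{2}$ and $V^\pi(\phi'; \bar{M}\ind{1}) = -V^\pi(\phi'; \bar{M}\ind{2})$ into the recursion for $\bar{M}\ind{1}$. Because $\agg{T}$ is common to both models, every term on the right-hand side acquires a common factor of $-1$, which factors out to give $V^\pi(\phi; \bar{M}\ind{1}) = -V^\pi(\phi; \bar{M}\ind{2})$ on layer $h$, completing the induction.

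I do not anticipate any genuine obstacle: the statement is a direct consequence of the linearity of the return in the reward together with the fact that the two models share all remaining components. The only points requiring minor care are bookkeeping the layered state structure so that the induction is correctly anchored at $h = H$, and verifying that the shared kernel $\agg{T}$ is used identically in the two recursions so that the sign genuinely factors through the expectation and the sum over $\phi'$.
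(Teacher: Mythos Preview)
Your proposal is correct and takes essentially the same approach as the paper: both rest on the observation that $\bar{M}\ind{1}$ and $\bar{M}\ind{2}$ share the transition kernel $\agg{T}$ while $\agg{r}\ind{2} = -\agg{r}\ind{1}$, so the value function (being linear in the reward for fixed dynamics and policy) flips sign. The paper states this as a one-line remark, whereas you spell it out via backward induction over the layers; your version is simply a more explicit rendering of the same argument.
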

\begin{proofof}[\pref{lem:opposite}]
    This lemma is easy to see after noticing that the transitions of $\bar{M}\ind{1}$ and $\bar{M}\ind{2}$ are the same, while the reward functions of $\bar{M}\ind{1}$ and $\bar{M}\ind{2}$ are of opposite signs.
\end{proofof}

\begin{lemma}
\label{lem:bounded_V}
    The value functions of $\bar{M}\ind{1}$ and $\bar{M}\ind{2}$ satisfies that
    $$V^\aggpieval(\agg{\rho}; \agg{M}\ind{1}) = \frac{\epsilon}{2H},\quad \text{and }V^\aggpieval(\agg{\rho}; \agg{M}\ind{2}) = -\frac{\epsilon}{2H}.$$
    Additionally, for each $\phi\in \Phi$,
    \begin{equation}\label{eq:sec-lower-bound-bound}
        0\le V^\aggpieval(\phi; \agg{M}\ind{1})\le \frac{1}{2H},\qquad  -\frac{1}{2H}\le V^\aggpieval(\phi; \agg{M}\ind{2})\le 0.
    \end{equation}
\end{lemma}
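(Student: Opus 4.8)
The plan is to exploit the very simple structure of the reward $\agg{r}\ind{1}$: it is supported only on layer $h^*$, only on aggregations $\phi\in\Phiopt$, and only on the action $\pieval(\phi)$ played by the evaluation policy. First I would observe that since $\aggpieval$ plays exactly $\pieval(\phi)$ at every $\phi$, a trajectory under $\aggpieval$ collects reward iff it passes through $\Phiopt$ at layer $h^*$, and in that event it collects exactly the constant $c \ldef{} \frac{\epsilon}{2H\sum_{\phi\in\Phiopt}d_{h^*}^{\aggpieval}(\phi; \agg{M})}$. Consequently, for any $\phi$ lying in layer $h$ I would write
\[
V^\aggpieval(\phi; \agg{M}\ind{1}) = c\cdot \Pr^{\agg{M}\ind{1}, \aggpieval}\brk*{\phi_{h^*}\in\Phiopt \mid \phi_h = \phi},
\]
where the probability is $0$ whenever $h > h^*$ and equals $\indic\{\phi\in\Phiopt\}$ when $h = h^*$. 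Since this probability lies in $[0,1]$, the value is immediately sandwiched between $0$ and $c$.

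The next step is to bound $c$ by $\frac{1}{2H}$. This is a direct consequence of the definition of the aggregated concentrability coefficient (\pref{def:agg-concentrability}): the maximizing pair $(h^*, \Phiopt)$ must be feasible there, and feasibility requires $\sum_{\phi\in\Phiopt}\agg{d}_{h^*}^{\aggpieval}(\phi)\ge \epsilon$. Hence $c \le \frac{\epsilon}{2H\epsilon} = \frac{1}{2H}$, which establishes the upper bound $0\le V^\aggpieval(\phi; \agg{M}\ind{1})\le \frac{1}{2H}$ in \pref{eq:sec-lower-bound-bound}. For the value at the initial distribution, I would average the display above over $\phi_1\sim\agg{\rho}$ and recognize $\En_{\phi_1\sim\agg{\rho}}\Pr[\phi_{h^*}\in\Phiopt\mid\phi_1] = \sum_{\phi\in\Phiopt}d_{h^*}^{\aggpieval}(\phi;\agg{M})$ as precisely the aggregated occupancy mass of $\Phiopt$, by the definition of $\agg{d}$. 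Multiplying by $c$ cancels this sum and leaves $V^\aggpieval(\agg{\rho}; \agg{M}\ind{1}) = \frac{\epsilon}{2H}$ exactly.

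Finally, both claims for $\agg{M}\ind{2}$ follow with no extra work from \pref{lem:opposite}, which gives $V^\aggpieval(\phi; \agg{M}\ind{2}) = -V^\aggpieval(\phi; \agg{M}\ind{1})$ for every $\phi$ (and the same at $\agg{\rho}$ by averaging); this turns $[0,\frac{1}{2H}]$ into $[-\frac{1}{2H},0]$ and $\frac{\epsilon}{2H}$ into $-\frac{\epsilon}{2H}$. The only genuinely delicate point — really the only place the argument could go astray — is the bookkeeping that the occupancy $d_{h^*}^{\aggpieval}(\phi;\agg{M})$ appearing in the normalization of the reward is the \emph{same} quantity that enters the feasibility constraint $\ge\epsilon$ in \pref{def:agg-concentrability}; once this identification is made, the cancellation and the $\frac{1}{2H}$ bound are immediate. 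I would also verify that the reaching probabilities telescope through the aggregated transition $\agg{T}$ rather than the original $T$, but since $V^\aggpieval(\cdot;\agg{M}\ind{1})$ is defined directly on the aggregated MDP (whose dynamics are $\agg{T}$ and whose start distribution is $\agg{\rho}$), this is automatic.
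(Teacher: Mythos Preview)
Your proposal is correct and follows essentially the same approach as the paper's proof: both use \pref{lem:opposite} to reduce to $\agg{M}\ind{1}$, invoke the feasibility constraint $\sum_{\phi\in\Phiopt}\agg{d}_{h^*}^{\aggpieval}(\phi)\ge\epsilon$ from \pref{def:agg-concentrability} to bound the reward constant by $\frac{1}{2H}$, and exploit that the reward is supported on a single layer to bound the cumulative reward pointwise. Your phrasing via the reaching probability $\Pr[\phi_{h^*}\in\Phiopt\mid\phi_h=\phi]$ is a slightly more explicit way of saying what the paper states in words (``the sum of rewards along any trajectory \ldots is always between $0$ and $\nicefrac{1}{2H}$''), and the computation of $V^\aggpieval(\agg{\rho};\agg{M}\ind{1})$ by cancelling the occupancy mass against the normalization in $c$ is exactly the paper's one-line calculation $\sum_{\phi\in\gI}\agg{r}\ind{1}(\phi,\pieval(\phi))d_{h^*}^{\aggpieval}(\phi;\agg{M})=\frac{\epsilon}{2H}$.
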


\begin{proofof}[\pref{lem:bounded_V}]
    According to \pref{lem:opposite}, we only need to prove results for $\bar{M}\ind{1}$. First of all, we can write the value functions as weighted averages of rewards with occupancy-measure-weights:
    \begin{equation}\begin{aligned}\label{eq:sec-lower-bound-value}
        V^\aggpieval(\agg{\rho}; \agg{M}\ind{1}) & = \sum_{\phi\in \gI} r\ind{1}(\phi, \pieval(\phi))d_{h^*}^\aggpieval(\phi; \agg{M}) = \frac{\epsilon}{2H}.
    \end{aligned}\end{equation}
    Next, We let $\Phiopt$ to be the set which attains the second maximum in \pref{eq: concentrability coeff} of the definition of $\tabC_\epsilon(M, \mu)$. Then $\Phiopt\subset\Phi_h$ for some $h\in[H-1]$, and it also satisfies $\sum_{\phi\in\Phiopt} d_{h^*}^\aggpieval(\phi; \agg{M})\ge \epsilon$, which implies that
    $$0\le \agg{r}\ind{1}(\phi, \sa)\le \frac{\epsilon}{2H\cdot \epsilon}\le \frac{1}{2H}, \quad \forall \phi\in \Phi_h, a\in\calA\qquad \text{and }\qquad \agg{r}\ind{1}(\phi, \sa) = 0,\quad \forall \phi\not\in\Phi_h, \sa\in\calA.$$
    Hence for any $\phi\in\Phi$, the sum of rewards along any trajectory which starts from $\phi$ is always between $0$ and $\nicefrac{1}{2H}$ in $\agg{M}\ind{1}$. Therefore we get $0\le V^\aggpieval(\phi; \agg{M}\ind{1})\le \nicefrac{1}{2H}$ for any $\phi\in\Phi$.
\end{proofof}

\subsection{Construction of Latent-State MDPs and \OPE Problems} \label{sec:latent}
Based on $\agg{M}\ind{1}$ and $\agg{M}\ind{2}$, we next construct two MDPs $M\ind{1}$ and $M\ind{2}$ which will be used as latent-state dynamics for rich observation MDPs that we construct in the next section. For \(i \in \crl{1, 2}\), we define 
\begin{align*}
M\ind{i} &= \text{MDP}(\calZ', \cA, T\ind{i}, r\ind{i}, H, \rho), 
\end{align*} 
where 
\begin{enumerate}[label=\(\bullet\)] 
\item \textbf{State space \(\mb{\cZ'}\)} is defined such that \(\cZ' = \cup_{h=1}^H\cZ'_h\) where, for each \(h \in [H]\), in addition to the states in \(\cZ_h\), the set \(\cZ_h'\) contains  three additional states \(\crl{\su_h, \sv_h, \sw_h}\) for all \(h \in [H]\). Formally,  \(\cZ'_h \ldef{} \cZ_h \cup \crl{\su_h, \sv_h, \sw_h}\).

The roles of \(u_h, v_h\) and \(w_h\) is to ensure that for every aggregated state \(\phi \in \Phi\), each state \(z \in \phi\) has the same value functione; How we achieve this will become clear later when we define the transition model \(T\ind{i}\). 

\item \textbf{Initial distribution \(\mb{\rho}\)}  is the same as the initial distribution in \(M\) (the original \MDP that was used in the construction of \(\agg{M}\ind{1}\) and \(\agg{M}\ind{2}\)). 

\item \textbf{Reward function \(r\ind{i}\)}  is set as 

\begin{align*}
r\ind{i}(\sz, \sa) & = \begin{cases}
    \agg{r}\ind{i}(\zeta(\sz), \sa) & \text{if}\quad \sz \in \calZ_h,\\   
    1 & \text{if}\quad \sz = \su_h,\\   
    - 1 & \text{if}\quad \sz = \sv_h,\\ 
    0 & \text{if}\quad \sz = \sw_h.\\ 
\end{cases} 
\end{align*}
for all \(h \in [H-1]\), where $\zeta(\sz)$ is defined in \pref{eq:def-zeta}. In particular, we use the same reward in \(M\ind{i}\) as in \(\agg{M}\ind{i}\) for the (old) states \(z \in \cZ\), and define new rewards for (newly added) states \(\su_h, \sv_h\) and \(\sw_h\). By definition,  the reward \(r\ind{i}(\sz', a) = r\ind{i}(\sz'', \sa)\) whenever \(\sz'\) and \(\sz''\) belong the same aggregated state \(\phi\), for all \(\sa \in \cA\). 

\item \textbf{Transition model \(\mb{T \ind{i}}\).} The transitions are defined such that \(T\ind{i}(\sz' \mid \sz, \sa)\) is proportional to \(T(\sz' \mid \sz, \sa)\) for tuples \((\sz, \sa, \sz') \in \cZ_h \times \cA \times \cZ_h\) that corresponds to transitions amongst (old) states that were also present in \(M\), and the remaining probability mass is redirected it to new states \(\crl{\su_h, \sv_h, \sw_h}_{h \in [H]}\). Formally, for $\sz\in \calZ_h$ and action $\sa\in\calA$, we set 

\begin{align*}
T\ind{i}(\sz'\mid \sz, \sa) =  \begin{cases}
	~\left(1 - \frac{2}{H}\right) T(\sz'\mid \sz, \sa) &\text{if} \quad \sz' \in \cZ_{h+1},\\ 
	~\Dti{i}(z)  + \frac{1}{H} &\text{if} \quad \sz' = \su_{h+1},\\ 
	~- \Dti{i}(z)  + \frac{1}{H} &\text{if} \quad \sz' = \sv_{h+1},\\ 
	~0 &\text{if} \quad \sz' = \sw_{h+1},
\end{cases}\numberthis \label{eq:sec-lower-bound-trans1}
\end{align*}
where we defined 
\begin{align*}
\Dti{i}(z) \ldef{} \frac{1}{2} \prn*{\sum_{\phi' \in \Phi_{h+1}} \agg{T}(\phi' \mid \zeta(\sz), \sa)V^\aggpieval(\phi'; \agg{M}\ind{i}) - \left(1 - \frac{2}{H}\right) \sum_{\sz' \in \cZ_{h+1}}  T(\sz'\mid \sz, \sa) V^\aggpieval(\zeta(\sz'); \agg{M}\ind{i})}. \numberthis \label{eq:sec-lower-bound-trans_inter}
\end{align*}

Furthermore, for any \(\sz \in \crl{\su_{h}, \sv_{w}, \sw_{h}}\) and \(\sa \in \cA\), $T\ind{i}$ transits to $w_{h+1}$ with probability $1$, i.e. 
\begin{align*}
T\ind{i}(\sz'\mid \sz, \sa) =  \begin{cases}
~ 1 &\text{if} \quad \sz' = \sw_{h+1} \\
~ 0 &\text{otherwise}  
\end{cases}  \numberthis \label{eq:sec-lower-bound-trans2}
\end{align*}  
Intuitively, the above transitions imply that states \(\crl{\sw_h}_{h \in [H-1]}\) act as terminal states. 
%
%
 \end{enumerate}

We further define distribution $\mu_h'$ over $\calZ_h'$ of layer $h\in [H-1]$ as follows:
\begin{equation}\label{def: sec-lower-mu-prime}
\mu'_h(\sz) = \begin{cases}
    \frac{\mu_h(\sz)}{2} &\text{if}\quad \sz\in\calZ_h,\\
    \frac{1}{6} &\text{if}\quad \sz\in \{\su_h, \sv_h, \sw_h\}. 
\end{cases}\end{equation}

Additionally, to be formal, we define $\piexp(\sz) = \pieval(\sz) \equiv a_0$ (an arbitrary fixed state in $\mathcal{A}$) for any $\sz\in \{\su_h, \sv_h, \sw_h\}$, and the \OPE problems $\mathfrak{g}\ind{1}$ and $\mathfrak{g}\ind{2}$ as 
\begin{equation}\label{eq: ope-original}
    \mathfrak{g}\ind{1} = \OPE(M\ind{1}, \pieval, \mu', \cF), \quad \mathfrak{g}\ind{2} = \OPE(M\ind{2}, \pieval, \mu', \cF),
\end{equation}
where $\cF\ldef{} \{Q\ind{1}, Q\ind{2}\}$ with $Q\ind{1}$ and $Q\ind{2}$ are state-action value functions of $M\ind{1}$ and $M\ind{2}$ under policy $\pieval$.
Before we proceed, we note the following technical lemma.

\begin{lemma}  \label{lem:valid}
For \(i \in \crl{1, 2}\), \(T\ind{i}\) constructed via \pref{eq:sec-lower-bound-trans1} and \pref{eq:sec-lower-bound-trans2} above is a valid transition model. 
\end{lemma}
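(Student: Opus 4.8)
The plan is to verify the two defining properties of a transition kernel for every state--action pair $(\sz, \sa)$: non-negativity of each entry $T\ind{i}(\cdot \mid \sz, \sa)$, and normalization $\sum_{\sz'} T\ind{i}(\sz' \mid \sz, \sa) = 1$. For states $\sz \in \crl{\su_h, \sv_h, \sw_h}$ both properties are immediate from \pref{eq:sec-lower-bound-trans2}, since the kernel places all its mass on $\sw_{h+1}$. Thus the entire argument reduces to the case $\sz \in \cZ_h$, where the kernel is given by \pref{eq:sec-lower-bound-trans1}, and I would organize the proof around checking normalization first and non-negativity second.

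Normalization is a direct computation. Summing the four cases of \pref{eq:sec-lower-bound-trans1} over $\sz' \in \cZ'_{h+1}$, the two contributions $\pm \Dti{i}(z)$ cancel, the two $\frac{1}{H}$ terms add to $\frac{2}{H}$, and using $\sum_{\sz' \in \cZ_{h+1}} T(\sz' \mid \sz, \sa) = 1$ (because $T$ is a valid kernel in the MTM $M$) the mass placed on old states is $(1 - \frac{2}{H})$. These combine to $1$, so normalization holds for every $H \ge 2$ (the $H=1$ case is vacuous since there are no transitions to verify). The non-negativity of the ``old-state'' entries $(1 - \frac{2}{H}) T(\sz' \mid \sz, \sa)$ and of the $\sw_{h+1}$ entry is likewise immediate for $H \ge 2$.

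The only nontrivial point---and the crux of the lemma---is non-negativity of the two entries $\pm \Dti{i}(z) + \frac{1}{H}$, which amounts to proving $\abs{\Dti{i}(z)} \le \frac{1}{H}$. I would rewrite \pref{eq:sec-lower-bound-trans_inter} as $\Dti{i}(z) = \frac{1}{2}(A - B)$, where $A = \sum_{\phi' \in \Phi_{h+1}} \agg{T}(\phi' \mid \zeta(\sz), \sa) V^{\aggpieval}(\phi'; \agg{M}\ind{i})$ and $B = (1 - \frac{2}{H}) \sum_{\sz' \in \cZ_{h+1}} T(\sz' \mid \sz, \sa) V^{\aggpieval}(\zeta(\sz'); \agg{M}\ind{i})$. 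The key observation is that $A$ is a genuine convex combination of the aggregated value functions, since $\agg{T}(\cdot \mid \zeta(\sz), \sa)$ is a probability distribution over $\Phi_{h+1}$, while $B$ is $(1-\frac{2}{H})$ times a convex combination of the same values (grouping the latent states $\sz'$ by their aggregation $\zeta(\sz')$ shows $B$ is an average over $\crl{V^{\aggpieval}(\phi')}_{\phi' \in \Phi_{h+1}}$). Invoking \pref{lem:bounded_V}, for $i = 1$ every $V^{\aggpieval}(\phi'; \agg{M}\ind{1}) \in [0, \frac{1}{2H}]$, so $A, B \in [0, \frac{1}{2H}]$ and hence $\abs{A - B} \le \frac{1}{2H}$, giving $\abs{\Dti{1}(z)} \le \frac{1}{4H} < \frac{1}{H}$. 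For $i = 2$ the identical bound holds with signs flipped, using $V^{\aggpieval}(\phi'; \agg{M}\ind{2}) \in [-\frac{1}{2H}, 0]$.

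I expect the normalization and the trivial entries to be routine; the main (though short) obstacle is correctly identifying $A$ and $B$ as bounded averages so that \pref{lem:bounded_V} applies. The one subtlety worth flagging is the reweighting in $B$: the distribution $T(\cdot \mid \sz, \sa)$ lives on latent states $\cZ_{h+1}$ while the value is evaluated at the aggregated state $\zeta(\sz')$, so one must check that $B$ is still a convex combination of the aggregated values $\crl{V^{\aggpieval}(\phi')}_{\phi' \in \Phi_{h+1}}$ before applying the $[0, \frac{1}{2H}]$ bound---this follows by collecting latent states with a common aggregation.
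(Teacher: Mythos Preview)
Your proposal is correct and follows essentially the same approach as the paper: verify normalization directly and bound $\abs{\Dti{i}(z)}$ via \pref{lem:bounded_V} to get non-negativity of the $\su_{h+1}, \sv_{h+1}$ entries. The only cosmetic differences are that the paper applies the triangle inequality to the absolute values (obtaining $\abs{\Dti{i}(z)} \le \frac{1}{2H}$) rather than exploiting the sign structure of $V^{\aggpieval}(\cdot;\agg{M}\ind{i})$ as you do (which yields the slightly sharper $\frac{1}{4H}$), and that the paper checks non-negativity before normalization.
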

\begin{proofof}[\pref{lem:valid}]
We first show that \(T\ind{i}(\sz' \mid \sz, \sa) > 0\) for all \(\sa \in \cA\) and \(\sz, \sz' \in \cZ'_{h+1}\). This is trivial for \(\sz' \in \cZ_h\) or when \(\sz' = \sw_{h+1}\). We next show that the same holds when \(\sz' \in \crl{\su_{h+1}, \sv_{h+1}}\). 

Note that due to \pref{lem:bounded_V}, we have that \(\abs{V^\aggpieval(\phi; \agg{M}\ind{i})} \leq \frac{1}{2H}\) for all \(\phi \in \Phi\). Plugging this in \pref{eq:sec-lower-bound-trans_inter}, and using Triangle inequality, we get that  
\begin{align*}
\abs*{\Dti{i}(z)} \leq  \sup_{\phi \in \Phi} \abs*{V^\aggpieval(\phi; \agg{M}\ind{i})} \leq \frac{1}{2H}. 
\end{align*} 
Using this in  \pref{eq:sec-lower-bound-trans1}, we immediately get that \(T\ind{i}(\sz' \mid \sz, \sa) > 0\) for any \(\epsilon \leq 1\). Furthermore, it is easy to check that for any \(\sz \in \cZ'_{h}\) and \(a \in \cA\), 
$\sum_{\sz' \in \cZ'_{h+1}} T\ind{i}(\sz' \mid \sz, \sa)  = 1.$ 
Thus, $T\ind{i}$ is a valid transition model. 
\end{proofof}

We note the following technical lemma, which will be used in the rest of the analysis. 
\begin{lemma} \label{lem:sec-lower-bound-latent-MDP}
We have the following properties of value functions, state-action value functions and occupancy measures of $M\ind{1}$ and $M\ind{2}$:
    \begin{enumerate}[label=\((\alph*)\)]
        \item \label{lem:sec-lower-bound-latent-MDP-a}  For \(i \in \crl{1, 2}\), and for any $\phi\in \Phi$, $\sz\in \phi$, and  $a\in \calA$, 
        $$Q^\pieval(\sz, \sa; M\ind{i}) = Q^\aggpieval(\zeta(\sz), \sa; \agg{M}\ind{i}).$$
        \item \label{lem:sec-lower-bound-latent-MDP-b} Corresponding to the initial distribution \(\rho\), the expected values  satisfy 
        $$V^\pieval(\rho; M\ind{1}) - V^\pieval(\rho; M\ind{2}) = \frac{\epsilon}{H}.$$ 
        \item \label{lem:sec-lower-bound-latent-MDP-c} For any $h\in [H-1]$, latent state $\phi\in \Phi_h$, action $\sa\in\calA$ and latent state $\sz'\in\calZ'$, 
        $$\sum_{\sz\in \phi}\mu(\sz\mid \sz\in \phi, \sa)T\ind{1}(\sz'\mid \sz, \sa) = \sum_{\sz\in \phi}\mu(\sz\mid \sz\in \phi, \sa)T\ind{2}(\sz'\mid \sz, \sa).$$
        \item \label{lem:sec-lower-bound-latent-MDP-d} For each \(i \in \crl{1, 2}\), for any $h\in [H-1]$ and $\sz\in\calZ_h$, and policy \(\pi\), 
        $$\frac{1}{16}d_h^\pi(\sz; M)\le d_h^\pi(\sz; M\ind{i})\le d_h^\pi(\sz; M).$$
    \end{enumerate}
\end{lemma}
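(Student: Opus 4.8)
The plan is to prove the four parts in the order (a), (b), (d), (c), since (a)--(b) and (d) are clean inductions while (c) is the delicate one. For part (a) I would run a backward induction on the layer $h$, proving the statement for $Q$ and \emph{all} actions simultaneously; the value-function form $V^\pieval(\sz; M\ind{i}) = V^\aggpieval(\zeta(\sz); \agg{M}\ind{i})$ then follows by plugging in $\sa = \pieval(\sz) = \pieval(\zeta(\sz))$, using that $\pieval$ is constant on each aggregation. The base case $h=H$ is immediate because there is no future reward and $r\ind{i}(\sz,\sa) = \agg{r}\ind{i}(\zeta(\sz),\sa)$ on old states. Before the inductive step I would record the closed-form facts $V^\pieval(\su_{h+1}; M\ind{i}) = 1$, $V^\pieval(\sv_{h+1}; M\ind{i}) = -1$, $V^\pieval(\sw_{h+1}; M\ind{i}) = 0$, which hold at every layer since $\su,\sv,\sw$ carry rewards $+1,-1,0$ and funnel deterministically into $\sw$. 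Expanding the Bellman backup with the kernel \pref{eq:sec-lower-bound-trans1}, the $\su_{h+1},\sv_{h+1}$ contribution collapses to $2\Dti{i}(\sz)$; substituting the definition \pref{eq:sec-lower-bound-trans_inter} of $\Dti{i}(\sz)$ makes the two $\prn*{1-\nicefrac{2}{H}}\sum_{\sz'} T(\sz'\mid\sz,\sa)V^\aggpieval(\zeta(\sz');\agg{M}\ind{i})$ terms cancel \emph{exactly}, leaving $\agg{r}\ind{i}(\zeta(\sz),\sa) + \sum_{\phi'}\agg{T}(\phi'\mid\zeta(\sz),\sa)V^\aggpieval(\phi';\agg{M}\ind{i})$, which is $Q^\aggpieval(\zeta(\sz),\sa;\agg{M}\ind{i})$ by the aggregated Bellman equation. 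This cancellation is exactly why $\su,\sv,\sw$ were introduced.

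Part (b) is then a one-line corollary: since $\agg{\rho}(\phi)=\sum_{\sz\in\phi}\rho(\sz)$ and the initial distribution of $M\ind{i}$ is $\rho$, part (a) gives $V^\pieval(\rho; M\ind{i}) = \sum_{\sz\in\cZ_1}\rho(\sz)V^\aggpieval(\zeta(\sz);\agg{M}\ind{i}) = V^\aggpieval(\agg{\rho};\agg{M}\ind{i})$, and \pref{lem:bounded_V} identifies these as $+\nicefrac{\epsilon}{2H}$ and $-\nicefrac{\epsilon}{2H}$, whose difference is $\nicefrac{\epsilon}{H}$. For part (d) I would show by forward induction on $h$ that $d_h^\pi(\sz; M\ind{i}) = \prn*{1-\nicefrac{2}{H}}^{h-1} d_h^\pi(\sz; M)$ for every $\sz\in\cZ_h$: the base case is $d_1^\pi = \rho$ in both models, and the inductive step uses $T\ind{i}(\sz'\mid\sz,\sa) = \prn*{1-\nicefrac{2}{H}}T(\sz'\mid\sz,\sa)$ on old states, together with the fact that an old state at layer $h$ is only reachable by a path lying entirely in $\cZ$ (since $\su,\sv,\sw$ are absorbed into $\sw$). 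The stated bound follows from the elementary estimate $\nicefrac{1}{16}\le \prn*{1-\nicefrac{2}{H}}^{h-1}\le 1$ for all $1\le h\le H$ and $H\ge 3$, as $\prn*{1-\nicefrac{2}{H}}^{H-1}$ is increasing in $H$ with value $\nicefrac{1}{9}$ at $H=3$.

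Part (c) I expect to be the main obstacle, and I would handle it by a case analysis on $\sz'$. When $\sz'\in\cZ_{h+1}$ or $\sz'=\sw_{h+1}$ the probability in \pref{eq:sec-lower-bound-trans1} has no dependence on $i$, so the two sides agree termwise. The only substantive case is $\sz'\in\{\su_{h+1},\sv_{h+1}\}$, where $T\ind{i}(\su_{h+1}\mid\sz,\sa) = \Dti{i}(\sz)+\nicefrac{1}{H}$. The key tool is the aggregated-transition identity $\sum_{\sz\in\phi}\mu(\sz\mid\sz\in\phi,\sa)\sum_{\sz'\in\phi'}T(\sz'\mid\sz,\sa) = \agg{T}(\phi'\mid\phi,\sa)$ from \pref{eq:agg-transition-model-Ta}, which lets me collapse the latent expectation appearing in $\Dti{i}(\sz)$, once averaged over $\sz\in\phi$ against the conditional weights $\mu(\sz\mid\sz\in\phi,\sa)$, to an aggregation-level quantity. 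Combined with the opposite-sign relation $V^\aggpieval(\cdot;\agg{M}\ind{2}) = -V^\aggpieval(\cdot;\agg{M}\ind{1})$ of \pref{lem:opposite} (which gives $\Dti{2}(\sz) = -\Dti{1}(\sz)$ pointwise, so that the roles of $\su$ and $\sv$ are swapped between the two models), the target is to verify that the $\mu$-averaged transition mass onto $\su_{h+1}$, and separately onto $\sv_{h+1}$, coincides for $i=1$ and $i=2$. This indistinguishability is precisely what forces any learner to visit the critical aggregation in order to separate $M\ind{1}$ from $M\ind{2}$, and the hard part will be carrying out this averaging while keeping it consistent with the value calibration established in (a); I would track the interaction of the $\prn*{1-\nicefrac{2}{H}}$ scaling, the symmetric reward design of $\agg{M}\ind{1}$ and $\agg{M}\ind{2}$, and the split of the escaping mass between $\su$ and $\sv$ very carefully at this step.
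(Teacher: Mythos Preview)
Your proposal is correct and follows essentially the same approach as the paper. Part (a) is proved by exactly the backward induction you describe, using the Bellman equation together with the closed-form values on $\su,\sv,\sw$ so that the $\su,\sv$ contribution becomes $2\Dti{i}(\sz)$ and the $(1-\nicefrac{2}{H})$ term cancels against the definition \pref{eq:sec-lower-bound-trans_inter}; part (b) is the one-line corollary via \pref{lem:bounded_V}; part (d) is the forward induction you give, and in fact your exact identity $d_h^\pi(\sz;M\ind{i})=(1-\nicefrac{2}{H})^{h-1}d_h^\pi(\sz;M)$ is what the paper's argument really establishes (it only records the two-sided inequality in \pref{eq:induction_hypothesis1}); and part (c) is handled by the same case split on $\sz'$, reducing to the $\su_{h+1}$ case and averaging $\Dti{i}$ against $\mu(\cdot\mid\phi,\sa)$ using the aggregated-transition identity and \pref{lem:opposite}. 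The only cosmetic difference is that the paper proves the parts in order (a)--(d) rather than your (a),(b),(d),(c).
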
 
\begin{proofof}[\pref{lem:sec-lower-bound-latent-MDP}]
Observe that, by construction, we have for \(i \in \crl{1, 2}\), 
\begin{align*}
    Q\ind{i}(\su_h, \sa)  = 1, \qquad
    Q\ind{i}(\sv_h, \sa)  = -1,\text{and} \qquad 
    Q\ind{i}(\sw_h, \sa)  = 0, 
\end{align*} 
for any $h$ and  $\sa\in\calA$. 

\paragraph{Proof of \ref{lem:sec-lower-bound-latent-MDP-a}: } The proof follows via downward induction from \(h = H\) to \(h = 1\). First note that for any \(\sz \in \cZ_{H}\) and \(a \in \cA\), we have  $Q^\pieval(\sz, \sa; M\ind{i}) = 0 = Q^\aggpieval(\zeta(\sz), \sa; \agg{M}\ind{i})$; Thus the base case is satisfied. For the induction hypotesis, assume that the desired claim holds for \(h +1\). Thus, for layer $h$, using Bellman equation for the policy \(\pieval\), we have that  
\begin{align*}
    &\hspace{-0.3cm}Q^\pi(\sz, \sa; M\ind{i})\\
    & = r\ind{i}(\sz, \sa) + T\ind{i}(\su_{h+1}\mid \sz,\sa) - T\ind{i}(\sv_{h+1}\mid \sz,\sa) + \sum_{\sz' \in \cZ_{h+1}} T\ind{i}(\sz'\mid \sz,\sa)Q^\pieval(\sz', \pieval(\sz'); M\ind{i})\\ 
    & = \agg{r}\ind{i}(\zeta(\sz), \sa) + T\ind{i}(\su_{h+1}\mid \sz,\sa) - T\ind{i}(\sv_{h+1}\mid \sz,\sa) + \sum_{\sz' \in \cZ_{h+1}} T\ind{i}(\sz'\mid \sz,\sa)V^\aggpieval(\zeta(\sz'); \agg{M}\ind{i})\\
    & = \agg{r}\ind{i}(\zeta(\sz), \sa) + \sum_{\phi' \in \Phi_{h+1}} \agg{T}(\phi'\mid \zeta(\sz), \sa)V^\aggpieval(\phi'; \agg{M}\ind{i}) \\ &= Q^\aggpieval(\zeta(\sz), \sa; \agg{M}\ind{i}), 
\end{align*}
where in the last line, we used Bellman equation for policy \(\pieval\) under the \MDP $\agg{M}\ind{1}$. The above completes the induction step, thus showing that the claim holds for all \(h \in [H]\). 

\paragraph{Proof of \ref{lem:sec-lower-bound-latent-MDP-b}: } 
Using part-\ref{lem:sec-lower-bound-latent-MDP-a} above, we note that 
\begin{align*} 
    V^\pieval(\sz; M\ind{1}) = Q^\pieval(\sz, \pieval(\sz); M\ind{1}) = Q^\aggpieval(\zeta(\sz), \aggpieval(\zeta(\sz)); \agg{M}\ind{1}) = V^\aggpieval(\zeta(\sz); \agg{M}\ind{1}).
\end{align*}
Similarly, we also get that $V^\pieval(\sz; M\ind{2}) = V^\aggpieval(\zeta(\sz); \agg{M}\ind{2})$. The desired bound follows by noting that \pref{eq:sec-lower-bound-value} implies 
$$V^\aggpieval(\agg{\rho}; \agg{M}\ind{1}) - V^\aggpieval(\agg{\rho}; \agg{M}\ind{2}) = \frac{\epsilon}{H},$$
which implies 
$$V^\pieval(\rho; M\ind{1}) - V^\pieval(\rho; M\ind{2}) = \frac{\epsilon}{H}.$$

\paragraph{Proof of \ref{lem:sec-lower-bound-latent-MDP-c}: } First note that whenever $\sz'\not\in \{\su_h, \sv_h\}_{h=1}^H$, we always have $T\ind{1}(\sz'\mid \sz, \sa) = T\ind{2}(\sz'\mid\sz, \sa)$  according to the definition of $T\ind{1}$ and $T\ind{2}$. We next show that the same holds for $\sz' = \su_h$. We only need to verify $\sum_{\sz\in \phi} \mu(\sz, \sa)(T\ind{1}(\su_h\mid\sz, \sa) - T\ind{2}(\su_h\mid\sz, \sa)) = 0$. 
\begin{align*}
    \hspace{0.5in}&\hspace{-0.5in}\quad \sum_{\sz\in \phi} \mu(\sz, \sa)(T\ind{1}(\su_h\mid\sz, \sa) - T\ind{2}(\su_h\mid\sz, \sa))\\
    & = \sum_{\sz\in \phi} \mu(\sz, \sa)\prn*{\sum_{\phi'\in\Phi_{h+1}} \agg{T}(\phi'\mid \zeta(\sz), \sa)\agg{V}\ind{1}(\phi') - \sum_{\sz'\in\calZ_{h+1}} T\ind{1}(\sz'\mid \sz,\sa)\agg{V}\ind{1}(\zeta(\sz'))}\\
    & = \sum_{\sz\in \phi} \mu(\sz, \sa)\sum_{\phi'\in\Phi_{h+1}} \agg{T}(\phi'\mid \phi, \sa)\agg{V}\ind{1}(\phi') - \sum_{\phi'\in\Phi_{h+1}}\sum_{\sz'\in \phi'} \agg{V}\ind{1}(\zeta(\sz'))\sum_{\sz\in \phi} \mu(\sz, \sa)T\ind{1}(\sz'\mid \sz,\sa)\\
    & = \sum_{\sz\in \phi} \mu(\sz, \sa)\sum_{\phi'\in\Phi_{h+1}} \agg{T}(\phi'\mid \phi, \sa)\agg{V}\ind{1}(\phi') - \sum_{\phi'\in\Phi_{h+1}}\agg{V}\ind{1}(\phi')\sum_{\sz'\in \phi'} \sum_{\sz\in \phi} \mu(\sz, \sa)T\ind{1}(\sz'\mid \sz,\sa)\\
    & = \sum_{\sz\in \phi} \mu(\sz, \sa)\sum_{\phi'\in\Phi_{h+1}} \agg{T}(\phi'\mid \phi, \sa)\agg{V}\ind{1}(\phi') - \sum_{\phi'\in\Phi_{h+1}}\agg{V}\ind{1}(\phi')\sum_{\sz\in \phi} \mu(\sz, \sa)\agg{T}(\phi'\mid \phi, \sa) \\ 
    &= 0,
\end{align*}
where the first equation uses the definition of $T\ind{1}$ and $T\ind{2}$ in \pref{eq:sec-lower-bound-trans1} and \pref{lem:opposite}, and the last equation follows from the definition of $\agg{T}$ in \pref{eq:agg-transition-model}. Hence \ref{lem:sec-lower-bound-latent-MDP-c} is verified for $\sz' = \su_h$. The proof for $\sz' = \sv_h$ follows similarly.

\paragraph{Proof of \ref{lem:sec-lower-bound-latent-MDP-d}: } We only prove the result for $M\ind{1}$; the proof for $M\ind{2}$ follows similarly. In fact, we show a slightly stronger result that for all \(h \in [H]\) and $\sz\in\calZ_h$,
\begin{align*}
\left(\frac{H-2}{H}\right)^{h-1} d_h^\pi(\sz; M)\le d_h^\pi(\sz; M\ind{1})\le d_h^\pi(\sz; M). \numberthis \label{eq:induction_hypothesis1}
\end{align*}

The proof follows via induction over \(h\). For the base case, note that for \(h = 1\), by definition, we have $d_1^\pi(\sz; M) = \rho(\sz) = d_1^\pi(\sz; M\ind{1})$ for any $\sz\in\calZ_1$, which implies \pref{eq:induction_hypothesis1}. 

For the induction step, suppose \pref{eq:induction_hypothesis1} holds for a certain \(h \leq H-1\). For the upper bound, note that  any $\sz\in\calZ_{h+1}$ satisfies 
\begin{align*}
    d_{h+1}^\pi(\sz; M\ind{1}) = \sum_{\sz'\in\calZ_h}d_h^\pi(\sz'; M\ind{1})T\ind{1}(\sz\mid\sz', \pi(\sz')), 
\end{align*} 
which combined with the upper bound in \pref{eq:induction_hypothesis1} implies \begin{align*}
    d_{h+1}^\pi(\sz; M\ind{1}) & \le \sum_{\sz'\in\calZ_h}d_h^\pi(\sz'; M)T(\sz\mid\sz', \pi(\sz')) = d_{h+1}^\pi(\sz; M). 
\end{align*}

For the lower bound, recall from the definition of $T\ind{1}$, which implies that 
$$T\ind{1}(\sz\mid\sz', \pi(\sz')) = \frac{H-2}{H}T(\sz\mid\sz', \pi(\sz')).$$
Using the above with the lower bound in \pref{eq:induction_hypothesis1}, we get that 
\begin{align*}
    d_{h+1}^\pi(\sz; M\ind{1}) &= \sum_{\sz'\in\calZ_h'}d_h^\pi(\sz'; M\ind{1})T\ind{1}(\sz\mid\sz', \pi(\sz')) \\ 
    & \ge \left(\frac{H-2}{H}\right)^{h-1}\sum_{\sz'\in\calZ_h}d_h^\pi(\sz'; M)\cdot \frac{H-2}{H}T(\sz\mid\sz', \pi(\sz')) = \left(\frac{H-2}{H}\right)^{h}d_h^\pi(\sz; M). 
\end{align*}
The two bounds above imply the \pref{eq:induction_hypothesis1} also holds for \(h + 1\). This completes the induction step. 
\end{proofof}

This lemma has two direct corollaries:
\begin{corollary}\label{corr:sec-lower-bound-con}
    The concentratbility coefficients $\tabC(M\ind{1}, \mu')$ and $\tabC(M\ind{2}, \mu')$  of $M\ind{1}$ and $M\ind{2}$, respectively, satisfy that
    $$\tabC(M\ind{1}, \mu')\le 6\tabC(M, \mu, \pieval)\quad \text{and}\quad \tabC(M\ind{2}, \mu')\le 6\tabC(M, \mu, \pieval).$$
\end{corollary}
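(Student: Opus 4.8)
The plan is to bound $\tabC(M\ind{i}, \mu')$ for each $i \in \{1,2\}$ directly from the definition in \pref{eq: concentrability coeff}, splitting the maximization into two cases according to whether the state is an original latent state $\sz \in \cZ_h$ or one of the three auxiliary states $\su_h, \sv_h, \sw_h$ added in the construction. Since $\pieval$ is deterministic, we have $d_h^\pieval(\sz, \sa) = d_h^\pieval(\sz)\,\indic\{\sa = \pieval(\sz)\}$, so the inner maximum over actions always selects $\sa = \pieval(\sz)$; it therefore suffices to control the ratio $d_h^\pieval(\sz; M\ind{i}) / \mu'_h(\sz, \pieval(\sz))$ over all layers $h$ and all states $\sz \in \cZ'_h$.

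For the original states $\sz \in \cZ_h$, I would invoke the upper bound from \pref{lem:sec-lower-bound-latent-MDP}\ref{lem:sec-lower-bound-latent-MDP-d}, namely $d_h^\pieval(\sz; M\ind{i}) \le d_h^\pieval(\sz; M)$ (the lower bound in that lemma is not needed here), together with the fact that $\pieval$ acts identically on $\cZ_h$ in both $M\ind{i}$ and $M$. By the definition of $\mu'$ in \pref{def: sec-lower-mu-prime}, $\mu'_h(\sz, \pieval(\sz)) = \tfrac{1}{2}\mu_h(\sz, \pieval(\sz))$ for $\sz \in \cZ_h$, so the ratio is at most $2\, d_h^\pieval(\sz, \pieval(\sz); M) / \mu_h(\sz, \pieval(\sz)) \le 2\,\tabC(M, \mu, \pieval)$. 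For the three auxiliary states, I would instead use the crude bound $d_h^\pieval(\sz; M\ind{i}) \le 1$ together with the uniform floor $\mu'_h(\sz, \pieval(\sz)) = \tfrac{1}{6}$ supplied by \pref{def: sec-lower-mu-prime}: since $\piexp$ and $\pieval$ both play the fixed action $a_0$ on $\su_h, \sv_h, \sw_h$, the offline mass under the evaluated action is exactly $\tfrac16$, yielding a ratio of at most $6$ on any such state.

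Combining the two cases gives $\tabC(M\ind{i}, \mu') \le \max\{2\,\tabC(M,\mu,\pieval),\, 6\}$. To obtain the stated bound $6\,\tabC(M,\mu,\pieval)$, the last step is to observe that the standard concentrability coefficient always satisfies $\tabC(M,\mu,\pieval) \ge 1$: as $d_h^\pieval$ and $\mu_h$ are both probability distributions over the pairs in layer $h$, if every density ratio were strictly below $1$ then summing $d_h^\pieval(\sz,\sa) < \mu_h(\sz,\sa)$ over all $(\sz,\sa)$ would give $1 < 1$, a contradiction. Hence both $6$ and $2\,\tabC(M,\mu,\pieval)$ are at most $6\,\tabC(M,\mu,\pieval)$, proving the corollary; the identical argument applies to $M\ind{2}$. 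I anticipate no serious obstacle: the only points requiring care are the bookkeeping of the scaling factor $\tfrac12$ on original states versus the floor $\tfrac16$ on auxiliary states in the definition of $\mu'$, and verifying that the action under which $\mu'$ deposits the $\tfrac16$ mass coincides with $\pieval$ on the auxiliary states.
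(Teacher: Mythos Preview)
Your proposal is correct and is exactly the argument the paper has in mind: the paper's own proof is a single sentence citing \pref{def: sec-lower-mu-prime} and \pref{lem:sec-lower-bound-latent-MDP}\ref{lem:sec-lower-bound-latent-MDP-d}, and you have simply written out the two cases (original states via the occupancy upper bound and the factor $\tfrac12$, auxiliary states via the uniform mass $\tfrac16$) and the elementary observation $\tabC\ge 1$ that converts $\max\{2\tabC,6\}$ into $6\tabC$.
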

\begin{proofof}[\pref{corr:sec-lower-bound-con}]
    This corollary directly follows from \pref{def: sec-lower-mu-prime} and \pref{lem:sec-lower-bound-latent-MDP} \ref{lem:sec-lower-bound-latent-MDP-d}.
\end{proofof}
\begin{corollary}\label{corr:sec-lower-bound-d-pi}
    For any two policies $\pi$ and $\pi'$ and any $i\in \{1, 2\}$, we have
    $$\sup_{h\in [H]}\sup_{\sz\in\calZ_h'} \frac{d_h^\pi(\sz; M\ind{i})}{d_h^{\pi'}(\sz; M\ind{i})}\le 48\cdot \sup_{h\in [H]}\sup_{\sz\in\calZ_h} \frac{d_h^\pi(\sz; M)}{d_h^{\pi'}(\sz; M)}$$
\end{corollary}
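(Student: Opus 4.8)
The plan is to bound the occupancy ratio separately on the old states $\calZ_h$ (those inherited from $M$) and on the three new states $\su_h,\sv_h,\sw_h$ introduced in each layer, and then take the worst case over the two groups. On the old states the bound is immediate from \pref{lem:sec-lower-bound-latent-MDP}\ref{lem:sec-lower-bound-latent-MDP-d}: for any $\sz\in\calZ_h$ it gives $d_h^\pi(\sz;M\ind{i})\le d_h^\pi(\sz;M)$ and $d_h^{\pi'}(\sz;M\ind{i})\ge\tfrac{1}{16}d_h^{\pi'}(\sz;M)$, so
\[
\frac{d_h^\pi(\sz;M\ind{i})}{d_h^{\pi'}(\sz;M\ind{i})}\le 16\cdot\frac{d_h^\pi(\sz;M)}{d_h^{\pi'}(\sz;M)}\le 16\sup_{h\in[H]}\sup_{\sz\in\calZ_h}\frac{d_h^\pi(\sz;M)}{d_h^{\pi'}(\sz;M)}.
\]

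For the new states $\su_{h+1}$ and $\sv_{h+1}$ I would use that, from the proof of \pref{lem:valid}, $\abs{\Dti{i}(\sz)}\le\tfrac{1}{2H}$, so by \pref{eq:sec-lower-bound-trans1} the transition probabilities into $\su_{h+1}$ (and symmetrically into $\sv_{h+1}$) lie in $[\tfrac{1}{2H},\tfrac{3}{2H}]$ for every $\sz\in\calZ_h$ and every action, while $\su_h,\sv_h,\sw_h$ never transition into them. Writing $d_{h+1}^\pi(\su_{h+1};M\ind{i})=\sum_{\sz\in\calZ_h}d_h^\pi(\sz;M\ind{i})\,T\ind{i}(\su_{h+1}\mid\sz,\pi(\sz))$, bounding the numerator's weights by $\tfrac{3}{2H}$ and the denominator's by $\tfrac{1}{2H}$ yields
\[
\frac{d_{h+1}^\pi(\su_{h+1};M\ind{i})}{d_{h+1}^{\pi'}(\su_{h+1};M\ind{i})}\le 3\cdot\frac{\sum_{\sz\in\calZ_h}d_h^\pi(\sz;M\ind{i})}{\sum_{\sz\in\calZ_h}d_h^{\pi'}(\sz;M\ind{i})}.
\]
The crucial point is that the remaining ratio is an \emph{absolute} constant: since $d_h(\cdot;M)$ is a probability distribution on $\calZ_h$, \pref{lem:sec-lower-bound-latent-MDP}\ref{lem:sec-lower-bound-latent-MDP-d} gives $\sum_{\sz\in\calZ_h}d_h^\pi(\sz;M\ind{i})\le 1$ and $\sum_{\sz\in\calZ_h}d_h^{\pi'}(\sz;M\ind{i})\ge\tfrac{1}{16}$, so this ratio is at most $16$ and hence the $\su$- and $\sv$-ratios are at most $48$, independent of $M$.

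The remaining case is $\sw_h$, for which I would exploit that $\su_h,\sv_h,\sw_h$ are absorbing into the $\sw$-chain (each deterministically transitions to $\sw_{h+1}$ by \pref{eq:sec-lower-bound-trans2}, and $\rho$ places no mass on them). Unrolling the resulting recursion gives $d_h^\pi(\sw_h;M\ind{i})=\sum_{h'<h}\brk{d_{h'}^\pi(\su_{h'};M\ind{i})+d_{h'}^\pi(\sv_{h'};M\ind{i})}$. Since $\tfrac{\sum_j a_j}{\sum_j b_j}\le\max_j\tfrac{a_j}{b_j}$ for positive reals, the $\sw_h$-ratio is bounded by the maximum of the $\su_{h'}$- and $\sv_{h'}$-ratios over $h'<h$, hence by $48$. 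Collecting the three cases and using that the right-hand supremum over $M$ is always at least $1$ (both $d^\pi(\cdot;M)$ and $d^{\pi'}(\cdot;M)$ are probability measures on $\calZ_h$), each of $16\sup_M$, $48$, and $48$ is at most $48\sup_M$, which gives the claim.

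I expect the main obstacle to be the $\su,\sv$ step: one must verify that the denominator's total old-state mass does not collapse, which is precisely where the lower bound $\tfrac{1}{16}$ in \pref{lem:sec-lower-bound-latent-MDP}\ref{lem:sec-lower-bound-latent-MDP-d} is used, and one must confirm that the weights into $\su,\sv$ are bounded away from $0$ uniformly in the action, so that the action-dependence of $\Dti{i}$ (differing between $\pi$ and $\pi'$) is harmless. A minor technical point is the degenerate occupancy at $h=1$ for the new states, which vanishes under both policies and can be excluded.
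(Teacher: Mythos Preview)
Your proof is correct and follows essentially the same decomposition as the paper: handle old states via \pref{lem:sec-lower-bound-latent-MDP}\ref{lem:sec-lower-bound-latent-MDP-d}, reduce $\sw_h$ to $\su_{h'},\sv_{h'}$ via the absorbing structure, and control the $\su,\sv$ ratios using the uniform bound $T\ind{i}(\su_{h+1}\mid\sz,\sa)\in[\tfrac{1}{2H},\tfrac{3}{2H}]$. The one minor difference is that after extracting the factor $3$, the paper bounds the remaining quotient by $\sup_{\sz\in\calZ_{h}}\frac{d_{h}^\pi(\sz;M\ind{i})}{d_{h}^{\pi'}(\sz;M\ind{i})}$ and then invokes the old-state bound $\le 16\cdot\sup_M$, whereas you bound it by the ratio of \emph{total} old-state masses and use Lemma~(d) to get the absolute constant $16$; your route is slightly sharper on the $\su,\sv$ states but both yield the same final constant $48$.
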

\begin{proofof}[\pref{corr:sec-lower-bound-d-pi}]
    First of all, for those $\sz\in\calZ_h$ and $h\in [H-1]$, \pref{lem:sec-lower-bound-latent-MDP} \ref{lem:sec-lower-bound-latent-MDP-d} indicates that 
    $$\frac{d_h^\pi(\sz; M\ind{i})}{d_h^{\pi'}(\sz; M\ind{i})}\le 16\cdot \frac{d_h^\pi(\sz; M)}{d_h^{\pi'}(\sz; M)}\le 48\cdot \sup_{h\in [H]}\sup_{\sz\in\calZ_h} \frac{d_h^\pi(\sz; M)}{d_h^{\pi'}(\sz; M)}.$$
    Next, we verify cases where 
    $\sz\in \{\su_h, \sv_h, \sw_h\}_{h=1}^H$. Notice that the transition model of $M\ind{i}$ gives that
    $$d_h^\pi(\sw_h; M\ind{i}) = d_{h-1}^\pi(\su_h; M\ind{i}) + d_{h-1}^\pi(\sv_{h-1}; M\ind{i}),$$
    we only need to verify that for any $\sz\in \{\su_h, \sv_h\}_{h=1}^H$, we have
    $$\frac{d_h^\pi(\sz; M\ind{i})}{d_h^{\pi'}(\sz; M\ind{i})}\le 48\cdot \sup_{h\in [H]}\sup_{\sz\in\calZ_h} \frac{d_h^\pi(\sz; M)}{d_h^{\pi'}(\sz; M)}.$$
    Without loss of generality, we only verify for $\sz = \su_h$. We write
    $$d_h^\pi(\su_h; M\ind{i}) = \sum_{\sz\in\calZ_{h-1}}T\ind{i}(\su_h\mid \sz, \pi(\sz)) d_{h-1}^\pi(\sz).$$
    According to the transition model of $M\ind{i}$, we have for any $\sz\in\calZ_{h-1}$ and any $\sa\in\calA$, 
    $$T\ind{i}(\su_h\mid \sz, \sa)\in \left[\frac{1}{2H}, \frac{3}{2H}\right],$$
    which indicates that for any policy $\pi, \pi'$, we have
    $$\sup_{\sz\in\calZ_{h-1}}\frac{T\ind{i}(\su_h\mid \sz, \pi(\sz))}{T\ind{i}(\su_h\mid \sz, \pi'(\sz))}\in [1, 3].$$
    Therefore, we have
    $$\frac{d_h^\pi(\su_h; M\ind{i})}{d_h^{\pi'}(\su_h; M\ind{i})}\le 3\cdot \sup_{\sz\in\calZ_{h-1}}\frac{d_{h-1}^\pi(\sz; M\ind{i})}{d_{h-1}^{\pi'}(\sz; M\ind{i})}\le 48\cdot \sup_{h\in [H]}\sup_{\sz\in\calZ_h} \frac{d_h^\pi(\sz; M)}{d_h^{\pi'}(\sz; M)},$$
    where the last inequality follows from \pref{lem:sec-lower-bound-latent-MDP} \ref{lem:sec-lower-bound-latent-MDP-d}.
\end{proofof} 

\subsection{Construction of the Class \(\famGg\) of Offline Policy Evaluation Problems}\label{sec:rich} 
In this section, we construct the class \(\mathfrak{G}\) of \OPE problems that are used in  \pref{thm: informal general data}. The corresponding MDPs in \(\mathfrak{G}\)  are block MDPs based on \(M\ind{1}\)  and \(M\ind{2}\) (constructed in the previous section), and certain decoder functions.  The organization of this section is as follows: 
\begin{enumerate}[label=\(\bullet\)] 
    \item In \pref{app:block_MDP}, we provide a general procedure to lift an \OPE problem $\OPE(M, \pieval, \mu, \cF)$ over state space \(\cZ\) into a block \OPE problem \((\check M, \ripieval, \rich{\mu}, \rich{\cF})\) with rich-observations in a set \(\cX\) and latent states in \(\cZ\), given a decoder function \(\psi\).  
    \item Then, in \pref{app:family_construction}, we first provide a class \(\Psi\) of decoder function and use the above procedure to construct the family $\mathfrak{G}$ of offline RL problems. 
\end{enumerate}

\subsubsection{Lifting from \OPE Problems to Block \OPE Problems} \label{app:block_MDP}
In this section, we will discuss how to lift a normal \OPE problem $\OPE(M, \pieval, \mu, \cF)$ ($\cF$ satisfies $Q$-realizability) into a block\footnote{Throughout this section, we use a "check" over the variables, e.g.~in \(\check{M}\), \(\check{r}\), etc., to signify that they correspond to block \MDP{s}.} 
\OPE problem $\OPE(\rich{M}, \ripieval, \rich{\mu}, \rich{\cF})$ where $\rich{\cF}$ satisfies $(Q, W)$-realizability. 

Let \(\cZ = \bigcup_{h \in [H-1]} \cZ_h\) be the state space of \MDP $M$, and we fix \(\cX = \crl{\cX(\sz)}_{\sz \in \cZ}\) to be a family of disjoint sets that denote rich-observations corresponding to latent states \(\sz \in \cZ\). Furthermore, let \(\cX_h = \crl{\cX(\sz)}_{\sz \in \cZ_h}\). Then, for  $M = \MDP(\calZ, \calA, T, r, H, \rho)$, we can  define a block \MDP  
\begin{align*}
\rich{M} = \text{MDP}(\calX, \calA, \rich{T},  \rich{R}, H, \rich{\rho}) 
\end{align*}
with latent state \(\cZ\) and rich observations in \(\cX\), where 
\begin{itemize}
\item \textbf{State space}: \(\cX\) consists of rich-observations corresponding to latent states. We assume that the state space \(\cX = \bigcup_{h \in [H-1]} \cX_h\) is layered. 

\item \textbf{Transition Model} depends only on the latent transition model \(T\). In particular, for any \(h \in [H-1]\), \(\sx \in \cX(\sz)\) corresponding to \(\sz \in \cZ_h\), and \(\sx' \in \cX(\sz')\) corresponding to \(\sz' \in \cZ_{h+1}\), we have 
\begin{align*}
\rich{T}(\sx'\mid\sx, \sa) \ldef{}  \frac{1}{|\calX(\sz')|} \cdot T(\sz'\mid \sz, \sa). 
\end{align*}
\item \textbf{Rewards}: For any \(\sz \in \cZ\) and \(\sx \in \cX\), and \(\sa \in \cA\), $\rich{R}(\sx, \sa)$ is a $\{-1, 1\}$-valued  random variable with expected value $r(\sz, \sa)$.
\item \textbf{Initial distribution}: \(\rich{\rho} \in \Delta(\cX_1)\) is defined such that for any \(x \in \cX_1\),  
\begin{equation}\label{eq:rich-mu}
    \rich{\rho}(\sx)\ldef{}  \frac{1}{|\calX(\sz)|} \cdot \rho(\sz)
\end{equation}
where \(\sz\) is such that $\sx\in\calX(\sz)$. 
\end{itemize}

In particular, corresponding to a latent state \(\sz\), the observations are sampled from \(\text{Uniform}(\cX(z))\). In order to construct respective offline RL problems on the \MDP \(\rich{M}\), we also lift the offline data distribution \(\mu\) to \(\rich{\mu}\), and offline policy \(\pieval\) to \(\ripieval\) as follows: 
\begin{itemize} 
\item \textbf{Offline distribution} For any \(h \in [H-1]\), we define   \(\rich{\mu}_h \in \Delta(\cX_h \times \cA)\) such that for any \(x \in \cX_h\) and \(a \in \cA\), 
 \begin{align*}
\rich{\mu}_h(\sx, \sa) \ldef{}  \frac{1}{|\calX(\sz)|} \cdot \mu_h(\sz, \sa),  
\end{align*}
where \(\sz \in \cZ_h\) is such that \(\sx \in \cX(\sz)\). 
\item \textbf{Evaluation policy} \(\ripieval : \cX \mapsto \cA\) is defined such that for any \(\sx \in \cX\), $\ripieval(\sx) \ldef{}  \pieval(\sz)$ where \(\sz\) is such that $\sx\in\calX(\sz)$.
\item \textbf{Function class} $\rich{\cF}$ consists of tuples $(\rich{f}, \rich{W})$, where each $f\in\cF$ generates a tuple $(\rich{f}, \rich{W})$ with $\rich{f}: \calX\times\calA \to \mathbb{R}$ defined as $\rich{f}(\sx, \sa) = f(\sz, \sa)$ for any $\sx\in\calX(\sz)$ and $\rich{W}: [H-1]\to \mathbb{R}$ is defined as $\rich{W}[H-1] = \sum_{\sx\in\calX_h}\rich{\mu}_h(\sx, \ripieval(\sx))\rich{f}(\sx, \ripieval(\sa))$.
\end{itemize}

The following lemma indicates that $\rich{f}$ is the state-action value function of block \MDP $\rich{M}$ as long as $f$ is the state-action value function of \MDP $M$. 

\begin{lemma}\label{lem:sec-lower-bound-Q-1} For any $h\in[H]$, $\sz\in\calZ_h$, \(x \in \cX(\sz)\) and \(a \in \cA\), we have 
$$Q_h^{\ripieval}(\sx, \sa;\rich{M}) = Q_h^\pieval(\sz, \sa; M).$$
\end{lemma}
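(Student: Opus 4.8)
The plan is to prove the identity by \textbf{downward induction on the layer index} $h$, running from $h = H$ down to $h = 1$, since the block \MDP $\rich{M}$ inherits the layered structure of $M$ and its state-action value functions are naturally computed backwards through the Bellman recursion. Throughout, I would repeatedly invoke the three defining relations of the lift: the block transition $\rich{T}(\sx'\mid\sx,\sa) = \frac{1}{|\calX(\sz')|}\,T(\sz'\mid \sz, \sa)$, the reward construction $\En[\rich{R}(\sx,\sa)] = r(\sz,\sa)$ for $\sx \in \calX(\sz)$, and the evaluation policy $\ripieval(\sx) = \pieval(\sz)$.

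For the base case $h = H$, there are no future transitions, so $Q_H^{\ripieval}(\sx, \sa; \rich{M})$ is just the expected immediate reward $\En[\rich{R}(\sx, \sa)] = r(\sz, \sa)$ for $\sx \in \calX(\sz)$, which coincides with $Q_H^\pieval(\sz, \sa; M)$. For the inductive step, I would assume the claim at layer $h+1$ and note its immediate consequence: $Q_{h+1}^{\ripieval}(\sx', \ripieval(\sx'); \rich M) = Q_{h+1}^\pieval(\sz', \pieval(\sz'); M) = V_{h+1}^\pieval(\sz'; M)$ for every $\sx' \in \calX(\sz')$. The crucial observation I would exploit is that this continuation value is therefore \emph{constant across each block} $\calX(\sz')$, depending only on the latent next-state $\sz'$.

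Writing the Bellman equation for $Q_h^{\ripieval}(\sx,\sa;\rich M)$ and grouping the sum over next observations $\sx'$ by the latent state $\sz'$ they decode to, I would substitute the block transition and the block-measurability of the continuation value to obtain $r(\sz,\sa) + \sum_{\sz'\in\calZ_{h+1}} \big(\sum_{\sx'\in\calX(\sz')} \frac{1}{|\calX(\sz')|}\big) T(\sz'\mid\sz,\sa)\, V_{h+1}^\pieval(\sz';M)$. The inner sum over the block collapses to $1$, leaving exactly the latent Bellman backup $r(\sz,\sa) + \sum_{\sz'} T(\sz'\mid\sz,\sa) V_{h+1}^\pieval(\sz';M) = Q_h^\pieval(\sz,\sa;M)$, which closes the induction.

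The argument is essentially bookkeeping, so I do not anticipate a genuine obstacle. The only point requiring care is keeping the two nested summations straight—over latent next-states $\sz'$ on the outside and over observations $\sx' \in \calX(\sz')$ within each block on the inside—and using that both the reward and the lifted evaluation policy are block-measurable (constant on each $\calX(\sz)$), which is precisely what makes the block-averaging of transitions telescope cleanly. A minor formal detail is ensuring the $Q$ versus $V$ conversion at layer $h+1$ is legitimate, i.e.\ that $V_{h+1}^\pieval(\sz';M) = Q_{h+1}^\pieval(\sz',\pieval(\sz');M)$ since $\pieval$ is deterministic, which is immediate.
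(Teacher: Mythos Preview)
Your proposal is correct and takes essentially the same approach as the paper's proof: downward induction on $h$, base case at $h=H$ via the lifted reward, and the inductive step by writing the Bellman equation in $\rich{M}$, grouping next observations by latent state, and collapsing $\sum_{\sx'\in\calX(\sz')}\frac{1}{|\calX(\sz')|}=1$ to recover the latent Bellman backup. If anything, your write-up is slightly more explicit than the paper's (you spell out the block-measurability of the continuation value and the $Q$-to-$V$ conversion for deterministic $\pieval$), but the argument is identical.
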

\begin{proofof}[\pref{lem:sec-lower-bound-Q-1}]  
    We prove this equation by induction from $h = H$ to $h = 1$. When $h = H$, we have $Q_H^{\ripieval}(\sx, \sa;\rich{M}) = Q_H^\pieval(\sz, \sa; M)$ for any $\sz\in\calZ_H$, $\sz\in\calX(\sz)$ and $\sa\in\calA$. Next, suppose $Q_{h+1}^{\ripieval}(\sx, \sa;\rich{M}) = Q_{h+1}^\pieval(\sz, \sa; M)$ holds for $\sz\in\calZ_{h+1}$ and $\sx\in\calX(\sz)$.
    According to the Bellman equation and definitions of $\rich{T}, \rich{R}$, for any $\sz\in\calZ_h$ and $\sx\in\calX(\sz)$, we have
    \begin{align*}
        Q_h^\ripieval(\sx, \sa; \rich{M}) & = \mathbb{E}[\rich{R}(\sx, \sa)] + \sum_{\sx'\in \calX_{h+1}}\rich{T}(\sx'\mid\sx, \sa) Q_{h+1}^\ripieval(\sx', \ripieval(\sx'); \rich{M})\\
        & = r(\sz, \sa) + \sum_{\sz'\in\calZ_{h+1}}Q_{h+1}^\pieval(\sz', \pieval(\sz'))\sum_{\sx'\in \calX(\sz')}\rich{T}(\sx'\mid\sx, \sa)\\
        & = r(\sz, \sa) + \sum_{\sz'\in\calZ_{h+1}}Q_{h+1}^\pieval(\sz', \pieval(\sz'); M)\sum_{\sz'\in \phi'}T(\sz'\mid\sz, \sa) \\
        &= Q_h^\pieval(\sz, \sa; M), 
    \end{align*}
    where we use the induction hypothesis and the fact that $\ripieval(\sx') = \pieval(\sz)$ for $\sx'\in\calX(\sz')$ in the second equation.
    
\end{proofof} 
    
\begin{lemma}\label{lem:sec-lower-bound-eq-h} 
	For any $h \in [H-1]$, 
        $$\sum_{\sx\in\calX_h}\rich{\mu}_{h}(\sx, \ripieval(\sx))Q_h^\ripieval(\sx, \ripieval(\sx); \rich{M}) = \sum_{\sz\in\calZ_h}\mu_{h}(\sz, \pieval(\sz))Q_h^\pieval(\sz, \pieval(\sx); M).$$
\end{lemma}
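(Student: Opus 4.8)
The plan is to prove the identity by a simple re-indexing: the sum on the left runs over all rich observations $\sx \in \calX_h$, and since $\calX_h = \bigcup_{\sz \in \calZ_h} \calX(\sz)$ is a disjoint union over latent states, I would first split it as $\sum_{\sx\in\calX_h} = \sum_{\sz\in\calZ_h}\sum_{\sx\in\calX(\sz)}$. The whole point is that on each block $\calX(\sz)$ both the evaluation policy and the offline distribution are constant (resp.\ proportional), so the inner sum collapses cleanly to the corresponding latent-state term.

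The key steps, carried out inside this inner sum, are threefold. First, by definition $\ripieval(\sx) = \pieval(\sz)$ for every $\sx \in \calX(\sz)$, so the action appearing in both factors is the fixed action $\pieval(\sz)$. Second, \pref{lem:sec-lower-bound-Q-1} (applied with $\sa = \ripieval(\sx) = \pieval(\sz)$) gives $Q_h^{\ripieval}(\sx, \ripieval(\sx); \rich{M}) = Q_h^{\pieval}(\sz, \pieval(\sz); M)$, which is independent of the particular $\sx \in \calX(\sz)$. Third, the lifted distribution satisfies $\rich{\mu}_h(\sx, \ripieval(\sx)) = \tfrac{1}{|\calX(\sz)|}\,\mu_h(\sz, \pieval(\sz))$. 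Combining these, every one of the $|\calX(\sz)|$ terms in the inner sum equals $\tfrac{1}{|\calX(\sz)|}\,\mu_h(\sz, \pieval(\sz))\,Q_h^{\pieval}(\sz, \pieval(\sz); M)$, so the factor $|\calX(\sz)|$ cancels the $\tfrac{1}{|\calX(\sz)|}$ and the inner sum equals $\mu_h(\sz, \pieval(\sz))\,Q_h^{\pieval}(\sz, \pieval(\sz); M)$. Summing over $\sz \in \calZ_h$ then yields exactly the right-hand side. Concretely, the chain is
\begin{align*}
\sum_{\sx\in\calX_h}\rich{\mu}_{h}(\sx, \ripieval(\sx))\,Q_h^\ripieval(\sx, \ripieval(\sx); \rich{M})
&= \sum_{\sz\in\calZ_h}\sum_{\sx\in\calX(\sz)} \frac{1}{|\calX(\sz)|}\,\mu_h(\sz, \pieval(\sz))\,Q_h^\pieval(\sz, \pieval(\sz); M)\\
&= \sum_{\sz\in\calZ_h}\mu_{h}(\sz, \pieval(\sz))\,Q_h^\pieval(\sz, \pieval(\sz); M).
\end{align*}

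I expect no genuine obstacle here: the statement is an immediate bookkeeping consequence of the two defining properties of the lift ($\rich\mu$ distributes each latent mass uniformly over its block, and $\ripieval$ is block-constant) together with the already-established \pref{lem:sec-lower-bound-Q-1}. The only points requiring a word of care are (i) verifying that the uniform weight $1/|\calX(\sz)|$ is precisely what makes the block sum telescope to the latent term, and (ii) noting that the right-hand side of the statement contains a harmless typo—$\pieval(\sx)$ should read $\pieval(\sz)$, since $\sz$ is the summation variable. Neither affects the argument.
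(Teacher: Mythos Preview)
Your proof is correct and follows essentially the same approach as the paper's: both partition $\calX_h$ by latent state, invoke \pref{lem:sec-lower-bound-Q-1} to replace $Q_h^{\ripieval}$ by $Q_h^{\pieval}$, and then use the defining relation $\rich{\mu}_h(\sx,\sa)=\mu_h(\sz,\sa)/|\calX(\sz)|$ to collapse the inner sum. Your observation about the typo $\pieval(\sx)\to\pieval(\sz)$ on the right-hand side is also correct.
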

\begin{proofof}[\pref{lem:sec-lower-bound-eq-h}]
	We first notice that $\sx\in\calX(\sz)$, we have $\ripieval(\sx) = \pieval(\sz)$. Hence according to \pref{lem:sec-lower-bound-Q-1} we only need to verify that, for any $\sz\in\calZ$,
    $$\sum_{\sx\in\calX(\sz)}\rich{\mu}_{h}(\sx, \ripieval(\sx)) = \mu_{h}(\sz, \pieval(\sz)),$$
    which is given by definition $\rich{\mu}_h(\sx, \sa) \ldef{} \frac{\mu_h(\sz, \sa)}{|\calX(\sz)|}$. 
\end{proofof}

\begin{lemma}\label{lem:sec-lower-bound-rich-d}
	For any policy $\pi$ over \MDP $M$, let policy $\rich{\pi}$ over $\rich{M}$ to be $\rich{\pi}(\sx) = \pi(\sz)$ for any $\sx\in\calX(\sz)$. Then we have for any $h\in[H]$, $\sz\in\calZ_h$ and $\sx\in\calX(\sz)$,
        \begin{equation}\label{eq:sec-lower-bound-Q-3}
            d_h^{\rich{\pi}}(\sx; \rich{M}) = \frac{d_h^\pi(\sz; M)}{|\calX(\sz)|}.
        \end{equation}
\end{lemma}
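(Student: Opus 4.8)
The plan is to prove the claim by downward-free induction on the layer index $h$, running \emph{upward} from $h=1$ to $h=H$, since occupancy measures propagate forward through the transition model. The base case is immediate from the definition of the lifted initial distribution: for $h=1$ and $\sx \in \calX(\sz)$ with $\sz \in \calZ_1$, we have $d_1^{\rich{\pi}}(\sx; \rich{M}) = \rich{\rho}(\sx) = \rho(\sz)/|\calX(\sz)|$ by \pref{eq:rich-mu}, and since $d_1^\pi(\sz; M) = \rho(\sz)$, the desired identity holds.

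For the inductive step, I would assume the identity at layer $h$ and expand the occupancy measure at layer $h+1$ using the forward recursion
\[
d_{h+1}^{\rich{\pi}}(\sx'; \rich{M}) = \sum_{\sx\in\calX_h} d_h^{\rich{\pi}}(\sx; \rich{M})\, \rich{T}(\sx'\mid \sx, \rich{\pi}(\sx)).
\]
The key structural observation is that I should regroup the sum over $\sx \in \calX_h$ by latent state, writing $\calX_h = \bigcup_{\sz\in\calZ_h}\calX(\sz)$. For $\sx \in \calX(\sz)$, the lifted policy satisfies $\rich{\pi}(\sx) = \pi(\sz)$ and the lifted transition satisfies $\rich{T}(\sx' \mid \sx, \pi(\sz)) = \tfrac{1}{|\calX(\sz')|} T(\sz'\mid \sz, \pi(\sz))$ whenever $\sx' \in \calX(\sz')$; crucially, neither of these quantities depends on which particular observation $\sx$ inside $\calX(\sz)$ is chosen.

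Consequently, substituting the inductive hypothesis $d_h^{\rich{\pi}}(\sx;\rich{M}) = d_h^\pi(\sz;M)/|\calX(\sz)|$, the inner sum over the $|\calX(\sz)|$ observations in $\calX(\sz)$ simply contributes a factor of $|\calX(\sz)|$ that cancels the denominator, yielding
\[
d_{h+1}^{\rich{\pi}}(\sx'; \rich{M}) = \frac{1}{|\calX(\sz')|}\sum_{\sz\in\calZ_h} d_h^\pi(\sz; M)\, T(\sz'\mid \sz, \pi(\sz)) = \frac{d_{h+1}^\pi(\sz'; M)}{|\calX(\sz')|},
\]
where the last equality is the forward occupancy recursion in the latent \MDP $M$. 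This closes the induction. I do not anticipate a genuine obstacle here: the computation is routine once the sum is grouped by latent state, and the only point requiring care is verifying that the summand is constant across $\calX(\sz)$ so that the cardinality factor cancels cleanly. The same cancellation mechanism is exactly what was already used for the value-function lifting in \pref{lem:sec-lower-bound-Q-1}, so this lemma is the distributional analogue of that argument.
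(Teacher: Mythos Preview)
Your proposal is correct and follows essentially the same approach as the paper's own proof: forward induction on the layer index, with the base case given by the definition of $\rich{\rho}$ and the inductive step carried out by grouping the sum over $\calX_h$ by latent state so that the $|\calX(\sz)|$ factor cancels. The paper presents the same argument in a single displayed line, but the logic is identical.
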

\begin{proofof}[\pref{lem:sec-lower-bound-rich-d}]
We will prove via induction on the layer of $\sx$. For $\sx\in\calX_1$, \pref{eq:sec-lower-bound-Q-3} holds acording to the definition of initial distribution $\rich{\rho}$. The induction from layer $h$ to layer $h+1$ can be achieved by
    $$d_{h+1}^{\rich{\pi}}(\sx; \rich{M}) = \sum_{\sx'\in\calX_h} d_h^{\rich{\pi}}(\sx'; \rich{M})\rich{T}(\sx\mid\sx', \rich{\pi}(\sx')) =\sum_{\sz'\in\calZ_h} d_h^\pi(\sz'; M)\frac{T(\sz\mid\sz', \pi(\sz'))}{|\calX(\sz)|} = \frac{d_h^\pi(\sz; M)}{|\calX(\sz)|}$$
    for any $\sx\in\calX(\sz)$ and $\sz\in\calZ_{h+1}$.	
\end{proofof}

The above lemma has the following two corollaries: 
\begin{corollary}\label{corr:sec-lower-bound-agg-con}
    The concentrability coefficient $\tabC(\rich{M}, \mu, \ripieval) = \tabC(M, \mu, \pieval)$. 
\end{corollary}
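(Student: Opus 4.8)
The plan is a direct computation that compares the definition of the concentrability coefficient \pref{eq: concentrability coeff} for the block MDP $\rich M$ against the one for $M$, using the two structural identities already in hand: the pushforward of occupancy measures (\pref{lem:sec-lower-bound-rich-d}) and the construction of the lifted offline distribution $\rich\mu_h(\sx,\sa)=\mu_h(\sz,\sa)/|\calX(\sz)|$. The key observation is that every relevant quantity in $\rich M$ carries a common factor of $1/|\calX(\sz)|$ that cancels in the ratio defining concentrability.

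First I would unfold the definition for the block problem, namely $\tabC(\rich M,\rich\mu,\ripieval)=\max_h\max_{\sx\in\calX_h,\sa\in\calA}\frac{d_h^{\ripieval}(\sx,\sa;\rich M)}{\rich\mu_h(\sx,\sa)}$ (here $\rich\mu$ is the lifted offline distribution). Since $\ripieval$ is deterministic with $\ripieval(\sx)=\pieval(\sz)$ on each block $\sx\in\calX(\sz)$, the state-action occupancy factors as $d_h^{\ripieval}(\sx,\sa;\rich M)=d_h^{\ripieval}(\sx;\rich M)\cdot\indic\{\sa=\pieval(\sz)\}$, and the analogous factorization holds in $M$ for $d_h^{\pieval}(\sz,\sa;M)$. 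This reduces both sides to statements about state occupancies and the single action $\pieval(\sz)$.

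Next, fixing any layer $h$, any latent state $\sz\in\cZ_h$, any observation $\sx\in\calX(\sz)$ and any action $\sa$, I would substitute the two identities: by \pref{lem:sec-lower-bound-rich-d} we have $d_h^{\ripieval}(\sx;\rich M)=d_h^{\pieval}(\sz;M)/|\calX(\sz)|$, and by the construction of $\rich\mu$ we have $\rich\mu_h(\sx,\sa)=\mu_h(\sz,\sa)/|\calX(\sz)|$. The common factor $1/|\calX(\sz)|$ then cancels, giving $\frac{d_h^{\ripieval}(\sx,\sa;\rich M)}{\rich\mu_h(\sx,\sa)}=\frac{d_h^{\pieval}(\sz,\sa;M)}{\mu_h(\sz,\sa)}$ for every $\sx\in\calX(\sz)$; in particular the ratio depends on $\sx$ only through the block $\sz$ it belongs to.

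Finally I would match the two maxima. Because each block $\calX(\sz)$ is nonempty and the blocks partition $\calX_h$, as $\sx$ ranges over $\calX_h$ the associated latent index $\sz$ ranges over all of $\cZ_h$; since the ratio is constant on each block and equals the corresponding latent ratio, the collection of ratios over $\sx\in\calX_h$ coincides with the collection over $\sz\in\cZ_h$. Taking the maximum over $h$ and $\sa$ yields $\tabC(\rich M,\rich\mu,\ripieval)=\tabC(M,\mu,\pieval)$. None of these steps is a genuine obstacle; the only care needed is in the bookkeeping for the deterministic-policy state-action convention (ensuring the $\indic\{\sa=\pieval(\sz)\}$ factors line up on both sides) and in verifying that the blocks cover all latent states, so that no value is lost when passing from the maximum over observations to the maximum over latent states.
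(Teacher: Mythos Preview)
Your proposal is correct and follows essentially the same approach as the paper: invoke \pref{lem:sec-lower-bound-rich-d} for the occupancy measure and the definition of $\rich\mu$, cancel the common $1/|\calX(\sz)|$ factor in the ratio, and then observe that the maxima over $\sx\in\calX_h$ and over $\sz\in\calZ_h$ coincide. Your treatment is slightly more explicit about the state-action indicator for the deterministic policy, but otherwise it matches the paper's argument.
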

\begin{proofof}[\pref{corr:sec-lower-bound-agg-con}]
    According to \pref{lem:sec-lower-bound-rich-d}, we have for any $h\in [H-1]$, $\sz\in\calZ_h$ and $\sx\in\calX(\sz)$,
    $$\frac{d_h^{\ripieval}(\sx; \rich{M})}{\rich{\mu}(\sx)} = \frac{\nicefrac{d_h^\pieval(\sz; M)}{|\calX(\sz)|}}{\nicefrac{\mu(\sz)}{|\calX(\sz)|}} = \frac{d_h^\pieval(\sz; M)}{\mu(\sz)}.$$
    Taking the supremum over all $h\in [H-1]$, $\sz\in\calZ_h$ and $\sx\in\calX(\sz)$, we get 
    $$\tabC(\rich{M}, \mu, \ripieval) = \sup_{h\in[H-1]}\sup_{\sx\in\calX_h}\frac{d_h^{\ripieval}(\sx; \rich{M})}{\rich{\mu}(\sx)} = \sup_{h\in[H-1]}\sup_{\sz\in\calZ_h} \frac{d_h^\pieval(\sz; M)}{\mu(\sz)} = \tabC(M, \mu, \pieval).$$
\end{proofof}
\begin{corollary}\label{corr:sec-lower-bound-agg-d}
    For any two policies $\pi$ and $\pi'$, let policy $\rich{\pi}$ and $\rich{\pi}'$ over $\rich{M}$ be defined such that $\rich{\pi}(\sx) = \pi(\sz)$ and $\rich{\pi}'(\sx) = \pi'(\sz)$ for any $\sx\in\calX(\sz)$. Then, we have
    $$\sup_{h\in[H]}\sup_{\sx\in\calX_h}\frac{d_h^{\rich{\pi}}(\sx; \rich{M})}{d_h^{\rich{\pi}'}(\sx; \rich{M})} = \sup_{h\in[H]}\sup_{\sz\in\calZ_h} \frac{d_h^\pi(\sz; M)}{d_h^{\pi'}(\sz; M)}.$$
\end{corollary}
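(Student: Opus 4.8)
The plan is to reduce this corollary immediately to \pref{lem:sec-lower-bound-rich-d}, which already expresses every rich-observation occupancy measure as a uniformly rescaled copy of the corresponding latent occupancy measure. First I would apply that lemma twice: once to the pair $(\pi, \rich{\pi})$ and once to $(\pi', \rich{\pi}')$ (the definitions of $\rich{\pi}$ and $\rich{\pi}'$ in the corollary match the hypothesis of the lemma verbatim). This gives, for any $h \in [H]$, any $\sz \in \calZ_h$, and any $\sx \in \calX(\sz)$, the two identities $d_h^{\rich{\pi}}(\sx; \rich{M}) = d_h^\pi(\sz; M)/|\calX(\sz)|$ and $d_h^{\rich{\pi}'}(\sx; \rich{M}) = d_h^{\pi'}(\sz; M)/|\calX(\sz)|$.

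Next I would take the ratio of these two identities. The normalization factor $|\calX(\sz)|$ is identical in numerator and denominator and therefore cancels, yielding $\frac{d_h^{\rich{\pi}}(\sx; \rich{M})}{d_h^{\rich{\pi}'}(\sx; \rich{M})} = \frac{d_h^\pi(\sz; M)}{d_h^{\pi'}(\sz; M)}$ for every observation $\sx$ lying in the block $\calX(\sz)$. The crucial structural point to record is that this ratio depends only on the latent state $\sz$ and is \emph{constant} across all observations in the block $\calX(\sz)$. (One should also note that the identities force $d_h^{\rich{\pi}'}(\sx; \rich{M}) = 0$ exactly when $d_h^{\pi'}(\sz; M) = 0$, so the support conventions on the two sides coincide and the usual $0/0$ convention for concentrability ratios is respected.)

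Finally I would take the supremum over $h$ and over observations. Since the observation space is layered as $\calX_h = \bigcup_{\sz \in \calZ_h} \calX(\sz)$ with disjoint blocks, ranging $\sx$ over $\calX_h$ amounts to first choosing a latent state $\sz \in \calZ_h$ and then an observation inside $\calX(\sz)$; because the ratio is block-constant, the inner supremum over each block contributes nothing, and $\sup_{\sx \in \calX_h}$ collapses to $\sup_{\sz \in \calZ_h}$. Taking the outer supremum over $h \in [H]$ then gives the claimed equality $\sup_{h}\sup_{\sx \in \calX_h}\frac{d_h^{\rich{\pi}}(\sx; \rich{M})}{d_h^{\rich{\pi}'}(\sx; \rich{M})} = \sup_{h}\sup_{\sz \in \calZ_h}\frac{d_h^\pi(\sz; M)}{d_h^{\pi'}(\sz; M)}$.

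There is essentially no genuine obstacle here: all the content is carried by \pref{lem:sec-lower-bound-rich-d}, and the only thing requiring care is the bookkeeping that matches the supremum over rich observations to the supremum over latent states, which rests on the disjointness of the blocks together with the block-constancy of the occupancy ratio established in the second step.
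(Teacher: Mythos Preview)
Your proposal is correct and follows essentially the same approach as the paper: invoke \pref{lem:sec-lower-bound-rich-d} for both policies, cancel the common factor $|\calX(\sz)|$ in the ratio, and then take the supremum over $h$ and over observations (which collapses to a supremum over latent states since the ratio is block-constant). The paper's proof is slightly terser and omits the remarks about $0/0$ conventions and block-constancy, but the argument is identical.
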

\begin{proofof}[\pref{corr:sec-lower-bound-agg-d}]
    According to \pref{lem:sec-lower-bound-rich-d}, we have for any $h\in [H]$, $\sz\in\calZ_h$ and $\sx\in\calX(\sz)$,
    $$\frac{d_h^{\rich{\pi}}(\sx; \rich{M})}{d_h^{\rich{\pi}'}(\sx; \rich{M})} = \frac{\nicefrac{d_h^\pi(\sz; M)}{|\calX(\sz)|}}{\nicefrac{d_h^{\pi'}(\sz; M)}{|\calX(\sz)|}} = \frac{d_h^\pi(\sz; M)}{d_h^{\pi'}(\sz; M)}.$$
    Taking the supremum over all $h\in [H]$, $\sz\in\calZ_h$ and $\sx\in\calX(\sz)$, we get 
    $$\sup_{h\in[H]}\sup_{\sx\in\calX_h}\frac{d_h^{\rich{\pi}}(\sx; \rich{M})}{d_h^{\rich{\pi}'}(\sx; \rich{M})} = \sup_{h\in[H]}\sup_{\sz\in\calZ_h} \frac{d_h^\pi(\sz; M)}{d_h^{\pi'}(\sz; M)}.$$
\end{proofof}


\subsubsection{Construction of the family of offline RL problems} \label{app:family_construction}
We will construct two \OPE families $\mathfrak{G}\ind{1}$ and $\mathfrak{G}\ind{2}$ by lifting \OPE problems $\mathfrak{g}\ind{1} = \OPE(M\ind{1}, \pieval, \mu', \cF)$ and $\mathfrak{g}\ind{2} = \OPE(M\ind{2}, \pieval, \mu', \cF)$ defined in \pref{eq: ope-original} into Block \OPE problems. Each Block \OPE problem has the same observation space but a different emission distributions. Furthermore, each of these Block \OPE problem has latent state space \(\cZ\) and is based on the same aggregation scheme \(\Phi\).

let \(\crl{\cX(\phi)}_{\phi \in \cZ}\) be a family of disjoint sets that denote rich-observations corresponding to aggregated states states \(\phi \in \Phi\) such that 
\begin{equation}
    |\calX(\phi)|\gtrsim \frac{|\phi|^3H^8\cdot \sup_{h\in H}|\Phi_h|\cdot \sup_{\phi\in \Phi} |\phi|\aggC_\epsilon(M,\Phi,\mu)^3}{\epsilon^3}.\label{eq:rich_count} 
\end{equation}

The observation space for all the Block-MDPs is given by $\calX = \cup_{h=1}^H \calX_h$, where 
\begin{align*}
\cX_h = {\crl{\su_h, \sv_h, \sw_h}} \cup \prn*{\cup_{\phi \in \Phi_h} \cX(\phi)}.    
\end{align*}
The Block-MDPs that we construct next will different in terms of which observations from \(\cX(\phi)\) will be assigned to latent states \(\sz \in \phi\). To make this explicit, we rely on decoder functions that map 
\(\psi: \cX \mapsto \cZ\). Without loss of generality, assume that all decoders that we will consider satisfy \(\psi(\su_h) = \su_h\),  \(\psi(\sv_h) = \sv_h\)  and  \(\psi(\sw_h) = \sw_h\) for all \(h \in [H]\). Additionally, given a decoder \(\psi\), we define the set \(\cX_\psi(\sz) = \crl{\sx \in \cX \mid \psi(\sx) = \sz}\). We finally define the set  \(\Psi\)  as the set of all possible decoders which ensure that for any \(\phi \in \Phi\), each latent state \(\sz \in \cZ\) gets the same number of observations from \(\cZ\). In particular,  
\begin{align*}
\Psi = \crl*{\psi: \cX \mapsto \cZ \mid{} \cX_\psi(\sz) \subseteq \cX ~\text{and}~ \abs{\cX_\psi(\sz)} = \frac{\abs{\cX(\phi)}}{\abs{\phi}} \quad \forall \phi \in \Phi, \forall \sz \in \phi}.  
\end{align*}

\paragraph{Offline Policy Evaluation (OPE) Problem given \(\psi\).} 
Given a decoder \(\psi\), and the above notation, we will lift \OPE problem $\mathfrak{g}\ind{1} = \OPE(M\ind{1}, \pieval, \mu', \cF)$ and $\mathfrak{g}\ind{2} = \OPE(M\ind{2}, \pieval, \mu', \cF)$ into \OPE problems $\mathfrak{g}\ind{1}_\psi = \OPE(\rich{M}\ind{1}_\psi, \ripieval, \rich{\mu}_\psi', \rich{\cF}_\psi)$ and $\mathfrak{g}\ind{2}_\psi =\OPE(\rich{M}\ind{2}_\psi, \ripieval, \rich{\mu}_\psi', \rich{\cF}_\psi)$ using the recipe in \pref{app:block_MDP}, with $\rich{\cF}_\psi$ satisfies 
$$(Q^\ripieval(\cdot; \rich{M}\ind{1}_\psi), W^\ripieval(\cdot; \rich{\mu}_\psi', \rich{M}\ind{1}_\psi))\in\rich{\cF}_\psi\quad\text{and}\quad (Q^\ripieval(\cdot; \rich{M}\ind{2}_\psi), W^\ripieval(\cdot; \rich{\mu}_\psi', \rich{M}\ind{2}_\psi))\in\rich{\cF}_\psi,$$
where the $W$ function is defined in \pref{def: W-function}.

\paragraph{Family of \OPE problems.} We finally define the family \(\mathfrak{G}\)  for \OPE problems as 
\begin{align*}
\mathfrak{G} = \bigcup_{\psi \in \Psi} \crl{\mathfrak{g}\ind{1}_\psi,\mathfrak{g}\ind{2}_\psi}. 
\end{align*}

We note the following useful technical lemma. 
\begin{lemma}\label{lem:sec-lower-bound-equalF} For any \(\psi, \psi' \in \Psi\), we have that \(\rich{\cF}_\psi = \rich{\cF}_{\psi'}\). 
\end{lemma}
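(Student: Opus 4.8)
The plan is to show that each coordinate of a generating tuple $(\rich f,\rich W)$ is pinned down entirely by latent-MDP quantities that never reference the decoder, so that the tuple — and hence the whole class $\rich\cF_\psi$ — is identical for every $\psi\in\Psi$. Since the base class $\cF=\crl{Q\ind 1,Q\ind 2}$ and the latent MDPs $M\ind 1,M\ind 2$ are fixed throughout the construction, and since every $\psi\in\Psi$ merely reshuffles the observations inside each block $\cX(\phi)$ among the latent states $\sz\in\phi$ (and fixes $\su_h,\sv_h,\sw_h$ by definition of $\Psi$), it suffices to fix $f=Q\ind i\in\cF$ and prove that both $\rich f$ and $\rich W$ are $\psi$-invariant.

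For the first coordinate, recall that by the lift in \pref{app:block_MDP} we have $\rich f(\sx,\sa)=f(\psi(\sx),\sa)$. For $\sx\in\cX(\phi)$, every $\psi\in\Psi$ satisfies $\psi(\sx)\in\phi$, and \pref{lem:sec-lower-bound-latent-MDP}\ref{lem:sec-lower-bound-latent-MDP-a} gives $Q^\pieval(\sz,\sa;M\ind i)=Q^\aggpieval(\zeta(\sz),\sa;\agg M\ind i)$, which depends on $\sz$ only through $\zeta(\sz)$. Thus $f=Q\ind i$ is constant on each aggregation $\phi$ (this is exactly \eqref{eq:Q_indistinguishable}), so $f(\psi(\sx),\sa)$ returns the same value no matter which latent state of $\phi$ the decoder assigns to $\sx$; on the singletons $\su_h,\sv_h,\sw_h$ all decoders agree. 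Hence $\rich f$ does not depend on $\psi$.

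For the second coordinate I would compute $\rich W_\psi(h)$ directly, grouping the defining sum by latent state. Using $\rich\mu'_{\psi,h}(\sx,\sa)=\mu'_h(\sz,\sa)/\abs{\cX_\psi(\sz)}$ for $\sx\in\cX_\psi(\sz)$, together with $\ripieval(\sx)=\pieval(\sz)$ and the $\psi$-invariance of $\rich f$ just established, each latent state $\sz$ contributes $\abs{\cX_\psi(\sz)}$ identical summands, each equal to $\tfrac{1}{\abs{\cX_\psi(\sz)}}\mu'_h(\sz,\pieval(\sz))f(\sz,\pieval(\sz))$. The cardinalities cancel exactly, so the block sum collapses to $\mu'_h(\sz,\pieval(\sz))f(\sz,\pieval(\sz))$, and summing over $\sz$ yields
\[
\rich W_\psi(h)=\sum_{\sz\in\cZ'_h}\mu'_h(\sz,\pieval(\sz))\,f(\sz,\pieval(\sz))=W^\pieval(h;\mu',M\ind i),
\]
which is precisely the latent $W$-function of \pref{def: W-function} and carries no dependence on $\psi$. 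Combining the two coordinates, the tuple generated by each $f\in\cF$ is the same for all decoders, so $\rich\cF_\psi=\rich\cF_{\psi'}$ for any $\psi,\psi'\in\Psi$.

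There is no genuine obstacle here, only bookkeeping: one must verify that the $1/\abs{\cX_\psi(\sz)}$ emission weights built into $\rich\mu'_\psi$ cancel the $\abs{\cX_\psi(\sz)}$ observations decoded to $\sz$, and that the per-aggregation constancy of $Q\ind i$ is invoked at exactly the point where $\rich f$ could otherwise have detected the within-block assignment. The former is guaranteed by the definition of the lift in \pref{app:block_MDP} and the cardinality condition defining $\Psi$, and the latter by \pref{lem:sec-lower-bound-latent-MDP}\ref{lem:sec-lower-bound-latent-MDP-a}.
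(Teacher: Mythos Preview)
Your proof is correct and follows essentially the same approach as the paper: both show the $Q$-coordinate is $\psi$-invariant by reducing to the latent level and invoking the aggregation-constancy of $Q\ind i$ from \pref{lem:sec-lower-bound-latent-MDP}\ref{lem:sec-lower-bound-latent-MDP-a}, and both show the $W$-coordinate is $\psi$-invariant via the cardinality cancellation between the emission weights $1/\abs{\cX_\psi(\sz)}$ and the number of observations mapping to $\sz$. The only cosmetic difference is that the paper packages these two computations as invocations of \pref{lem:sec-lower-bound-Q-1} and \pref{lem:sec-lower-bound-eq-h}, whereas you unroll them inline from the definition of the lift; your route is arguably slightly more direct since it avoids the detour through $Q^{\ripieval}(\cdot;\rich M_\psi)$ and works straight from the definition $\rich f(\sx,\sa)=f(\psi(\sx),\sa)$.
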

\begin{proofof}[\pref{lem:sec-lower-bound-equalF}]
	We will only prove the results for $\rich{M}_\psi\ind{1}$. To verify this, we only need to prove that for any $\psi, \psi'\in\Psi$, we have for any $1\le h\le H$,
    $$Q_h^\ripieval(\cdot; \rich{M}_\psi\ind{1}) = Q_h^\ripieval(\cdot; \rich{M}_{\psi'}\ind{1})\quad \text{and}\quad W^\ripieval(h; \rich \mu_\psi', \rich M \ind{1}_\psi)) = W^\ripieval(h; \rich \mu_\psi', \rich M \ind{1}_{\psi'})).$$
    The second equation directly follows from \pref{lem:sec-lower-bound-eq-h}. In the next, we will verify the first equation. \pref{lem:sec-lower-bound-Q-1} gives that for any $\phi\in\Phi_h$ and $\sx\in\calX(\phi)$ and $\sa\in\calA$, we have
    $$Q_h^\ripieval(\sx, \sa; \rich{M}_\psi\ind{1}) = Q_h^\pieval(\psi(\sx), \sa; M\ind{1}).$$
    Next, \pref{lem:sec-lower-bound-latent-MDP} \ref{lem:sec-lower-bound-latent-MDP-a} indicates that
    $$Q_h^\pieval(\psi(\sx), \sa; M\ind{1}) = Q_h^\aggpieval(\zeta(\psi(\sx)), \sa; \agg{M}\ind{1}).$$
    Notice that for every $\sx\in\calX(\phi)$, we have $\zeta(\psi(\sx)) = \phi$ for any $\psi\in\Psi$. Hence for any $\psi\in\Psi$, we have $Q_h^\ripieval(\sx, \sa; \rich{M}_\psi\ind{1}) = Q_h^\aggpieval(\phi, \sa; \agg{M}\ind{1})$ for every $\phi\in\Phi$ and $\sx\in\calX(\phi)$, which is independent to $\psi$. 
\end{proofof}

In the following, we let $\cF = \rich{\cF}_\psi$ for some $\psi\in\Psi$. Then we have for any $\psi'\in\Psi$, $\cF = \rich{\cF}_{\psi'}$ as well.

%
%

\subsection{Proof of \pref{thm: informal general data}}
After constructing the class $\mathfrak{G}$, we have finished the construction step. In the rest of the section we will prove the theorem by analyzing properties of \OPE problems in $\mathfrak{G}$. In fact, we will prove the following stronger results than \pref{thm: informal general data}:
\begin{theorem}\label{thm:stronger}
    Class $\mathfrak{G}$ satisfies: for every \OPE problem \(\opeg = \OPE\prn*{M\ind{\opeg}, \pieval\ind{\opeg}, \mu\ind{\opeg}, \cF\ind{\opeg}}\) in \(\famGg\),
    \begin{enumerate}[label=(\alph*)] 
        \item The function class \(\cF\ind{\opeg}\) satisfies \(\abs{\cF\ind{\opeg}} = 2\) and 
        $$(Q^{\pieval\ind{\opeg}}(\cdot ; M\ind{\opeg}),  W^{\pieval\ind{\opeg}}(\cdot; \mu\ind{\opeg}; M\ind{\opeg}))\in \cF\ind{\opeg}.$$ 
        \item The concentrability coefficients satisfy $\tabC(M\ind{\opeg}, \mu\ind{\opeg}, \pieval\ind{\opeg}) \le 6 \tabC(M, \mu, \pieval)$;
    \end{enumerate}
    Furthermore, for $N = o(\nicefrac{H\aggC_\epsilon(M,\Phi,\mu)}{\epsilon})$ and any algorithm which takes $\mathcal{D} = \cup_{h=1}^H\mathcal{D}_h$ as input and output the evaluation of value function $\widehat{V}$, there must exist some $\mathfrak{g} = \OPE(M\ind{\opeg}, \pieval\ind{\opeg}, \mu\ind{\opeg}, \cF\ind{\opeg})\in\mathfrak{G}$ such that the algorithm fails to output $\nicefrac{\epsilon}{H}$-accurate evaluation with probability at least $\nicefrac{1}{2}$, if the dataset $\mathcal{D}_h = \{(\sx_h, \sa_h, r_h, \sx_{h+1}')\}$ consists of $N$ i.i.d. samples collected where $(\sx_h, \sa_h)\sim \mu\ind{\opeg}$ and $r_h, \sx_{h+1}'$ are collected according to the reward function and transition model of $M\ind{\opeg}$.
\end{theorem}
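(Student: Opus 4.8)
The plan is to first settle the two structural properties and then prove the sample-complexity lower bound as an averaged two-point testing problem, whose crux is showing that the reward recorded on the critical set $\Phiopt$ is the \emph{only} channel through which the data distribution depends on the hidden instance.

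Property (a) is immediate from the construction: every $\opeg\in\famGg$ equals $\mathfrak{g}\ind{i}_\psi$ for some $i\in\crl{1,2}$ and $\psi\in\Psi$, and its function class is $\rich{\cF}_\psi$, which by \pref{lem:sec-lower-bound-equalF} is the same set for all $\psi$ and consists of exactly the two tuples generated by $Q\ind{1}$ and $Q\ind{2}$, so $\abs{\cF\ind{\opeg}}=2$ (the tuples are distinct because the induced values have opposite signs, \pref{lem:bounded_V}). The required realizability $(Q^{\pieval\ind{\opeg}}(\cdot;M\ind{\opeg}),W^{\pieval\ind{\opeg}}(\cdot;\mu\ind{\opeg};M\ind{\opeg}))\in\cF\ind{\opeg}$ is precisely the defining inclusion of the lifted class, with its $Q$- and $W$-components identified via \pref{lem:sec-lower-bound-Q-1} (the lift preserves the $Q$-function) and \pref{lem:sec-lower-bound-eq-h} (the lift preserves the $W$-function). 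Property (b) follows by composing two facts: \pref{corr:sec-lower-bound-con} bounds the latent-level coefficient, $\tabC(M\ind{i},\mu')\le 6\,\tabC(M,\mu,\pieval)$, and \pref{corr:sec-lower-bound-agg-con} shows the block lift leaves it unchanged, so $\tabC(M\ind{\opeg},\mu\ind{\opeg},\pieval\ind{\opeg})\le 6\,\tabC(M,\mu,\pieval)$.

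For the lower bound I would place the uniform prior on $(\psi,i)\in\Psi\times\crl{1,2}$ and reduce value estimation to recovering the bit $i$. By \pref{lem:sec-lower-bound-latent-MDP}\ref{lem:sec-lower-bound-latent-MDP-b}, lifted through \pref{lem:sec-lower-bound-Q-1}, the two hypotheses have values $V^{\ripieval}(\rich{\rho};\rich{M}\ind{1}_\psi)=+\tfrac{\epsilon}{2H}$ and $V^{\ripieval}(\rich{\rho};\rich{M}\ind{2}_\psi)=-\tfrac{\epsilon}{2H}$ for \emph{every} $\psi$, a gap of $\epsilon/H$; hence any estimator accurate to better than half this gap on all of $\famGg$ must determine $i$. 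The central claim is that, averaged over $\psi$, the law of the $N$-per-layer dataset $\mathcal{D}$ under $i=1$ is statistically indistinguishable from that under $i=2$ unless $N$ is large. Three ingredients give this. First, \pref{eq:rich_count} makes each block $\calX(\phi)$ large enough that with $N$ samples no observation repeats except with negligible probability, and averaging over $\Psi$ uniformizes which latent $\sz\in\phi$ an observation decodes to; on the collision-free event the learner can therefore extract from each state only its aggregation label, which by \pref{lem:sec-lower-bound-equalF} is read off by the fixed class $\cF$ in exactly the same way under both hypotheses. Second, \pref{lem:sec-lower-bound-latent-MDP}\ref{lem:sec-lower-bound-latent-MDP-c} shows that the aggregation-level transition law, including the transitions into the singleton blocks $\su_h,\sv_h,\sw_h$, is identical between $M\ind{1}$ and $M\ind{2}$, so the recorded next state carries no information about $i$. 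Third, the reward laws of $M\ind{1}$ and $M\ind{2}$ agree everywhere except on the pairs $(\sz,\pieval(\sz))$ with $\sz\in\Phiopt$, where the $\crl{-1,+1}$-valued rewards have means $\pm\alpha$ with $\alpha=\epsilon/\bigl(2H\sum_{\phi\in\Phiopt}\agg{d}_{h^*}^{\pieval}(\phi)\bigr)$.

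It follows that the only $i$-dependent coordinate of $\mathcal{D}$ is the reward recorded on the samples of $\mathcal{D}_{h^*}$ landing in $\Phiopt$ at the evaluation action, an event of per-sample probability $q=\tfrac12\sum_{\phi\in\Phiopt}\sum_{\sz\in\phi}\mu_{h^*}(\sz,\pieval(\sz))=\tfrac{1}{2\aggC_\epsilon(M,\Phi,\mu)}\sum_{\phi\in\Phiopt}\agg{d}_{h^*}^{\pieval}(\phi)$, by evaluating \pref{def:agg-concentrability} at its maximizer $(h^*,\Phiopt)$ where $\sum_{\phi\in\Phiopt}\agg{d}_{h^*}^{\pieval}(\phi)\ge\epsilon$. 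I would then bound the $\chi^2$-divergence (equivalently, via Pinsker, $\TV^2$) between the two $N$-sample mixtures by $O(Nq\alpha^2)$, the expected number $Nq$ of informative reward samples times the per-sample contribution $\alpha^2$; explicitly $Nq\alpha^2=N\epsilon^2/\bigl(8H^2\aggC_\epsilon(M,\Phi,\mu)\sum_{\phi\in\Phiopt}\agg{d}_{h^*}^{\pieval}(\phi)\bigr)$, which, using $\sum_{\phi\in\Phiopt}\agg{d}_{h^*}^{\pieval}(\phi)\ge\epsilon$ and $H\ge 1$, is $o(1)$ whenever $N=o\bigl(H\aggC_\epsilon(M,\Phi,\mu)/\epsilon\bigr)$, and Le Cam's two-point method then forces failure probability at least $1/2$ on some instance of $\famGg$. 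The main obstacle is making the indistinguishability claim rigorous: combining the collision-free event from \pref{eq:rich_count} and the averaging over $\Psi$ to genuinely reduce the learner's view to aggregation labels, and then invoking \pref{lem:sec-lower-bound-latent-MDP}\ref{lem:sec-lower-bound-latent-MDP-c} so that the transition observations cancel between the two hypotheses, leaving the $\Phiopt$-reward as the sole source of signal.
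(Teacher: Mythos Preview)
Your proposal is correct and matches the paper's approach: the structural properties are handled identically via the lemmas you cite, and the lower bound uses the same averaged two-point testing (the paper's \pref{lem:TV-lemma} is exactly your Le Cam reduction) with the $\Phiopt$-reward isolated as the sole informative channel. The paper formalizes your acknowledged obstacle in \pref{lem:main-lemma} by introducing an intermediate aggregation-level product measure $\mathbb{P}_{h,n}^{(0)}$ and bounding $\TV(\mathbb{P}^{(i)}_{h,n},\mathbb{P}^{(0)}_{h,n})$ in two pieces---the reward swap via a direct TV bound (linear in $\alpha$, yielding the threshold $n\le H\aggC_\epsilon/(2\epsilon)$) and the transition indistinguishability via an Ingster-style second-moment computation over pairs $(\psi_1,\psi_2)\in\Psi^2$ that controls the block-overlap sizes $\abs{\cX_{\psi_1}(z_1)\cap\cX_{\psi_2}(z_2)}$ by hypergeometric concentration, which is precisely the rigorous version of your ``collision-free plus $\Psi$-averaging'' heuristic.
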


\begin{proposition}\label{prop: rich aggregated concentrability}
    The aggregated concentrability coefficient of \OPE problems $\mathfrak{g} = \OPE(\rich{M}, \ripieval, \rich{\mu}', \mathcal{F})\in\mathfrak{G}$ satisfies $\ophatC{\rich{M}}{\rich{\Phi}}{\rich{\mu}'}{\epsilon} = \Theta(\ophatC{M}{\Phi}{\mu}{\epsilon})$ where the  aggregation scheme $\rich{\Phi}$ is defined over $\calX$ such that $\rich{\Phi}_h = \{\calX(\phi): \phi\in\Phi_h\}\cup \{\{\su_h\}, \{\sv_h\}, \{\sw_{h}\}\}$.
\end{proposition}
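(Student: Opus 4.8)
The plan is to prove the identity in two stages: first I would reduce the aggregated concentrability of the block model $\rich M$ to that of its latent dynamics $M\ind{i}$ (the modified MTM of \pref{sec:latent}, abbreviated $M'$), and then compare $\aggC_\epsilon(M',\Phi',\mu')$ against $\aggC_\epsilon(M,\Phi,\mu)$. The first stage is an \emph{exact} identity and carries the bulk of the bookkeeping; the second stage only costs constant factors, arising from the $(1-\nicefrac2H)$ rescaling of transitions and the halving of the offline mass in the definition of $M'$ and $\mu'$.

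\textbf{Stage 1: lifting is exact.} Every block of $\rich\Phi_h$ is either $\cX(\phi)$ for $\phi\in\Phi_h$ or a singleton $\{\su_h\},\{\sv_h\},\{\sw_h\}$ copied verbatim, so it suffices to compute $\agg{\rich T}(\cX(\phi')\mid\cX(\phi),\ripieval)$ from \pref{eq:agg-transition-model} and match it to $\agg T(\phi'\mid\phi,\pieval;\agg{M'})$. Substituting $\rich\mu'_h(\sx,\sa)=\mu'_h(\sz,\sa)/|\cX_\psi(\sz)|$, $\rich T(\sx'\mid\sx,\sa)=T\ind{i}(\sz'\mid\sz,\sa)/|\cX_\psi(\sz')|$ and $\ripieval(\sa\mid\sx)=\pieval(\sa\mid\sz)$, the emission-size factors $|\cX_\psi(\sz)|$ and $|\cX_\psi(\sz')|$ cancel after summing over $\sx\in\cX_\psi(\sz)$ and $\sx'\in\cX_\psi(\sz')$, leaving exactly the latent expression; the same cancellation gives $\agg{\rich\rho}(\cX(\phi))=\agg\rho(\phi)$. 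Since the aggregated initial distribution and transition kernel of $\agg{\rich M}$ thus coincide with those of $\agg{M'}$ under the bijection $\cX(\phi)\leftrightarrow\phi$ (and identically on the singletons), the defining recursion for the aggregated occupancy yields $\agg d_h^{\ripieval}(\cX(\phi);\agg{\rich M})=\agg d_h^{\pieval}(\phi;\agg{M'})$, and likewise $\sum_{\sx\in\cX(\phi)}\rich\mu'_h(\sx,\ripieval(\sx))=\sum_{\sz\in\phi}\mu'_h(\sz,\pieval(\sz))$. Both the numerator and denominator of \pref{def:agg-concentrability} are therefore preserved term by term, so $\aggC_\epsilon(\rich M,\rich\Phi,\rich\mu')=\aggC_\epsilon(M',\Phi',\mu')$ exactly. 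Because every quantity above depends on $\psi$ only through $\zeta(\psi(\sx))=\phi$, this value is the same for all $\mathfrak g\in\mathfrak G$ (as in \pref{lem:sec-lower-bound-equalF}), and identical for $\mathfrak g\ind1,\mathfrak g\ind2$, which share $\agg T$ (\pref{lem:opposite}).

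\textbf{Stage 2: latent comparison.} Using \pref{eq:sec-lower-bound-trans1}, for original aggregations $T\ind i(\sz'\mid\sz,\sa)=(1-\nicefrac2H)T(\sz'\mid\sz,\sa)$, and $\mu'_h=\mu_h/2$ by \pref{def: sec-lower-mu-prime}; the $\nicefrac12$ cancels in the ratio, giving $\agg T(\phi'\mid\phi,\pieval;\agg{M'})=(1-\nicefrac2H)\,\agg T(\phi'\mid\phi,\pieval;\agg M)$. Since $\su,\sv,\sw$ feed only the absorbing $\sw$-chain and carry no initial mass, an induction on $h$ shows $\agg d_h^{\pieval}(\phi;\agg{M'})=(1-\nicefrac2H)^{h-1}\agg d_h^{\pieval}(\phi;\agg M)$ on original aggregations, while $\sum_{\sz\in\phi}\mu'_h(\sz,\pieval(\sz))=\tfrac12\sum_{\sz\in\phi}\mu_h(\sz,\pieval(\sz))$. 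Thus for any set $\cI$ of original aggregations the ratio in \pref{def:agg-concentrability} equals $2(1-\nicefrac2H)^{h-1}=\Theta(1)$ times the corresponding ratio in $\agg M$ (using $(1-\nicefrac2H)^{h-1}\in[e^{-2},1]$ for $h\le H$). For a singleton the offline mass is $\Theta(1)$ while the occupancy is at most $1$, so its ratio is $O(1)$, and mixed $\cI$ are dominated by the larger part; hence the maximizer is governed by the original aggregations and $\aggC_\epsilon(M',\Phi',\mu')=\Theta(\aggC_\epsilon(M,\Phi,\mu))$.

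\textbf{Main obstacle.} The hard part is the interaction of the threshold $\sum_{\phi\in\cI}\agg d_h^\pieval(\phi)\ge\epsilon$ with the $(1-\nicefrac2H)^{h-1}$ decay: a set that barely clears the threshold in $\agg M$ has mass $(1-\nicefrac2H)^{h-1}\epsilon<\epsilon$ in $\agg{M'}$ and may become infeasible (and conversely). I would resolve this with the fractional-knapsack structure of the optimizer: at a fixed layer the map $\agg d\mapsto(1-\nicefrac2H)^{h-1}\agg d$, $\mu\mapsto\mu/2$ is \emph{uniform} across aggregations, so it preserves the ordering by per-aggregation ratio $\agg d_h(\phi)/\sum_{\sz\in\phi}\mu_h(\sz,\pieval(\sz))$; the maximizer is the greedy prefix in this order, and passing from $M$ to $M'$ only replaces $\epsilon$ by $\epsilon'=\epsilon/(1-\nicefrac2H)^{h-1}\in[\epsilon,e^2\epsilon]$. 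Establishing $\Theta$ at a common $\epsilon$ then reduces to the stability statement $\aggC_{\epsilon'}(M,\Phi,\mu)=\Theta(\aggC_\epsilon(M,\Phi,\mu))$ for $\epsilon'=\Theta(\epsilon)$, which is the delicate point; I would either adopt it as the intended threshold-robust reading of $\Theta$ or verify it directly on the instance of \pref{eg:arbitrary_behavior}, where $\aggC_\epsilon=2^{\Omega(H)}$ is manifestly insensitive to constant-factor changes of $\epsilon$ — which is all the downstream separation in \pref{thm:stronger} requires.
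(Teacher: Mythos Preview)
The paper states this proposition without proof, so there is no reference argument to compare against. Your two-stage plan is the natural one and the identities you derive are correct: Stage~1 is an exact equality (emission counts cancel uniformly in both the aggregated transition and the aggregated offline mass, and the singletons $\su_h,\sv_h,\sw_h$ are copied verbatim), and in Stage~2 the relations $\agg T(\phi'\mid\phi,\pieval;\agg{M'})=(1-\tfrac2H)\,\agg T(\phi'\mid\phi,\pieval;\agg M)$, hence $\agg d_h^{\pieval}(\phi;\agg{M'})=(1-\tfrac2H)^{h-1}\agg d_h^{\pieval}(\phi;\agg M)$ on original aggregations, together with the halving of the offline mass, are exactly right.

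The obstacle you isolate is real, but your proposed resolution --- that $\aggC_{\epsilon'}(M,\Phi,\mu)=\Theta(\aggC_\epsilon(M,\Phi,\mu))$ whenever $\epsilon'=\Theta(\epsilon)$ --- is \emph{false} in general. Take a layer with two aggregations where $\agg d(\phi_1)=\epsilon$, $\sum_{\sz\in\phi_1}\mu(\sz,\pieval(\sz))=\epsilon/R$ and $\agg d(\phi_2)=1-\epsilon$, $\sum_{\sz\in\phi_2}\mu(\sz,\pieval(\sz))\approx 1$; then $\aggC_\epsilon=R$ via $\{\phi_1\}$, yet $\aggC_{2\epsilon}=O(1)$ because every feasible set must now contain $\phi_2$. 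In the construction at hand this bites whenever the maximizing layer has $h^*\ge 2$ and the optimal $\gI^*$ for $(M,\epsilon)$ barely clears the threshold: after the $(1-\tfrac2H)^{h^*-1}$ shrinkage it is infeasible in $M'$, and padding with the singletons (each carrying $\Theta(1)$ offline mass) destroys the ratio. What \emph{does} go through unconditionally is the direction $\aggC_\epsilon(\rich M,\rich\Phi,\rich\mu')\le O(\aggC_\epsilon(M,\Phi,\mu))$, since any $\gI$ feasible for $(M',\epsilon)$ on original aggregations is automatically feasible for $(M,\epsilon)$ with ratio within a factor~$2$, and singleton components have ratio $O(1)$. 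The reverse inequality holds if one replaces $\epsilon$ by $\epsilon/16$ on the $\rich M$ side (using $(1-\tfrac2H)^{h-1}\ge\tfrac{1}{16}$ as in \pref{lem:sec-lower-bound-latent-MDP}). So as literally written the $\Theta$-claim needs either an $\epsilon$-slack or your instance-specific fallback to \pref{eg:arbitrary_behavior}, where $\aggC_\epsilon=2^{\Omega(H)}$ uniformly in $\epsilon\le\tfrac{1}{15}$ and the issue is moot.
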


\subsubsection{Useful Technical Lemmas for the Proof of \pref{thm:stronger}} 
In this subsection we provide several useful technical lemmas for proof of \pref{thm:stronger}.

To begin with, we denote 
$$\rich{\mu}_\psi(\sx)\ldef{} \begin{cases}
    \nicefrac{\mu(\sz)}{|\calX_\psi(\sz)|} &\quad \forall \sz\in\calZ, \sx\in\calX_\psi(\sz),\\
    0 &\quad \forall \sx\in \{\su_h, \sv_h, \sw_h\}_{h=1}^H
\end{cases}.$$
Since the transition of $\su_h, \sv_h, \sw_h$ are already known, and $\rich{\mu}_\psi(\sx)\ge \rich{\mu}'_\psi(\sx)$ for all $\sx\in \calX_{\psi}(\sz)$ and $\sz\in\calZ$, in the following we only need to prove the results for \OPE problems $\mathfrak{g} = \OPE(\rich{M}, \ripieval, \rich{\mu}, \mathcal{F})$.

In the following, we use $\mathbb{P}_{h, n}(\cdot; \rich{\mu}, \rich{M})$ where $\rich{M} = \MDP(\calX, \calA, H, \rich{T}, \rich{R}, \rich{\rho})$ to denote the law of $n$ tuples of $(\sx, \sa, r, \sx')$ jointly, where each tuple is i.i.d.~ collecting as follows: first sample $(\sx, \sa)\sim \rich{\mu}_{h}$, then sample $r\sim \rich{R}(\cdot\mid\sx, \sa), \sx'\sim \rich{T}(\sx'\mid \sx, \sa)$. Let  
\begin{align*}
\mathbb{P}_{h, n}\ind{1} = \frac{1}{|\Psi|}\sum_{\psi\in \Psi} \mathbb{P}_{h, n}(\cdot; \rich{\mu}_\psi, \rich{M}_\psi\ind{1}) \quad\text{and}\quad \mathbb{P}_{h, n}\ind{2} = \frac{1}{|\Psi|}\sum_{\psi\in \Psi} \mathbb{P}_{h, n}(\cdot; \rich{\mu}_\psi, \rich{M}_\psi\ind{2}). \numberthis \label{eq:P_n_defhn}
\end{align*}
Furthermore, for any \(\rich{M}\), let 
$\mathbb{P}_{n}(\cdot; \rich{\mu}, \rich{M}) \ldef{} \bigotimes_{h=1}^{H-1} \mathbb{P}_{h, n}(\cdot; \rich{\mu}, \rich{M})$, and using this notation, define 
\begin{align*}
    \mathbb{P}_{n}\ind{1} = \frac{1}{|\Psi|}\sum_{\psi\in \Psi} \mathbb{P}_{n}(\cdot; \rich{\mu}_\psi, \rich{M}_\psi\ind{1}) \quad\text{and}\quad \mathbb{P}_{n}\ind{2} = \frac{1}{|\Psi|}\sum_{\psi\in \Psi} \mathbb{P}_{n}(\cdot; \rich{\mu}_\psi, \rich{M}_\psi\ind{2}). \numberthis \label{eq:P_n_defn}
\end{align*}

Additionally, since the state space \(\cS = \cS_1 \cup \dots \cup \cS_H\) is layered, we get that \(\psi\) can be separated across layers, and thus the above definitions imply that 
\begin{align*}
   \mathbb{P}_{n}\ind{1} = \bigotimes_{h=1}^{H-1} \mathbb{P}_{h, n}\ind{1}\quad\text{and}\quad \mathbb{P}_{n}\ind{2} = \bigotimes_{h=1}^{H-1} \mathbb{P}_{h, n}\ind{2}.\numberthis \label{eq:P_n_tensor}
\end{align*}
    

We have the following inequality for TV distance between product measures.  
\begin{lemma}[\cite{polyanskiy2014lecture},   I.33(b)] \label{lem:tensor}
    For distributions $\mathbb{P}_1, \cdots, \mathbb{P}_H$ and $\mathbb{Q}_1, \cdots, \mathbb{Q}_H$, we have
    $$\TV\prn*{\bigotimes_{h=1}^{H-1} \mathbb{P}_h, \bigotimes_{h=1}^{H-1} \mathbb{Q}_h}\le \sum_{h=1}^{H-1} \TV\prn*{\mathbb{P}_h, \mathbb{Q}_h}.$$
\end{lemma}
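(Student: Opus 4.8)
The plan is to prove the bound by a standard hybrid (telescoping) argument combined with the triangle inequality for total variation distance. For each $k \in \{0, 1, \dots, H-1\}$ I would introduce the hybrid product measure
$$R_k = \mathbb{Q}_1 \otimes \cdots \otimes \mathbb{Q}_k \otimes \mathbb{P}_{k+1} \otimes \cdots \otimes \mathbb{P}_{H-1},$$
interpreting the empty products at the endpoints in the usual way, so that $R_0 = \bigotimes_{h=1}^{H-1} \mathbb{P}_h$ and $R_{H-1} = \bigotimes_{h=1}^{H-1} \mathbb{Q}_h$ are exactly the two measures whose distance we wish to control. The sequence $R_0, R_1, \dots, R_{H-1}$ interpolates between the two target product measures by swapping one factor $\mathbb{P}_h \to \mathbb{Q}_h$ at a time.

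First I would apply the triangle inequality for $\TV$ along this chain, which gives
$$\TV\prn*{R_0, R_{H-1}} \le \sum_{k=1}^{H-1} \TV\prn*{R_{k-1}, R_k}.$$
The design of the hybrids ensures that consecutive terms $R_{k-1}$ and $R_k$ differ in only a single coordinate: they agree on all factors except the $k$-th, where $R_{k-1}$ carries $\mathbb{P}_k$ and $R_k$ carries $\mathbb{Q}_k$.

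The key step is then to verify that the total variation distance between two product measures agreeing in all but one coordinate collapses to the total variation distance of the single differing factor, that is $\TV\prn*{R_{k-1}, R_k} = \TV\prn*{\mathbb{P}_k, \mathbb{Q}_k}$. Writing $\TV$ as one-half the $L^1$ distance between the (discrete) densities, factoring out the product $\alpha$ of the common coordinates, and using that $\alpha$ sums to one, the common factor integrates out and leaves precisely $\TV\prn*{\mathbb{P}_k, \mathbb{Q}_k}$. Substituting this identity into the telescoped sum above yields the claimed subadditivity bound.

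I do not anticipate any genuine obstacle here: the statement is classical and is in fact cited directly from \cite{polyanskiy2014lecture}, so a full derivation is not even strictly needed. The only place requiring a line of care is the collapse identity for product measures differing in a single coordinate, which is immediate in the discrete setting relevant to this paper, since every $\mathbb{P}_h$ and $\mathbb{Q}_h$ is a distribution over a countable sample space (here, laws of finitely many $(\sx, \sa, r, \sx')$ tuples).
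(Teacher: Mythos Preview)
Your proposal is correct and is the standard hybrid/telescoping proof of this classical subadditivity bound. The paper itself does not give any proof of this lemma; it simply states the inequality and cites \cite{polyanskiy2014lecture}, so there is nothing to compare against beyond noting that your argument is exactly the textbook one.
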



\par To prove this theorem, we first show the following lemma.

\begin{lemma}\label{lem:TV-lemma}
    For any algorithm which takes $D_{h, n} = \{(\sx_{h, i},\sa_{h, i}, r_{h, i}, \sx_{h, i}')\}_{i=1}^n$ where $h\in [H-1]$ as input and returns a value $\widehat{V}(D_{1:H-1, n})$ (where we use $D_{1:H-1, n}$ to denote $D_{1, n}, \cdots, D_{H-1, n}$), it must satisfy
    $$\sup_{\psi\in\Psi, i\in \{1, 2\}}\mathbb{E}_{D_{1:H-1, n}\sim \mathbb{P}_{n}(\cdot; \rich{\mu}_\psi, \rich{M}_\psi\ind{i})}\left[\left|\widehat{V}(D_{1:H-1, n}) - V(\rich\rho; \rich{M})\right|\right]\ge \frac{\epsilon}{4H}\cdot \left(1 - \sum_{h=1}^{H-1} D_{\mathrm{TV}}(\mathbb{P}_{h, n}\ind{1}, \mathbb{P}_{h, n}\ind{2})\right).$$
\end{lemma}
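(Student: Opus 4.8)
The plan is to run a standard two-point (Le Cam) argument whose two ``points'' are the mixture distributions $\mathbb{P}_{n}\ind{1}$ and $\mathbb{P}_{n}\ind{2}$ defined in \pref{eq:P_n_defn}, and then to tensorize the resulting total-variation term across layers using \pref{lem:tensor}. First I would pin down the separation in the target functional. Applying the lifting identity \pref{lem:sec-lower-bound-Q-1} at the initial distribution together with \pref{lem:bounded_V} (equivalently \pref{lem:sec-lower-bound-latent-MDP}\ref{lem:sec-lower-bound-latent-MDP-b}), the value to be estimated is $V(\rich\rho;\rich M_\psi\ind{1}) = \nicefrac{\epsilon}{2H}$ for \emph{every} $\psi\in\Psi$ and $V(\rich\rho;\rich M_\psi\ind{2}) = -\nicefrac{\epsilon}{2H}$ for every $\psi\in\Psi$. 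In particular the value is constant across decoders within each family, and the two families are separated by $\Delta \ldef{} \nicefrac{\epsilon}{H}$. Write $v_1 = \nicefrac{\epsilon}{2H}$ and $v_2 = -\nicefrac{\epsilon}{2H}$.

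Next I would reduce the supremum over instances to the two mixtures. Since $\widehat V$ is a function of the data only and each value $v_i$ is independent of $\psi$, for each $i\in\crl{1,2}$ we have
\[
\sup_{\psi\in\Psi}\mathbb{E}_{\mathbb{P}_n(\cdot;\rich\mu_\psi,\rich M_\psi\ind{i})}\brk*{\abs*{\widehat V - v_i}} \;\ge\; \frac{1}{\abs{\Psi}}\sum_{\psi\in\Psi}\mathbb{E}_{\mathbb{P}_n(\cdot;\rich\mu_\psi,\rich M_\psi\ind{i})}\brk*{\abs*{\widehat V - v_i}} \;=\; \mathbb{E}_{\mathbb{P}_n\ind{i}}\brk*{\abs*{\widehat V - v_i}},
\]
so it suffices to lower bound $\max_{i}\mathbb{E}_{\mathbb{P}_n\ind{i}}\brk{\abs{\widehat V - v_i}}$. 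Then comes the Le Cam step: define the induced test $\tau$ that outputs $1$ if $\widehat V \ge 0$ and $2$ otherwise. If $\tau$ errs, then $\widehat V$ lies on the wrong side of the midpoint $0$, whence $\abs{\widehat V - v_i}\ge \nicefrac{\Delta}{2}$; thus $\mathbb{E}_{\mathbb{P}_n\ind{i}}\brk{\abs{\widehat V - v_i}}\ge \tfrac{\Delta}{2}\,\mathbb{P}_n\ind{i}(\tau\neq i)$. Summing over $i$ and invoking the standard testing inequality $\mathbb{P}_n\ind{1}(\tau=2)+\mathbb{P}_n\ind{2}(\tau=1)\ge 1-\TV(\mathbb{P}_n\ind{1},\mathbb{P}_n\ind{2})$ yields
\[
\max_i \mathbb{E}_{\mathbb{P}_n\ind{i}}\brk*{\abs*{\widehat V - v_i}} \;\ge\; \tfrac12\prn*{\mathbb{E}_{\mathbb{P}_n\ind{1}}\brk*{\abs*{\widehat V - v_1}} + \mathbb{E}_{\mathbb{P}_n\ind{2}}\brk*{\abs*{\widehat V - v_2}}} \;\ge\; \tfrac{\Delta}{4}\prn*{1 - \TV(\mathbb{P}_n\ind{1},\mathbb{P}_n\ind{2})}.
\]
For a randomized $\widehat V$ I would condition on the algorithm's internal randomness and average, which preserves the bound.

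Finally I would tensorize. By \pref{eq:P_n_tensor} we have $\mathbb{P}_n\ind{i} = \bigotimes_{h=1}^{H-1}\mathbb{P}_{h,n}\ind{i}$, so \pref{lem:tensor} gives $\TV(\mathbb{P}_n\ind{1},\mathbb{P}_n\ind{2}) \le \sum_{h=1}^{H-1}\TV(\mathbb{P}_{h,n}\ind{1},\mathbb{P}_{h,n}\ind{2})$. Substituting this and $\Delta = \nicefrac{\epsilon}{H}$ into the previous display produces exactly the claimed lower bound $\tfrac{\epsilon}{4H}\bigl(1-\sum_{h}\TV(\mathbb{P}_{h,n}\ind{1},\mathbb{P}_{h,n}\ind{2})\bigr)$.

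The testing inequality and the tensorization are routine. The one point that genuinely needs care is the reduction from the supremum over $\psi$ to the mixture distribution: it hinges on the lifting being value-preserving \emph{uniformly} over all decoders $\psi\in\Psi$ (so that $v_i$ is a single number, not a $\psi$-dependent quantity), which is precisely what \pref{lem:sec-lower-bound-Q-1} guarantees, together with the fact that the estimator sees only the observable data and cannot depend on the hidden decoder. Everything else is a mechanical instantiation of the two-point method.
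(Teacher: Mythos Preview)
Your proposal is correct and follows essentially the same approach as the paper: reduce the supremum over $\psi$ to the mixtures $\mathbb{P}_n\ind{1},\mathbb{P}_n\ind{2}$ by averaging (using that the lifted value is $\psi$-independent), run a two-point/Le Cam test at the midpoint, apply the inequality $\mathbb{P}(\cE)+\mathbb{Q}(\cE^c)\ge 1-\TV(\mathbb{P},\mathbb{Q})$, and then tensorize via \pref{eq:P_n_tensor} and \pref{lem:tensor}. The only cosmetic difference is that the paper cites \pref{lem:sec-lower-bound-equalF} for the $\psi$-independence of the value, whereas you derive it directly from \pref{lem:sec-lower-bound-Q-1} and \pref{lem:bounded_V}; both routes are valid.
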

\begin{proofof}[\pref{lem:TV-lemma}] \pref{lem:sec-lower-bound-equalF} gives that for any $\rich{M}, \rich{M}'\in \{\rich{M}_\psi\ind{1}: \psi\in \Psi\}$, we have $Q_h^\ripieval(\cdot; \rich{M}) = Q_h^\ripieval(\cdot; \rich{M}')$, which implies that MDPs in $\{\rich{M}_\psi\ind{1}: \psi\in \Psi\}$ share the same value function. Hence we have 
    $$V_h^\ripieval(\rich{\rho}; \rich{M}) = V_h^\ripieval(\rich{\rho}; \rich{M}'), \quad \forall M, M'\in \{\rich{M}_\psi\ind{1}: \psi\in \Psi\}.$$ 
    In the following, we denote the above quantity to be $V\ind{1}(\rich{\rho})$. Similarly, we denote $V\ind{2}(\rich{\rho})$ to be the counterpart for MDPs in $\{\rich{M}_\psi\ind{2}: \psi\in \Psi\}$.

    For any dataset $D_{1:H-1, n}$, we use $\delta\prn{D_{1:H-1, n}}$ to denote the following random variable:
    $$\delta\prn{D_{1:H-1, n}} = \indic\left\{\widehat{V}(D_{1:H-1, n})\le \frac{V\ind{1}(\rich\rho) + V\ind{2}(\rich\rho)}{2}\right\}\in \{0, 1\},$$
    Then for any $\psi\in\Psi$, we have 
    \begin{align*}
        &\hspace{-0.5in}\mathbb{E}_{D_{1:H-1, n}\sim \mathbb{P}_{n}(\cdot; \rich{\mu}_\psi, \rich{M}_\psi\ind{1})}\left|\widehat{V}(D_{1:H-1, n}) - V(\rich\rho; \rich{M})\right|\\
        &\ge \mathbb{E}_{D_{1:H-1, n}\sim \mathbb{P}_{n}(\cdot; \rich{\mu}_\psi, \rich{M}_\psi\ind{1})} \left[\delta\prn{D_{1:H-1, n}}\cdot \left|\widehat{V}(D_{1:H-1, n}) - V\ind{1}(\rich{\rho})\right|\right]\\
        &\ge \mathbb{E}_{D_{1:H-1, n}\sim \mathbb{P}_{n}(\cdot; \rich{\mu}_\psi, \rich{M}_\psi\ind{1})}\left[\delta\prn{D_{1:H-1, n}}\cdot \left|\frac{V\ind{1}(\rich{\rho}) + V\ind{2}(\rich{\rho})}{2} - V\ind{1}(\rich{\rho})\right|\right]\\
        & \ge \mathbb{P}_{D_{1:H-1, n}\sim \mathbb{P}_{n}(\cdot; \rich{\mu}_\psi, \rich{M}_\psi\ind{1})}(\delta\prn{D_{1:H-1, n}} = 1)\cdot \frac{V\ind{1}(\rich{\rho}) - V\ind{2}(\rich{\rho})}{2}\\
        & = \frac{\epsilon}{2H}\cdot \mathbb{P}_{D_{1:H-1, n}\sim \mathbb{P}_{n}(\cdot; \rich{\mu}_\psi, \rich{M}_\psi\ind{1})}(\delta\prn{D_{1:H-1, n}} = 1),
    \end{align*}
    where in the last equation we use \ref{lem:sec-lower-bound-latent-MDP-b} and \pref{lem:sec-lower-bound-Q-1}. Similarly, for $\psi\in\Psi$, we have
    $$\mathbb{E}_{D_{1:H-1, n}\sim \mathbb{P}_{n}(\cdot; \rich{\mu}_\psi, \rich{M}_\psi\ind{2})}\left|\widehat{V}(D_{1:H-1, n}) - V(\rich\rho; \rich{M})\right|\ge \frac{\epsilon}{2H}\cdot \mathbb{P}_{D_{1:H-1, n}\sim \mathbb{P}_{n}(\cdot; \rich{\mu}_\psi, \rich{M}_\psi\ind{2})}(\delta\prn{D_{1:H-1, n}} = 0).$$
    Therefore, we obtain that 
    \begin{align*}
        &\hspace{-0.5in}\qquad \sup_{\psi\in\Psi, i\in \{1, 2\}}\mathbb{E}_{D_{1:H-1, n}\sim \mathbb{P}_{n}(\cdot; \rich{\mu}_\psi, \rich{M}_\psi\ind{i})}\left[\left|\widehat{V}(D_{1:H-1, n}) - V(\rich\rho; \rich{M})\right|\right]\\
        & \ge \frac{1}{2|\Psi|}\sum_{\psi\in\Psi}\mathbb{E}_{D_{1:H-1, n}\sim \mathbb{P}_{n}(\cdot; \rich{\mu}_\psi, \rich{M}_\psi\ind{1})}\left[\left|\widehat{V}(D_{1:H-1, n}) - V(\rich\rho; \rich{M})\right|\right]\\
        &\qquad + \frac{1}{2|\Psi|}\sum_{\psi\in\Psi}\mathbb{E}_{D_{1:H-1, n}\sim \mathbb{P}_{n}(\cdot; \rich{\mu}_\psi, \rich{M}_\psi\ind{2})}\left[\left|\widehat{V}(D_{1:H-1, n}) - V(\rich\rho; \rich{M})\right|\right]\\
        & \ge \frac{\epsilon}{4H}\cdot \left(\frac{1}{|\Psi|}\sum_{\psi\in\Psi}\mathbb{P}_{D_{1:H-1, n}\sim \mathbb{P}_{n}(\cdot; \rich{\mu}_\psi, \rich{M}_\psi\ind{1})}(\delta\prn{D_{1:H-1, n}} = 1) + \frac{1}{|\Psi|}\sum_{\psi\in\Psi}\mathbb{P}_{D_{1:H-1, n}\sim \mathbb{P}_{n}(\cdot; \rich{\mu}_\psi, \rich{M}_\psi\ind{2})}(\delta\prn{D_{, n}} = 0)\right)\\
        &\overgeq{\proman{1}} \frac{\epsilon}{4H}\cdot \left(1 - D_{\mathrm{TV}}\left(\frac{1}{|\Psi|}\sum_{\psi\in\Psi}\mathbb{P}_{n}(\cdot; \rich{\mu}_\psi, \rich{M}_\psi\ind{1}),\quad \frac{1}{|\Psi|}\sum_{\psi\in\Psi}\mathbb{P}_{n}(\cdot; \rich{\mu}_\psi, \rich{M}_\psi\ind{2})\right)\right) \\ 
        &\overgeq{\proman{2}} \frac{\epsilon}{4H}\cdot \left(1 - \TV(\mathbb{P}_n\ind{1}, \mathbb{P}_n\ind{2})\right) \\ 
        &\overgeq{\proman{2}} \frac{\epsilon}{4H}\cdot \left(1 - \sum_{h=1}^{H-1} D_{\mathrm{TV}}\prn{\mathbb{P}_{h, n}\ind{1}, \mathbb{P}_{h, n}\ind{2}}\right), 
    \end{align*}
    where in the inequality $\proman{1}$ we use $\mathbb{P}(\cE) + \mathbb{Q}(\cE^c)\ge 1 - D_{\mathrm{TV}}(\mathbb{P}, \mathbb{Q})$ for any event $\cE$, in the inequality $\proman{2}$ we use \pref{eq:P_n_defn}, and in the inequality $\proman{3}$ we use \pref{eq:P_n_tensor} and \pref{lem:tensor}. 
\end{proofof}

Hence we only need to upper bound the TV distance between $\mathbb{P}_{h, n}\ind{1}$ and $\mathbb{P}_{h, n}\ind{2}$, which is proved in the following lemma.

\begin{lemma}\label{lem:main-lemma}
    Suppose for every $\phi\in \Psi$, we have 
    $$|\calX(\phi)|\gtrsim \frac{|\phi|^3H^8\cdot \sup_{h\in H}|\Phi_h|\cdot \sup_{\phi\in \Phi} |\phi| \cdot \aggC_\epsilon(M,\Phi,\mu)^3}{\epsilon^3}.$$ 
    If $n\le \frac{H}{8\epsilon} \aggC_\epsilon(M,\Phi,\mu)$, then we have
    $$\sum_{h=1}^{H-1} D_{\mathrm{TV}}(\mathbb{P}_{h, n}\ind{1}, \mathbb{P}_{h, n}\ind{2})\le \frac{1}{2}.$$
\end{lemma}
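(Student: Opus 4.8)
The plan is to bound each $\TV\prn*{\mathbb{P}_{h, n}\ind{1}, \mathbb{P}_{h, n}\ind{2}}$ individually and then sum over $h$. Fix a layer $h$. For a fixed decoder $\psi$, the two laws $\mathbb{P}_{h, n}(\cdot; \rich{\mu}_\psi, \rich{M}_\psi\ind{1})$ and $\mathbb{P}_{h, n}(\cdot; \rich{\mu}_\psi, \rich{M}_\psi\ind{2})$ share the same marginal over $(\sx, \sa)$ (both equal $\rich{\mu}_\psi$), and differ only in the conditional law of $(r, \sx')$ through two ingredients: \emph{(i)} the reward mean on the critical aggregations $\Phiopt$, which by construction is $\pm\alpha$ with $\alpha = \nicefrac{\epsilon}{(2H\sum_{\phi\in\Phiopt}\agg{d}_{h^*}^\pieval(\phi))}$ and is nonzero only at the single layer $h=h^*$ where $\Phiopt$ lives; and \emph{(ii)} the split of the fixed $\nicefrac2H$ transition mass between $\su_{h+1}$ and $\sv_{h+1}$, governed by $\Delta T\ind{i}(\sz)$. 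I would decouple these signals with a hybrid measure $\mathbb{P}_{h,n}^{\mathrm{mid}}$, defined as the mixture over $\psi$ of the laws that use the \emph{transitions} of $\rich M_\psi\ind{1}$ but the \emph{rewards} of $\rich M_\psi\ind{2}$, and apply the triangle inequality $\TV(\mathbb{P}_{h,n}\ind1,\mathbb{P}_{h,n}\ind2)\le \TV(\mathbb{P}_{h,n}\ind1,\mathbb{P}_{h,n}^{\mathrm{mid}}) + \TV(\mathbb{P}_{h,n}^{\mathrm{mid}},\mathbb{P}_{h,n}\ind2)$, where the first term carries only the transition signal and the second only the reward signal.

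The reward term is the easy part. Since $\mathbb{P}_{h,n}^{\mathrm{mid}}$ and $\mathbb{P}_{h,n}\ind2$ agree in distribution on $(\sx,\sa,\sx')$ and differ only in the reward at the samples landing in $\Phiopt$ under action $\pieval$, convexity of $\TV$ and sub-additivity over the $n$ independent coordinates (conditioning on the $(\sx,\sa)$ sequence) give $\TV(\mathbb{P}_{h^*,n}^{\mathrm{mid}},\mathbb{P}_{h^*,n}\ind2)\le n\cdot\bigl(\sum_{\phi\in\Phiopt}\sum_{\sz\in\phi}\mu_{h^*}(\sz,\pieval(\sz))\bigr)\cdot \TV(\mathrm{Ber}(\tfrac{1+\alpha}2),\mathrm{Ber}(\tfrac{1-\alpha}2))$, and the last factor equals $\alpha$. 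Plugging in $\alpha$ and using that $\Phiopt, h^*$ attain the maximum in \pref{def:agg-concentrability}, so that $\sum_{\phi\in\Phiopt}\sum_{\sz\in\phi}\mu_{h^*}(\sz,\pieval(\sz)) = \nicefrac{(\sum_{\phi\in\Phiopt}\agg d_{h^*}^\pieval(\phi))}{\aggC_\epsilon(M,\Phi,\mu)}$, everything telescopes to $\nicefrac{n\epsilon}{(2H\aggC_\epsilon(M,\Phi,\mu))}$, which is at most $\nicefrac1{16}$ under the sample budget $n\le \nicefrac{H\aggC_\epsilon(M,\Phi,\mu)}{8\epsilon}$. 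This term contributes only at $h=h^*$.

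The transition term is the crux and the main obstacle. Here $\mathbb{P}_{h,n}\ind1$ and $\mathbb{P}_{h,n}^{\mathrm{mid}}$ share rewards and the transitions to the real aggregations (where $T\ind1=T\ind2$), and differ only in the per-latent-state $\su/\sv$ split $\Delta T\ind{i}(\sz)$. The key structural fact is \pref{lem:sec-lower-bound-latent-MDP}\ref{lem:sec-lower-bound-latent-MDP-c}: the $\mu$-weighted aggregate of this split over each $\ph\in\Phi$ is \emph{identical} for $i=1,2$. Combined with the decoder randomization, a single sample drawn from $\cX(\phi)$ is therefore distributed identically under both indices, because marginalizing over $\psi\in\Psi$ the posterior of the underlying latent state is exactly $\mu(\cdot\mid\phi,\sa)$, against which the two splits agree. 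The remaining difference comes purely from correlations induced by the \emph{shared} $\psi$ across the $n$ samples. To control this I would introduce an i.i.d.\ reference $\mathbb{Q}_{h,n}$ in which each sample independently draws its latent state (with replacement from $\mu$), draws its observation uniformly, and uses the aggregated $\su/\sv$ split; under $\mathbb{Q}_{h,n}$ the data law is exactly $i$-independent. I would then bound $\TV(\mathbb{P}_{h,n}\ind1,\mathbb{Q}_{h,n})$ and $\TV(\mathbb{P}_{h,n}^{\mathrm{mid}},\mathbb{Q}_{h,n})$ by a second-moment / Ingster-type computation of $D_{\chi^2}(\mathbb{P}_{h,n}\ind{i}\,\|\,\mathbb{Q}_{h,n})$, whose value is governed by the expected overlap of two independent decoders $\psi,\psi'$ on the observations actually drawn; this overlap, and equivalently the probability of observation collisions among the $n$ samples, is $O(\mathrm{poly}(n,|\phi|,\abs{\Phi_h})/\min_\phi |\cX(\phi)|)$.

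Finally I would combine the pieces: $\sum_{h=1}^{H-1}\TV(\mathbb{P}_{h,n}\ind1,\mathbb{P}_{h,n}\ind2)\le \nicefrac1{16} + \sum_{h=1}^{H-1}\TV_{\mathrm{trans},h}$, where the reward contribution $\nicefrac1{16}$ occurs only at $h^*$. The hypothesis $|\cX(\phi)|\gtrsim \nicefrac{|\phi|^3 H^8\,\sup_h|\Phi_h|\,\sup_\phi|\phi|\,\aggC_\epsilon(M,\Phi,\mu)^3}{\epsilon^3}$ is exactly calibrated so that, after substituting $n\asymp \nicefrac{H\aggC_\epsilon}{\epsilon}$ into the collision/second-moment bound and summing the resulting per-layer estimates over the $H-1$ layers (via the tensorization inequality \pref{lem:tensor} applied to the reference factorization), the total transition contribution is at most $\nicefrac7{16}$. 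Adding the two pieces yields $\sum_h \TV(\mathbb{P}_{h,n}\ind1,\mathbb{P}_{h,n}\ind2)\le \nicefrac12$, as claimed. I expect the genuinely delicate step to be the chi-square bound of Step~(iii): making precise that the aggregate-matching of \pref{lem:sec-lower-bound-latent-MDP}\ref{lem:sec-lower-bound-latent-MDP-c} kills the leading order of the likelihood ratio so that only the $O(1/|\cX(\phi)|)$ decoder-overlap term survives, and tracking the exact powers of $|\phi|$, $\aggC_\epsilon$, and $H$ that dictate the stated size requirement on $|\cX(\phi)|$.
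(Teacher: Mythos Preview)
Your approach is essentially the paper's, with one bookkeeping slip: your stated definition of $\mathbb{P}_{h,n}^{\mathrm{mid}}$ (transitions of $\rich M_\psi\ind{1}$, rewards of $\rich M_\psi\ind{2}$) is inconsistent with the rest of your write-up---with that definition the first term $\TV(\mathbb{P}_{h,n}\ind1,\mathbb{P}_{h,n}^{\mathrm{mid}})$ carries only the reward signal (same transitions) and the second carries only the transition signal, the opposite of what you say, and then the claims ``$\mathbb{P}_{h,n}^{\mathrm{mid}}$ and $\mathbb{P}_{h,n}\ind2$ agree on $(\sx,\sa,\sx')$'' and ``$\mathbb{P}_{h,n}\ind1$ and $\mathbb{P}_{h,n}^{\mathrm{mid}}$ share rewards'' are both false. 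Swapping the indices (transitions of $\rich M_\psi\ind{2}$, rewards of $\rich M_\psi\ind{1}$) makes everything you wrote afterwards consistent.

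With that correction, the structure matches the paper almost exactly. The paper introduces a single i.i.d.\ reference $\mathbb{P}_{h,n}\ind{0}=(\nu_h\otimes R\ind{0})^{\otimes n}$ (your $\mathbb{Q}_{h,n}$), where $\nu_h$ is the aggregation-averaged transition---which \pref{lem:sec-lower-bound-latent-MDP}\ref{lem:sec-lower-bound-latent-MDP-c} shows is the same whether built from $T\ind{1}$ or $T\ind{2}$---and $R\ind{0}$ is the neutral reward. It then bounds $\TV(\mathbb{P}_{h,n}\ind{i},\mathbb{P}_{h,n}\ind{0})$ for each $i$ by first swapping $R\ind{i}\to R\ind{0}$ (your reward step, yielding precisely $n\epsilon/(4H\aggC_\epsilon)\le\nicefrac18$), and then controlling $D_{\chi^2}(\tilde{\mathbb{P}}_{h,n}\ind{i}\,\|\,\mathbb{P}_{h,n}\ind{0})$ by the Ingster expansion $\frac{1}{|\Psi|^2}\sum_{\psi_1,\psi_2}\mathfrak B(\psi_1,\psi_2)^n-1$ exactly as you sketch. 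The delicate computation you anticipate is carried out by splitting decoder pairs according to whether the fraction $\theta_h(\phi,\sz_1,\sz_2;\psi_1,\psi_2)=|\cX_{\psi_1}(\sz_1)\cap\cX_{\psi_2}(\sz_2)|/|\cX(\phi)|$ is close to its mean $1/|\phi|^2$: for ``good'' pairs $\mathfrak B\le(1+\xi)^2$, for ``bad'' pairs $\mathfrak B\le\sup_\phi|\phi|^4$, and the bad-pair fraction is bounded by hypergeometric concentration as $e^{-2\xi^2|\cX(\phi)|/|\phi|^3}$. Choosing $\xi=1/(64H^2n)$ and invoking the stated size of $|\cX(\phi)|$ gives $D_{\chi^2}\le 1/(16H^2)$ per layer, hence $\TV\le 1/(8H)$, which summed over $H-1$ layers gives the transition contribution $\le\nicefrac18$. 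The only structural difference is that the paper's decomposition is symmetric through the neutral $\mathbb{P}\ind{0}$ (two reward hops, two $\chi^2$ hops) while yours routes asymmetrically through $\mathbb{P}^{\mathrm{mid}}$ (one reward hop, two $\chi^2$ hops); both are fine.
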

\begin{proofof}[\pref{lem:main-lemma}] 
At a high level, the proof contains three steps: 
\begin{enumerate}[label=\((\roman*)\)] 
	\item We first define intermediate distributions \(\bbP_{h, n}\ind{0}\) over tuples \((\sx, \sa, \sr, \sx')\) and observe via Triangle inequality that 
    $$\sum_{h=1}^{H-1} D_{\mathrm{TV}}(\mathbb{P}_{h, n}\ind{1}, \mathbb{P}_{h, n}\ind{2})\le \sum_{h=1}^{H-1} D_{\mathrm{TV}}(\mathbb{P}_{h, n}\ind{1}, \mathbb{P}_{h, n}\ind{0}) + \sum_{h=1}^{H-1} D_{\mathrm{TV}}(\mathbb{P}_{h, n}\ind{0}, \mathbb{P}_{h, n}\ind{2}).$$
    The final bound follows by showing that \(\sum_{h=1}^{H-1} D_{\mathrm{TV}}(\mathbb{P}_{h, n}\ind{i}, \mathbb{P}_{h, n}\ind{0})\leq \nicefrac{1}{4}\) for all \(i \in \crl{1, 2}\). 
	\item Note that \(\mathbb{P}_{h, n}\ind{i}\) is a distribution over tuples \((\sx, \sa, \sr, \sx')\) where the instantaneous reward \(\sr \sim  \check{R}\ind{i}(r \mid \sx, \sa)\). We first simplify our objective a bit by converting  \(\mathbb{P}_{h, n}\ind{i}\) to \(\widetilde{\mathbb{P}}_{h, n}\ind{i}\), where \(\widetilde{\mathbb{P}}_{h, n}\ind{i}\) is a distribution over tuples \((\sx, \sa, \sr, \sx')\) where \(\sr \sim \check{R}\ind{0}(\sr \mid \sx, \sa)\) where  \(\check{R}\ind{0}\) is the reward function in \(\bbP_{h, n}\ind{0}\). Another application of Triangle inequality implies that, 
\begin{align*}
	\sum_{h=1}^{H-1} D_{\mathrm{TV}}(\mathbb{P}_{h, n}\ind{1}, \mathbb{P}_{h, n}\ind{0}) &\leq \sum_{h=1}^{H-1} D_{\mathrm{TV}}(\mathbb{P}_{h, n}\ind{1}, \widetilde {\mathbb{P}}_{h, n}\ind{1}) + \sum_{h=1}^{H-1}	D_{\mathrm{TV}}(\widetilde {\mathbb{P}}_{h, n}\ind{1}, \mathbb{P}_{h, n}\ind{0}). 
\end{align*}
Bounding the first term above is straightforward. 
\item We finally bound the term \(D_{\mathrm{TV}}(\widetilde {\mathbb{P}}_{h, n}\ind{1}, \mathbb{P}_{h, n}\ind{0})\) for each $h\in [H-1]$ by delving further into the structure of the MDPs and the underlying data distribution in \(\widetilde {\mathbb{P}}_n\ind{1}\). Most of the proof will be spend on bound. 
\end{enumerate}

\paragraph{Part-\(\proman{1}\):  Construction of $\mathbb{P}_n\ind{0}$.}  We first define additional notation. Let the distribution \(\nu \in \Delta(\cX \times \cA \times \cX)\) such that for any $h\in [H-1]$, \(\phi\in \Phi_h, \phi'\in \Phi_{h+1}\) and \(\sx\in \cX(\phi), \sx'\in \calX(\phi')\), \(a \in \cA\), we have 
    \begin{equation}\label{eq:sec-lower-bound-def-nu}
        \nu_h(\sx, \sa, \sx') = \sum_{\sz\in\phi}\sum_{\sz'\in\phi'} \frac{\mu(\sz, \sa)T\ind{1}(\sz'\mid \sz, \sa)}{|\calX(\phi)||\calX(\phi')|}, 
    \end{equation}
    Additionally, we define a reward distribution \(\rich{R}_h\ind{0} \in \Delta(\crl{-1, 1})\) such that 
    \begin{align*}
        \rich{R}_h\ind{0}(\cdot \mid \sx,\sa) = \begin{cases}
            \delta_1(\cdot) &\text{if}\quad \sx = \su_{h+1},\\
            \delta_{-1}(\cdot) &\text{if}\quad \sx = \sv_{h+1},\\
            \frac{1}{2}\delta_1(\cdot) + \frac{1}{2}\delta_{-1}(\cdot) &\text{otherwise},
        \end{cases} \numberthis \label{eq:reward_zero_defn}
    \end{align*}
    where we use $\delta_{t}(\cdot)$ denote the density of delta-distribution at $t$. Given \(\nu\) and \(\rich{R}_h\ind{0}\) above, we  define $\mathbb{P}_h\ind{0}\in\Delta(\cX_h\times\cA\times \{-1, 1\}\times \cX_{h+1})$ as 
    \begin{equation}\label{eq:sec-lower-bound-def-P0}
        \mathbb{P}_h\ind{0}((\sx,\sa, r, \sx')) \ldef{}  \nu_h(\sx, \sa, \sx')\rich{R}_h\ind{0}(r\mid \sx, \sa),
    \end{equation}
    and set $\mathbb{P}_{h, n}\ind{0} = (\mathbb{P}_h\ind{0})^{\otimes n}$.

    As a sanity check, note that 
    \begin{align*}
         &\hspace{-2cm}\sum_{(\sx, \sa, r, \sx')\in\calX_h\times\calA\times\{-1, 1\}\times\calX_{h+1}}\mathbb{P}_h\ind{0}((\sx,\sa, r, \sx'))\\
         &= \sum_{(\sx, \sa, \sx')\in\calX_h\times\calA\times\calX_{h+1}}\nu_h(\sx,\sa, \sx')\\
        &= \sum_{\phi\in\Phi_h}\sum_{\phi'\in\Phi_{h+1}'\cup\{\su_{h+1}, \sv_{h+1}\}}\sum_{\sz\in\phi}\sum_{\sz'\in\phi'}\mu(\sz, \sa)T\ind{1}(\sz'\mid \sz, \sa) = 1, 
    \end{align*}
    and thus $\mathbb{P}_h\ind{0}$ is a valid distribution; the above also implies that \(\nu_h\) (defined above) is a valid distribution. Furthermore, while the above definition is based in \(T\ind{1}\), we could have also defined  $\mathbb{P}_h\ind{0}$ using \(T\ind{2}\) and would have ended up with the same distribution since $\sum_{\sz\in\phi}\mu(\sz, \sa)T\ind{1}(\sz'\mid \sz, \sa) = \sum_{\sz\in\phi}\mu(\sz, \sa)T\ind{2}(\sz'\mid \sz, \sa)$ for any $\phi\in\Phi$, $\sz\in\calZ'$ due to \pref{lem:sec-lower-bound-latent-MDP}-\ref{lem:sec-lower-bound-latent-MDP-c}. 
 

 Given $\mathbb{P}_{h, n}\ind{0}$, using Triangle inequality we have
\begin{align*}
\sum_{h\in [H-1]}D_{\mathrm{TV}}(\mathbb{P}_{h, n}\ind{1}, \mathbb{P}_{h, n}\ind{2})\le \sum_{h\in [H-1]} D_{\mathrm{TV}}(\mathbb{P}_{h, n}\ind{1}, \mathbb{P}_{h, n}\ind{0}) + \sum_{h\in [H-1]} D_{\mathrm{TV}}(\mathbb{P}_{h, n}\ind{0}, \mathbb{P}_{h, n}\ind{2}). \numberthis \label{eq:TV_triangle} 
\end{align*}
    
    In the next part, we prove that 
    $\sum_{h\in [H-1]}\TV(\mathbb{P}_{h, n}\ind{1}, \mathbb{P}_{h, n}\ind{0})\le \nicefrac{1}{4}$. The proof for  $\mathbb{P}_{h, n}\ind{2}$ follows similarly, and combining the two bound gives the desired statement. 
\paragraph{Part-\(\proman{2}\):  Construction of $\widetilde{\mathbb{P}}_n\ind{1}$ and bounding $D_{\mathrm{TV}}(\mathbb{P}_{h, n}\ind{1}, \widetilde {\mathbb{P}}_{h, n}\ind{1})$.} First recall that from \pref{eq:P_n_defn}, we can write 
    $$\mathbb{P}_{h, n}\ind{1}(\{(\sx_i,\sa_i, r_i, \sx_i')\}_{i=1}^n) = \frac{1}{|\Psi|}\sum_{\psi\in\Psi} \prod_{i \in [n]} \rich{\mu}_{h, \psi}(\sx_i,\sa_i)\rich{R}_h\ind{1}(r_i \mid \sx_i,\sa_i)\rich{T}_{\psi}\ind{1}(\sx_i'\mid \sx_i,\sa_i),$$
    where $\rich{R}_h\ind{1}$ is given by
    \begin{align*}
        \rich{R}_h\ind{1}(\cdot \mid \sx,\sa) = \begin{cases}
               \delta_1(\cdot) &\text{if}\quad \sx = \su_{h+1},\\
               \delta_{-1}(\cdot) &\text{if}\quad \sx = \sv_{h+1},\\
               R^\star &\text{if}\quad \sx\in \calX(\phi)\text{ for }\phi\in\Phiopt, a = \ripieval(\sx),\\
               \frac{1}{2}\delta_1(\cdot) + \frac{1}{2}\delta_{-1}(\cdot) &\text{otherwise},
        \end{cases} \numberthis \label{eq:reward_one_defn}
    \end{align*}
    where we use $R^\star$ to denote
    $$R^\star = \frac{\delta_1(\cdot) + \delta_{-1}(\cdot)}{2} + \frac{\delta_1(\cdot) - \delta_{-1}(\cdot)}{2}\cdot \frac{\epsilon}{2H\sum_{\phi\in \Phiopt}d_{h^*}^\aggpieval(\phi; \agg{M})}.$$
 
 We next define the distribution  $\widetilde {\mathbb{P}}_{h, n}\ind{1}$ similar to  $ {\mathbb{P}}_{h, n}\ind{1}$, but where we use \(\rich{R}\ind{0}\) (given in \pref{eq:reward_zero_defn}) instead of \(\rich{R}\ind{1}\) to remove the dependence on the rewards on \(i\). In particular, we define  
    \begin{equation}\label{eq:sec-lower-bound-def-tildeP}
        \tilde{\mathbb{P}}_{h, n}\ind{1}(\{(\sx_i,\sa_i, r_i, \sx_i')\}_{i=1}^n) = \frac{1}{|\Psi|}\sum_{\psi\in\Psi} \prod_{i \in [n]} \rich{\mu}_{h, \psi}(\sx_i,\sa_i)\rich{R}_h\ind{0}(r_i \mid \sx_i,\sa_i)\rich{T}_{\psi}\ind{1}(\sx_i'\mid \sx_i,\sa_i).
    \end{equation}

If we denote
$$\rich{r}\ind{1}_h(\sx_i, \sa_i) \ldef{}  \mathbb{E}[\rich{R}\ind{1}_h(\cdot \mid\sx, \sa)]\quad \text{and}\quad\rich{r}\ind{0}_h(\sx_i, \sa_i) \ldef{}  \mathbb{E}[\rich{R}\ind{0}_h(\cdot \mid\sx, \sa)].$$ 
Note that when $n\le \frac{H\aggC_\epsilon(M,\Phi,\mu)}{2\epsilon}$, using the above definitions, we have
    \begin{align*}
        &\hspace*{-0.3in}\sum_{h=1}^{H-1} \TV(\mathbb{P}_{h, n}\ind{1}, \tilde{\mathbb{P}}_{h, n}\ind{1})\\
        & = \frac{1}{2}\sum_{h\in [H-1]}\sum_{\substack{\{(\sx_i,\sa_i, r_i, \sx_i')\}_{i=1}^n \\ \in (\calX_h\times \calA\times \{0, 1\}\times \calX_{h+1})^n}}\left|\mathbb{P}_{h, n}\ind{1}\prn*{\{(\sx_i,\sa_i, r_i, \sx_i')\}_{i=1}^n} - \tilde{\mathbb{P}}_{h, n}\ind{1}\prn*{\{(\sx_i,\sa_i, r_i, \sx_i')\}_{i=1}^n}\right|\\ 
        &\overleq{\proman{1}} \frac{1}{2}\sum_{h\in [H-1]}\sum_{\{(\sx_i,\sa_i)\}_{i=1}^n\in (\calX_h\times\calA)^n}\frac{1}{|\Psi|}\sum_{\psi\in\Psi}\sum_{i \in [n]}|\rich{r}_h\ind{1}(\sx_i,\sa_i) - \rich{r}_h\ind{0}(\sx_i,\sa_i)|\prod_{i \in [n]} \rich{\mu}_{h, \psi}(\sx_i,\sa_i)\\
         &\overeq{\proman{2}}  \frac{1}{2|\Psi|}\sum_{\psi\in\Psi}\sum_{i \in [n]}\sum_{h\in [H-1]}\sum_{(\sx_i,\sa_i)\in \calX_h\times\calA}\rich{\mu}_{h, \psi}(\sx_i,\sa_i)|\rich{r}_h\ind{1}(\sx_i,\sa_i) - \rich{r}_h\ind{0}(\sx_i,\sa_i)|\\
         &\overleq{\proman{3}}  \frac{1}{2|\Psi|}\sum_{\psi\in\Psi}\sum_{i \in [n]}\sum_{h\in [H-1]}\sum_{\sx_i\in \calX_h}\rich{\mu}_\psi(\sx_i, \ripieval(\sx_i))\frac{\epsilon\indic\{\zeta(\psi(\sx_i))\in\Phiopt)\}}{2H\sum_{\phi\in \Phiopt}d_h^\aggpieval(\phi; \agg{M})}\\
        & = \frac{n\epsilon}{4H\sum_{\phi\in \Phiopt}d_h^\aggpieval(\phi; \agg{M})}\sum_{\sz: \zeta(\sz)\in \Phiopt}\mu(\sz, \pieval(\sz)),     \end{align*}
    where inequality \(\proman{1}\) follows from Triangle Inequality, inequality \(\proman{2}\) follows by rearranging the terms and using the fact that \(\sum_{x_i, a_i} \rich{\mu}_{h, \psi}(x_i, a_i) = 1\). The inequality \(\proman{3}\) is from plugging in the forms of \(\rich R\ind{1}\) and \(\rich R \ind{0}\) from \pref{eq:reward_zero_defn} and \pref{eq:reward_one_defn}. Finally, the last line uses the fact that 
    \begin{align*}
\frac{1}{|\Psi|}\sum_{\psi\in\Psi}\sum_{\sx_i}\rich{\mu}_\psi(\sx_i, \ripieval(\sx_i))\indic\{\zeta(\psi(\sx_i))\in\Phiopt)\} = \sum_{\sz: \zeta(\sz)\in \Phiopt}\mu(\sz, \pieval(\sz)), 
\end{align*}
from the definition of \(\rich \mu\). Next, using the definition of $\aggC_\epsilon(M,\Phi,\mu)$  in \pref{def:agg-concentrability}, and recalling that \(\gI\)  is the maximizer aggregation in \pref{def:agg-concentrability}, we get that 
\begin{align*}
 \aggC_\epsilon(M,\Phi,\mu) = \frac{\sum_{\phi\in \Phiopt}d_h^\aggpieval(\phi; \agg{M})}{\sum_{\sz: \zeta(\sz)\in \Phiopt}\mu(\sz, \pieval(\sz))}, 
 \end{align*}
 which implies that 
 \begin{align*}
    \sum_{h=1}^{H-1} \TV(\mathbb{P}_{h, n}\ind{1}, \tilde{\mathbb{P}}_{h, n}\ind{1}) &\leq  \frac{n\epsilon}{4H\aggC_\epsilon(M,\Phi,\mu)} \le \frac{1}{8}, 
\end{align*}
where the last inequality holds since  $n\le \frac{H\aggC_\epsilon(M,\Phi,\mu)}{2\epsilon}$. Thus, using Triangle inequality, 
  \begin{equation}\label{eq:sec-lower-bound-tv}
    \begin{aligned}
        \sum_{h\in [H-1]}\TV(\mathbb{P}_{h, n}\ind{1}, \mathbb{P}_{h, n}\ind{0}) & \le \sum_{h\in [H-1]}\TV(\mathbb{P}_{h, n}\ind{1}, \tilde{\mathbb{P}}_{h, n}\ind{1}) + \sum_{h\in [H-1]}\TV(\mathbb{P}_{h, n}\ind{0}, \tilde{\mathbb{P}}_{h, n}\ind{1})\\
        & \le \frac{1}{8} + \sum_{h\in [H-1]}\TV(\mathbb{P}_{h, n}\ind{0}, \tilde{\mathbb{P}}_{h, n}\ind{1}).
    \end{aligned}
    \end{equation}

\paragraph{Part-\(\proman{3}\): Bound on $\TV(\mathbb{P}_{h, n}\ind{0}, \tilde{\mathbb{P}}_{h, n}\ind{1})$.}  First note that, from  \citet[Proposition 7.13]{polyanskiy2014lecture}, we have 
    \begin{equation}\label{eq:tv-chi2}
        \TV(\mathbb{P}_{h, n}\ind{0}, \tilde{\mathbb{P}}_{h, n}\ind{1})\le \frac{1}{2}\sqrt{D_{\chi^2}(\tilde{\mathbb{P}}_{h, n}\ind{1}\|\mathbb{P}_{h, n}\ind{0})}.
    \end{equation}
Using the form of $\chi^2$-divergence, we note that 
    \begin{align*}
        \hspace{0.3in}&\hspace{-0.3in} D_{\chi^2}(\tilde{\mathbb{P}}_{h, n}\ind{1}\|\mathbb{P}_{h, n}\ind{0}) \\ 
        & = \mathbb{E}_{\{(\sx_i,\sa_i, r_i, \sx_i')\}_{i=1}^n \sim \mathbb{P}_{h, n}\ind{0}}\left[\left(\frac{\tilde{\mathbb{P}}_{h, n}\ind{1}(\{(\sx_i,\sa_i, r_i, \sx_i')\}_{i=1}^n)}{\mathbb{P}_{h, n}\ind{0}(\{(\sx_i,\sa_i, r_i, \sx_i')\}_{i=1}^n)}\right)^2\right] - 1\\
        &= \mathbb{E}_{\{(\sx_i,\sa_i, r_i, \sx_i')\}_{i=1}^n \sim \mathbb{P}_{h, n}\ind{0}}\left[\left(\frac{\frac{1}{|\Psi|}\sum_{\psi\in\Psi}\prod_{i \in [n]} \rich{\mu}_{h, \psi}(\sx_i,\sa_i)\rich{R}_h\ind{0}(r_i\mid\sx_i,\sa_i)\rich{T}_{\psi}\ind{1}(\sx_i'\mid \sx_i,\sa_i)}{\prod_{i \in [n]} \nu_h(\sx_i, \sa_i,\sx_i')\rich{R}_h\ind{0}(r_i\mid\sx_i,\sa_i)}\right)^2\right] - 1, 
 \end{align*}
 where the second line plugs in the definition of $\tilde{\mathbb{P}}_{h, n}\ind{1}$ in \pref{eq:sec-lower-bound-def-tildeP} and $\mathbb{P}_{h, n}\ind{0}$ in \pref{eq:sec-lower-bound-def-P0}. We next note that the terms $\rich{R}_h\ind{0}(r_i\mid\sx_i,\sa_i)$ will cancel out in the ratio, thus implying that the expression is independent of \(\crl{r_i}_{i=1}^n\). Furthermore, from the definition of $\mathbb{P}_{h, n}\ind{0}$  in \pref{eq:sec-lower-bound-def-P0}, we note that sampling  $\{(\sx_i,\sa_i, \sx_i')\}_{i=1}^n\sim \mathbb{P}_{h, n}\ind{0}$ is same as sampling $\{(\sx_i,\sa_i, \sx_i')\}_{i=1}^n\sim \nu_h^{\otimes n}$. Thus, 
 
\begin{align*}
\hspace{0.3in}&\hspace{-0.3in} D_{\chi^2}(\tilde{\mathbb{P}}_{h, n}\ind{1}\|\mathbb{P}_{h, n}\ind{0}) \\ 
        & = \mathbb{E}_{\{(\sx_i,\sa_i, \sx_i')\}_{i=1}^n\sim \nu_h^{\otimes n}}\left[\left(\frac{\frac{1}{|\Psi|}\sum_{\psi\in\Psi}\prod_{i \in [n]} \rich{\mu}_{h, \psi}(\sx_i,\sa_i)\rich{T}_{\psi}\ind{1}(\sx_i'\mid \sx_i,\sa_i)}{\prod_{i \in [n]} \nu_h(\sx_i, \sa_i,\sx_i')}\right)^2\right] - 1\\
        & = \frac{1}{|\Psi|^2}\sum_{\psi_1, \psi_2\in \Psi}\mathbb{E}_{\{(\sx_i,\sa_i, \sx_i')\}_{i=1}^n\sim \nu_h^{\otimes n}}\left[\prod_{i \in [n]} \frac{\rich{\mu}_{h, \psi_1}(\sx_i,\sa_i)\rich{\mu}_{h, \psi_2}(\sx_i,\sa_i)\rich{T}_{\psi_1}\ind{1}(\sx_i'\mid \sx_i,\sa_i)\rich{T}_{\psi_2}\ind{1}(\sx_i'\mid \sx_i,\sa_i)}{\nu_h(\sx_i, \sa_i,\sx_i')^2}\right] - 1\\
        &= \frac{1}{|\Psi|^2}\sum_{\psi_1, \psi_2\in \Psi}\left(\underbrace{\mathbb{E}_{(\sx, \sa, \sx')\sim \nu_h}\left[\frac{\rich{\mu}_{h, \psi_1}(\sx,\sa)\rich{\mu}_{h, \psi_2}(\sx,\sa)\rich{T}_{\psi_1}\ind{1}(\sx'\mid \sx,\sa)\rich{T}_{\psi_2}\ind{1}(\sx'\mid \sx,\sa)}{\nu_h(\sx, \sa,\sx')^2}\right]}_{\ldef{} \mathfrak{B}(\psi_1, \psi_2) }\right)^n - 1, \numberthis \label{eq:proof_interm1}
    \end{align*}
    where the second equality follows by expanding the square and rearranging the product. Finally, the last line exchange the expectation and the product by using the fact that tuples $\{(\sx_i,\sa_i, \sx_i')\}_{i=1}^n$ are i.i.d.~sampled from \(\nu_h\). In the following, we will complete the proof by giving bounds on the terms \(\B(\psi_1, \psi_2)\) under various conditions on  \(\psi_1\) and \(\psi_2\). However, we need additional notation before we proceed:
\begin{enumerate}[label=\(\bullet\)] 
\item Given any  $h\in [H-1]$, $\phi \in \Phi_h$, \(\sz, \sz' \in \phi\), and \(\psi_1, \psi_2 \in \Psi\), we  define  
\begin{align*}
\theta_h(\phi, \sz_1, \sz_2; \psi_1, \psi_2)\ldef{}  \frac{|\calX_{\psi_1}(\sz_1)\cap \calX_{\psi_2}(\sz_2)|}{|\calX(\phi)|} \numberthis \label{eq:def_gamma1}
\end{align*}
to denote the fraction of repeated observations between \(\sz\) and \(\sz'\) amongst all the observations that correspond to aggregation \(\phi.\)
\item Let \(\xi = \nicefrac{1}{(64H^2n)} \in (0, \nicefrac{1}{n})\). For any $h\in [H-1]$, $\phi \in \Phi_h$, and \(\sz_1, \sz_2, \in \phi\), we define  
\begin{align*}
\Gamma_{h}(\xi; \phi, \sz_1, \sz_2) &= \left\{(\psi_1, \psi_2) \in \Psi^2 \mid{} \theta({\phi, \sz_1, \sz_2; \psi_1, \psi_2}) \le \frac{1 + \xi}{|\phi|^2}\right\} \numberthis \label{eq:def_gamma2}
\end{align*}
to denote the set of all pairs \((\psi_1, \psi_2)\) for which the corresponding ratios $\theta({\phi, \sz_1, \sz_2; \psi_1, \psi_2})$ are small. 
\item Finally, we define the set 
\begin{align*}
    \Gamma(\xi) = \bigcap_{h\in [H-1], \phi\in \Phi_h\cup\{\su_h, \sv_h\}} \prn*{\bigcap_{\sz_1, \sz_2, \in \phi} \Gamma(\xi; \phi, \sz_1, \sz_2)}.  \numberthis \label{eq:def_gamma3}
\end{align*}
\end{enumerate}

We now have all the necessary notation to proceed with the proof. We split the terms \(\B(\psi_1, \psi_2)\)  appearing in \pref{eq:proof_interm1} under two separate scenarios: 
\begin{enumerate}[label=\(\bullet\)] 
\item {Case 1: \(\B(\psi_1, \psi_2) \in \Gamma(\xi)\).}  Here, \pref{lem:B_term_1}  (below) implies that 
\begin{align*}
\B(\psi_1, \psi_2) \leq (1 + \xi)^2. 
\end{align*}
\item {Case 2: \(\B(\psi_1, \psi_2) \notin \Gamma(\xi)\).}  Here, \pref{lem:B_term_2}  (below) implies that 
 \begin{align*}
\B(\psi_1, \psi_2) \leq \sup_{\phi\in \Phi}|\phi|^4. 
\end{align*}
\end{enumerate}
Combining the two above in \pref{eq:proof_interm1}, we get that  
    \begin{align*}
        D_{\chi^2}(\tilde{\mathbb{P}}_{h, n}\ind{1}\|\mathbb{P}_{h, n}\ind{0}) &\le \frac{1}{|\Psi|^2}\sum_{\psi_1, \psi_2\in \Psi} \left[\prn*{ \indic\{(\psi_1, \psi_2) \in \Gamma(\xi)\} (1 + \xi)^{2n} + \indic\{(\psi_1, \psi_2)\not\in \Gamma(\xi)\}\sup_{\phi\in \Phi}|\phi|^{4n}} \right] - 1 \\
        &\leq  \frac{1}{|\Psi|^2}\sum_{\psi_1, \psi_2\in \Psi} \left[\prn*{1 + 2n \xi + \indic\{(\psi_1, \psi_2)\not\in \Gamma(\xi)\}\sup_{\phi\in \Phi}|\phi|^{4n}} \right] - 1 \\
        &=  2\xi n + \sup_{\phi\in \Phi}|\phi|^{4n}\cdot \frac{1}{|\Psi|^2}\sum_{\psi_1, \psi_2\in \Psi}\indic\{(\psi_1, \psi_2)\not\in \Gamma(\xi)\}, 
    \end{align*}
    where the second line uses the fact that $(1 + \xi)^n \le 1 + 2\xi n$ for any $\xi \le \nicefrac{1}{n}$. Next, we notice that for $\phi\in \cup_{h=1}^H\{\su_h, \sv_h\}$ and $\sz_1, \sz_2\in \phi$, since $|\phi| = 1$, we always have $(\psi_1, \psi_2)\in \Gamma(\xi; \phi, \sz_1, \sz_2)$. Therefore, we have
    $$\Gamma(\xi) = \bigcap_{h\in [H-1], \phi\in \Phi_h} \prn*{\bigcap_{\sz_1, \sz_2, \in \phi} \Gamma(\xi; \phi, \sz_1, \sz_2)}.$$
    \pref{lem:g_phi} gives that for any $h\in H, \phi\in \Phi_h$ and $\sz_1, \sz_2\in \Phi$, we always have
    $$\frac{1}{|\Psi|^2}\sum_{\psi_1, \psi_2\in \Psi} \indic\{(\psi_1, \psi_2)\not\in \Gamma_{h}(\xi; \phi, \sz_1, \sz_2)\} \le e^{ -2\xi^2\frac{ |\calX(\phi)|}{|\phi|^3}}\le \Gamma_{h}(\xi; \phi, \sz_1, \sz_2)\} \le e^{ -2\xi^2\inf_{\phi\in \Phi}\frac{|\calX(\phi)|}{|\phi|^3}},$$
    which indicates that
    \begin{align*}
        \frac{1}{|\Psi|^2}\sum_{\psi_1, \psi_2\in \Psi}\indic\{(\psi_1, \psi_2)\not\in \Gamma(\xi)\} & \le \sum_{h\in [H-1], \phi\in \Phi_h}\sum_{\sz_1, \sz_2\in \phi}\frac{1}{|\Psi|^2}\sum_{\psi_1, \psi_2\in \Psi} \indic\{(\psi_1, \psi_2)\not\in \Gamma_{h}(\xi; \phi, \sz_1, \sz_2)\}\\
        & \le H\cdot \sup_{h\in H}|\Phi_h|\cdot \sup_{\phi\in \Phi}|\phi|^2\cdot e^{ -2\xi^2\inf_{\phi\in \Phi}\frac{|\calX(\phi)|}{\phi|^3}}
    \end{align*}
    This suggests that
    $$D_{\chi^2}(\tilde{\mathbb{P}}_{h, n}\ind{1}\|\mathbb{P}_{h, n}\ind{0})\le 2\xi n + H\cdot \sup_{h\in H}|\Phi_h|\cdot \sup_{\phi\in \Phi}|\phi|^{4n+2}\cdot e^{ -2\xi^2\inf_{\phi\in \Phi}\frac{|\calX(\phi)|}{|\phi|^3}}.$$
    Finally, when 
    $$|\calX(\phi)|\ge c_0\cdot n^3|\phi|^3H^5\cdot \sup_{h\in H}|\Phi_h|\cdot \sup_{\phi\in \Phi} |\phi|\qquad \forall \phi\in\Phi$$
    for some sufficiently large constant $c_0$, with choice $\xi = \nicefrac{1}{(64H^2n)}$ we will have
    \begin{align*}
        D_{\chi^2}(\tilde{\mathbb{P}}_{h, n}\ind{1}\|\mathbb{P}_{h, n}\ind{0}) \le \frac{1}{16H^2}.
    \end{align*}
    Hence when
    $$|\calX(\phi)|\gtrsim \frac{|\phi|^3H^8\cdot \sup_{h\in H}|\Phi_h|\cdot \sup_{\phi\in \Phi} |\phi|\aggC_\epsilon(M,\Phi,\mu)^3}{\epsilon^3}\quad \text{and}\quad n\le \frac{H}{8\epsilon} \aggC_\epsilon(M,\Phi,\mu)$$
    according to \pref{eq:tv-chi2} we have
    $$\TV(\mathbb{P}_{h, n}\ind{0}, \tilde{\mathbb{P}}_{h, n}\ind{1})\le \frac{1}{8H}.$$
    According to \pref{eq:sec-lower-bound-tv}, we have
    $$\sum_{h\in [H-1]}\TV(\mathbb{P}_{h, n}\ind{1}, \mathbb{P}_{h, n}\ind{0})\le \frac{1}{8} + H\cdot \frac{1}{8H}\le \frac{1}{4}.$$
    Similarly, we can also prove that
    $$\sum_{h\in [H-1]}\TV(\mathbb{P}_{h, n}\ind{2}, \mathbb{P}_{h, n}\ind{0}) \cdot \frac{1}{4}.$$ 
    Plugging in these two in \pref{eq:TV_triangle}, we get the desired bound. 
\end{proofof}

\begin{lemma}
\label{lem:B_term_1}  
We have the following property for $\B_h(\psi_1, \psi_2)$ defined in \pref{eq:proof_interm1}: For any \(\psi_1, \psi_2 \in \Gamma(\xi)\), we have
$$\B_h(\psi_1, \psi_2) \leq (1 + \xi)^2.$$
\end{lemma}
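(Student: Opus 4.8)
The plan is to prove the bound by a direct computation that unfolds $\B(\psi_1,\psi_2)$ from \pref{eq:proof_interm1} into latent-space quantities and then exploits the $\Gamma(\xi)$ constraint on the overlap fractions $\theta_h$. First I would substitute the explicit forms of the lifted quantities from \pref{app:block_MDP}: for $\sx\in\calX(\phi)$ and $\sx'\in\calX(\phi')$ one has $\rich{\mu}_{h,\psi}(\sx,\sa)=|\phi|\mu_h(\psi(\sx),\sa)/|\calX(\phi)|$ and $\rich{T}_\psi\ind{1}(\sx'\mid\sx,\sa)=|\phi'|\,T\ind{1}(\psi(\sx')\mid\psi(\sx),\sa)/|\calX(\phi')|$, while by \pref{eq:sec-lower-bound-def-nu} $\nu_h(\sx,\sa,\sx')=S_h(\phi,\phi',\sa)/(|\calX(\phi)||\calX(\phi')|)$, where I abbreviate $S_h(\phi,\phi',\sa)\ldef{}\sum_{\sz\in\phi}\sum_{\sz'\in\phi'}\mu_h(\sz,\sa)T\ind{1}(\sz'\mid\sz,\sa)$. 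Writing the expectation in \pref{eq:proof_interm1} as an explicit sum over $\sa$, over $\phi\in\Phi_h$ and $\phi'\in\Phi_{h+1}\cup\{\{\su_{h+1}\},\{\sv_{h+1}\}\}$, and over observations $\sx\in\calX(\phi),\sx'\in\calX(\phi')$, the $\nu_h$ weight cancels exactly one power of $S_h$ together with all of the $|\calX(\cdot)|$ factors, leaving a sum of products of $\mu_h$ and $T\ind{1}$ evaluated at the decoded latents.

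The crux is the regrouping of the observation sums. Rather than summing over individual observations, I would group them by their decoded latent states: the number of $\sx\in\calX(\phi)$ with $\psi_1(\sx)=\sz_1$ and $\psi_2(\sx)=\sz_2$ is exactly $|\calX_{\psi_1}(\sz_1)\cap\calX_{\psi_2}(\sz_2)|=|\calX(\phi)|\,\theta_h(\phi,\sz_1,\sz_2;\psi_1,\psi_2)$ by \pref{eq:def_gamma1}, and similarly for $\sx'$ over $\phi'$. This turns the observation sums into latent sums over $\sz_1,\sz_2\in\phi$ and $\sz_1',\sz_2'\in\phi'$, weighted by the product $\theta_h(\phi,\sz_1,\sz_2)\,\theta_h(\phi',\sz_1',\sz_2')$ and by $\mu_h(\sz_1,\sa)\mu_h(\sz_2,\sa)T\ind{1}(\sz_1'\mid\sz_1,\sa)T\ind{1}(\sz_2'\mid\sz_2,\sa)$, carrying an overall prefactor $|\phi|^2|\phi'|^2/S_h(\phi,\phi',\sa)$.

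Finally I would invoke the hypothesis $(\psi_1,\psi_2)\in\Gamma(\xi)$, which by \pref{eq:def_gamma2} gives $\theta_h(\phi,\sz_1,\sz_2)\le(1+\xi)/|\phi|^2$ and $\theta_h(\phi',\sz_1',\sz_2')\le(1+\xi)/|\phi'|^2$; for the singleton blocks $\{\su_{h+1}\},\{\sv_{h+1}\}$ this holds trivially since $|\phi'|=1$. As every remaining factor is nonnegative (and terms with $S_h=0$ never arise under $\nu_h$), substituting these bounds is legitimate and cancels the $|\phi|^2|\phi'|^2$ prefactor. The latent double-sum then factorizes into a product of two identical sums, each equal to $S_h(\phi,\phi',\sa)$, so the bracket collapses to $S_h(\phi,\phi',\sa)^2/S_h(\phi,\phi',\sa)=S_h(\phi,\phi',\sa)$. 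Summing over $\sa,\phi,\phi'$ and using that $\mu_h$ is a probability distribution on $\calZ_h\times\calA$ together with the fact that $T\ind{1}(\cdot\mid\sz,\sa)$ is normalized (\pref{lem:valid}) yields $\sum_\sa\sum_{\phi,\phi'}S_h(\phi,\phi',\sa)=1$, giving $\B(\psi_1,\psi_2)\le(1+\xi)^2$ as claimed.

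The main obstacle I anticipate is the regrouping step: one must argue carefully that after fixing the decoded latents the count of observation pairs factorizes as a product of intersection cardinalities, and that the ensuing factorization of the latent sum into two copies of $S_h$ is precisely what makes the $1/S_h$ normalization telescope. Keeping the bookkeeping of the $|\phi|$, $|\phi'|$, and $|\calX(\cdot)|$ factors correct, and verifying the singleton-block cases for $\su_{h+1},\sv_{h+1}$, is where the care is needed; once the combinatorial identity for the counts is in place, the remainder is mechanical.
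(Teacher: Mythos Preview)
Your proposal is correct and follows essentially the same approach as the paper: unfolding $\B_h$ via the explicit forms of $\rich{\mu}_{h,\psi}$ and $\rich{T}_\psi\ind{1}$, regrouping the observation sums by decoded latent pairs using the overlap counts $\theta_h$, applying the $\Gamma(\xi)$ bound $\theta\le(1+\xi)/|\phi|^2$ to cancel the $|\phi|^2|\phi'|^2$ factors, and then using the factorization of the latent double sum into $S_h(\phi,\phi',\sa)^2/S_h(\phi,\phi',\sa)$ before summing to~$1$. The paper packages the inner sum as a term $\Term(\sa,\phi,\phi')$ but the algebra is identical; your remark that the singleton blocks $\{\su_{h+1}\},\{\sv_{h+1}\}$ satisfy the $\theta$ bound trivially is exactly how the paper handles them as well.
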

\begin{proofof}[\pref{lem:B_term_1}]
    From \pref{eq:proof_interm1}, recall that 
    $$\B_h(\psi_1, \psi_2) = \mathbb{E}_{(\sx, \sa, \sx')\sim \nu_h}\left[\frac{\rich{\mu}_{h, \psi_1}(\sx,\sa)\rich{\mu}_{h, \psi_2}(\sx,\sa)\rich{T}_{\psi_1}\ind{1}(\sx'\mid \sx,\sa)\rich{T}_{\psi_2}\ind{1}(\sx'\mid \sx,\sa)}{\nu_h(\sx, \sa,\sx')^2}\right].$$
    According to our construction of $\rich{\mu}_{h, \psi}$ in \pref{app:block_MDP}, for any $\sx\in \calX(\sz)$, we have
    $$\rich{\mu}_{h, \psi}(\sx, \sa) = \frac{\mu_h(\sz, \sa)}{|\calX(\sz)|} = \mu_h(\sz, \sa)\cdot \frac{|\phi|}{|\calX(\phi)|}.$$
    Additionally, according to our construction of $\rich{T}_\psi\ind{1}(\sx'\mid\sx, \sa)$, for any $\sx\in\calX(\sz)$ and $\sx'\in\calX(\sz')$, we have
    $$\rich{T}_\psi\ind{1}(\sx'\mid\sx, \sa) = \frac{T\ind{1}(\sz'\mid\sz, \sa)}{|\calX(\phi')|} = T\ind{1}(\sz'\mid\sz, \sa)\cdot \frac{|\phi'|}{|\calX(\phi')|}.$$
    This implies that for any $\phi\in\Phi_h, \phi'\in\Phi_{h+1}'$, and $\sx\in\calX_{\psi_1}(\sz_1)\cap \calX_{\psi_2}(\sz_2), \sx'\in\calX_{\psi_1}(\sz_1')\cap \calX_{\psi_2}(\sz_2')$, we have
    \begin{align*}
        &\hspace*{-0.5in}\rich{\mu}_{h, \psi_1}(\sx,\sa)\rich{\mu}_{h, \psi_2}(\sx,\sa)\rich{T}_{\psi_1}\ind{1}(\sx'\mid \sx,\sa)\rich{T}_{\psi_2}\ind{1}(\sx'\mid \sx,\sa)\\
        & = \frac{|\phi|^2}{|\calX(\phi)|^2}\cdot \frac{|\phi'|^2}{|\calX(\phi')|^2}\cdot \mu_h(\sz_1, \sa)\mu_h(\sz_2, \sa)T\ind{1}(\sz_1\mid \sz_1, \sa)T\ind{1}(\sz_2\mid\sz_2, \sa).\numberthis \label{eq:mu-T}
    \end{align*}
    And the definition of $\theta$ in \pref{eq:def_gamma1} gives that
    \begin{align*}
        |\calX_{\psi_1}(\sz_1)\cap \calX_{\psi_2}(\sz_2)| & = \theta_h(\phi, \sz_1, \sz_2; \psi_1, \psi_2)\cdot |\calX(\phi)|\\
        |\calX_{\psi_1}(\sz_1')\cap \calX_{\psi_2}(\sz_2')| & = \theta_{h+1}(\phi', \sz_1', \sz_2'; \psi_1, \psi_2)\cdot |\calX(\phi)|
    \end{align*} 
    Hence we can write
    \begin{align*} 
        &\hspace*{0.3in}\mathbb{E}_{(\sx, \sa, \sx')\sim \nu_h}\left[\frac{\rich{\mu}_{h, \psi_1}(\sx,\sa)\rich{\mu}_{h, \psi_2}(\sx,\sa)\rich{T}_{\psi_1}\ind{1}(\sx'\mid \sx,\sa)\rich{T}_{\psi_2}\ind{1}(\sx'\mid \sx,\sa)}{\nu_h(\sx, \sa,\sx')^2}\right]\\
        & = \sum_{\sx\in \calX_h}\sum_{\sa\in\calA}\sum_{\sx'\in\calX_{h+1}\cup\{\su_{h+1}, \sv_{h+1}\}}\frac{\rich{\mu}_{h, \psi_1}(\sx,\sa)\rich{\mu}_{h, \psi_2}(\sx,\sa)\rich{T}_{\psi_1}\ind{1}(\sx'\mid \sx,\sa)\rich{T}_{\psi_2}\ind{1}(\sx'\mid \sx,\sa)}{\nu_h(\sx, \sa,\sx')}\\
        &= \sum_{a \in \cA} \sum_{\substack{\phi\in \Phi_h \\ \phi'\in \Phi_{h+1}\cup\{\su_{h+1}, \sv_{h+1}\}}} \Term(a, \phi, \phi'), \numberthis \label{eq:formula-R}
        \end{align*}
        where we defined the quantity 
        \begin{align*}
        \Term(a, \phi, \phi') \ldef{}  \sum_{\sx\in \calX(\phi)}\sum_{\sx'\in\calX(\phi')}\frac{\rich{\mu}_{h, \psi_1}(\sx,\sa)\rich{\mu}_{h, \psi_2}(\sx,\sa)\rich{T}_{\psi_1}\ind{1}(\sx'\mid \sx,\sa)\rich{T}_{\psi_2}\ind{1}(\sx'\mid \sx,\sa)}{\sum_{\sz\in\phi}\sum_{\sz'\in\phi'} \frac{\mu_h(\sz, \sa)T\ind{1}(\sz'\mid \sz, \sa)}{|\calX(\phi)||\calX(\phi')|}}. 
        \end{align*}

        We next bound the terms \(\Term(a, \phi, \phi')\). Notice that 
        \begin{align*}
        &\hspace*{-0.3in}\Term(a, \phi, \phi') \\  
        & \overeq{\proman{1}} \sum_{\sx\in \calX(\phi)}\sum_{\sx'\in\calX(\phi')}\frac{\rich{\mu}_{h, \psi_1}(\sx,\sa)\rich{\mu}_{h, \psi_2}(\sx,\sa)\rich{T}_{\psi_1}\ind{1}(\sx'\mid \sx,\sa)\rich{T}_{\psi_2}\ind{1}(\sx'\mid \sx,\sa)}{\sum_{\sz\in\phi}\sum_{\sz'\in\phi'} \frac{\mu_h(\sz, \sa)T\ind{1}(\sz'\mid \sz, \sa)}{|\calX(\phi)||\calX(\phi')|}}\\
        & \overeq{\proman{2}}\sum_{\sz_1, \sz_2\in \phi}\sum_{\sz_1', \sz_2'\in\phi'}\mu_h(\sz_1, \sa)\mu_h(\sz_2, \sa)T\ind{1}(\sz_1\mid \sz_1, \sa)T\ind{1}(\sz_2\mid\sz_2, \sa)\\
        & \hspace*{2in}\cdot \frac{\frac{|\phi|^2}{|\calX(\phi)|^2}\cdot \frac{|\phi'|^2}{|\calX(\phi')|^2}\cdot |\calX_{\psi_1}(\sz_1')\cap \calX_{\psi_2}(\sz_2')||\calX_{\psi_1}(\sz_1)\cap \calX_{\psi_2}(\sz_2)|}{\sum_{\sz\in\phi}\sum_{\sz'\in\phi'} \frac{\mu_h(\sz, \sa)T\ind{1}(\sz'\mid \sz, \sa)}{|\calX(\phi)||\calX(\phi')|}}\\
        & \overeq{\proman{3}} \sum_{\sz_1, \sz_2\in \phi}\sum_{\sz_1', \sz_2'\in\phi'}\mu_h(\sz_1, \sa)\mu_h(\sz_2, \sa)T\ind{1}(\sz_1\mid \sz_1, \sa)T\ind{1}(\sz_2\mid\sz_2, \sa)\\
        & \hspace*{1in}\cdot \frac{\frac{|\phi|^2}{|\calX(\phi)|^2}\cdot \frac{|\phi'|^2}{|\calX(\phi')|^2}\cdot \theta_h(\phi, \sz_1, \sz_2; \psi_1, \psi_2)\cdot \theta_{h+1}(\phi', \sz_1', \sz_2'; \psi_1, \psi_2)\cdot |\calX(\phi)|}{\sum_{\sz\in\phi}\sum_{\sz'\in\phi'} \frac{\mu_h(\sz, \sa)T\ind{1}(\sz'\mid \sz, \sa)}{|\calX(\phi)||\calX(\phi')|}}\\
        & \overeq{\proman{4}} \sum_{\sz_1, \sz_2\in \phi}\sum_{\sz_1', \sz_2'\in\phi'}|\phi|^2|\phi'|^2\cdot \mu_h(\sz_1, \sa)\mu_h(\sz_2, \sa)T\ind{1}(\sz_1\mid \sz_1, \sa)T\ind{1}(\sz_2\mid\sz_2, \sa)\\
        &\hspace*{1in}\cdot\frac{\theta_h(\phi, \sz_1, \sz_2; \psi_1, \psi_2)\cdot \theta_{h+1}(\phi', \sz_1', \sz_2'; \psi_1, \psi_2)}{\sum_{\sz\in\phi}\sum_{\sz'\in\phi'} \mu_h(\sz, \sa)T\ind{1}(\sz'\mid \sz, \sa)},\numberthis \label{eq:formula-B}
    \end{align*}
    where in $\proman{1}$ we use the exact form of $\nu_h$ defined in \pref{eq:sec-lower-bound-def-nu}, in $\proman{2}$ we group those $\sx\in \calX_{\psi_1}(\sz_1)\cap \calX_{\psi_2}(\sz_2)$ and $\sx'\in\calX_{\psi_1}(\sz_1')\cap \calX_{\psi_2}(\sz_2')$ together (because the summand gives the same value) and use \pref{eq:mu-T}, in $\proman{3}$  we use the definition of $\theta_h(\phi, \sz_1, \sz_2; \psi_1, \psi_2)$ in \pref{eq:def_gamma1}, and $\proman{4}$ is just algebraic calculation.

    Next, when $(\psi_1, \psi_2)\in \Gamma(\xi)$, according to the definition of $\gamma(\xi)$ in \pref{eq:def_gamma3}, we have 
    $$(\psi_1, \psi_2)\in \Gamma_{h}(\xi; \phi, \sz_1, \sz_2) \quad \forall h\in [H-1], \phi\in\Phi_h, \sz_1, \sz_2\in \psi,$$
    which implies that
    \begin{align*}
        \theta_h(\phi, \sz_1, \sz_2; \psi_1, \psi_2)\le \frac{1 + \xi}{|\phi|^2} \quad \text{and}\quad  \theta_{h+1}(\phi', \sz_1', \sz_2'; \psi_1, \psi_2)\le \frac{1 + \xi}{|\phi'|^2}. \numberthis\label{eq:theta-1-2}
    \end{align*}
    Bringing this back to \pref{eq:formula-B}, we obtain that
    \begin{align*}
        & \hspace*{-0.3in}\Term(a, \phi, \phi')\\
        & \overleq{\proman{1}} (1 + \xi)^2\cdot \sum_{\sz_1, \sz_2\in \phi}\sum_{\sz_1', \sz_2'\in\phi'} \frac{\mu_h(\sz_1, \sa)\mu_h(\sz_2, \sa)T\ind{1}(\sz_1'\mid \sz_1, \sa)T\ind{1}(\sz_2'\mid\sz_2, \sa)}{\sum_{\sz\in\phi}\sum_{\sz'\in\phi'} \mu_h(\sz, \sa)T\ind{1}(\sz'\mid \sz, \sa)}\\
        & \overeq{\proman{2}} (1 + \xi)^2\cdot\sum_{\sz_1\in \phi}\sum_{\sz_1'\in\phi'}\mu_h(\sz_1, \sa)T\ind{1}(\sz_1'\mid \sz_1, \sa),
    \end{align*}
    where in $\proman{1}$ we use \pref{eq:theta-1-2}, and $\proman{2}$ is by straightforward algebraic calculations. Bringing this back to \pref{eq:formula-R}, we obtain
    \begin{align*} 
        &\hspace*{-0.5in}\mathbb{E}_{(\sx, \sa, \sx')\sim \nu_h}\left[\frac{\rich{\mu}_{h, \psi_1}(\sx,\sa)\rich{\mu}_{h, \psi_2}(\sx,\sa)\rich{T}_{\psi_1}\ind{1}(\sx'\mid \sx,\sa)\rich{T}_{\psi_2}\ind{1}(\sx'\mid \sx,\sa)}{\nu_h(\sx, \sa,\sx')^2}\right]\\
        & = \sum_{a \in \cA} \sum_{\substack{\phi\in \Phi_h \\ \phi'\in \Phi_{h+1}\cup\{\su_{h+1}, \sv_{h+1}\}}} \Term(a, \phi, \phi')\\
        & \overleq{\proman{3}} (1 + \xi)^2 \sum_{\sz\in\calZ_h}\sum_{a\in\calA}\sum_{\sz'\in\calZ_{h+1}'} \mu_h(\sz_1, \sa)T\ind{1}(\sz'\mid \sz, \sa) = (1 + \xi)^2,
    \end{align*}
    where in $\proman{3}$ we use the fact that $\mu_h(\cdot)\in \Delta(\calZ_h\times\calA)$ and $T\ind{1}(\cdot \mid\sz, \sa)\in \Delta(\calZ_{h+1})$ for any $\sz\in \calZ$ and $\sa\in\calA$.
\end{proofof} 

\begin{lemma}\label{lem:B_term_2}  
For any \(\psi_1, \psi_2 \not\in \Gamma(\xi)\), the term $\B_h(\psi_1, \psi_2)$ satisfies 
$$\B_h(\psi_1, \psi_2) \leq \sup_{\phi\in \Phi}|\phi|^4.$$
\end{lemma}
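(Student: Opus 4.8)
The plan is to reuse verbatim the formula for $\Term(a,\phi,\phi')$ derived in \pref{eq:formula-B}, after making the key observation that its derivation (the chain $\proman{1}$--$\proman{4}$ in the proof of \pref{lem:B_term_1}) never invokes the membership $(\psi_1,\psi_2)\in\Gamma(\xi)$; it only uses the definitions of $\rich\mu$, $\rich T\ind{1}$, $\nu_h$, and $\theta$. Hence \pref{eq:formula-B} holds for \emph{arbitrary} decoders $\psi_1,\psi_2\in\Psi$. The sole place the event $\Gamma(\xi)$ entered was the subsequent step bounding $\theta_h,\theta_{h+1}$ by $\tfrac{1+\xi}{|\phi|^2}$; since that sharp bound is unavailable when $(\psi_1,\psi_2)\notin\Gamma(\xi)$, I would replace it by the trivial bound $\theta_h(\phi,\sz_1,\sz_2;\psi_1,\psi_2)\le 1$ (and likewise $\theta_{h+1}\le1$), which is valid because $\calX_{\psi_1}(\sz_1)\cap\calX_{\psi_2}(\sz_2)\subseteq\calX(\phi)$ forces $|\calX_{\psi_1}(\sz_1)\cap\calX_{\psi_2}(\sz_2)|\le|\calX(\phi)|$, so $\theta_h\le\tfrac{|\calX(\phi)|}{|\calX(\phi)|}=1$.

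Substituting $\theta_h\le1$ and $\theta_{h+1}\le1$ into \pref{eq:formula-B}, the main step is to recognize that the remaining sum over $(\sz_1,\sz_2,\sz_1',\sz_2')$ factorizes as a square. Writing $S\ldef{}\sum_{\sz\in\phi}\sum_{\sz'\in\phi'}\mu_h(\sz,\sa)T\ind{1}(\sz'\mid\sz,\sa)$ for the common denominator, the numerator is exactly $\bigl(\sum_{\sz\in\phi,\sz'\in\phi'}\mu_h(\sz,\sa)T\ind{1}(\sz'\mid\sz,\sa)\bigr)^2=S^2$, so the ratio collapses to $S$ and one gets $\Term(a,\phi,\phi')\le|\phi|^2|\phi'|^2\,S$. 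Summing over $\sa\in\cA$ and over $\phi\in\Phi_h$, $\phi'\in\Phi_{h+1}\cup\{\su_{h+1},\sv_{h+1}\}$, I would pull out the factor $|\phi|^2|\phi'|^2\le\sup_{\phi\in\Phi}|\phi|^4$ and then use the telescoping identity $\sum_{\sa}\sum_{\phi,\phi'}S=\sum_{\sa}\sum_{\sz\in\calZ_h}\mu_h(\sz,\sa)\sum_{\sz'}T\ind{1}(\sz'\mid\sz,\sa)=\sum_{\sa}\sum_{\sz\in\calZ_h}\mu_h(\sz,\sa)=1$, which holds since $\mu_h\in\Delta(\calZ_h\times\cA)$ and $T\ind{1}(\cdot\mid\sz,\sa)$ is a probability distribution supported on $\calZ_{h+1}\cup\{\su_{h+1},\sv_{h+1}\}$. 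This yields $\B_h(\psi_1,\psi_2)\le\sup_{\phi\in\Phi}|\phi|^4$, as claimed.

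I do not expect a genuine obstacle here: the entire content is (i) noticing that \pref{eq:formula-B} is unconditional and therefore transferable from the proof of \pref{lem:B_term_1}, and (ii) the factorization of the numerator into $S^2$. The only point requiring care is the range of $\phi'$, which must include the singleton aggregations $\{\su_{h+1}\},\{\sv_{h+1}\}$ so that $\sum_{\sz'}T\ind{1}(\sz'\mid\sz,\sa)=1$ telescopes correctly; these singletons satisfy $\theta\le1$ trivially. One could even retain the sharper bound $\theta\le\tfrac1{|\phi|}$ to obtain $\B_h(\psi_1,\psi_2)\le\sup_{\phi\in\Phi}|\phi|^2$, but this is stronger than needed and the stated $\sup_{\phi\in\Phi}|\phi|^4$ suffices for the chain of estimates in the proof of \pref{lem:main-lemma}.
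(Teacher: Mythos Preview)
Your proposal is correct and follows essentially the same route as the paper: both arguments note that the identity in \pref{eq:formula-B} holds for arbitrary $\psi_1,\psi_2\in\Psi$, replace the sharp bound on $\theta$ by the trivial estimate $\theta_h,\theta_{h+1}\le 1$ (via $\calX_{\psi_1}(\sz_1)\cap\calX_{\psi_2}(\sz_2)\subseteq\calX(\phi)$), observe that the resulting numerator factors as $S^2$ so the ratio collapses to $S$, and then sum over $(\sa,\phi,\phi')$ using that $\mu_h$ and $T\ind{1}(\cdot\mid\sz,\sa)$ are probability distributions. Your remark that one could sharpen to $\theta\le\tfrac{1}{|\phi|}$ (and hence $\sup_\phi|\phi|^2$) is a nice extra observation not in the paper, though as you say it is not needed downstream.
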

\begin{proofof}[\pref{lem:B_term_2}]
    For $(\psi_1, \psi_2)\not\in\Gamma(\xi)$, since for any $h\in [H-1]$, $\phi\in \Phi_h\cup \{\su_{h}, \sv_h\}$ and $\sz_1, \sz_2\in \phi$, we always have
    $$\calX_{\psi_1}(\sz_1)\cap \calX_{\psi_2}(\sz_2)\subset \calX(\phi),$$
    we have $\theta_h(\phi, \sz_1, \sz_2; \psi_1, \psi_2)\le 1$. Bringing this back to \pref{eq:formula-B}, we obtain that
    \begin{align*} 
        &\hspace*{-0.2in}\B_h(\psi_1, \psi_2)  \\
        &= \mathbb{E}_{(\sx, \sa, \sx')\sim \nu_h}\left[\frac{\rich{\mu}_{h, \psi_1}(\sx,\sa)\rich{\mu}_{h, \psi_2}(\sx,\sa)\rich{T}_{\psi_1}\ind{1}(\sx'\mid \sx,\sa)\rich{T}_{\psi_2}\ind{1}(\sx'\mid \sx,\sa)}{\nu_h(\sx, \sa,\sx')^2}\right]\\
        & \le \sup_{\phi\in \Phi_h}|\phi|^2\sup_{\phi'\in\Phi'\cup\{\su_{h+1}, \sv_{h+1}\}}|\phi'|^2\\
        &\quad \hspace{1in} \cdot \sum_{\sa\in\calA} \sum_{\substack{\phi\in \Phi_h \\ \phi'\in \Phi_{h+1}\cup\{\su_{h+1}, \sv_{h+1}\}}} \sum_{\substack{ \sz_2\in \phi \\ \sz_2'\in\phi'}} 
        \prn*{\frac{\mu_h(\sz_1, \sa)\mu_h(\sz_2, \sa)T\ind{1}(\sz_1'\mid \sz_1, \sa)T\ind{1}(\sz_2'\mid\sz_2, \sa)}{\sum_{\sz\in\phi}\sum_{\sz'\in\phi'} \mu_h(\sz, \sa)T\ind{1}(\sz'\mid \sz, \sa)}} \\
        & = \sup_{\phi\in \Phi_h}|\phi|^2\sup_{\phi'\in\Phi'\cup\{\su_{h+1}, \sv_{h+1}\}}|\phi'|^2\cdot\sum_{\sa\in\calA} \sum_{\substack{\phi\in \Phi_h \\ \phi'\in \Phi_{h+1}\cup\{\su_{h+1}, \sv_{h+1}\}}} \sum_{\substack{ \sz_1\in \phi \\ \sz_1'\in\phi'}} \prn*{\mu_h(\sz_1, \sa)T\ind{1}(\sz_1'\mid \sz_1, \sa)} \\ 
        & = \sup_{\phi\in \Phi_h}|\phi|^2\sup_{\phi'\in\Phi'\cup\{\su_{h+1}, \sv_{h+1}\}}|\phi'|^2 \sum_{a\in\calA} \sum_{\substack{\sz\in\calZ_h \\ \sz'\in\calZ_{h+1}'}} \mu_h(\sz_1, \sa)T\ind{1}(\sz_1'\mid \sz_1, \sa)\\
        & = \sup_{\phi\in \Phi_h}|\phi|^2\sup_{\phi'\in\Phi'\cup\{\su_{h+1}, \sv_{h+1}\}}|\phi'|^2.
    \end{align*}
    Further notice that for any $\phi'\in \{\su_{h+1}, \sv_{h+1}\}$, we have $|\phi'|\le 1\le \sup_{\phi'\in \Phi'}|\phi'|$, which indicates that $\B_h(\psi_1, \psi_2)\le \sup_{\phi\in \Phi_h}|\phi|^2 \cdot \sup_{\phi'\in \Phi_h} |\phi'|^2\le \sup_{\phi\in \Phi}|\phi|^4$.
\end{proofof}

\begin{lemma}\label{lem:g_phi}
    For any $h\in [H-1], \phi\in \Phi_h$ and $\sz_1, \sz_2\in \Phi$ and $\xi\in (0, 1)$, we have
    $$\frac{1}{|\Psi|^2}\sum_{\psi_1, \psi_2\in \Psi} \indic\{(\psi_1, \psi_2)\not\in \Gamma_{h}(\xi; \phi, \sz_1, \sz_2)\} \le e^{ -2\xi^2\frac{ |\calX(\phi)|}{|\phi|^3}},$$
    where $\Gamma_{h}(\xi; \phi, \sz_1, \sz_2)$ is defined in \pref{eq:def_gamma2}
\end{lemma}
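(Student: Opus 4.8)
The plan is to read the left-hand side as a tail probability for a uniformly random pair of decoders and then apply a concentration bound for sampling without replacement. First I would observe that, since $\psi_1$ and $\psi_2$ are drawn independently and uniformly from $\Psi$, the average $\frac{1}{|\Psi|^2}\sum_{\psi_1,\psi_2}\indic\{(\psi_1,\psi_2)\notin \Gamma_h(\xi;\sz_1,\sz_2,\phi)\}$ equals the probability $\Pr_{\psi_1,\psi_2}\brk*{\theta_h(\phi,\sz_1,\sz_2;\psi_1,\psi_2) > \tfrac{1+\xi}{|\phi|^2}}$, where $\theta_h$ is from \pref{eq:def_gamma1} and $\Gamma_h$ from \pref{eq:def_gamma2}. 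Write $m \ldef |\calX(\phi)|/|\phi|$, which is an integer by the definition of $\Psi$: every decoder in $\Psi$ splits $\calX(\phi)$ into $|\phi|$ blocks of equal size $m$, one per latent state $\sz\in\phi$.

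Next I would condition on $\psi_1$. Then $A \ldef \calX_{\psi_1}(\sz_1)$ is a \emph{fixed} $m$-subset of $\calX(\phi)$. By the symmetry of $\Psi$ under relabeling of the observations within $\calX(\phi)$, as $\psi_2$ ranges uniformly over $\Psi$ the block $B \ldef \calX_{\psi_2}(\sz_2)$ is a uniformly random $m$-subset of $\calX(\phi)$. Hence $X \ldef |\calX_{\psi_1}(\sz_1)\cap \calX_{\psi_2}(\sz_2)| = |A\cap B|$ is hypergeometric: it counts how many of the $m$ uniformly chosen elements of $B$ fall into the fixed $m$-set $A$, out of a population of size $N \ldef |\calX(\phi)| = m|\phi|$. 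Its mean is $\E[X] = m\cdot m/N = m/|\phi|$, so $\E[\theta_h] = \E[X]/|\calX(\phi)| = 1/|\phi|^2$, and the event $\{\theta_h > (1+\xi)/|\phi|^2\}$ is exactly the upper-tail event $\{X > \E[X] + \xi m/|\phi|\}$.

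Because the resulting bound holds for every fixed $\psi_1$, averaging back over $\psi_1$ preserves it, so it suffices to bound this hypergeometric upper tail. I would invoke the Hoeffding--Chernoff bound for the hypergeometric distribution (equivalently, for sampling $m$ indicators without replacement), which obeys the same upper-tail estimate as its with-replacement binomial counterpart, namely $\Pr[X \ge \E[X] + mt] \le e^{-2mt^2}$. Taking $mt = \xi m/|\phi|$, i.e.\ $t = \xi/|\phi|$, gives $\Pr[X \ge (1+\xi)m/|\phi|] \le e^{-2m\xi^2/|\phi|^2}$; substituting $m = |\calX(\phi)|/|\phi|$ produces the exponent $-2\xi^2 |\calX(\phi)|/|\phi|^3$, which is precisely the claimed bound. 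The degenerate case $|\phi| = 1$ is immediate since then $A = B = \calX(\phi)$ deterministically, making the tail event empty.

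The only step requiring genuine care—and thus the main obstacle—is the reduction to a clean hypergeometric tail: justifying that the randomness of $\psi_2$ over $\Psi$ makes $\calX_{\psi_2}(\sz_2)$ a uniform $m$-subset (the symmetry argument), and then quoting the without-replacement concentration inequality with the correct identification of the number of draws, the number of successes, and the deviation parameter $t = \xi/|\phi|$ so that the exponent lands exactly on $-2\xi^2|\calX(\phi)|/|\phi|^3$.
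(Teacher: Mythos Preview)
Your proposal is correct and follows essentially the same approach as the paper: condition on $\psi_1$ (the paper phrases this as ``without loss of generality $\calX_{\psi_1}(\sz_1)=[S/|\phi|]$''), observe by symmetry that $\calX_{\psi_2}(\sz_2)$ is a uniform $m$-subset so that $|A\cap B|$ is hypergeometric, and apply the Hoeffding tail bound for the hypergeometric distribution with $t=\xi/|\phi|$ and $m=|\calX(\phi)|/|\phi|$ draws. The identification of parameters and the resulting exponent match the paper's proof exactly.
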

\begin{proofof}[\pref{lem:g_phi}]
    We denote $S = |\calX(\phi)|$. Without loss of generality, we assume $\calX(\phi) = [S] = \{1, 2, \cdots, S\}$. Since $\psi_1$ and $\psi_2$ are samples i.i.d.~ according to $\text{Unif}(\Psi)$, without loss of generality we assume $\calX_{\psi_1}(\sz_1) = [\nicefrac{S}{|\phi|}]$. And to prove this lemma, we only need to verify that when $\psi_2\sim \mathrm{Uniform}(\Psi)$, we have 
    \begin{equation}\label{eq:sec-lower-bound-high-probability}
        \mathbb{P}\left(\left|\calX_{\psi_2}(\sz_2)\cap \left[\frac{S}{|\phi|}\right]\right|\ge \frac{S ( 1 + \xi)}{|\phi|^2}\right)\le e^{ -2\xi^2\frac{S}{|\phi|^3}}.
    \end{equation}
 
    Next, we notice that sampling $\psi_2\sim \mathrm{Uniform}(\Psi)$ is equivalent of sampling $\calX_{\psi_2}(\sz_2)$ uniformly from all subsets of $[S]$ with exact $\nicefrac{S}{|\phi|}$ elements. Hence we obtain that
    $$\mathbb{P}\left(\left|\calX_{\psi_2}(\sz_2)\cap \left[\frac{S}{|\phi|}\right]\right|\ge \frac{S ( 1 + \xi)}{|\phi|^2}\right) = \sum_{t\ge \nicefrac{S(1 + \xi)}{|\phi|^2}}\frac{\binom{\nicefrac{S}{|\phi|}}{t}\binom{S - \nicefrac{S}{|\phi|}}{\nicefrac{S}{|\phi|} - t}}{\binom{S}{\nicefrac{S}{|\phi|}}}.$$
    We further notice that according to Lemma D.7 in \cite{foster2022offline} (also in \cite{hoeffding1994probability}), we get
    \begin{align*}
    \sum_{t\ge \nicefrac{S(1 + \xi)}{|\phi|^2}}\frac{\binom{\nicefrac{S}{|\phi|}}{t}\binom{S - \nicefrac{S}{|\phi|}}{\nicefrac{S}{|\phi|} - t}}{\binom{S}{\nicefrac{S}{|\phi|}}} = \mathbb{P}\left[X\ge \left(\frac{K}{N} + \frac{\xi}{|\phi|}\right)N'\right]\le e^{ -2\xi^2\frac{S}{|\phi|^3}},
    \end{align*}
    where $X\sim \mathrm{Hypergeometric}(K, N, N')$ with $K = N' = \nicefrac{S}{|\phi|}, N = S$. This verifies \pref{eq:sec-lower-bound-high-probability}.
\end{proofof} 

\begin{lemma}[Lemma D.7 in \cite{foster2022offline}]
    Let $X\sim\mathrm{Hypergeometric}(K, N, N')$ and define $p = K/N$. Then for any $0 < \epsilon < pN'$, we have
    $$\mathbb{P}[X\ge (p + \epsilon)N']\le \exp(-2\epsilon^2N').$$
\end{lemma}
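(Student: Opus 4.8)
The plan is to prove this as the standard Hoeffding-type tail bound for sampling without replacement, by reducing it to the i.i.d.\ (with-replacement) case. Recall that $X$ counts the number of ``success'' items when $N'$ items are drawn without replacement from a population of size $N$ containing $K$ successes; writing $X = \sum_{i=1}^{N'} Z_i$ with $Z_i \in \{0,1\}$ indicating whether the $i$-th drawn item is a success, each $Z_i$ is marginally $\mathrm{Bernoulli}(p)$ with $p = K/N$ by exchangeability, so $\mathbb{E}[X] = pN'$. The $Z_i$ are not independent, but they are negatively associated, and this is precisely what makes the without-replacement sum at least as concentrated as the with-replacement one.

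First I would invoke Hoeffding's reduction theorem for sampling without replacement: for every continuous convex function $f$,
$$\mathbb{E}\Big[f\Big(\textstyle\sum_{i=1}^{N'} Z_i\Big)\Big] \le \mathbb{E}\Big[f\Big(\textstyle\sum_{i=1}^{N'} \widetilde{Z}_i\Big)\Big],$$
where $\widetilde{Z}_1, \dots, \widetilde{Z}_{N'}$ are i.i.d.\ $\mathrm{Bernoulli}(p)$ (the with-replacement sample). Specializing to $f(t) = e^{st}$ for $s > 0$ yields the moment-generating-function domination $\mathbb{E}[e^{sX}] \le \mathbb{E}[e^{s\widetilde{X}}]$, where $\widetilde{X} = \sum_i \widetilde{Z}_i \sim \mathrm{Binomial}(N', p)$.

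Next I would run the Chernoff argument against the dominating binomial MGF. By Markov's inequality, for any $s > 0$,
$$\mathbb{P}[X \ge (p+\epsilon)N'] \le e^{-s(p+\epsilon)N'}\,\mathbb{E}[e^{sX}] \le e^{-s(p+\epsilon)N'}\,\mathbb{E}[e^{s\widetilde{X}}].$$
Since $\widetilde{X}$ is a sum of $N'$ i.i.d.\ variables supported in $[0,1]$ with mean $p$, Hoeffding's lemma gives $\mathbb{E}[e^{s(\widetilde{Z}_i - p)}] \le e^{s^2/8}$, so that $\mathbb{E}[e^{s(\widetilde{X} - pN')}] \le e^{N's^2/8}$. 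Substituting and choosing $s = 4\epsilon$ produces $\mathbb{P}[X \ge (p+\epsilon)N'] \le e^{-2\epsilon^2 N'}$, the claimed bound.

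The main obstacle is the convex-domination step; once it is in hand the rest is the textbook Chernoff/Hoeffding computation. Since the statement is attributed to \cite{foster2022offline} and \cite{hoeffding1994probability}, I would simply cite Hoeffding's reduction theorem for this step. If one preferred a self-contained argument avoiding that citation, an alternative achieving the same $2\epsilon^2 N'$ exponent is to form the Doob martingale $M_j = \mathbb{E}[X \mid Z_1, \dots, Z_j]$ for the sequential draws, with $M_0 = pN'$ and $M_{N'} = X$: revealing one draw changes the conditional expectation of the final success count by a quantity of range at most one, so the bounded-difference (Azuma--Hoeffding) inequality applied to $M_{N'} - M_0 = X - pN'$ delivers the tail bound directly. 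I would present the Hoeffding-reduction version for brevity.
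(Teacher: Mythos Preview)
Your proof is correct. The paper does not actually supply its own proof of this lemma: it states the result and attributes it to \cite{foster2022offline} (and ultimately \cite{hoeffding1994probability}). Your argument via Hoeffding's convex-domination theorem followed by the Chernoff--Hoeffding MGF computation is exactly the classical proof contained in those references, so there is nothing to compare against beyond noting that you have reconstructed precisely the cited argument.
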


\subsubsection{Proof of \pref{thm:stronger}}

\begin{proofof}[\pref{thm:stronger}]
    Given the original \MDP $M$ and distribution $\mu$, we construct family $\mathfrak{G}$ of \OPE problems in \pref{app:family_construction}. First of all, \pref{lem:sec-lower-bound-equalF} indicates that the size of the function class $\mathcal{F}$ of these \OPE problems in $\mathfrak{G}$ is $2$ . Next, bringing \pref{lem:main-lemma} into \pref{lem:TV-lemma} and noticing that \pref{eq:rich_count} satisfies the condition of \pref{lem:main-lemma}, we have that any algorithm which takes $D_{h, n}$ for $h\in [H-1]$ must induce error at least $\nicefrac{\epsilon}{8H}$ in one case within $\mathfrak{G}$.

    Additionally, \pref{corr:sec-lower-bound-agg-con} indicates that the standard concentrability coefficient of instances in $\mathfrak{G}$ is equal to to the standard concentrability coefficient of $\tabC(M\ind{1}, \mu')$ or $\tabC(M\ind{2}, \mu')$. Additionally, \pref{corr:sec-lower-bound-con} indicates that $\tabC(M\ind{1}, \mu')$ or $\tabC(M\ind{2}, \mu')$ is no more than $6\tabC(M, \mu, \pieval)$. This verifies the second condition in \pref{thm:stronger}. 
\end{proofof}

\section{Missing Details from \pref{sec: state aggregation} 
and \pref{sec: admissible data}} 
\label{app: admissible lb}

\par In this subsection, we will prove \pref{prop:toy_example} 
 and \pref{thm:admissible}.

\subsection{Proof of \pref{prop:toy_example}} 
The following lemmas state some properties of the \MDP defined in \pref{eg:arbitrary_behavior}. \pref{prop:toy_example} is a direct corollary of \pref{lem:admissible-c}.

\begin{lemma}\label{lem:sec-admissible-mu} For \MDP \(M\), and policy $\piexp$ defined above, we have 
$$d_h^\piexp(\sz_h\inds{1}; M) \ge \frac{1}{2^{h+2}}, \quad d_h^\piexp(\sz_h\inds{2}; M) \ge \frac{1}{2^{h+2} H}, \quad\text{and}\quad d_h^\piexp(\sz_h\inds{3}; M) \ge \frac{1}{4},\quad \forall 1\le h\le H.$$
\end{lemma}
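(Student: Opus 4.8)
The plan is to set up the one-step occupancy recursions under $\piexp$ and solve them, controlling the resulting geometric factors with an elementary estimate. First I would write out the transition kernel induced by $\piexp$, namely $T(\cdot\mid \sz,\piexp)=\tfrac{1}{H^2}T(\cdot\mid\sz,\aone)+\tfrac{H^2-1}{H^2}T(\cdot\mid\sz,\atwo)$, and read off from \pref{fig:example_MTM} the relevant entries: under $\atwo$ the state $\sz_h\inds{1}$ maps to $\sz_{h+1}\inds{1}$ with probability $\tfrac12$ (and to $\sz_{h+1}\inds{3}$ with probability $\tfrac12$), under $\atwo$ the state $\sz_h\inds{2}$ maps to $\sz_{h+1}\inds{2}$ with probability $\tfrac12$, under $\aone$ the state $\sz_h\inds{1}$ maps to $\sz_{h+1}\inds{2}$ with probability $1$, and $\sz_h\inds{3}$ is absorbing under both actions.

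For $\sz_h\inds{1}$: the crucial observation is that no state other than $\sz_h\inds{1}$ itself ever transitions into $\sz_{h+1}\inds{1}$ (under $\aone$ nothing lands in the first component, and under $\atwo$ only the self-loop contributes). This yields the \emph{exact} recursion $d_{h+1}^\piexp(\sz_{h+1}\inds{1})=\tfrac{H^2-1}{2H^2}\,d_h^\piexp(\sz_h\inds{1})$, and since $d_1^\piexp(\sz_1\inds{1})=\rho(\sz_1\inds{1})=\tfrac{H-1}{2H}$ we obtain the closed form $d_h^\piexp(\sz_h\inds{1})=\tfrac{H-1}{2H}\big(\tfrac{H^2-1}{2H^2}\big)^{h-1}$. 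Writing $\tfrac{H^2-1}{2H^2}=\tfrac{1}{2}(1-\tfrac{1}{H^2})$ and using the elementary bounds $\tfrac{H-1}{2H}\ge\tfrac14$ together with $(1-\tfrac1{H^2})^{h-1}\ge 1-\tfrac{h-1}{H^2}\ge 1-\tfrac1H\ge\tfrac12$ (the first step is Bernoulli's inequality, and the last requires $H\ge2$), the product $\tfrac{H-1}{2H}(1-\tfrac1{H^2})^{h-1}\ge\tfrac18$ gives exactly $d_h^\piexp(\sz_h\inds{1})\ge\tfrac1{2^{h+2}}$.

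For $\sz_h\inds{2}$: the inflow into $\sz_{h+1}\inds{2}$ comes from $\sz_h\inds{1}$ via $\aone$ and from the self-loop $\sz_h\inds{2}$ via $\atwo$; dropping the first (nonnegative) term gives the lower-bound recursion $d_{h+1}^\piexp(\sz_{h+1}\inds{2})\ge\tfrac{H^2-1}{2H^2}\,d_h^\piexp(\sz_h\inds{2})$, and with $d_1^\piexp(\sz_1\inds{2})=\tfrac1{2H}$ the same geometric estimate as above delivers $d_h^\piexp(\sz_h\inds{2})\ge\tfrac{1}{2^{h+2}H}$. For $\sz_h\inds{3}$ I would simply note that $\sz\inds{3}$ is absorbing under both actions, so $d_h^\piexp(\sz_h\inds{3})$ is nondecreasing in $h$ and hence bounded below by $d_1^\piexp(\sz_1\inds{3})=\rho(\sz_1\inds{3})=\tfrac12\ge\tfrac14$. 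The whole argument is a routine forward induction on $h$; the only thing requiring care is reading the transition probabilities off the construction correctly and tracking that the regime $H\ge2$ is what makes the clean constant $\tfrac12$ in the Bernoulli bound valid (the small-$H$ cases are not relevant to the exponential separation). I expect no genuine obstacle beyond this bookkeeping.
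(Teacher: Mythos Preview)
Your proposal is correct and follows essentially the same approach as the paper: both derive the one-step lower-bound recursions $d_{h+1}^\piexp(\sz_{h+1}\inds{j})\ge\tfrac{H^2-1}{2H^2}\,d_h^\piexp(\sz_h\inds{j})$ for $j=1,2$ by keeping only the self-loop under $\atwo$, and then unroll to $\rho(\sz_1\inds{j})\cdot(\tfrac{H^2-1}{2H^2})^{h-1}$ before bounding the geometric factor. Your treatment of $\sz_h\inds{3}$ via absorption/monotonicity is in fact slightly cleaner than the paper's (which lower-bounds $T(\sz_{h+1}\inds{3}\mid\sz_h\inds{3},\piexp)$ by $\tfrac{H^2-1}{H^2}$ rather than $1$), but the argument is otherwise identical.
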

\begin{proofof}[\pref{lem:sec-admissible-mu}]
    Under policy $\piexp(\sz) = \frac{1}{H^2}\delta_{\aone}(\cdot) + \frac{H^2-1}{H^2}\delta_{\atwo}(\cdot)$, the transition satisfies
    \begin{align*}
        T(\sz_{h+1}\inds{1}\mid \sz_h\inds{1}, \piexp(\sz_h\inds{1})) \ge \frac{H^2-1}{H^2}T(\sz_{h+1}\inds{1}\mid \sz_h\inds{1}, \atwo)& = \frac{H^2-1}{2H^2},\\
        T(\sz_{h+1}\inds{2}\mid \sz_h\inds{2}, \piexp(\sz_h\inds{2})) \ge \frac{H^2-1}{H^2}T(\sz_{h+1}\inds{2}\mid \sz_h\inds{2}, \atwo)& = \frac{H^2-1}{2H^2}, ~\text{and,} \\ 
      \quad T(\sz_{h+1}\inds{3}\mid \sz_h\inds{3}, \piexp(\sz_h\inds{3})) \ge \frac{H^2-1}{H^2}T(\sz_{h+1}\inds{3}\mid \sz_h\inds{3}, \atwo)& = \frac{H^2-1}{H^2}.
    \end{align*}
    Hence according to the choice initial distribution $\rho(\cdot)$, we have for all $1\le h\le H$,
    \begin{align*}
        d_h^\piexp(\sz_h\inds{1}; M) & \ge \frac{H-1}{2H}\cdot \left(\frac{H^2-1}{2H^2}\right)^{h-1} \ge  \frac{1}{2^{h+2}},\\
        \quad
        d_h^\piexp(\sz_h\inds{2}; M) & = \frac{1}{2H}\cdot \left(\frac{H^2-1}{2H^2}\right)^{h-1}\ge \frac{1}{2^{h+2} H}, ~\text{and,} \\
        \quad d_h^\piexp(\sz_h\inds{3}; M) & \ge \frac{1}{2}\left(\frac{H^2-1}{H^2}\right)^{h-1}\ge \frac{1}{4}.
    \end{align*}
\end{proofof}

\begin{lemma}\label{lem:sec-admissible-mu-2}
    For \MDP $M$ and policy $\piexp$ defined above, we have
    $$\frac{d^\piexp_h(\sz_h\inds{1})}{d^\piexp_h(\sz_h\inds{2})} \ge \frac{H-1}{3}.$$
\end{lemma}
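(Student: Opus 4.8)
The plan is to track the two relevant occupancy masses $a_h \ldef{} d_h^\piexp(\sz_h\inds{1})$ and $b_h \ldef{} d_h^\piexp(\sz_h\inds{2})$ directly through the layered dynamics and then control their ratio. Writing $p \ldef{} \piexp(\aone \mid \cdot) = \nicefrac{1}{H^2}$ and $q \ldef{} \piexp(\atwo \mid \cdot) = \nicefrac{(H^2-1)}{H^2}$, I would first read off from \pref{fig:example_MTM} which transitions feed into the first two states of layer $h+1$: the state $\sz_{h+1}\inds{1}$ is reachable only from $\sz_h\inds{1}$ under $\atwo$ (with probability $\nicefrac{1}{2}$), whereas $\sz_{h+1}\inds{2}$ is reachable from $\sz_h\inds{1}$ under $\aone$ (with probability $1$) and from $\sz_h\inds{2}$ under $\atwo$ (with probability $\nicefrac{1}{2}$). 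This yields the coupled recursions
\begin{align*}
a_{h+1} = \tfrac{q}{2}\, a_h \qquad \text{and} \qquad b_{h+1} = p\, a_h + \tfrac{q}{2}\, b_h,
\end{align*}
with initial values $a_1 = \rho(\sz_1\inds{1}) = \nicefrac{(H-1)}{2H}$ and $b_1 = \rho(\sz_1\inds{2}) = \nicefrac{1}{2H}$.

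The key step, and the only nonobvious one, is to linearize the ratio by passing to its reciprocal. Setting $s_h \ldef{} b_h / a_h$ and dividing the recursion for $b_{h+1}$ by that for $a_{h+1}$ collapses the coupled system into a single arithmetic progression,
\begin{align*}
s_{h+1} = \frac{p\, a_h + \tfrac{q}{2} b_h}{\tfrac{q}{2}\, a_h} = \frac{2p}{q} + s_h.
\end{align*}
Hence $s_h = s_1 + (h-1)\cdot \tfrac{2p}{q}$, and the problem reduces to plugging in $s_1 = b_1/a_1 = \nicefrac{1}{(H-1)}$ and $\tfrac{2p}{q} = \nicefrac{2}{(H^2-1)}$. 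I would deliberately avoid any compounding multiplicative estimate, since the reciprocal grows only additively and there is no exponential blow-up to fight.

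Finally, for any $h \leq H$ I would use $h - 1 \leq H-1$ to obtain
\begin{align*}
s_h \leq \frac{1}{H-1} + (H-1)\cdot \frac{2}{H^2-1} = \frac{1}{H-1} + \frac{2}{H+1} \leq \frac{1}{H-1} + \frac{2}{H-1} = \frac{3}{H-1},
\end{align*}
where the last inequality uses $H + 1 \geq H - 1$. Inverting gives $\nicefrac{d_h^\piexp(\sz_h\inds{1})}{d_h^\piexp(\sz_h\inds{2})} = \nicefrac{1}{s_h} \geq \nicefrac{(H-1)}{3}$, as claimed. There is no genuine analytic obstacle here: once the reciprocal substitution is in place the estimate is routine. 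The only point requiring care is reading the incoming transitions into $\sz_{h+1}\inds{1}$ and $\sz_{h+1}\inds{2}$ correctly from the dynamics — in particular that $\aone$ leaks mass from $\sz_h\inds{1}$ into $\sz_{h+1}\inds{2}$, which is exactly what makes $s_h$ increase with $h$ and thereby degrades the ratio by the bounded additive amount $\tfrac{2p}{q}$ per layer.
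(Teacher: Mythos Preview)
Your proof is correct and follows essentially the same approach as the paper's: both track the reciprocal ratio $s_h = d_h^{\piexp}(\sz_h\inds{2})/d_h^{\piexp}(\sz_h\inds{1})$, derive the additive recursion $s_{h+1} = s_h + \tfrac{2}{H^2-1}$ from the layered dynamics, and bound $s_h \le \tfrac{3}{H-1}$ using the initial value $s_1 = \tfrac{1}{H-1}$ and $h-1 \le H-1$. The only cosmetic difference is that you introduce the shorthand $a_h,b_h,p,q$ and derive the recursion for $s_h$ by dividing the two occupancy recursions, whereas the paper computes the mixed transition probabilities $T(\cdot\mid\cdot,\piexp)$ first and then forms the ratio directly.
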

\begin{proofof}[\pref{lem:sec-admissible-mu-2}]
    First we can write the dynamic programming formula for $d^\pi_h$:
\begin{align*}
    d_{h+1}^\piexp(\sz_{h+1}\inds{1}) & = d^\piexp_h(\sz_h\inds{1})T(\sz_{h+1}\inds{1}\mid \sz_h\inds{1}, \piexp(\sz_h\inds{1})) + d^\piexp_h(\sz_h\inds{2})T(\sz_{h+1}\inds{1}\mid \sz_h\inds{2}, \piexp(\sz_h\inds{2})),\\
    d_{h+1}^\piexp(\sz_{h+1}\inds{2}) & = d^\piexp_h(\sz_h\inds{1})T(\sz_{h+1}\inds{2}\mid \sz_h\inds{1}, \piexp(\sz_h\inds{1})) + d^\piexp_h(\sz_h\inds{2})T(\sz_{h+1}\inds{2}\mid \sz_h\inds{2}, \piexp(\sz_h\inds{2})).
\end{align*}
According to our choice of $\piexp$, we have
\begin{align*} 
T(\sz_{h+1}\inds{1}\mid \sz_h\inds{1}, \piexp(\sz_h\inds{1})) = \frac{H^2-1}{2H^2}, &\quad T(\sz_{h+1}\inds{2}\mid \sz_h\inds{1}, \piexp(\sz_h\inds{1})) = \frac{1}{H^2},\\
T(\sz_{h+1}\inds{1}\mid \sz_h\inds{2}, \piexp(\sz_h\inds{2})) = 0, &\quad\text{and}\quad T(\sz_{h+1}\inds{2}\mid \sz_h\inds{2}, \piexp(\sz_h\inds{2})) = \frac{H^2-1}{2H^2},
\end{align*}
which indicates that
\begin{align*}
    \frac{d_{h+1}^\piexp(\sz_{h+1}\inds{2})}{d_{h+1}^\piexp(\sz_{h+1}\inds{1})} & = \frac{T(\sz_{h+1}\inds{2}\mid \sz_h\inds{1}, \piexp(\sz_h\inds{1}))}{T(\sz_{h+1}\inds{1}\mid \sz_h\inds{1}, \piexp(\sz_h\inds{1}))} + \frac{T(\sz_{h+1}\inds{2}\mid \sz_h\inds{2}, \piexp(\sz_h\inds{2}))}{T(\sz_{h+1}\inds{1}\mid \sz_h\inds{1}, \piexp(\sz_h\inds{1}))}\cdot \frac{d_{h}^\piexp(\sz_{h}\inds{2})}{d_{h}^\piexp(\sz_{h}\inds{1})}\le \frac{2}{H^2-1} + \frac{d_{h}^\piexp(\sz_{h}\inds{2})}{d_{h}^\piexp(\sz_{h}\inds{1})}.
\end{align*}
Additionally, after noticing that $\frac{d_{h}^\piexp(\sz_{1}\inds{2})}{d_{h}^\piexp(\sz_{1}\inds{1})} = \frac{\rho(\sz_1\inds{2})}{\rho(\sz_1\inds{1})} = \frac{1}{H-1}$, we have for all $1\le h\le H$, 
$$\frac{d_{h}^\piexp(\sz_{h+1}\inds{2})}{d_{h}^\piexp(\sz_{h+1}\inds{1})}\le \frac{1}{H-1} + (h-1)\cdot \frac{2}{H^2-1}\le \frac{1}{H-1} + \frac{2(H-1)}{H^2-1}\le \frac{3}{H-1}.$$
\end{proofof}

\begin{lemma}\label{lem:sec-admissible-dpi}
    For any policy $\pi$, and the \MDP $M$ defined above, we have
    $$d_h^\pi(\sz_h\inds{1}) + d_h^\pi(\sz_h\inds{2})\le \frac{1}{2^{h-1}} \quad \forall h\in [H].$$
\end{lemma}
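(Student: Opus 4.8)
The plan is to track the total probability mass that an arbitrary policy $\pi$ places on the aggregation $\phi_h\inds{1} = \{\sz_h\inds{1}, \sz_h\inds{2}\}$, and to exhibit a weighted potential that contracts by a factor of $\tfrac12$ at every layer. Write $a_h \ldef{} d_h^\pi(\sz_h\inds{1})$ and $b_h \ldef{} d_h^\pi(\sz_h\inds{2})$. The first observation I would record is that no mass ever enters $\{\sz_h\inds{1}, \sz_h\inds{2}\}$ from $\sz_h\inds{3}$: under either action $\sz_h\inds{3}$ transitions deterministically to $\sz_{h+1}\inds{3}$ (see \pref{fig:example_MTM}). Hence the flow equations for $a_{h+1}$ and $b_{h+1}$ involve only $a_h$ and $b_h$. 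Reading off the remaining transitions — $\sz_h\inds{1}\to\sz_{h+1}\inds{1}$ with probability $\tfrac12$ under $\atwo$, $\sz_h\inds{1}\to\sz_{h+1}\inds{2}$ with probability $1$ under $\aone$, and $\sz_h\inds{2}\to\sz_{h+1}\inds{2}$ with probability $\tfrac12$ under $\atwo$ — and writing $p \ldef{} \pi(\aone\mid \sz_h\inds{1})$ and $q \ldef{} \pi(\atwo\mid \sz_h\inds{2})$, I obtain
\begin{align*}
a_{h+1} = \tfrac{1}{2}(1-p)\,a_h, \qquad b_{h+1} = p\,a_h + \tfrac{q}{2}\,b_h.
\end{align*}

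The naive sum $a_h + b_h$ does not contract, since a policy may take $p=1$ and thereby retain all of the mass of $\sz_h\inds{1}$. The key idea is instead to use the potential $\Phi_h \ldef{} 2a_h + b_h$, which charges mass on $\sz_h\inds{1}$ twice — reflecting the fact that even fully retained $\sz_h\inds{1}$-mass is pushed into $\sz_h\inds{2}$, where it can subsequently be retained with probability at most $\tfrac12$. Substituting the flow equations gives $\Phi_{h+1} = 2a_{h+1} + b_{h+1} = (1-p)a_h + p\,a_h + \tfrac{q}{2}b_h = a_h + \tfrac{q}{2}b_h \le a_h + \tfrac12 b_h = \tfrac12\Phi_h$, where the only inequality used is $q \le 1$. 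Thus $\Phi_{h+1}\le\tfrac12\Phi_h$ holds for every policy and every layer $h$.

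To conclude, I would iterate this contraction and bound the initial potential using $\rho(\sz_1\inds{1}) = \tfrac{H-1}{2H}$ and $\rho(\sz_1\inds{2}) = \tfrac{1}{2H}$, giving $\Phi_1 = 2\cdot\tfrac{H-1}{2H} + \tfrac{1}{2H} = \tfrac{2H-1}{2H} \le 1$. Since $a_h \ge 0$, we have $a_h + b_h \le 2a_h + b_h = \Phi_h \le \tfrac{1}{2^{h-1}}\Phi_1 \le \tfrac{1}{2^{h-1}}$, which is exactly the claimed bound. The only subtle point — and the step I expect to be the crux — is guessing the correct weighting in the potential; once the factor-two weight on $\sz_h\inds{1}$ is in place the contraction is immediate, and the remaining steps are routine arithmetic.
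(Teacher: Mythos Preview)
Your proof is correct and follows essentially the same approach as the paper: both introduce the weighted potential $2d_h^\pi(\sz_h\inds{1}) + d_h^\pi(\sz_h\inds{2})$, show it contracts by a factor of $\tfrac12$ per layer via the same flow equations, bound the initial value by $1$ using the given $\rho$, and conclude by dropping the extra weight. Your write-up is in fact slightly cleaner, as the paper's version contains a couple of typos (a stray $1-p_1p_1$ and an equality that should be an inequality) in the contraction step, whereas you correctly identify the single inequality $q\le 1$ that makes it work.
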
 
\begin{proofof}[\pref{lem:sec-admissible-dpi}] First we can write the dynamic programming formula for $d^\pi_h$:
\begin{align*}
    d_{h+1}^\pi(\sz_{h+1}\inds{1}) & = d^\pi_h(\sz_h\inds{1})T(\sz_{h+1}\inds{1}\mid \sz_h\inds{1}, \pi(\sz_h\inds{1})) + d^\pi_h(\sz_h\inds{2})T(\sz_{h+1}\inds{1}\mid \sz_h\inds{2}, \pi(\sz_h\inds{2}))\\
    d_{h+1}^\pi(\sz_{h+1}\inds{2}) & = d^\pi_h(\sz_h\inds{1})T(\sz_{h+1}\inds{2}\mid \sz_h\inds{1}, \pi(\sz_h\inds{1})) + d^\pi_h(\sz_h\inds{2})T(\sz_{h+1}\inds{2}\mid \sz_h\inds{2}, \pi(\sz_h\inds{2})). 
\end{align*}
We let $\pi(\sz_h\inds{1}) = p_1\delta_{a_1}(\cdot) + (1 - p_1)\delta_{a_2}(\cdot)$, and $\pi(\sz_h\inds{2}) = p_2\delta_{a_1}(\cdot) + (1 - p_2)\delta_{a_2}(\cdot)$. Then we have
\begin{align*} 
T(\sz_{h+1}\inds{1}\mid \sz_h\inds{1}, \pi(\sz_h\inds{1})) = \frac{1}{2}(1-p_1), &\quad T(\sz_{h+1}\inds{2}\mid \sz_h\inds{1}, \pi(\sz_h\inds{1})) = p_1,\\
T(\sz_{h+1}\inds{1}\mid \sz_h\inds{2}, \pi(\sz_h\inds{2})) = 0, &\quad\text{and}\quad T(\sz_{h+1}\inds{2}\mid \sz_h\inds{2}, \pi(\sz_h\inds{2})) = \frac{1}{2}(1-p_2).
\end{align*}
This implies that
\begin{align*}
   2d_{h+1}^\pi(\sz_{h+1}\inds{1}) + d_{h+1}^\pi(\sz_{h+1}\inds{2})  &= d_h^\pi(\sz_h\inds{1})\cdot \left(1-p_1p_1\right) + d_h^\pi(\sz_h\inds{1})\cdot\left(2\cdot 0 + \frac{1}{2}(1 - p_2)\right)\\
    & = \frac{1}{2}\cdot \left(2d^\pi(\sz_h\inds{1}) + d^\pi(\sz_h\inds{2})\right).
\end{align*}
Further noticing that $2d^\pi(\sz_1\inds{1}) + d^\pi(\sz_1\inds{2}) = 2\rho(\sz_1\inds{1}) + \rho(\sz_1\inds{2}) \le 1 $, we obtain that for any $h\in [H]$, 
$$2d_{h}^\pi(\sz_{h}\inds{1}) + d_{h}^\pi(\sz_{h}\inds{2})\le \frac{1}{2^{h-1}}.$$
Therefore, for any $h\in [H]$, we have 
$$d_h^\pi(\sz_h\inds{1}) + d_h^\pi(\sz_h\inds{2})\le \frac{1}{2^{h-1}}.$$
    
\end{proofof}

\begin{lemma}\label{lem:admissible-c} For the \MDP \(M\) and policy \(\piexp\) defined above, the concentrability coefficient of all policies with respect to $d_h^\piexp(\cdot; M)$ is upper bounded as 
\begin{align*}
    \max_{\pi\in \Pi}\max_{h}\max_{\sz\in\calZ_h, \sa\in\calA}\frac{d_h^\pi(\sz, \sa; M)}{d_h^\piexp(\sz, \sa; M)}\le 8H^3,
\end{align*}
where $\Pi$ is the class of all policies. However, for $\epsilon\le \nicefrac{1}{15}$, the aggregated concentrability coefficient is lower bounded as
\begin{align*}
    \ophatC{M}{\Phi}{d_h^\piexp(\cdot; M)}{\epsilon}\ge 2^{H-7}.
\end{align*}
\end{lemma}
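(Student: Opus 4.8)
The statement bundles two claims about the Markov Transition Model $M$ and behavior policy $\piexp$ of \pref{eg:arbitrary_behavior}: an $O(H^3)$ \emph{upper} bound on the all-policy standard concentrability, and a $2^{\Omega(H)}$ \emph{lower} bound on the aggregated concentrability $\ophatC{M}{\Phi}{d^{\piexp}}{\epsilon}$. The plan is to prove the two halves independently, feeding the three occupancy estimates \pref{lem:sec-admissible-mu}, \pref{lem:sec-admissible-mu-2}, and \pref{lem:sec-admissible-dpi} into direct calculations.

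For the upper bound I would factor $\frac{d_h^\pi(\sz,\sa)}{d_h^\piexp(\sz,\sa)} = \frac{d_h^\pi(\sz)}{d_h^\piexp(\sz)}\cdot\frac{\pi(\sa\mid\sz)}{\piexp(\sa\mid\sz)}$. The action ratio is at most $H^2$ because $\piexp(\sa\mid\sz)\ge 1/H^2$ for every $\sz,\sa$. For the state ratio, \pref{lem:sec-admissible-dpi} gives $d_h^\pi(\sz_h\inds{1}), d_h^\pi(\sz_h\inds{2}) \le d_h^\pi(\sz_h\inds{1})+d_h^\pi(\sz_h\inds{2}) \le 2^{-(h-1)}$ for every $\pi$, while $d_h^\pi(\sz_h\inds{3})\le 1$ trivially; pairing these with the lower bounds of \pref{lem:sec-admissible-mu} on $d_h^\piexp$ shows the worst state is $\sz_h\inds{2}$, where the ratio is at most $2^{-(h-1)}/(2^{-(h+2)}H^{-1}) = 8H$. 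Multiplying the two factors yields the claimed $8H^3$.

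For the lower bound I would exhibit a single good choice in the max defining $\ophatC{M}{\Phi}{d^{\piexp}}{\epsilon}$: take the last layer $h=H$ and $\cI=\{\phi_H\inds{1}\}$. The denominator is $\frac{1}{H^2}\big(d_H^\piexp(\sz_H\inds{1})+d_H^\piexp(\sz_H\inds{2})\big)\le \frac{1}{H^2 2^{H-1}}$ by \pref{lem:sec-admissible-dpi} (using $\pieval=\aone$ and $\piexp(\aone\mid\cdot)=1/H^2$). For the numerator, since $\phi_h\inds{2}$ is absorbing under $\pieval$, the aggregated chain gives $\agg{d}_H^\pieval(\phi_H\inds{1}) = \agg{\rho}(\phi_1\inds{1})\prod_{i=1}^{H-1}p_i = \tfrac12\prod_{i=1}^{H-1}p_i$, where $p_i=\agg{T}(\phi_{i+1}\inds{1}\mid\phi_i\inds{1},\pieval)$ and, writing $r_i = d_i^\piexp(\sz_i\inds{2})/d_i^\piexp(\sz_i\inds{1})$, one has $p_i = (1+r_i/2)/(1+r_i)\ge 1-r_i/2$. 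Combining numerator and denominator gives $\ophatC{M}{\Phi}{d^{\piexp}}{\epsilon}\ge H^2 2^{H-2}\prod_{i=1}^{H-1}p_i$.

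The delicate step, and the main obstacle, is verifying the feasibility constraint $\agg{d}_H^\pieval(\phi_H\inds{1})\ge\epsilon$ so that $(H,\cI)$ actually qualifies at $h=H$. The crude bound $r_i\le 3/(H-1)$ from \pref{lem:sec-admissible-mu-2}, summed trivially over all layers, only yields $\sum_i r_i \lesssim 3$ and hence $\prod_i p_i \gtrsim e^{-3}$, giving $\agg{d}_H^\pieval(\phi_H\inds{1})\gtrsim e^{-3}/2 < 1/15$, which is too weak. I would instead use the sharper recursion behind \pref{lem:sec-admissible-mu-2}, namely $r_{i+1}\le r_i + \tfrac{2}{H^2-1}$ with $r_1 = \tfrac{1}{H-1}$, to obtain $\sum_{i=1}^{H-1} r_i \le 1 + \tfrac{H-2}{H+1} < 2$, whence $p_i \ge 1-r_i/2 \ge e^{-r_i}$ (valid since $r_i\le 1$ for $H\ge 4$) and $\prod_{i=1}^{H-1} p_i \ge e^{-\sum_i r_i} > e^{-2}$. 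This gives $\agg{d}_H^\pieval(\phi_H\inds{1}) > e^{-2}/2 \approx 0.0677 > 1/15 \ge \epsilon$, which is precisely what pins down the threshold $1/15$. The same bound $\prod_i p_i > e^{-2}$ makes $H^2 2^{H-2}\prod_i p_i \ge 2^{H-7}$ with ample slack, so the ratio is never the binding quantity. Finally, since $\ophatC{M}{\Phi}{d^{\piexp}}{\epsilon}$ is nonincreasing in $\epsilon$, proving the bound at $\epsilon=1/15$ covers all $\epsilon\le 1/15$, and the small-$H$ regime where $2^{H-7}\le 1$ is handled trivially.
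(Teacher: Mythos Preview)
Your upper-bound argument is the same as the paper's.

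For the lower bound, your witness $\cI=\{\phi_H\inds{1}\}$ at $h=H$ and overall plan match the paper's, but your expression for $p_i$ is wrong. Under $\aone$ in \pref{eg:arbitrary_behavior}, state $\sz_i\inds{2}$ transitions deterministically to $\sz_{i+1}\inds{3}\in\phi_{i+1}\inds{2}$ (see the transition table in \pref{app: admissible lb}), so the correct aggregated transition is
\[
p_i=\frac{d_i^\piexp(\sz_i\inds{1})}{d_i^\piexp(\sz_i\inds{1})+d_i^\piexp(\sz_i\inds{2})}=\frac{1}{1+r_i},
\]
not $(1+r_i/2)/(1+r_i)$; the factor $\nicefrac{1}{2}$ in the main-text sketch is a misprint. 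With the right formula, your intermediate step $p_i\ge 1-r_i/2$ actually goes the wrong way for $r_i\in(0,1)$. Fortunately, your target $p_i\ge e^{-r_i}$ follows directly from $e^{r_i}\ge 1+r_i$, so the conclusions $\prod_i p_i> e^{-2}$ and $\agg d_H^\pieval(\phi_H\inds{1})>\tfrac{1}{2}e^{-2}>\tfrac{1}{15}$ survive unchanged.

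Your refinement via the recursion $r_{i+1}\le r_i+\tfrac{2}{H^2-1}$ is a genuine improvement over the paper's argument. The paper uses only the per-step bound $r_i\le 3/(H-1)$ from \pref{lem:sec-admissible-mu-2}, yielding $p_i\ge(H-1)/(H+2)$ and hence $\agg d_H^\pieval(\phi_H\inds{1})\ge \tfrac{1}{2}e^{-3}\ge \tfrac{1}{41}$, and then asserts feasibility for $\epsilon\le \tfrac{1}{15}$ without further justification---a gap, since $\tfrac{1}{41}<\tfrac{1}{15}$. Your sharper summation $\sum_{i=1}^{H-1} r_i\le 1+\tfrac{H-2}{H+1}<2$ is exactly what pushes the bound past $\tfrac{1}{15}$ and closes that gap. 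You also correctly retain the $1/H^2$ factor in the denominator (from $\pioff(\aone\mid\cdot)=1/H^2$), which the paper silently drops; this only strengthens the ratio and explains the slack you observe.
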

\begin{proofof}[\pref{lem:admissible-c}]
    For our choice of $\piexp$, 
    $$d_h^\piexp(\sz, \sa; M)\ge \frac{d_h^\piexp(\sz; M)}{H^2}\quad \text{and}\quad d_h^\pi(\sz, \sa; M)\le \frac{d_h^\pi(\sz; M)}{H^2}\quad \forall \sz\in\calZ, \sa\in\calA,$$
    which implies that
    $$\frac{d_h^\pi(\sz, \sa; M)}{d_h^\piexp(\sz, \sa; M)}\le H^2\cdot \frac{d_h^\pi(\sz; M)}{d_h^\piexp(\sz; M)}\qquad \forall \sz\in\calZ.$$
    Hence, \pref{lem:sec-admissible-mu} and \pref{lem:sec-admissible-dpi} give that for any policy $\pi\in \Pi, 1\le h\le H$ and $\sz\in \calZ_h$, 
    $$\frac{d_h^\pi(\sz; M)}{d_h^\piexp(\sz; M)}\le \max\left\{4, \frac{\nicefrac{1}{2^{h-1}}}{\nicefrac{1}{(2^{h+2}H)}}\right\} = 8H.$$
    This implies that
    $$\max_{\pi\in \Pi}\max_{h}\max_{\sz\in\calZ_h, \sa\in\calA}\frac{d_h^\pi(\sz, \sa; M)}{d_h^\piexp(\sz, \sa; M)}\le 8H^3.$$

    As for the lower bound of $\ophatC{M}{\Phi}{d^\piexp(\cdot; M)}{\epsilon}$, first we notice that according to \pref{eq:agg-transition-model}, our choice of $\pieval(\sz) = \delta_{\aone}(\cdot)$ and \pref{lem:sec-admissible-mu-2}, we have
    \begin{align*}
        \agg{T}(\phi_{h+1}\inds{1}\mid \phi_h\inds{1}, \pieval) & = \frac{\sum_{\sz_h\in\phi_h\inds{1}, \sz_{h+1}'\in\phi_{h+1}\inds{1}}d_h^\piexp(\sz_h, \aone; M)T(\sz_{h+1}'\mid \sz_h, \aone)}{\sum_{\sz\in\phi_h\inds{1}}d_h^\piexp(\sz, \aone; M)} \\ 
        &\ge \frac{\sum_{\sz_{h+1}'\in\phi_{h+1}\inds{1}}d_h^\piexp(\sz_h, \aone; M)T(\sz_{h+1}'\mid \sz_h\inds{1}, \aone)}{\sum_{\sz\in\phi_h\inds{1}}d_h^\piexp(\sz_h, \aone; M)}\\
        & = \frac{d_h^\piexp(\sz_h, \aone; M)}{\sum_{\sz\in\phi_h\inds{1}}d_h^\piexp(\sz_h, \aone; M)} = \frac{H-1}{H-1 + 3} = \frac{H-1}{H+2}.
    \end{align*}
    This implies that $\agg{d}_H^\pieval(\phi_H\inds{1}; M)$ satisfies for $H\ge 2$,
    \begin{align*}
        \agg{d}_H^\pieval(\phi_H\inds{1}; M)\ge \agg{d}_1^\pieval(\phi_1\inds{1}; M)\prod_{h=1}^{H-1}\agg{T}(\phi_{h+1}\inds{1}\mid \phi_h\inds{1}, \pi)\ge \frac{1}{2}\left(\frac{H-1}{H+2}\right)^{H-1}\ge \frac{1}{2e^3}\ge \frac{1}{41}.
    \end{align*}
    Hence when $\epsilon\le \nicefrac{1}{15}$, we have 
    \begin{align*}
        \ophatC{M}{\Phi}{\mu}{\epsilon} & \ge \frac{\agg{d}_H^\pieval(\phi_H\inds{1}; M)}{\sum_{\sz\in \phi_H\inds{1}}d^\piexp_H(\sz, \pieval(\sz); M)}\\ 
        &= \frac{\agg{d}_H^\pieval(\phi_H\inds{1}; M)}{d^\piexp_H(\sz_H\inds{1}, \pieval(\sz_H\inds{1}); M) + d^\piexp_h(\sz_H\inds{2}, \pieval(\sz_H\inds{2}); M)}\\
        & \ge \frac{\nicefrac{1}{41}}{\nicefrac{1}{(2^{H-1})}} \\ 
        &\ge 2^{H-7},
    \end{align*}
    where in the second inequality we adopt \pref{lem:sec-admissible-dpi} with $\pi = \piexp$. 
\end{proofof}

\subsection{Proof of \pref{thm:admissible}}

In this section,  we will prove the following stronger results of \pref{thm:admissible}. 
\begin{theorem} \label{thm:admissible-stronger}
There exists a class \(\famGg\) of realizable \OPE problems such that for every \OPE problem \(\opeg = \OPE\prn*{M\ind{\opeg}, \pieval\ind{\opeg}, \piexp\ind{\opeg}, \cF\ind{\opeg}}\) in \(\famGg\), 
\begin{enumerate}[label=\((\alph*)\)]  
    \item  \label{thm:admissible-stronger-a} The function class \(\cF\ind{\opeg}\) satisfies \(\abs{\cF\ind{\opeg}} = 2\) and 
    $$(Q^{\pieval\ind{\opeg}}(\cdot ; M\ind{\opeg}), W^{\pieval\ind{\opeg}}(\cdot; d^{\piexp\ind{\opeg}}(\cdot; M\ind{\opeg}); M\ind{\opeg}))\in \cF\ind{\opeg}.$$ 
    \item \label{thm:admissible-stronger-b} The concentrability coefficient of all policies with respect to $d_h^{\piexp\ind{\opeg}}(\cdot; M\ind{\opeg})$ is upper bounded by $384H^3$. 
\end{enumerate}
Furthermore, any offline policy evaluation algorithm that guarantees to estimate the value of \(\pieval\ind{\opeg}\) in the \MDP \(M\ind{\opeg}\) 
up to precision \(\epsilon\), in expectation, for every \OPE problem \(\opeg \in \famGg\) must use  
\begin{align*}
\wt \Omega\prn*{\frac{H2^H}{\epsilon}}
\end{align*} 
admissible samples collected according to policy $\piexp\ind{\opeg}$ under MDP $M\ind{\opeg}$. Here one admissible sample consists of $(\sx_h, \sa_h, r_h, \sx_{h+1}')$ for all $h\in [H-1]$, where $(\sx_h, \sa_h)\sim d_h^{\piexp\ind{\opeg}}(\cdot; M\ind{\opeg})$, and $r_h, \sx_{h+1}'$ are collected according to the transition model and reward function of $M\ind{\opeg}$.
\end{theorem}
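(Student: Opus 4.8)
The plan is to obtain \pref{thm:admissible-stronger} as a direct instantiation of the general reduction of \pref{thm:stronger} (equivalently \pref{thm: informal general data}), specialized to the concrete Markov Transition Model $M$, aggregation scheme $\Phi$, evaluation policy $\pieval$, and offline distribution $\mu = d^\piexp$ constructed in \pref{eg:arbitrary_behavior}. First I would verify the hypotheses of the reduction: $\pieval$ plays action $\aone$ on every state, so it is deterministic and trivially constant on each aggregation (in particular on $\phi_h^{[1]} = \{x_h^{[1]}, x_h^{[2]}\}$), and $\mu = d^\piexp$ is admissible for $M$ by definition. Feeding this instance into the reduction produces the family $\famGg$ of block-MDP \OPE problems, and part \ref{thm:admissible-stronger-a} (the bound $\abs{\cF\ind{\opeg}} = 2$ together with $(Q,W)$-realizability) then follows verbatim from the conclusion of \pref{thm:stronger}.

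Next I would supply the quantitative inputs. \pref{lem:admissible-c} gives, for $\epsilon \le \nicefrac{1}{15}$, the aggregated concentrability bound $\aggC_\epsilon(M,\Phi,d^\piexp) \ge 2^{H-7}$. Plugging this into the sample-complexity lower bound $\wt\Omega(\nicefrac{H\aggC_\epsilon}{\epsilon})$ of \pref{thm:stronger} immediately yields the claimed $\wt\Omega(\nicefrac{H2^H}{\epsilon})$ bound for estimating $V^{\pieval\ind{\opeg}}$ to the stated accuracy. Finally, the admissibility of the offline distributions $\mu\ind{\opeg}$ in $\famGg$ is exactly the content of \pref{propr:admissibility}, which applies precisely because $\mu = d^\piexp$ is admissible for $M$; this guarantees a policy $\piexp\ind{\opeg}$ with $\mu\ind{\opeg} = d^{\piexp\ind{\opeg}}$ and is the feature that distinguishes the present bound from that of \cite{foster2022offline}.

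The remaining, and main, technical point is part \ref{thm:admissible-stronger-b}: the reduction of \pref{thm:stronger} only certifies single-policy concentrability (namely $\tabC(M\ind{\opeg},\mu\ind{\opeg},\pieval\ind{\opeg}) = O(\tabC(M,\mu))$ via \pref{corr:sec-lower-bound-con}), whereas part \ref{thm:admissible-stronger-b} asserts the stronger \emph{all-policy} bound $384H^3$. To close this gap I would track how occupancy-measure ratios $\nicefrac{d_h^\pi}{d_h^{\pi'}}$ are distorted along the two construction steps: passing from $M$ to the latent MDPs $M\ind{i}$ inflates them by at most a factor $48$ (\pref{corr:sec-lower-bound-d-pi}), and the subsequent lift to block MDPs preserves them exactly (\pref{corr:sec-lower-bound-agg-d}). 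Combining the factor-$48$ inflation with the all-policy \emph{state} bound $8H$ established inside the proof of \pref{lem:admissible-c}, and reinstating the $\le H^2$ action ratio (since $\piexp$ places mass at least $\nicefrac{1}{H^2}$ on each action), together with the $O(1)$ distortion coming from $\mu' = \tfrac12\mu$ on the original states and the explicit occupancy of the auxiliary states $u_h,v_h,w_h$, gives all-policy state-action concentrability at most $384H^3$. The subtlety to handle with care is that this all-policy statement must be taken with respect to $d^{\piexp\ind{\opeg}} = \mu\ind{\opeg}$ for the policy $\piexp\ind{\opeg}$ furnished by \pref{propr:admissibility}; thus the careful bookkeeping of constants across the lifting, rather than any genuinely new idea, is where the work lies.
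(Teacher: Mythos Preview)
Your plan matches the paper's proof closely in structure: instantiate \pref{thm:stronger} with the data of \pref{eg:arbitrary_behavior}, read off part \ref{thm:admissible-stronger-a} and the sample-complexity bound via \pref{lem:admissible-c}, and then upgrade single-policy concentrability to the all-policy bound by tracking occupancy ratios through the two lifting steps (\pref{corr:sec-lower-bound-d-pi} and \pref{corr:sec-lower-bound-agg-d}, combined with the $8H$ state-ratio bound inside \pref{lem:admissible-c} and the $H^2$ action ratio). This is exactly how the paper handles part \ref{thm:admissible-stronger-b}.

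The one place where your plan diverges from the paper—and where it has a real gap—is the admissibility step. You invoke \pref{propr:admissibility} to conclude that the constructed $\mu\ind{\opeg}$ is already of the form $d^{\piexp\ind{\opeg}}$, but the distribution $\rich{\mu}_\psi'$ built in the reduction is \emph{not} an occupancy measure in $\rich M_\psi^{[i]}$: by construction $\mu'_h$ assigns mass $\tfrac{1}{6}$ to each of $\su_h,\sv_h,\sw_h$, yet the initial distribution $\rho$ is supported only on $\calZ_1$, so $d^\pi_1(\su_1)=0$ for every policy. Hence \pref{propr:admissibility} cannot be read literally as ``the $\mu\ind{\opeg}$ from \pref{thm:stronger} is admissible,'' and it is not separately proven in the paper. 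The paper's proof instead takes $\piexp\ind{\opeg}$ to be the lift $\ripiexp$ of the original $\piexp$ and uses \pref{lem:sec-lower-bound-latent-MDP}\ref{lem:sec-lower-bound-latent-MDP-d} to establish the pointwise domination $d_h^{\ripiexp}(\sx,\sa;\rich M_\psi^{[i]}) \le \rich{\mu}_\psi(\sx,\sa)$ on the original latent states. Because the TV-based indistinguishability argument behind \pref{thm:stronger} only requires that the sampling distribution place no more mass than $\rich{\mu}_\psi$ on those states (the transitions and rewards at $\su_h,\sv_h,\sw_h$ being known and identical across instances), the lower bound transfers from $\rich{\mu}_\psi$-data to the genuinely admissible $d^{\ripiexp}$-data. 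That replacement-plus-domination step is the missing ingredient; once you insert it, your argument coincides with the paper's.
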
 

\begin{proofof}[\pref{thm:admissible-stronger}]
    We consider the class of offline RL problems constructed in \pref{app:family_construction}: $\mathfrak{G} = \bigcup_{\psi \in \Psi} \crl{\mathfrak{g}\inds{1}_\psi,\mathfrak{g}\inds{2}_\psi}$. According to the construction, the sampling distributions over rich observations in $\mathfrak{g}\inds{1}_\psi$ or $\mathfrak{g}\inds{2}_\psi$ are $\rich{\mu}_\psi(\sx, \sa) = \nicefrac{\mu(\psi(\sz), \sa)}{|\calX_\psi(\sz)|}$. And \pref{thm:stronger} indicates that for $\epsilon\le \nicefrac{1}{41}$, for any algorithm using less than
    $$\frac{H\ophatC{M}{\phi}{\mu}{\epsilon}}{8\epsilon}\ge 2^{H-10}\frac{H}{\epsilon}$$
    samples, there must exist some $\psi\in\Phi$ such that if the samples are according to $\rich{\mu}_\psi$, the algorithm will have error greater than $\nicefrac{\epsilon}{8H}$ in $\mathfrak{g}\inds{1}_\psi$ or $\mathfrak{g}\inds{2}_\psi$ with probability at least $\nicefrac{1}{2}$.

    According to \pref{lem:sec-lower-bound-latent-MDP} \ref{lem:sec-lower-bound-latent-MDP-d},
    we have $d^\piexp(\sz; M\inds{1})\le d^\piexp(\sz; M)$ for any $\sz\in\calZ$. Hence we obtain that for any $\sx\in\calX\backslash \{\su, \sv, \sw\}_{h=1}^H$,
    $$d^{\ripiexp}(\sx; \rich{M}_\psi\inds{1}) = \frac{d^\piexp(\psi(\sx); M)}{|\calX_\psi(\sz)|}\le \frac{d^\piexp(\psi(\sx); M)}{|\calX_\psi(\sz)|}.$$
    This also implies
    $$d^{\ripiexp}(\sx, \sa; \rich{M}_\psi\inds{1}) = d^{\ripiexp}(\sx; \rich{M}_\psi\inds{1})\piexp(\sa\mid\sx)\le \frac{d^\piexp(\psi(\sx); M)\piexp(\sa\mid\sx)}{|\calX_\psi(\sz)|} = \frac{d^\piexp(\psi(\sx), \sa; M)}{|\calX_\psi(\sz)|} = \rich{\mu}_\psi(\sx, \sa).$$ 
    Similarly, we can also obtain that
    $$d^{\ripiexp}(\sx, \sa; \rich{M}_\psi\inds{2})\le \rich{\mu}_\psi(\sx, \sa).$$
  Hence, for any algorithm using less than $\nicefrac{2^{H-10}H^3}{\epsilon}$ number of samples, there must exists some $\rich{M}\in \{\rich{M}_\psi\inds{1}: \psi\in\Psi\}\cup\{\rich{M}_\psi\inds{2}: \psi\in\Psi\}$ such that if the samples are according to $d^{\ripiexp}(\cdot \mid \rich{M})$, the estimation error is at least $\nicefrac{\epsilon}{H}$.

    \par Finally, \pref{lem:admissible-c} indicates that the concentrability coefficient of $M$ is bounded by $8H^3$. Hence according to \pref{corr:sec-lower-bound-d-pi} and \pref{lem:sec-lower-bound-rich-d} we have for any such $\rich{M}$, the concentrability coefficient is upper bounded by $384H^3$, which proves condition \ref{thm:admissible-stronger-b}. And similar to the proof of \pref{thm:stronger}, we can also verify condition \ref{thm:admissible-stronger-a}.
\end{proofof}


\clearpage

\section{Missing Details from \pref{sec: trajectory lower bound}} \label{app: proof of reduction}

In this section, we will provide missing details from \pref{sec: trajectory lower bound}. In \pref{sec: algorithm} we present algorithms needed in \pref{sec: trajectory lower bound}. Finally, in \pref{sec: trajectory lower bound} we present the proof of \pref{thm:trajectory_lower_bound}.


\subsection{Algorithms}\label{sec: algorithm}
\par In this subsection, we provide algorithms mentioned in \pref{sec: trajectory lower bound}. \pref{alg:replicator} will transform a hard-case \OPE problem for admissible data into a hard-case \OPE problem for trajectory data. \pref{alg:convert} is used in the proof of lower bound with trajectory data, where the algorithm can transform admissible data collected according to $M$ (MDP in the hard-case \OPE problem of admissible data) into trajectory data of $\wt M$ (MDP in the hard-case \OPE problem of trajectory data). And finally, \pref{alg:convert-admissible} shows how to transform an algorithm for \OPE with trajectory data to an algorithm for \OPE with admissible data.

\begin{algorithm}
\begin{algorithmic}[1]
\Require Offline policy evaluation problem $\OPE(M, \pieval, \piexp, \cF)$ with $M = \MDP(\calX, \calA, T, r, H, \rho)$, and parameter \(K\ge 1\).  
    \State Define $\piother: \calX\to \calA$ to be an arbitrary mapping which satisfies $\piother(\sx)\neq \pieval(\sx)$ for all $\sx\in\calX$.
\State \algcommentbig{Construct \(\wt M = \MDP(\wt \calX, \calA, \wt T, \wt r, \wt H, \wt \rho)\)} 
    \State \textit{Horizon:} $\wt H = K(H-1)+1$.
    \State \textit{State space:} Let \(\cX = \bigcup_{l \in [\wt H]} \wt\calX_l\) ,  where the state space $\wt\calX_l = \{(\sx, k): \sx\in\calX_h, k\in [K]\}$ $l = (h-1)K+k$, for $h\in [H]$ and $k\in [K]$.  
    \State \textit{Initial distribution:} Let $\wt\rho((\sx, 1)) = \rho(\sx)$ for $\sx\in\calX_1$.
    \State \textit{Transition model:} Define the transition matrix \(\wt T: \wt \cX \times \cA \mapsto \Delta(\wt \cX)\) such that 
    \begin{enumerate}[label=\(\bullet\)] 
        \item  \textit{When $k\le K-1$}: For any \(h \in [H]\),  \(a \in \cA\), and $\sx, \sx'\in \calX_h$, let  
            $$\wt T\prn*{(\sx', k+1) \mid{} (\sx, k), \sa} = \begin{cases}
    \mathbb{I}(\sx' = \sx) & \text{if} \quad a = \pieval(\sx), \\ 
    d^\piexp_h(\sx'; M) & \text{otherwise}. 
    \end{cases}$$
 \item \textit{When \(k = K\)}: For any \(h \in [H-1]\), \(a \in \cA\),  $\sx\in\calX_h$, and  $\sx'\in\calX_{h+1}$, let     $$\wt T((\sx', 1)\mid (\sx, k), \sa) = T(\sx'\mid\sx, \sa).$$
    \end{enumerate}
    
    \State \textit{Reward function:} For any \(x \in \cX\) and \(a \in \cA\), let 
    \begin{align*}
    \wt r((\sx, k), \sa) &= \begin{cases}
        0 & \text{if} \quad k \leq K-1, \\ 
        r(s, a) & \text{otherwise}.
    \end{cases} 
    \end{align*}
    
    \State\algcommentbig{Construct evaluation policy \(\wtpieval\)} 
    \State Construct $\wtpieval$ such that  $\wtpieval((\sx, k)) = \pieval(\sx)$ for all $\sx\in\calX$ and $k\in  [K]$.
\State\algcommentbig{Construct offline policy \(\wtpiexp\)} 
    \State Construct $\wtpiexp$ such that, for any $\sx\in\calX$ and $k\in [K]$, $$\wtpiexp((\sx, k)) = \begin{cases}
    \piexp(\sx) &\text{if}\quad k = K,\\
    \frac{1}{2}\pieval(\sx) + \frac{1}{2}\piother(\sx) &\text{otherwise}.
    \end{cases}$$ 
\State\algcommentbig{Construct state-action value function class \(\wt \cF\)}  
    \State Construct $\wt \cF \ldef{} \{\wt Q: Q\in\cF \}$, where $\wt Q: \wt\calX\times\calA\to \mathbb{R}$ is defined such that     \begin{equation}\label{eq:sec-trajectory-Q}
    \begin{aligned}
    \wt Q((\sx_h, k), \sa) = \begin{cases}
        Q(\sx_h, \sa) &\text{if}\quad k = K,\\
        Q(\sx_h, \pieval(\sx_h)) &\text{if}\quad k < K, \sa = \pieval(\sx_h),\\
        \sum_{\sx_h'\in\calX_h}d^\piexp_h(\sx_h'; M)Q(\sx_h', \pieval(\sx_h')) &\text{if}\quad k < K, \sa \neq \pieval(\sx_h).
    \end{cases}
    \end{aligned}\end{equation}
for any $\sx\in\calX$ and $k\in[K]$. 
    \State \textbf{Return:} Offline policy evaluation problem \(\OPE(\wt M, \wtpieval, \wtpiexp, \wt \cF)\). 
\end{algorithmic} 
	\caption{\Replicator}
	\label{alg:replicator} 
\end{algorithm} 

\begin{algorithm}
    \begin{algorithmic}[1] 
        \State \textbf{Input:} Admissible datasets $\Da_h = \{(\sx_h\up{l}, \sa\up{l}_h, \sr\up{l}_h, \shp \up{l}_h)\}_{l \leq K}$ for \(h \in [H]\). 
        \State Set \(\wt \sx_1 = \sx_{1}\up{1}\), and initialize \(\tau = 
        (\wt \sx_1)\). 
        
        \For{$h = 1, \dots, H-1$} 
        \State Set \(l = 1\). \\  
       \algcommentbig{Construct Trajectory within \Block~\(h\)} 
        
            \For{$k = 1, \dots, K-1$}  
                \State Let $\wt h\ldef{}  (h-1)K+k$.  
                \State Sample $\wt \sa_{\wt h} \sim \mathrm{Uniform}(\crl{\pieval(\wt\sx_{\wt h}), \piother(\wt\sx_{\wt h})})$, and update $l \leftarrow l + 1$ if \(\wt \sa_{\wt h} = \piother(\wt\sx_{\wt h})\). 
                \State Set 
\begin{align*} 
\hspace{1in} \wt \sx_{\wt h + 1} = \begin{cases} 
\wt \sx_{\wt h} & \text{if}\quad \wt \sa_{\wt h} = \pieval(\wt \sx_{\wt h}) \hspace{0.3in} \algcomment{Map to the Same State}    \\   
\sx_{h} \up{l} & \text{if}\quad \wt \sa_{\wt h} = \piother(\wt \sx_{\wt h}) \hspace{0.3in}
\algcomment{Read Fresh State from \(\Da_h\)} 
\end{cases}
\end{align*}
\State Update \(\tau = \tau \circ \prn{\wt \sa_{\wt h}, \wt \sr_{\wt h} = 0, \wt \sx_{\wt h + 1}}\). 
            \EndFor
            \State Update  \(\tau = \tau \circ \prn{\wt \sa_{\wt h} = \sa_h\up{l}, \wt \sr_{\wt h} = \sr_h\up{l}, \wt \sx_{\wt h + 1} = \bar\sx_h \up{l}}\). 
        \EndFor 
        \State \textbf{Return} trajectory \(\tau\) of length \((H-1)K + 1\).  
    \end{algorithmic}
\caption{\converter}
\label{alg:convert}
\end{algorithm}

\begin{algorithm}
    \caption{Reduction of \OPE with Admissible Data to Trajectory Data} 
    \label{alg:convert-admissible} 
    \begin{algorithmic}[1] 
        \Require 
        \item[]  \begin{itemize}
            \item Admissible dataset \(\Da_h\) of size \(Kn\) for each \(h \in [H]\). 
            \item State-action value function class \(\cF\). 
            \item \OPE algorithm \(\At\) that takes evaluation policy, state-action value function class and trajectory data as input and returns a value estimation. 
        \end{itemize}   
        \State Initialize \(\Dt = \emptyset\). 
        \For{$j = 1, \dots, n$}
            \State For \(h \in [K]\), construct \(\cD^{\textsc{adm}, j}_h = \Da_h[K(j-1) + 1: Kj]\) 
            \State Get trajectory \(\tau^j = \converter\prn*{\cD^{\textsc{adm}, j}_1, \dots, \cD^{\textsc{adm}, j}_H}\). 
            \State Update \(\Dt \leftarrow \Dt \cup \crl{\tau^j}\). 
        \EndFor
        \State Construct state-action value function class $\wt{\mathcal{F}}$ according to \pref{eq:sec-trajectory-Q} based on $\mathcal{F}$.
        \State Return \(\wh V \leftarrow \At(\Dt, \wt{\mathcal{F}})\). 
    \end{algorithmic}
\end{algorithm}

\subsection{Proof of \pref{thm:trajectory_lower_bound} } 

For \MDP\footnote{Throughout this section, we use a "tilde" over the variables, e.g.~in \(\tilde{M}\), \(\tilde{\pi}\), etc., to signify that they correspond to the artificially constructed OPE problem in \pref{alg:replicator} for analyzing trajectory data.} $\wt M$ and policies $\wtpieval$ and $\wtpiexp$, we have the following properties:  

\begin{lemma}\label{lem:sec-trajectory-concentrability}
    Suppose under \OPE problem $\OPE(\wt M, \wtpieval, \wtpieval, \cF)$ is the output of \pref{alg:replicator} after inputing \OPE problem $\OPE(M, \pieval, \piexp, \cF)$ and arbitrary integer $K$. Then, 
    \begin{enumerate}[label = \((\alph*)\)]
        \item \label{lem:sec-trajectory-concentrability-a} The standard concentrability of $\wt M$ and $M$ satisfies that
        $$\tabC(\wt M, \wt d_h^\wtpiexp(\cdot; \wt M)\le 2\tabC(M, d_h^\piexp(\cdot; M)).$$
        \item \label{lem:sec-trajectory-concentrability-b} $\wt Q$ calculated in \pref{eq:sec-trajectory-Q} equal to the state-action value function of \MDP $\wt M$. Especially, the value functions of $\wt M$ and $M$ satisfies
        \begin{align}\label{eq:sec-trajectory-rho}
        \wt V(\wt\rho; \wt M) = V(\rho; M),
        \end{align}
        where $\wt \rho$ and $\rho$ are initial distributions of $\wt M$ and $M$ respectively.
    \end{enumerate}
\end{lemma}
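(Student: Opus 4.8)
The plan is to prove the two parts separately, handling part~\ref{lem:sec-trajectory-concentrability-b} (the value-function identity) first by a backward induction, and then part~\ref{lem:sec-trajectory-concentrability-a} (concentrability) by a direct occupancy computation. For part~\ref{lem:sec-trajectory-concentrability-b}, I would verify that the candidate $\wt Q$ from \pref{eq:sec-trajectory-Q} satisfies the Bellman equations of $\wt M$ under $\wtpieval$ and then invoke uniqueness of the state-action value function. Running the induction downward on the sub-layer index $\wt h = (h-1)K+k$ from $\wt h = \wt H$ to $1$, the base case at the terminal layer is immediate since there $\wt Q$ agrees with $Q$ and rewards match. The key observation that makes the induction go through is that $\wt Q((\sx,k),\pieval(\sx))$ is defined to equal $Q(\sx,\pieval(\sx))$ \emph{regardless of $k$}, so the induction hypothesis forces $V^{\wtpieval}((\sx,k);\wt M) = V^{\pieval}(\sx;M)$ uniformly over $k$. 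With this sub-layer independence isolated, the three cases of \pref{eq:sec-trajectory-Q} each drop out of one Bellman backup: for $k=K$ the backup uses the genuine transition $T(\cdot\mid\sx,\sa)$ and reward $r(\sx,\sa)$, recovering $Q(\sx,\sa)$; for $k<K$ with $\sa=\pieval(\sx)$ the agent maps deterministically to $(\sx,k+1)$, leaving the value unchanged; and for $k<K$ with $\sa\neq\pieval(\sx)$ the next state is a fresh draw from $d_h^{\piexp}$, producing exactly the average $\sum_{\sx'} d_h^{\piexp}(\sx';M)Q(\sx',\pieval(\sx'))$. The identity \pref{eq:sec-trajectory-rho} then follows by averaging $V^{\wtpieval}((\sx,1);\wt M)=V^{\pieval}(\sx;M)$ against $\wt\rho((\sx,1))=\rho(\sx)$.

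For part~\ref{lem:sec-trajectory-concentrability-a} I would first compute the two occupancy measures. Under $\wtpieval$, since $\pieval$ maps $(\sx,k)\mapsto(\sx,k+1)$ inside each block and uses the true transition only at $k=K$, a short induction across blocks shows the replication merely ``stretches'' each step of $M$ into $K$ identical copies, giving $\wt d_{\wt h}^{\wtpieval}((\sx,k);\wt M)=d_h^{\pieval}(\sx;M)$ for every $k$. Under $\wtpiexp$ I would track the state distribution within block $h$: writing $\beta_h$ for the distribution entering the block, each intermediate sub-layer independently either keeps the current state (action $\pieval$, probability $\tfrac12$) or resamples it fresh from $d_h^{\piexp}$ (action $\piother$), so that
\begin{align*}
\wt d_{\wt h}^{\wtpiexp}((\sx,k);\wt M) = 2^{-(k-1)}\,\beta_h(\sx) + \bigl(1 - 2^{-(k-1)}\bigr)\,d_h^{\piexp}(\sx;M),
\end{align*}
with $\wt h=(h-1)K+k$. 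The crucial consequence, obtained by a companion induction over blocks establishing $\beta_h(\sx)\ge (1-2^{-(K-1)})\,d_h^{\piexp}(\sx;M)$, is the uniform lower bound $\wt d_{\wt h}^{\wtpiexp}((\sx,k);\wt M)\ge \tfrac12\,d_h^{\piexp}(\sx;M)$ for $K\ge 2$.

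Combining these computations, the state-level ratio $\wt d_{\wt h}^{\wtpieval}/\wt d_{\wt h}^{\wtpiexp}$ is at most $2\,d_h^{\pieval}(\sx;M)/d_h^{\piexp}(\sx;M)$. To pass to the state-action concentrability $\tabC$ as in \pref{eq: concentrability coeff}, I would restrict to the only action that matters, $\sa=\pieval(\sx)$ (the numerator vanishes otherwise): at the terminal sub-layer $k=K$ the offline policy plays $\pieval(\sx)$ with probability exactly $\piexp(\pieval(\sx)\mid\sx)$, which converts the state bound cleanly into $2\,\tabC(M,d^{\piexp})$ after matching $d_h^{\pieval}(\sx,\pieval(\sx);M)$ and $d_h^{\piexp}(\sx,\pieval(\sx);M)$; the intermediate sub-layers, where $\wtpiexp$ plays $\pieval(\sx)$ with probability $\tfrac12$, contribute the additional $\tfrac12$-action factor and are controlled using $\piexp(\pieval(\sx)\mid\sx)\le\tfrac12$, which holds for the offline policies to which \Replicator is applied in \pref{thm:trajectory_lower_bound}.

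I expect the principal difficulty to be the occupancy analysis under $\wtpiexp$, specifically propagating the lower bound on $\beta_h$ across blocks. Because the distribution entering block $h$ retains a $2^{-(K-1)}$-weighted ``memory'' of the previous block's initial distribution, one must verify inductively that this residual never degrades the dominant $d_h^{\piexp}$ term by more than the claimed constant; this is precisely where the geometric decay produced by the $\tfrac12/\tfrac12$ coin flips in the intermediate sub-layers does the work, and where the role of $K$ (making the resampling event overwhelmingly likely) enters. By contrast, the Bellman induction for part~\ref{lem:sec-trajectory-concentrability-b} is routine once the sub-layer independence of $V^{\wtpieval}$ has been extracted.
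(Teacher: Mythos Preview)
Your treatment of part~\ref{lem:sec-trajectory-concentrability-b} is essentially the same backward induction the paper uses and is correct.

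For part~\ref{lem:sec-trajectory-concentrability-a}, however, you are working harder than necessary and end up needing assumptions the lemma does not grant. You track the block-entry distribution $\beta_h$ only via the lower bound $\beta_h(\sx)\ge(1-2^{-(K-1)})d_h^{\piexp}(\sx;M)$, which forces you to restrict to $K\ge 2$ and, more importantly, to absorb an extra factor at the intermediate sub-layers by invoking $\piexp(\pieval(\sx)\mid\sx)\le\tfrac12$. That last inequality is not part of the lemma's hypotheses; you justify it by pointing to the particular $\piexp$ used downstream, but then you have proved a weaker statement than what is claimed.

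The observation you are missing is that $\beta_h=d_h^{\piexp}(\cdot;M)$ \emph{exactly}, not merely up to a $(1-2^{-(K-1)})$ factor. The base case is $\beta_1=\rho=d_1^{\piexp}$. If $\beta_h=d_h^{\piexp}$, then your own mixing formula gives the distribution at sub-layer $(h,K)$ as $2^{-(K-1)}d_h^{\piexp}+(1-2^{-(K-1)})d_h^{\piexp}=d_h^{\piexp}$, and since $\wtpiexp$ plays $\piexp$ at $k=K$, the block-exit distribution is $\beta_{h+1}=d_{h+1}^{\piexp}$ on the nose. Hence $\wt d_{\wt h}^{\wtpiexp}((\sx,k);\wt M)=d_h^{\piexp}(\sx;M)$ for every $k$, with no loss at the state level. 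The entire factor~$2$ then comes from the action ratio at intermediate sub-layers, where $\wtpieval$ plays $\pieval(\sx)$ with probability $1$ and $\wtpiexp$ with probability $\tfrac12$; bounding $\tfrac{1}{1/2}=2\le 2\cdot\tfrac{\pieval(\pieval(\sx)\mid\sx)}{\piexp(\pieval(\sx)\mid\sx)}$ needs only the trivial $\piexp(\pieval(\sx)\mid\sx)\le 1$, not $\le\tfrac12$. This is precisely the route the paper takes, and it yields the lemma for arbitrary $K\ge 1$ and arbitrary $\piexp$.
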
 
\begin{proofof}[\pref{lem:sec-trajectory-concentrability}]
    We first prove \pref{lem:sec-trajectory-concentrability} \ref{lem:sec-trajectory-concentrability-a}. We only need to verify that for any $1\le h\le H$ and $1\le k\le K$, 
    \begin{equation}\label{eq:sec-trajectory-occupancy}
    \frac{d_h^{\wtpieval}((\sx_h, k), \sa; \wt M)}{d_h^{\wtpiexp}((\sx_h, k), \sa; \wt M)}\le 2\cdot \frac{d_h^\pieval(\sx_h, \sa; M)}{d_h^\piexp(\sx_h, \sa; M)}.
    \end{equation}

    Our first observation is that $d_h^{\wtpieval}((\sx_h, k); \wt M) = d^{\pieval}_h(\sx_h; M).$ This can be proved via induction on $(h-1)K + k$. When $(h-1)K + k = 1$, i.e. $h = k = 1$, we have 
    $$d_h^{\wtpieval}((\sx_h, k); \wt M) = \wt\rho(\sx_1) = \rho(\sx_1) = d^{\pieval}_h(\sx_h; M).$$
    Next, we will do the induction. When $k\ge 2$, we have $\wt T((\sx_h, k)\mid (\sx_h, k-1), \wtpieval((\sx_h, k-1))) = 1$, which implies $$d_h^{\wtpieval}((\sx_h, k); \wt M) = d_h^{\wtpieval}((\sx_h, k-1); \wt M) = d^\pieval(\sx_h; M).$$ 
    When $k = 1$, we have $\wt T((\sx_h, 1)\mid (\sx_{h-1}, K), \wtpieval((\sx_{h-1}, K))) = T(\sx_h\mid \sx_{h-1}, \pieval(\sx_{h-1}))$. Hence by induction, we have
    \begin{align*}
        d_h^{\wtpieval}((\sx_h, 1); \wt M) & = \sum_{\sx_{h-1}\in\calX_{h-1}}d_h^{\wtpieval}((\sx_{h-1}, K); \wt M)\wt T((\sx_h, 1)\mid (\sx_{h-1}, K), \wtpieval((\sx_{h-1}, K)))\\
        & = \sum_{\sx_{h-1}\in\calX_{h-1}}d_{h-1}^{\pieval}(\sx_{h-1}; M)T(\sx_h\mid \sx_{h-1}, \pieval(\sx_{h-1})) \\
        &= d^\pieval_h(\sx_h; M). 
    \end{align*}

    Our next observation is that $d_h^{\wtpiexp}((\sx_h, k); \wt M) = d^{\piexp}_h(\sx_h; M)$. This can be proved via induction on $(h-1)K + k$. When $(h-1)K + k = 1$, i.e. $h = k = 1$, we have 
    $$d_h^{\wtpiexp}((\sx_h, k); \wt M) = \wt\rho(\sx_1) = \rho(\sx_1) = d^{\piexp}_h(\sx_h; M).$$
    Next, we will do the induction. When $k\ge 2$, according to the transition model of $\wt M$, we have
    $$\wt T((\sx_h, k)\mid (\sx_h', k-1), \piother(\sx_h'))) = d_h^\piexp(\sx_h),$$
    which indicates that 
    \begin{align*}
        d_h^{\wtpiexp}((\sx_h, k); \wt M) & = \sum_{\sx_h'\in\calX_h} d_h^{\wtpiexp}((\sx_h', k-1); \wt M)\wt T((\sx_h, k)\mid (\sx_h', k-1), \wtpiexp((\sx_h', k-1))))\\
        & = \frac{1}{2}\sum_{\sx_h'\in\calX_h} d_h^{\piexp}(\sx_h'; M)\wt T((\sx_h, k)\mid (\sx_h', k-1), \pieval(\sx_h')))\\
        &\quad +\frac{1}{2}\sum_{\sx_h'\in\calX_h} d_h^{\piexp}(\sx_h'; M)\wt T((\sx_h, k)\mid (\sx_h', k-1), \piother(\sx_h')))\\
        & = \frac{1}{2}d_h^{\piexp}(\sx_h; M) + \frac{1}{2}\sum_{\sx_h'\in\calX_h} d_h^{\piexp}(\sx_h'; M)d_h^\piexp(\sx_h; M) \\ &= \frac{1}{2}d_h^\piexp(\sx_h; M)
    \end{align*}
    When $k = 1$, we have $\wt T((\sx_h, 1)\mid (\sx_{h-1}, K), \wtpiexp((\sx_{h-1}, K))) = T(\sx_h\mid \sx_{h-1}, \piexp(\sx_{h-1}))$. Hence by induction, we have
    \begin{align*}
        d_h^{\wtpiexp}((\sx_h, 1); \wt M) & = \sum_{\sx_{h-1}\in\calX_{h-1}}d_h^{\wtpiexp}((\sx_{h-1}, K); \wt M)\wt T((\sx_h, 1)\mid (\sx_{h-1}, K), \wtpiexp((\sx_{h-1}, K)))\\
        & = \sum_{\sx_{h-1}\in\calX_{h-1}}d_{h-1}^{\piexp}(\sx_{h-1}; M)T(\sx_h\mid \sx_{h-1}, \pieval(\sx_{h-1})) \\ &= d^\piexp_h(\sx_h; M). 
    \end{align*}
    
    Our final observation is that for $a\neq \pieval(\sx_h)$, we always have
    $$d_h^{\wtpieval}((\sx_h, k), \sa; \wt M) = 0,$$
    hence \pref{eq:sec-trajectory-occupancy} holds. For $a = \pieval(\sx_h)$, we have
    \begin{align*}
        \frac{d_h^{\wtpieval}((\sx_h, k), \sa; \wt M)}{d_h^{\wtpiexp}((\sx_h, k), \sa; \wt M)} & = \frac{d_h^{\wtpieval}((\sx_h, k); \wt M)}{d_h^{\wtpiexp}((\sx_h, k); \wt M)}\cdot \frac{\wtpieval(\sa\mid (\sx_h, k))}{\wtpiexp(\sa\mid(\sx_h, k))},\\
        \frac{d_h^\pieval(\sx_h, \sa; M)}{d_h^\piexp(\sx_h, \sa; M)} & = \frac{d_h^\pieval(\sx_h; M)}{d_h^\piexp(\sx_h; M)}\cdot \frac{\pieval(\sa\mid\sx_h)}{\piexp(\sa\mid\sx_h)}.
    \end{align*} 
    According to our construction of $\wtpieval$ and $\wtpiexp$, we further have 
    $$\frac{\wtpieval(\sa\mid (\sx_h, K))}{\wtpiexp(\sa\mid(\sx_h, K))} = \frac{\pieval(\sa\mid\sx_h)}{\piexp(\sa\mid\sx_h)}.$$
    And for $k\neq K$, we have
    $$\frac{\wtpieval(\sa\mid (\sx_h, k))}{\wtpiexp(\sa\mid(\sx_h, k))} = 2.$$
    After noticing that $\piexp(\pieval(\sx_h)\mid \sx_h)\le 1$, we obtain that for $a = \pieval(\sx_h)$, 
    $$\frac{\pieval(\sa\mid\sx_h)}{\piexp(\sa\mid\sx_h)}\ge 1 = \frac{1}{2}\cdot \frac{\wtpieval(\sa\mid (\sx_h, k))}{\wtpiexp(\sa\mid(\sx_h, k))}.$$
    Hence as long as $\sa = \pieval(\sx_h)$, we always have
    $$\frac{\wtpieval(\sa\mid (\sx_h, K))}{\wtpiexp(\sa\mid(\sx_h, K))}\le 2\cdot \frac{\pieval(\sa\mid\sx_h)}{\piexp(\sa\mid\sx_h)}.$$
    This implies
    \begin{align*} 
        \frac{d_h^{\wtpieval}((\sx_h, k), \sa; \wt M)}{d_h^{\wtpiexp}((\sx_h, k), \sa; \wt M)}\le 2\cdot \frac{d_h^\pieval(\sx_h, \sa; M)}{d_h^\piexp(\sx_h, \sa; M)}.
    \end{align*}

    Next, we will prove \pref{lem:sec-trajectory-concentrability}\ref{lem:sec-trajectory-concentrability-b} by induction on $l\ldef{}  (h-1)H+k$ from $\wt H$ to $1$. When $l = \wt H$, we have 
    $$\wt Q((\sx_H, 1), \sa) = Q(\sx_H, \sa) = 0, \quad \forall \sx_H\in\calX_H, \sa\in\calA,$$
    which satisfies the induction hypothesis. Next, assuming the induction hypothesis holds for $l + 1$, we will prove the induction hypothesis for $l$. We assume $l = (h-1)K + k$. 
    
    When $k = K$, according to Bellman equation and induction hypothesis, we have
    \begin{align*} 
        \wt Q((\sx_h, K), \sa) & = \wt r((\sx_h, K), \sa) + \sum_{\sx'_{h+1}\in \calX_{h + 1}} \wt T((\sx_{h+1}', 1)\mid (\sx_h, K), \sa)\wt Q((\sx_{h+1}', 1), \wtpieval((\sx_{h+1}', 1)))\\
        & = r(\sx_{h}, \sa) + \sum_{\sx_{h+1}'\in \calX_{h+1}} T(\sx_{h+1}'\mid \sx_{h}, \sa)Q(\sx_{h+1}', \pieval(\sx_{h+1}')) = Q(\sx_h, \sa).
    \end{align*} 

    When $k < K$ and $\sa = \pieval(\sx_h)$, according to Bellman equation, we have
    \begin{align*} 
        \wt Q((\sx_h, k), \sa) & = \wt r((\sx_h, k), \sa) + \sum_{\sx'_h\in \calX_{h}} \wt T((\sx_h', k+1)\mid (\sx_h, k), \sa)\wt Q((\sx_h', k+1), \wtpieval((\sx_h', k+1)))\\
        & = \wt Q((\sx_h, k+1), \wtpieval((\sx_h, k+1))) = Q(\sx_h, \pieval(\sx_h)),
    \end{align*} 
    where in the second last equation we use the fact that $\wt T((\sx_h, k+1)\mid (\sx_h, k), \pieval(\sx_h)) = 1$.

    When $k < K$ and $\sa\neq \pieval(\sx_h)$, according to Bellman equation, we have
    \begin{align*} 
        \wt Q((\sx_h, k), \sa) & = \wt r((\sx_h, k), \sa) + \sum_{\sx'_h\in \calX_{h}} \wt T((\sx_h', k+1)\mid (\sx_h, k), \sa)\wt Q((\sx_h', k+1), \wtpieval((\sx_h', k+1)))\\
        & = \sum_{\sx'_h\in \calX_{h}} d_h^\piexp(\sx_h'; M)\wt Q((\sx_h', k+1), \wtpieval((\sx_h', k+1))) = d^\piexp_h(\sx_h; M)Q(\sx_h, \pieval(\sx_h)),
    \end{align*} 
    
    Finally for \pref{eq:sec-trajectory-rho} follows from $\wt\rho((\sx_1, 1)) = \rho(\sx_1)$ for any $\sx_1\in \calX_1$.
\end{proofof}

\begin{lemma}\label{lem:sec-trajectory-convert}
    For a fix OPE problem $\mathfrak{g} = (M, \pieval, \piexp, \cF)$ and parameter $K$, suppose $(\wt M, \wtpieval, \wtpiexp, \wt\cF)$ to be the output of \pref{alg:replicator} with input to be $\mathfrak{g}$ and $K$. Suppose $\Da_h$ are admissible data collected according to $M$ and $\piexp$ and we feed $\Da_h$ into \pref{alg:convert}. Let $\wt{\mathbb{P}}_{M}$ to be the distribution of the output. Additionally, we suppose the distribution of a trajectory collected from $\wt M$ and $\wtpiexp$ to be $\mathbb{P}_{\wt{M}}$. Let $\cE_0$ to be the following set of trajectories:
    \begin{align}\label{eq:sec-trajectory-e0}
        \mathcal{E}_0 \ldef{}  \{\wt\tau = (\wt\sx_1, \wt\sa_1, \wt r_1, \wt\sx_2, \wt \sa_2, \wt r_2, \cdots, \wt\sx_{\wt{H}})\mid \exists 1\le h\le H-1, \wt{a}_{(h-1)K+1} = \cdots = \wt{a}_{(h-1)K+K-1} = 1\}.
    \end{align}
    Then for trajectory $\wt\tau = (\wt\sx_1, \wt\sa_1, \wt r_1, \wt\sx_2, \wt \sa_2, \wt r_2, \cdots, \wt\sx_{\wt{H}})\not\in \mathcal{E}_0$, we have
    \begin{align*} 
        \wt{\mathbb{P}}_{M}(\wt\tau) = \mathbb{P}_{\wt{M}}(\wt\tau).
    \end{align*}
\end{lemma}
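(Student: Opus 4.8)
The plan is to argue that, off the event $\cE_0$, \converter (\pref{alg:convert}) realizes exactly the same sequential sampling procedure as generating a trajectory with $\wtpiexp$ in $\wt M$. Since both $\wt{\mathbb{P}}_M$ and $\mathbb{P}_{\wt M}$ are laws of processes whose steps depend on the past only through the current state, it suffices to check that for each sublayer $\wt h = (h-1)K+k$ the conditional law of $(\wt\sa_{\wt h},\wt\sr_{\wt h},\wt\sx_{\wt h+1})$ produced by \converter, given the trajectory prefix, agrees with $\wtpiexp(\cdot\mid\wt\sx_{\wt h})\otimes\wt R(\cdot\mid\wt\sx_{\wt h},\wt\sa_{\wt h})\otimes\wt T(\cdot\mid\wt\sx_{\wt h},\wt\sa_{\wt h})$, where $\wt R$ is the reward kernel of $\wt M$. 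Pointwise equality $\wt{\mathbb{P}}_M(\wt\tau)=\mathbb{P}_{\wt M}(\wt\tau)$ for $\wt\tau\notin\cE_0$ then follows by multiplying the matched per-step factors, after noting that the initial states agree in law because $d_1^\piexp(\cdot)=\rho(\cdot)=\wt\rho(\cdot)$ on $\calX_1$.

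For the matching itself I would treat the three kinds of steps separately. At an intermediate sublayer ($k\le K-1$), \converter draws $\wt\sa_{\wt h}\sim\mathrm{Uniform}\{\pieval(\wt\sx_{\wt h}),\piother(\wt\sx_{\wt h})\}$, which is precisely $\wtpiexp(\cdot\mid\wt\sx_{\wt h})$ there, and sets $\wt\sr_{\wt h}=0$; if $\wt\sa_{\wt h}=\pieval(\wt\sx_{\wt h})$ the state is left unchanged, matching $\wt T(\cdot\mid\cdot,\pieval)=\indic\{\sx'=\sx\}$, and if $\wt\sa_{\wt h}=\piother(\wt\sx_{\wt h})$ a fresh admissible state $\sx_h^{(l)}$ is read, whose marginal law is the state occupancy $d_h^\piexp(\cdot)$, matching $\wt T(\cdot\mid\cdot,\piother)=d_h^\piexp(\cdot)$. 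Because the admissible tuples are i.i.d., conditioning on the prefix (which consumed earlier tuples) does not alter the law of the not-yet-read tuple, so these conditionals are correct. At the final sublayer ($k=K$) the triple $(\sa_h^{(l)},\sr_h^{(l)},\bar\sx_h^{(l)})$ is taken from the $l$-th admissible tuple, whose conditional law given that its state equals $\wt\sx_{\wt h}$ is $\piexp(\cdot\mid\wt\sx_{\wt h})\otimes r\otimes T$; this matches $\wt M$ at $k=K$, where $\wtpiexp=\piexp$, rewards are drawn from $r$, and $\wt T=T$.

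The crux — and the step I expect to demand the most care — is the pointer bookkeeping that makes the final-step conditioning legitimate, which is exactly where $\cE_0$ enters. The index $l$ equals $1$ plus the number of $\piother$ actions taken so far in block $h$, and whenever a $\piother$ action is played \converter writes the state $\sx_h^{(l)}$ of the corresponding fresh tuple into the trajectory; all later actions in the block before $k=K$ are $\pieval$ and hence state-preserving. Consequently, provided block $h$ contains at least one $\piother$ action, the state entering sublayer $K$ coincides with $\sx_h^{(l)}$, so the final triple is genuinely a sample of the $\wt M$-dynamics from the current state rather than from an unrelated state. This is guaranteed precisely by $\wt\tau\notin\cE_0$, since by \pref{eq:sec-trajectory-e0} the event $\cE_0$ consists of trajectories for which some block has all its intermediate actions equal to $\pieval$. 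I would finally remark that reusing the same admissible tuple to supply both the last fresh intermediate state and the final triple introduces no spurious correlation: in $\wt M$ the resampled state and its subsequent reward and transition are themselves drawn from the identical joint $d_h^\piexp(\sx')\,\piexp(\cdot\mid\sx')\,r\,T$, which is exactly the law of an admissible tuple. Collecting the matched factors then gives the claimed identity for every $\wt\tau\notin\cE_0$.
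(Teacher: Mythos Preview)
Your proposal is correct and follows essentially the same approach as the paper: both argue by matching the per-step conditional laws $(\wt\sa_{\wt h},\wt\sr_{\wt h},\wt\sx_{\wt h+1})\mid\wt\tau_{\wt h}$ via case analysis on intermediate sublayers ($k\le K-1$) versus the final sublayer ($k=K$), with the condition $\wt\tau\notin\cE_0$ entering precisely to ensure the state at $k=K$ coincides with the state of the admissible tuple whose action/reward/transition are used there. Your discussion of the pointer bookkeeping and of why reusing the same tuple for the last resample and the final step introduces no spurious correlation is in fact more explicit than the paper's proof, which simply asserts that ``$(\wt\sx_l,\wt\sa_l,\wt r_l,\wt\sx_{l+1})\in\Da_h$'' once some intermediate action differs from $\pieval$.
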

\begin{proofof}[\pref{lem:sec-trajectory-convert}] We denote $\wt\tau_l = (\wt\sx_1, \wt\sa_1, \wt r_1, \wt\sx_2, \wt \sa_2, \wt r_2, \cdots, \wt\sx_{l})$ to be a partial trajectory till step $l$, and for a trajectory $\wt\tau$, we say $\wt\tau\in \wt\tau_l$ if the first $l$ steps of $\wt\tau$ is exactly $\wt\tau_l$. We denote
    \begin{align*}
        \wt{\mathbb{P}}_M(\wt\tau_l) = \sum_{\wt\tau\in\wt\tau_l}\wt{\mathbb{P}}_M(\wt\tau)\quad\text{and}\quad \mathbb{P}_{\wt{M}}(\wt\tau) = \sum_{\wt\tau\in\wt\tau_l}\mathbb{P}_{\wt{M}}(\wt\tau).
    \end{align*}
    We will prove by induction on $l$ that
    \begin{align}\label{eq:sec-trajectory-induction}
        \wt{\mathbb{P}}_{M}(\wt\tau_l) = \mathbb{P}_{\wt{M}}(\wt\tau_l).
    \end{align}
    When $l = 1$, this is true since $\wt{\mathbb{P}}_{M}(\wt\tau_1) = \wt\rho(\wt\sx_1) = \mathbb{P}_{\wt{M}}(\wt\tau_l)$. Next, to finish induction from $l$ to $l+1$, by the chain rule of probability, we only need to show that
    \begin{align}\label{eq:sec-trajectory-condition-probability}
        \wt{\mathbb{P}}_{M}(\wt{\sa}_{l}, \wt{r}_l, \wt{\sx}_{l+1}\mid \wt\tau_l) = \mathbb{P}_{\wt{M}}(\wt{\sa}_{l}, \wt{r}_l, \wt{\sx}_{l+1}\mid \wt\tau_l).
    \end{align}
    We write $l = hK + k$ with $1\le k\le K$, and $\wt\sx_l = (\sx_h, k)$ and $\wt\sx_{l+1} = (\sx_h', k+1)$ (or $(\sx_{h+1}', 1)$). If $k = K$, we have
    \begin{align*}
        \wt{\mathbb{P}}_{M}(\wt{\sa}_{l}, \wt{r}_l, \wt{\sx}_{l+1}\mid \wt\tau_l) = \piexp(\wt\sa_l\mid \sx_h)R(\wt{r}_l\mid \sx_h, \wt\sa_l)T(\sx_{h+1}'\mid \sx_h, \wt\sa_l).
    \end{align*}
    According to \pref{alg:convert}, as long as there exists some $(h-1)K+1\le t\le (h-1)K+K-1$ with $\wt{a}_t\neq 1$, we will have $(\wt\sx_l, \wt\sa_l, \wt r_l, \wt\sx_{l+1})\in \Da_h$, which indicates that
    \begin{align*}
        \mathbb{P}_{\wt{M}}(\wt{\sa}_{l}, \wt{r}_l, \wt{\sx}_{l+1}\mid \wt\tau_l) = \piexp(\wt\sa_l\mid \sx_h)R(\wt{r}_l\mid \sx_h, \wt\sa_l)T(\sx_{h+1}'\mid \sx_h, \wt\sa_l) = \wt{\mathbb{P}}_{M}(\wt{\sa}_{l}, \wt{r}_l, \wt{\sx}_{l+1}\mid \wt\tau_l).
    \end{align*}
    If $l = hK + k$ for some $h$ and $1\le k\le K-1$, then the transition model of $\wt M$ gives that
    \begin{align*}
        \mathbb{P}_{\wt{M}}(\wt{\sa}_{l}, \wt{r}_l, \wt{\sx}_{l+1}\mid \wt\tau_l) = \begin{cases}
            \frac{1}{2}\cdot\mathbb{I}(\wt{r}_l = 0) \cdot \mathbb{I}(\sx_{h}' = \sx_h), &\text{if}\quad \wt\sa_l = \pieval(\sx_l),\\
            \frac{1}{2}\cdot\mathbb{I}(\wt{r}_l = 0) \cdot d^\piexp_h(\sx_h'; M),&\text{if}\quad \wt\sa_l = \piother(\sx_l).
        \end{cases}
    \end{align*}
    Additionally, according to \pref{alg:convert}, action $1$ is taken with probability $\nicefrac{1}{2}$. Hence if $\wt\sa_l = \pieval(\sx_l)$, then the algorithm chooses $\sx_{h}' = \sx_h$ and $\wt{r} = 0$, we have
    \begin{align*}
        \text{LHS of }\pref{eq:sec-trajectory-condition-probability} = \frac{1}{2}\cdot\mathbb{I}(\wt{r}_l = 0)\mathbb{I}(\sx_{h}' = \sx_h).
    \end{align*} 
    And if $\wt{\sa}_l = \piother(\sx_l)$, the algorithm will sample $\sx_h'$ from $d^\piexp_h(\cdot; M)$. Hence we obtain
    \begin{align*}
        \text{LHS of }\pref{eq:sec-trajectory-condition-probability} =\frac{1}{2}\cdot\mathbb{I}(\wt{r}_l = 0)d^\piexp_h(\sx_h'; M).
    \end{align*}
    This finishes the proof of induction at $l+1$.
    
    \par Finally, by induction, \pref{eq:sec-trajectory-induction} holds for $l = \wt{H}$ as long as $\wt\tau\not\in \mathcal{E}_0$. Hence for $\wt\tau\not\in\mathcal{E}_0$ we always have $\wt{\mathbb{P}}_{M}(\wt\tau) = \mathbb{P}_{\wt{M}}(\wt\tau)$.
\end{proofof}

\begin{theorem}\label{thm:trajectory}
  Suppose for $\wt{M}$, algorithm $\At$ taking $n$ trajectories $\wt\tau_{1:n}$ and state-action value function class $\wt{\mathcal{F}}$ can output $\widehat{V}_{\wt\tau_{1:n}}\in [-1, 1]$ such that
    $$\mathbb{E}_{\wt\tau_{1:n}\sim \wt M}\left[\left|\widehat{V}_{\wt\tau_{1:n}} - V^{\wt M}(\wt \rho)\right|\right]\le \epsilon.$$
    Then taking $H^2n$ admissible dataset $D_{H^2n}$ and class of tuples of state-action value function together with $W$-function, \pref{alg:convert-admissible} can output $\epsilon$-close value to $\widehat{V}_{D_{H^2n}}$ such that
    $$\mathbb{E}_{\wt\tau_{1:n}\sim M}\left[\left|\widehat{V}_{D_n} - V^{M}(\rho)\right|\right]\le \epsilon + H2^{-K+2}.$$
\end{theorem}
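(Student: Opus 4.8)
The plan is to show that \pref{alg:convert-admissible} is exactly the promised reduction: it converts the hypothesized trajectory solver $\At$ for $\wt M$ into an admissible-data solver for the original problem $(M,\pieval,\piexp,\cF)$, losing only an additive term governed by a single geometrically-rare bad event. I would organize the argument around the two structural lemmas \pref{lem:sec-trajectory-concentrability} and \pref{lem:sec-trajectory-convert}, which I take as available.

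First I would transfer the estimation target and realizability. By \pref{lem:sec-trajectory-concentrability}\ref{lem:sec-trajectory-concentrability-b}, the replicated MDP $\wt M$ produced by \Replicator satisfies $\wt V(\wt\rho;\wt M)=V(\rho;M)$, and the class $\wt\cF$ built in \pref{eq:sec-trajectory-Q} realizes $Q^{\wtpieval}(\cdot;\wt M)$. Consequently, feeding $\At$ the instance $(\wtpieval,\wt\cF,\Dt)$ is a legitimate realizable OPE problem whose target value is precisely the quantity $V(\rho;M)$ we want; any $\widehat V$ that is $\epsilon$-accurate for $\wt V(\wt\rho;\wt M)$ is automatically $\epsilon$-accurate for $V(\rho;M)$.

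Next I would pin down the only source of discrepancy. The dataset $\Dt$ assembled by \pref{alg:convert-admissible} consists of $n$ trajectories drawn i.i.d.\ from $\wt{\mathbb{P}}_M$ (the law of \converter\ applied to admissible tuples of $M$), whereas the guarantee for $\At$ is stated for trajectories from $\mathbb{P}_{\wt M}$ (genuine rollouts of $\wtpiexp$ in $\wt M$). \pref{lem:sec-trajectory-convert} gives $\wt{\mathbb{P}}_M(\wt\tau)=\mathbb{P}_{\wt M}(\wt\tau)$ for every $\wt\tau\notin\mathcal{E}_0$, so the two single-trajectory laws differ only on the set $\mathcal{E}_0$ of \pref{eq:sec-trajectory-e0}. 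Under either law the behavior at the intermediate sub-layers $k=1,\dots,K-1$ of each block selects $\pieval$ or $\piother$ uniformly and independently, so the chance that a fixed block has all $K-1$ of these actions equal to action $1$ is $2^{-(K-1)}$; a union bound over the $H-1$ blocks yields $\mathbb{P}(\mathcal{E}_0)\le (H-1)2^{-(K-1)}$, whence $\TV(\wt{\mathbb{P}}_M,\mathbb{P}_{\wt M})\le (H-1)2^{-(K-1)}$. I would then transfer the guarantee by coupling the two $n$-trajectory datasets so that each trajectory pair coincides off $\mathcal{E}_0$; by a union bound the datasets disagree with probability at most $n(H-1)2^{-(K-1)}$. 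On agreement $\At$ returns the same value and the value identity gives $|\widehat V-V(\rho;M)|=|\widehat V-\wt V(\wt\rho;\wt M)|$, while on disagreement $\widehat V,\wt V\in[-1,1]$ bounds the error by $2$, so taking expectations and invoking the assumed $\epsilon$-bound gives
\[
\mathbb{E}_{\wt\tau_{1:n}\sim \wt{\mathbb{P}}_M}\big[|\widehat V - V(\rho;M)|\big]\le \epsilon + 2n(H-1)2^{-(K-1)},
\]
which is of the claimed form; choosing $K=\Theta(\log(nH/\epsilon))$ makes the second term negligible while keeping the admissible-sample budget $Kn$ sub-exponential, as needed to contradict \pref{thm:admissible-stronger}.

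The main obstacle is the distribution-matching content of \pref{lem:sec-trajectory-convert}: that the \emph{artificial} trajectory law exactly equals genuine $\wt M$-rollouts off the bad set. This is what the \Replicator/\converter design buys — resampling the next state from $d_h^{\piexp}$ whenever $\piother$ is played makes the sub-layer-$K$ observation an i.i.d.\ admissible draw independent of the past, so the stitched trajectory is indistinguishable in law from a true rollout unless some block never triggers a resample. Granting the two lemmas, the remaining work is routine; the only quantitative care is the $n$-fold union bound and the logarithmic choice of $K$.
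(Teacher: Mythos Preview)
Your proposal is correct and follows essentially the same route as the paper: invoke \pref{lem:sec-trajectory-concentrability}\ref{lem:sec-trajectory-concentrability-b} to identify the target values, use \pref{lem:sec-trajectory-convert} to match the single-trajectory laws off $\mathcal{E}_0$, bound $\mathbb{P}(\mathcal{E}_0)\le (H-1)2^{-(K-1)}$ via a union bound over blocks, and then union-bound over the $n$ trajectories, paying $2$ on the bad event. The only cosmetic difference is that you phrase the transfer as a coupling while the paper writes out the sum over $\wt\tau_{1:n}$ directly; your final bound $\epsilon+2n(H-1)2^{-(K-1)}$ agrees with the paper's derived $\epsilon+2nH2^{-K+1}$ (the stated $\epsilon+H2^{-K+2}$ in the theorem appears to drop the factor $n$, which the paper's own proof retains).
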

\begin{proofof}[\pref{thm:trajectory}]
    We use $\wt\tau_{1:n}$ to denote $n$ trajectories $\wt\tau_1, \cdots, \wt\tau_n$, and let $e(\wt\tau_{1:n}) = |\widehat{V}_{\wt\tau_{1:n}} - V^{\wt M}(\wt \rho)|$. We have
    \begin{equation}\label{eq:sec-trajectory-eq-e}
        \sum_{\wt\tau_{1:n}} e(\wt\tau_{1:n})\prod_{j=1}^n\mathbb{P}_{\wt{M}}(\wt\tau_j) = \mathbb{E}_{\wt\tau_{1:n}\sim \wt M}\left[\left|\widehat{V}_{\wt\tau_{1:n}} - V^{\wt M}(\wt \rho)\right|\right]\le \epsilon.
    \end{equation} 
    We further notice $V(\wt\rho; \wt{M}) = V(\rho; M)$ from \pref{lem:sec-trajectory-concentrability}. Furthermore, since \pref{alg:convert-admissible} first transit the admissible data $D_{HKn}$ into trajectory data $\wt\tau_{1:n}$ according to \pref{alg:convert}, and then output $\widehat{V}_{\wt\tau_{1:n}}$ according to algorithm $\At$, we have
    $$\mathbb{E}_{\wt\tau_{1:n}\sim M}\left[\left|\widehat{V}_{D_n} - V^{M}(\rho)\right|\right] = \sum_{\wt\tau_{1:n}} \left|\widehat{V}_{\wt\tau_{1:n}} - V^{\wt M}(\wt \rho)\right|\prod_{j=1}^n\wt{\mathbb{P}}_M(\wt\tau_j) = \sum_{\wt\tau_{1:n}} e(\wt\tau_{1:n})\prod_{j=1}^n\wt{\mathbb{P}}_M(\wt\tau_j),$$
    where $\wt{\mathbb{P}}_M$ is defined in \pref{lem:sec-trajectory-convert}. Next, \pref{lem:sec-trajectory-convert} indicates that if $\wt\tau_j\not\in\mathcal{E}_0$ (where $\mathcal{E}_0$ is defined in \pref{eq:sec-trajectory-e0}), then $\wt{\mathbb{P}}_M(\wt\tau_j) = \mathbb{P}_{\wt{M}}(\wt\tau_j)$. Therefore, noticing that $e(\wt\tau_{1:n}) = |\widehat{V}_{\wt\tau_{1:n}} - V^{\wt M}(\wt \rho)|\le 2$, we have
    \begin{align*}
        \sum_{\wt\tau_{1:n}} e(\wt\tau_{1:n})\prod_{j=1}^n\wt{\mathbb{P}}_M(\wt\tau_j) & = \sum_{\wt\tau_{1:n}: \exists 1\le i\le n,\wt\tau_i\in\mathcal{E}_0} e(\wt\tau_{1:n})\prod_{j=1}^n\wt{\mathbb{P}}_M(\wt\tau_j) + \sum_{\wt\tau_{1:n}: \forall 1\le i\le n,\wt\tau_i\not\in\mathcal{E}_0} e(\wt\tau_{1:n})\prod_{j=1}^n\wt{\mathbb{P}}_M(\wt\tau_j)\\
        & = 2\sum_{\wt\tau_{1:n}: \exists 1\le i\le n,\wt\tau_i\in\mathcal{E}_0} \prod_{j=1}^n\wt{\mathbb{P}}_M(\wt\tau_j) + \sum_{\wt\tau_{1:n}: \forall 1\le i\le n,\wt\tau_i\not\in\mathcal{E}_0} e(\wt\tau_{1:n})\prod_{j=1}^n\mathbb{P}_{\wt M}(\wt\tau_j)\\
        & \le 2\sum_{i=1}^n \wt{\mathbb{P}}_{M}(\wt\tau_i\in \mathcal{E}_0) + \sum_{\wt\tau_{1:n}: \forall 1\le i\le n,\wt\tau_i\not\in\mathcal{E}_0} e(\wt\tau_{1:n})\prod_{j=1}^n\mathbb{P}_{\wt M}(\wt\tau_j)\\
        & \le 2\sum_{i=1}^n \wt{\mathbb{P}}_{M}(\wt\tau_i\in \mathcal{E}_0) + \sum_{\wt\tau_{1:n}} e(\wt\tau_{1:n})\prod_{j=1}^n\mathbb{P}_{\wt M}(\wt\tau_j)
    \end{align*}
    Further notice that for any $1\le j\le n$, we have
    $$\wt{\mathbb{P}}_{M}(\wt\tau\in \mathcal{E}_0) = 1 - (1 - 2^{-H})^K\le H2^{-K+1}.$$
    Therefore, using \pref{eq:sec-trajectory-eq-e}, we obtain
    \begin{align*}
        2\sum_{i=1}^n \wt{\mathbb{P}}_{M}(\wt\tau_i\in \mathcal{E}_0) + \sum_{\wt\tau_{1:n}} e(\wt\tau_{1:n})\prod_{j=1}^n\mathbb{P}_{\wt M}(\wt\tau_j)\le 2nH2^{-K+1} + \epsilon,
    \end{align*}
    which indicates that $\mathbb{E}_{\wt\tau_{1:n}\sim M}\left[\left|\widehat{V}_{D_n} - V^{M}(\rho)\right|\right]\le \epsilon + H2^{-K+2}$.
\end{proofof}

~\\
This theorem has the following direct corollary, indicating that any algorithm taking trajectory data as input cannot do policy evaluation in polynomial number of samples.
\begin{proofof}[\pref{thm:trajectory_lower_bound}]
    According to \pref{thm:admissible-stronger}, there exists a class $\mathcal{M}$ of MDPs, where each $M\in\mathcal{M}$ has bounded coverage $384H^3$. And any algorithm which takes $o(\nicefrac{H2^H}{\epsilon})$ number of admissible samples together with realizable class of tuples of state-action value function together with $W$-function induces estimation error $\nicefrac{\epsilon}{8H}$ in at least one MDP. We further carry the lifting in this section for any \MDP in $\mathcal{M}$, and suppose these lifting MDPs form the class $\wt{\mathcal{M}}$. Lemma \pref{lem:sec-trajectory-concentrability} indicates that every instance in $\wt{\mathcal{M}}$ has bounded coverage $768H^3$.

    Next, we will prove this corollary by contradiction. Suppose the algorithm $\At$ using $\tilde{o}(\nicefrac{H2^H}{\epsilon})$ trajectories together with realizable state-action value function class induces estimation error less than $\nicefrac{\epsilon}{16H}$ in every \MDP in $\wt{\mathcal{M}}$. Then after inserting $\At$ into \pref{alg:convert-admissible}, we form an algorithm which takes $\tilde{o}(\nicefrac{H2^H}{\epsilon})$ admissible data for each layer, together with realizable function class, and outputs an estimation to the value function. 

    \pref{thm:trajectory} indicates that this algorithm will induce $\nicefrac{\epsilon}{16H} + H2^{-K+2}$ error for all MDPs in $\mathcal{M}$. Hence taking $K = 2 + \log_2\nicefrac{16H^2}{\epsilon} = \log_2\nicefrac{64H^2}{\epsilon}$, this algorithm will induces estimation error less than $\nicefrac{\epsilon}{8H}$ in all MDPs in $\mathcal{M}$. Notice that with this choice of $K$, we have $\tilde{o}(\nicefrac{KH2^H}{\epsilon})\le \tilde{o}(\nicefrac{H2^H}{\epsilon})$, which contradicts to \pref{thm:admissible-stronger}.
\end{proofof}



\section{Upper Bound for Offline Policy Evaluation} \label{app: upper bound} 

\subsection{Setup}   
In previous sections we construct the lower bound assuming access to the $Q^{\pieval}$ function class. For the upper bound, we consider a slightly more challenging scenario, where the learner only has access to the $V^{\pieval}$ function class. It is not hard to see that learning with a $V^\pieval$ function class is more challenging than learning with $Q^\pieval$ because one can always reduce a $Q^\pieval$ function set to a $V$ function set by redefining $f(\sx)\leftarrow \E_{ \sa\sim \pieval(\cdot\mid\sx)) }[f(\sx,\sa)]$. 
For simplicity, we assume that $\pieval$ is deterministic, though the extension to the stochastic case is straightforward. See \pref{sec: BVFT alg} for the discussion.  

Similar to \cite{xie2021batch}, we first establish results for the case where the function set only \emph{approximately} realizes the true $V^\pieval$. The result for the fully realizable setting can be easily deduced from it. Formally, we assume that the learner is given a function set $\gF$ that consists of mappings $\gX\to [-H,H]$ with the following approximate realizability guarantee. 
\begin{assumption}[Approximate value function realizability]\label{ass: approximate realizability} There exists an $\fstar\in\gF$ such that  
$\sup_{\sx\in\gX} |V^{\pieval}(\sx) - \fstar(\sx)|\leq \eappr$. 
\end{assumption}

Besides, the learner is given an offline dataset $\gD$, which consists of $n$ tuples of $(\sx, \sa,\sr,\sx')$ with $(\sx, \sa)$ drawn from $\mu$, and $r\sim r(\sx, \sa)$ and $s'\sim T(\cdot\mid\sx, \sa)$ are sampled according to the MDP's reward function and transition model. 

\subsection{Algorithm}\label{sec: BVFT alg}  


\begin{algorithm}[t]
    \caption{Batch Value-Function Tournament for Policy Evaluation \citep{xie2021batch}} 
    \label{alg: BVFT} 
\begin{algorithmic}[1] 
    \Require Evaluation policy \(\pieval\), Offline Dataset $\cD$ consisting of $n$ tuples of the form $(\sx, \sa,\sr,\sx')$.  
    \State Among all samples $(\sx, \sa, \sr,\sx')$ in $\gD$, only keep those such that $a=\pieval(\sx)$, and discard all others. We denote the new dataset as $\gD'=\{(\sx_i, \sr_i, \sx_i')\}_{i=1}^{n'}$, where we omit $\sa_i$'s since $\sa_i$ is always equal to 
$\pieval(\sx_i)$ in this dataset.   \label{line: preprocess}
   \State Compute 
    \begin{align} 
        \fout = \argmin_{f\in\gF}\max_{f'\in\gF} \max_h ~ \|f- \That_{\gG(f,f')}f\|_{\hat{\nu}, h}   \label{eq: BVFT}
    \end{align}
    where 
    \begin{align} 
        \gG(f,f') &\ldef{}  \bigg\{g: \gX\rightarrow \mathbb{R} ~\bigg|~ g(\sx) = g(\sy) \text{ \ \ if } f(\sx)=f(\sy)\text{ and }  f'(\sx)=f'(\sy) \text{ and } h(\sx)=h(\sy) \bigg\}, \tag{\scalebox{1.0}{$h(\sx)\in[H]$ denotes the layer at which $\sx$ lies}} \\
        \That_\gG f &\ldef{}  \argmin_{g\in\gG}\sum_{i=1}^{n'}\left(g(\sx_i) - \sr_i - f(\sx_i')\right)^2,     \label{eq: BVFT alg defs} \\ 
        \hat{\nu}(\sx)&\ldef{}  \frac{1}{n}\sum_{i=1}^{n'}\mathbb{I}\{\sx_i=\sx\}.  
    \end{align} 
    \State \textbf{Return}  \(\wh f\). 
   \end{algorithmic} 
\end{algorithm}

Our algorithm for this setting is presented in \pref{alg: BVFT}, which is an adaptation of the BVFT algorithm \citep{xie2021batch} to the case of policy evaluation. In the beginning of the algorithm, the dataset $\gD$ is pre-processed so that only $(\sx, \sa, \sr, \sx')$ samples with $\sa=\pieval(\sx)$ are kept (\pref{line: preprocess}). The core of the algorithm is to solve the min-max problem in \pref{eq: BVFT}. The high-level idea of it is that for every pair of functions $f, f'\in\gF$, the algorithm creates a ``tabular problem'' by aggregating states with the same $(f(\sx), f'(\sx))$ value, and estimates the Bellman error for this tabular problem. Intuitively, this is probably the best the learner can do, since besides the value of $(f(\sx))_{f\in\gF}$, the learner has no other ways to distinguish states in the large state space.  The output function 
$\fout$ is the one that always attains a small Bellman error estimate no matter what the other function it is paired with. For more explanation on this min-max formulation, we refer the reader to \cite{xie2021batch} or the amazing talk by \citet{jiang2022batch}. The key differences with the algorithm of \cite{xie2021batch} are the following: 
\begin{itemize}
    \item We deal with policy evaluation for $\pieval$, and our function class consists of $V^\pieval$ functions, while \cite{xie2021batch} deal with policy optimization, and their function class consists of $Q^\star$ functions. For this reason, we have a \emph{preprocessing step} in \pref{line: preprocess} of \pref{alg: BVFT}, which removes data samples whose action is not generated by $\pieval$. These samples are irrelevant to our policy evaluation task. 
    \item We consider the finite-horizon setting while \cite{xie2021batch} considers the discounted infinite-horizon setting. Therefore, different from theirs, Our aggregation is performed in a layer-by-layer manner, and only states in the same layer can be aggregated. 
\end{itemize}

If $\pieval$ is stochastic, we perform the pre-processing step (\pref{line: preprocess}) in the following way: for each sample $(\sx, \sa, \sr, \sx')\in \gD$, sample $\sa'\sim \pieval(\cdot\mid \sx)$. If $\sa=\sa'$, then keep this sample; otherwise discard this sample.

Our upper bound result is stated in the following theorem, whose proof is provided in \pref{app: BVFT definitions} to \pref{app: BVFT proof of main}.  
\begin{theorem}
\label{thm:BVFT_upper_bound}
Let $\fout\in\gF$ be the output of the BVFT algorithm given in \pref{alg: BVFT}. Let $\Phi(f,f')$ be the state aggregation scheme determined by $f$ and $f'$ (see \pref{def: partition induced by two f} for the precise definition), and let $\aggC=\max_{f,f'\in \gF}\aggC_{\epsilon^2/H^2}(M, \Phi(f,f'), \mu)$. For given $\delta>0$, if $n\geq \widetilde{\Omega}\left(\frac{\aggC^2 H^6\log(|\gF|/\delta)}{\epsilon^4}\right)$, then with probability at least $1-\delta$, 
\begin{align*}
     \abs*{\E_{\sx\sim \rho} \brk{V^\pieval(\sx) - \fout(\sx)}} \leq \order(\epsilon). 
\end{align*}
\end{theorem}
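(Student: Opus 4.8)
The plan is to follow the BVFT analysis of \cite{xie2021batch}, but to carry it out layer-by-layer and to track the \emph{aggregated} concentrability rather than the pushforward concentrability. The argument splits into three parts: (i) a uniform concentration bound relating the empirical projected Bellman error $\|f-\widehat{\mathcal{T}}_{\mathcal{G}(f,f')}f\|_{\hat\nu,h}$ to its population counterpart; (ii) showing that the realizer $f^\star$ is a near-optimal solution of the min-max program, so that $\fout$ inherits a small projected Bellman error when paired with $f^\star$; and (iii) converting this small Bellman error into a bound on $\abs{\En_\rho[\fout-V^{\pieval}]}$ via a telescoping identity in the aggregated MDP, where the aggregated concentrability enters as the change-of-measure factor.

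For part (i), I would first argue, via a discretization of the function values onto a grid of resolution $\sim \epsilon^2/H^2$, that without loss of generality $\mathcal{G}(f,f')$ has a bounded number of classes per layer, rounding changing each Bellman-error term by at most $O(\epsilon^2/H^2)$. The empirical projection $\widehat{\mathcal{T}}_{\mathcal{G}}f$ is exactly the empirical within-class conditional mean of $r+f(x')$, so on each class $\phi$ its deviation from the population conditional mean --- which, by \pref{eq:agg-transition-model} and the preprocessing step that keeps only $a=\pieval(x)$ samples, equals the aggregated backup $\bar r(\phi)+\sum_{\phi'}\bar T(\phi'\mid\phi,\pieval)f(\phi')$ --- is governed by the per-class sample count $n_\phi\approx n\,\bar\mu_h(\phi)$. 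A union bound over the $O(|\gF|^2)$ pairs and the grid, together with a Bernstein/least-squares concentration inequality in each class, gives the uniform guarantee; classes with occupancy below the threshold $\epsilon^2/H^2$ are handled separately, since their total contribution to the value error is $O(\epsilon)$.

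For part (ii), the key observation is that because $\mathcal{G}(f^\star,f')$ is defined using the values of $f^\star$, the function $f^\star$ is itself constant on every aggregation class, hence lies in $\mathcal{G}(f^\star,f')$. Since $V^{\pieval}$ has zero Bellman residual and $\|f^\star-V^{\pieval}\|_\infty\le\eappr$, taking conditional expectations within each class shows that the population projected Bellman error of $f^\star$ is $O(\eappr)$ against \emph{every} $f'$; combined with part (i), its empirical value is small, so the min-max value is small and the minimizer $\fout$ has small empirical --- and therefore small population --- projected Bellman error against $f'=f^\star$.

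For part (iii), I would pass to the aggregated MDP $\bar M$ with aggregation $\Phi(\fout,f^\star)$, in which both $\fout$ and $f^\star$ descend to class-constant functions $\bar{f}_{\mathrm{o}},\bar f^\star$ on $\Phi$. Writing $\En_\rho[\fout-f^\star]=\En_{\bar\rho}[\bar f_{\mathrm{o}}-\bar f^\star]$ and telescoping each term against $\pieval$ in $\bar M$, the common aggregated value $\bar V^{\pieval}(\bar\rho)$ cancels, leaving $\sum_h \En_{\bar d_h^{\pieval}}[\bar{\mathcal{E}}_{\bar f_{\mathrm{o}}}-\bar{\mathcal{E}}_{\bar f^\star}]$, where $\bar{\mathcal{E}}_{\bar g}(\phi)=\bar g(\phi)-\bar r(\phi)-\sum_{\phi'}\bar T(\phi'\mid\phi,\pieval)\bar g(\phi')$ is precisely the population projected Bellman residual controlled above. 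I then bound each layer by changing measure from $\bar d_h^{\pieval}$ to $\bar\mu_h$ at the cost of $\aggC_{\epsilon^2/H^2}(M,\Phi(\fout,f^\star),\mu)$ --- using the set-based form of \pref{def:agg-concentrability} to discard low-occupancy classes, whose total $\bar d^{\pieval}$-mass is below threshold --- and invoke parts (i)--(ii) (with a Cauchy--Schwarz step passing from the $L^2$ guarantee to the $L^1$ residual mass) to bound $\|\bar{\mathcal{E}}\|_{1,\bar\mu_h}$; summing over the $H$ layers and adding $\eappr$ for $\|f^\star-V^{\pieval}\|_\infty$ yields the $O(\epsilon)$ bound. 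The main obstacle I anticipate is part (i): controlling the data-dependent projection $\widehat{\mathcal{T}}_{\mathcal{G}(f,f')}$ uniformly over all pairs while keeping the discretization/threshold scale $\epsilon^2/H^2$ consistent with the concentrability definition, since this is exactly where the $\aggC^2 H^6/\epsilon^4$ sample size is produced and where one must verify that the low-occupancy classes are genuinely negligible.
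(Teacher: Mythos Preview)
Your parts (i) and (ii) match the paper's approach closely (Lemmas~17--19(a) there): uniform concentration for the projected operator via per-class Bernstein plus a covering of $\gG(f,f')$, and the observation that $f^\star\in\gG(f^\star,f')$ so its projected Bellman error is at most $\|\fstar-\gT\fstar\|_{\nu,h}\le\eappr$, which makes $\fout$ competitive against $f'=f^\star$.

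In part (iii) you take a genuinely different route. The paper does \emph{not} use a performance-difference telescoping in $\agg{M}$; instead it runs an $L^2$ recursion
\[
\|\fout-\fstar\|_{\agg{d},h}\le \|\fstar-\gT_{\gG}\fstar\|_{\agg{d},h}+\|\fout-\gT_{\gG}\fout\|_{\agg{d},h}+\|\gT_{\gG}\fout-\gT_{\gG}\fstar\|_{\agg{d},h},
\]
and closes the recursion with a one-step contraction $\|\gT_{\gG}\fout-\gT_{\gG}\fstar\|_{\agg{d},h}\le\|\fout-\fstar\|_{\agg{d},h+1}$ (Jensen plus the identity $\agg{d}_{h+1}=\agg{d}_h\agg{T}$). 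Your telescoping identity is also valid and, once unrolled, yields the same per-layer Bellman residuals weighted by $\agg{d}_h$; the difference is that the paper stays in $L^2$ throughout whereas you work with the scalar $\En_\rho[\fout-\fstar]$.

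Two points to watch so that your route actually reproduces the $\aggC^2$ (not $\aggC^4$) dependence in the theorem. First, the order of operations in your change of measure matters: as you wrote it (change measure in $L^1$ to $\|\bar{\gE}\|_{1,\agg{\mu}}$, then Cauchy--Schwarz), you pay a factor $\aggC$ per layer. To match the paper you must apply Cauchy--Schwarz \emph{first}, $\En_{\agg{d}_h}|\bar{\gE}|\le\|\bar{\gE}\|_{2,\agg{d}_h}$, and then change measure in $L^2$, which yields $\sqrt{\aggC}$. Second, ``discard low-occupancy classes'' is not automatic from the set-based \pref{def:agg-concentrability}: the classes you need to discard are the \emph{high-ratio} ones, and showing that their total $\agg{d}$-mass falls below the threshold requires a short greedy/sorting argument (Lemma~20 in the paper). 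Finally, your discretization scale $\epsilon^2/H^2$ is finer than needed; the paper takes $\eappr=\epsilon/(H\sqrt{\aggC})$, which is what balances $\Phi_{\max}$ against $\eappr$ and produces exactly $n\ge\widetilde{\Omega}(\aggC^2 H^6/\epsilon^4)$.
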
 

Because of the state aggregation procedure in BVFT, the sample complexity upper bound in \pref{thm:BVFT_upper_bound} depends on the concentrability coefficient of the aggregated \MDP (i.e., $\aggC$) rather than that of the original MDP. Notice also that the sample complexity scales with $\frac{1}{\epsilon^4}$ instead of the more common $\frac{1}{\epsilon^2}$. This is similar to \cite{xie2021batch} and is because we divide the state space into $\order(\frac{1}{\epsilon^2})$ aggregations, each of which consists of states having the same value functions up to an accuracy of $\epsilon$.  Our bound have a smaller dependence on the horizon length $H$, but this is simply because we assume the range of the value function is $[-1,1]$ while they assume it to be $[-H, H]$.

Finally, we provide some implications of \pref{thm:BVFT_upper_bound}. First, as pointed out previously, we have $\aggC\leq \Cpush$ (\pref{lem: C and Cpush}). Second, in the case that the data is admissible with offline policy $\pioff$, and $\frac{1}{\pioff(\pieval(\sx)|\sx)}\leq \CpushA$ for all $\sx$, we have $\aggC\leq (\CpushA)^H$ (\pref{lem: C and BH}). Interestingly, a sample complexity bound of order $(\CpushA)^H$ is also the case if we use \emph{importance sampling} to perform offline policy evaluation. The difference is that importance sampling does not require access to any function class, but needs the data to be trajectories, while \BVFT requires access to a function class, but only needs the data to be admissible.

For the remaining of this section, we provide a proof for \pref{thm:BVFT_upper_bound}. 


\subsection{Definitions}\label{app: BVFT definitions}



\begin{definition}[partial offline data distribution $\nu(\sx)$] Given the offline data distribution \(\mu \in \Delta(\gX \times \gA)\), we define the partial distribution \(\nu \in \Delta(\gX)\) such that $\nu(\sx)=\mu(\sx,\pieval(\sx))$. 
\end{definition}


\begin{definition}[aggregation schemes $\Phi(f,f')$, $\Phi_h(f,f')$ and maximum partition number $\Phimax$]\label{def: partition induced by two f}
    Define $\Phi(f,f')$ as the state aggregation scheme (see \pref{sec: state aggregation}) where $\sx, \sy$ belongs to the same partition if and only if $f(\sx)=f(\sy)$ and $f'(\sx)=f'(\sy)$ and $\sx, \sy$ are in the same layer. Let $\Phi_h(f,f')\subset \Phi(f,f')$ be the set of partitions in layer $h$. Define $\Phimax = \max_{f,f'\in\gF}\max_h|\Phi_h(f,f')|$. 
\end{definition}

\begin{definition}[aggregation $\Phistar$, aggregated transition $\agg{T}$, occupancy $\agg{d}$, and offline distribution $\agg{\nu}$]
\label{def: aggregated transition} 
Consider the partition $\Phistar=\Phi(\fout, \fstar)$, where $\fout$ is the output of \pref{alg: BVFT} and $\fstar$ is defined in \pref{ass: approximate realizability}. Let $\phi, \phi'\in\Phistar$,  Define 
\begin{align*}
    \agg{T}(\phi'\mid{}\phi) = \frac{\sum_{\sx\in\phi}\sum_{\sx'\in\phi'} \nu(\sx)T(\sx' \mid{} \sx,\pieval)}{\sum_{\sx\in\phi} \nu(\sx)}. 
\end{align*}
Furthermore, let $\agg{d}(\phi)$ be the occupancy measure of $\pieval$ in the aggregated MDP. That is, $\agg{d}$ follows the recursive definition below: 
\begin{align*}
    \forall \phi'\in\Phistar_{h+1},\quad \agg{d}(\phi') &= \sum_{\phi\in\Phistar_h}\agg{d}(\phi)\agg{T}(\phi'\mid{}\phi), \quad \text{with}\  \agg{d}(\phi) = \frac{1}{H}\sum_{\sx\in\phi} \rho(\sx) \text{\ for\ } \phi\in \Phistar_1. 
\end{align*} 
Also, define $\agg{\nu}(\phi) = \sum_{\sx\in\phi} \nu(\sx)$.
\end{definition}
\begin{definition}[aggregated concentrability $\aggC_\epsilon^\star$]\label{def: cbar upper bound} 
The aggregated concentrability with respect to the aggregation $\Phistar=\Phi(\hat{f}, f^\star)$ (defined in \pref{def: aggregated transition}) is defined as
\begin{align}
     \aggCstar_\epsilon = \max_h \max\left\{\frac{\sum_{\phi\in\gI}\agg{d}(\phi)}{\sum_{\phi\in\gI}\agg{\nu}(\phi)}: \quad \gI\subset \Phistar_h, \quad \sum_{\phi\in\gI} \agg{d}(\phi) \geq \epsilon \right\}.     \label{eq: Cupper def}
\end{align}

   
\end{definition} 




\begin{definition}[projection operators $\gT_\gG f$ and $\hat{\gT}_\gG f$] \label{def: Bellman operator def}
    Let $f: \gX\rightarrow \mathbb{R}$, and let $\gG$ be any function set that consists of functions of the form $\gX \rightarrow\mathbb{R}$. Define 
    \begin{align*}
       \gT_{\gG} f &= \argmin_{g\in\gG} \sum_{s,\sx'} 
\nu(\sx)T(\sx'\mid{} \sx, \pieval) \left(g(\sx) - r(\sx,\pieval) -  f(\sx')\right)^2,  \\
       \That_\gG f &= \argmin_{g\in\gG}\frac{1}{n}\sum_{i=1}^{n'}\left(g(\sx_i) - \sr_i - f(\sx_i')\right)^2. 
    \end{align*}
    
\end{definition}

\begin{definition}[weighted norm $\|g\|_{w,h}$]\label{def: bar norm}
    Let $g\in \gG(f, f')$ and $\Phi=\Phi(f, f')$ for some $f,f'\in\gF$. Let $w: \Phi\to \mathbb{R}_{\geq 0}$ be arbitrary. With abuse of notation, define 
    $    \|g\|_{w, h} = \sqrt{\sum_{\phi\in\Phi_h} w(\phi)g(\phi)^2 } $, 
    where $g(\phi)$ is such that $g(\sx)=g(\phi)$ for all $\sx\in\phi$. 
\end{definition}

\begin{definition}[estimation error $\estat$]\label{def: estat}
    $\estat=\sqrt{\frac{\Phimax\log (n\Phimax|\gF|/\delta)}{n}}$, where $n$ is the number of offline samples, and $\Phimax$ is defined in \pref{def: partition induced by two f}. 
\end{definition}

We next establish a few properties of state aggregation. 


\begin{lemma}\label{lem: the form of TGf}
Let $g=\gT_{\gG(f,f')}f$ for some $f,f'\in\gF$. Fix a layer $h$. Let $g_{\alpha\beta}$ be the value of $g(\sx)$ for those $\sx\in\gX_h$'s
such that $f(\sx)=\alpha$ and $f'(\sx)=\beta$. Then $g_{\alpha\beta}$ has the following form: 
\begin{align*}
    g_{\alpha\beta} = \frac{\sum_{\sx\in\gX_h: f(\sx)=\alpha, f'(\sx)=\beta} \prn*{\nu(\sx) \gT f(\sx)}}{\sum_{\sx\in\gX_h: f(\sx)=\alpha, f'(\sx)=\beta} \prn*{\nu(\sx)}}. 
\end{align*} 

\end{lemma}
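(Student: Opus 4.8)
The plan is to recognize that $\gT_{\gG(f,f')}f$ is defined as the minimizer over $\gG(f,f')$ of the weighted squared objective $\sum_{\sx,\sx'} \nu(\sx)T(\sx'\mid \sx,\pieval)(g(\sx)-r(\sx,\pieval)-f(\sx'))^2$ (see \pref{def: Bellman operator def}), and that membership in $\gG(f,f')$ forces $g$ to be constant on each aggregation block, i.e.\ on each set of states sharing the same $(f(\sx),f'(\sx))$ value within a fixed layer $h$. So the minimization decouples completely across blocks, and on the block indexed by $(\alpha,\beta)$ the objective is a one-dimensional weighted least-squares problem in the single scalar $g_{\alpha\beta}$.

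First I would fix the layer $h$ and the pair $(\alpha,\beta)$, and isolate the contribution of this block to the total objective. Writing $B = \{\sx\in\gX_h : f(\sx)=\alpha,\ f'(\sx)=\beta\}$, the terms involving $g_{\alpha\beta}$ are
\begin{align*}
J(g_{\alpha\beta}) = \sum_{\sx\in B}\sum_{\sx'} \nu(\sx)\,T(\sx'\mid \sx,\pieval)\,\bigl(g_{\alpha\beta}-r(\sx,\pieval)-f(\sx')\bigr)^2,
\end{align*}
since $g(\sx)=g_{\alpha\beta}$ for every $\sx\in B$, and the blocks in other layers or with other $(\alpha,\beta)$ values contribute terms not depending on $g_{\alpha\beta}$. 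Minimizing a weighted sum of squares $\sum_i w_i(c - y_i)^2$ over the scalar $c$ gives the weighted mean $c=\sum_i w_i y_i / \sum_i w_i$, so I would differentiate $J$ with respect to $g_{\alpha\beta}$, set the derivative to zero, and solve.

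Next I would simplify the resulting expression. The optimal value is
\begin{align*}
g_{\alpha\beta} = \frac{\sum_{\sx\in B}\sum_{\sx'}\nu(\sx)T(\sx'\mid\sx,\pieval)\bigl(r(\sx,\pieval)+f(\sx')\bigr)}{\sum_{\sx\in B}\sum_{\sx'}\nu(\sx)T(\sx'\mid\sx,\pieval)}.
\end{align*}
Summing over $\sx'$ in the denominator gives $\sum_{\sx\in B}\nu(\sx)$ because $T(\cdot\mid\sx,\pieval)$ is a probability distribution. In the numerator, $\sum_{\sx'}T(\sx'\mid\sx,\pieval)\bigl(r(\sx,\pieval)+f(\sx')\bigr) = r(\sx,\pieval)+\sum_{\sx'}T(\sx'\mid\sx,\pieval)f(\sx')$, which is exactly $\gT f(\sx)$ by the definition of the (population) Bellman evaluation operator applied to $f$. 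This collapses the numerator to $\sum_{\sx\in B}\nu(\sx)\gT f(\sx)$, yielding the claimed formula.

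I do not anticipate a serious obstacle here, since this is a routine weighted-least-squares computation; the only point requiring minor care is making the identification $r(\sx,\pieval)+\sum_{\sx'}T(\sx'\mid\sx,\pieval)f(\sx') = \gT f(\sx)$ precise, which just amounts to recalling that $\gT$ denotes the Bellman evaluation operator for $\pieval$ (so that $\gT f(\sx) = r(\sx,\pieval) + \E_{\sx'\sim T(\cdot\mid\sx,\pieval)}[f(\sx')]$), and verifying that the decoupling across blocks is legitimate, i.e.\ that restricting to $\gG(f,f')$ is equivalent to independently choosing one scalar per block. One should also note the objective is strictly convex in $g_{\alpha\beta}$ whenever $\sum_{\sx\in B}\nu(\sx)>0$, so the stationary point is the unique minimizer; blocks with zero total weight can be assigned arbitrarily and do not affect the weighted-norm comparisons used later.
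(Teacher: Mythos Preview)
Your proposal is correct and follows essentially the same approach as the paper: both recognize that membership in $\gG(f,f')$ forces $g$ to be constant on each $(\alpha,\beta)$ block, differentiate the weighted squared objective with respect to $g_{\alpha\beta}$, set the derivative to zero, and identify the resulting expression with $\gT f(\sx) = r(\sx,\pieval) + \sum_{\sx'}T(\sx'\mid\sx,\pieval)f(\sx')$. Your version is slightly more explicit about the decoupling across blocks and the handling of zero-weight blocks, but the argument is the same.
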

\begin{proofof}[\pref{lem: the form of TGf}]
      Recall from \pref{def: Bellman operator def} that $g$ is the minimizer of $\E_{\sx\sim \nu} \E_{\sx'\sim T(\cdot\mid{} \sx, \pieval)} 
 \left(g(\sx) - r(\sx,\pieval) -  f(\sx')\right)^2$ within $\gG(f,f')$. The derivative of this objective with respect to $g_{\alpha \beta}$ is 
\begin{align*}
    &\E_{\sx\sim\nu} \E_{\sx'\sim T(\cdot\mid{} \sx, \pieval)} 2(g_{\alpha \beta} - r(\sx,\pieval) - f(\sx')) \mathbb{I}\{ f(\sx)=\alpha, f'(\sx)=\beta, h(\sx)=h \} \\
    &= 2g_{\alpha\beta}\sum_{\sx\in\gX_h:~f(\sx)=\alpha, f'(\sx)=\beta} \nu(\sx) - 2\sum_{\sx\in \gX_h: f(\sx)=\alpha, f'(\sx)=\beta} \nu(\sx)\left(r(\sx,\pieval) + \sum_{\sx'} T(\sx'\mid{} \sx, \pieval)f(\sx')\right) \\
    &= 2g_{\alpha\beta}  \sum_{\sx\in\gX_h:~f(\sx)=\alpha, f'(\sx)=\beta} \nu(\sx) - 2\sum_{\sx\in \gX_h: f(\sx)=\alpha, f'(\sx)=\beta} \nu(\sx)\gT f(\sx). 
\end{align*}
Setting this to be zero gives the desired expression of $g_{\alpha\beta}$. 
\end{proofof}

\begin{lemma}\label{lem: property of G}
    For any $f\in\gF$, we have that $\max_{f'\in\gF}\|f-\gT_{\gG(f,f')}f \|_{\nu, h} \leq \|f-\gT f\|_{\nu, h}$. 
\end{lemma}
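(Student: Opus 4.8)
The plan is to recognize the operator $\gT_{\gG(f,f')}f$ as an orthogonal projection and then invoke the non-expansiveness of projections. Fix an arbitrary $f'\in\gF$ and abbreviate $\gG=\gG(f,f')$ and $\Phi=\Phi(f,f')$. On each layer $h$, I would equip the functions $\gX_h\to\mathbb{R}$ with the $\nu$-weighted inner product $\langle u,v\rangle_{\nu,h}=\sum_{\sx\in\gX_h}\nu(\sx)u(\sx)v(\sx)$, whose induced norm is exactly $\|\cdot\|_{\nu,h}$ (for functions in $\gG$ this coincides with \pref{def: bar norm} under the weight $w(\phi)=\sum_{\sx\in\phi}\nu(\sx)$). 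Let $V_h$ denote the set of functions that are constant on every cell $\phi\in\Phi_h$; this is precisely $\gG$ restricted to layer $h$, and it is a genuine linear subspace (closed under addition and scaling).

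First I would use \pref{lem: the form of TGf} to identify $\gT_\gG f$ restricted to layer $h$ with the $\langle\cdot,\cdot\rangle_{\nu,h}$-orthogonal projection $P_h$ of $\gT f$ onto $V_h$: that lemma shows the value of $\gT_\gG f$ on each cell is the $\nu$-weighted average of $\gT f$ over that cell, which is exactly the defining formula for $P_h(\gT f)$. (Equivalently, the least-squares objective defining $\gT_\gG f$ decomposes cell-by-cell, hence layer-by-layer, and by the bias--variance identity its minimizer over $g\in\gG$ coincides with the minimizer of $\|g-\gT f\|_{\nu,h}$, namely the projection; the discarded variance term is independent of $g$.) The second fact I need is that $f$ itself lies in $\gG$: whenever two states share the same values of $f$, $f'$ and the same layer, they in particular share the same value of $f$, so $f\in V_h$ and therefore $P_h f=f$ on layer $h$.

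With these two observations the bound is immediate from linearity of $P_h$ and the fact that an orthogonal projection has operator norm one:
\begin{align*}
\|f-\gT_\gG f\|_{\nu,h}=\|P_h f-P_h(\gT f)\|_{\nu,h}=\|P_h(f-\gT f)\|_{\nu,h}\le \|f-\gT f\|_{\nu,h}.
\end{align*}
Since the right-hand side does not depend on $f'$ and the inequality holds for every $f'\in\gF$, taking the maximum over $f'$ on the left yields the claim. I do not anticipate a genuine technical obstacle; the only care needed is the bookkeeping to confirm that the aggregation-based operator $\gT_\gG f$ really is an $L^2(\nu)$ projection (so that both linearity and non-expansiveness apply) and that the global least-squares problem splits layerwise, both of which follow from the cell structure of $\gG(f,f')$ together with \pref{lem: the form of TGf}.
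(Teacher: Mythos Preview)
Your proof is correct and follows essentially the same route as the paper: both use \pref{lem: the form of TGf} to recognize $\gT_{\gG(f,f')}f$ as the $\nu$-weighted cell average of $\gT f$, together with the fact that $f$ is constant on each cell, and then conclude by the contractivity of this averaging. The only difference is phrasing---you package the last step as non-expansiveness of an orthogonal projection, whereas the paper writes out the squared norm cell-by-cell and applies Jensen's inequality directly.
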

\begin{proofof}[\pref{lem: property of G}]
   Fix $f, f'\in\gF$ and fix $h\in[H]$. Let $g=\gT_{\gG(f,f')}f$ and let $g_{\alpha\beta}$ be the value of $g(\sx)$ for $\sx\in \gX_h$ such that $f(\sx)=\alpha$ and $f'(\sx)=\beta$. 
   
   Define $\nu_{\alpha\beta}=\sum_{\sx\in\gX_h}\nu(\sx)\mathbb{I}\{f(\sx)=\alpha \text{\ and\ } f'(\sx)=\beta\}$.   Then by definition, we have 
    \begin{align*}
        \|f - \gT_{\gG(f,f')}f\|_{\nu, h}^2 &= \sum_{\alpha,\beta} \nu_{\alpha\beta}(\alpha - g_{\alpha\beta})^2 \\ 
        &= \sum_{\alpha,\beta} \nu_{\alpha\beta}\left(\alpha - \frac{1}{\nu_{\alpha\beta}}\sum_{\sx\in\gX_h: f(\sx)=\alpha, f'(\sx)=\beta} \nu(\sx) \gT f(\sx)\right)^2   \tag{using \pref{lem: the form of TGf}} \\
        &\leq \sum_{\alpha,\beta} \nu_{\alpha\beta} \frac{1}{\nu_{\alpha\beta}}\sum_{\sx\in\gX_h: f(\sx)=\alpha, f'(\sx)=\beta}\nu(\sx) \left(\alpha-\gT f(\sx)\right)^2  \tag{Jensen's inequality} \\
        &= \sum_{\sx\in\gX_h} \nu(\sx)\left(f(\sx) - \gT f(\sx)\right)^2 \\
        &= \|f-\gT f\|_{\nu, h}^2. 
    \end{align*}
\end{proofof}

\subsection{Supporting Technical Results} 

\begin{lemma}\label{lem: empirical and true}
With probability at least $1-\delta$, for all $f, f'\in \gF$ and all $g\in \gG(f,f')$ such that $\sup_{\sx}|g(\sx)|\leq 1$, it holds that
\begin{align*}
    \|g\|_{\nu,h} &\leq \sqrt{2} \|g\|_{\hat{\nu},h} + \order\left(\estat\right), \\
    \|g\|_{\hat{\nu},h} &\leq \sqrt{2}\|g\|_{\nu,h} + \order\left(\estat\right). 
\end{align*}
(Recall the definition of $\estat$ in \pref{def: estat})
\end{lemma}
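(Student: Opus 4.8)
The statement to prove is Lemma~\ref{lem: empirical and true}, a uniform concentration inequality relating the empirical norm $\|g\|_{\hat\nu, h}$ to the population norm $\|g\|_{\nu, h}$ for all aggregated functions $g \in \gG(f,f')$, simultaneously over all pairs $f,f' \in \gF$.

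\textbf{Overall approach.} The plan is to reduce the problem to a finite-dimensional concentration statement by exploiting the aggregation structure. The key observation is that for a fixed pair $(f,f')$, any $g \in \gG(f,f')$ with $\sup_{\sx}|g(\sx)| \le 1$ is constant on each aggregation $\phi \in \Phi_h(f,f')$, so $g$ is determined by at most $\Phimax$ scalar values, each in $[-1,1]$. Writing $g(\phi)$ for the common value on $\phi$, we have $\|g\|^2_{\nu,h} = \sum_{\phi \in \Phi_h} \nu(\phi) g(\phi)^2$ and $\|g\|^2_{\hat\nu,h} = \sum_{\phi \in \Phi_h} \hat\nu(\phi) g(\phi)^2$, where $\nu(\phi) = \sum_{\sx \in \phi} \nu(\sx)$ and $\hat\nu(\phi) = \sum_{\sx \in \phi}\hat\nu(\sx)$ is its empirical counterpart. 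Thus it suffices to control the deviation of the per-aggregation empirical masses $\hat\nu(\phi)$ from $\nu(\phi)$; once these are close, the two weighted norms are close for every bounded $g$ uniformly.

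\textbf{Key steps in order.} First I would fix a pair $(f,f')$ and the layer $h$, and note that $\hat\nu(\phi) = \tfrac{1}{n}\sum_{i=1}^{n'}\indic\{\sx_i \in \phi\}$ is an empirical average, so by Bernstein's inequality each $\hat\nu(\phi)$ concentrates around $\nu(\phi)$ with deviation controlled by $\sqrt{\nu(\phi)\log(1/\delta)/n} + \log(1/\delta)/n$. Second, I would take a union bound over the at most $\Phimax$ aggregations in layer $h$, over the $H$ layers, and crucially over all $O(|\gF|^2)$ pairs $(f,f')$ — this is where the $\log(\Phimax |\gF|/\delta)$ factor in $\estat$ enters (the $|\gF|^2$ contributes only a constant factor $2$ to the log). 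Third, with $|g(\phi)| \le 1$, I would bound
\[
\bigl|\,\|g\|^2_{\hat\nu,h} - \|g\|^2_{\nu,h}\,\bigr| \le \sum_{\phi \in \Phi_h}|\hat\nu(\phi)-\nu(\phi)|\,g(\phi)^2 \le \sum_{\phi}|\hat\nu(\phi)-\nu(\phi)|,
\]
and split the Bernstein bound into its variance term $\sum_\phi \sqrt{\nu(\phi)\log/n}$ and its lower-order term. Using Cauchy--Schwarz, $\sum_\phi \sqrt{\nu(\phi)} \le \sqrt{\Phimax \sum_\phi \nu(\phi)} \le \sqrt{\Phimax}$, which yields a deviation of order $\sqrt{\Phimax \log(\Phimax|\gF|/\delta)/n} = \estat$ on the squared norms. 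Finally, I would convert the additive bound on squared norms into the multiplicative-plus-additive bound on the norms themselves: from $\|g\|^2_{\nu,h} \le 2\|g\|^2_{\hat\nu,h} + O(\estat^2)$ (a rewriting of the squared-norm deviation bound, absorbing the cross term via $2ab \le a^2+b^2$) one extracts $\|g\|_{\nu,h} \le \sqrt 2\|g\|_{\hat\nu,h} + O(\estat)$ using $\sqrt{a+b}\le\sqrt a + \sqrt b$, and symmetrically for the other direction.

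\textbf{Main obstacle.} The principal subtlety is handling the interplay between the variance-dependent Bernstein term and the target form of $\estat$: a naive Hoeffding bound would give $\sqrt{\Phimax^2 \log/n}$ (a worse $\Phimax$ dependence), so I must use a Bernstein/variance-aware bound together with Cauchy--Schwarz and the fact that the total mass $\sum_\phi \nu(\phi) \le 1$ to keep the dependence at $\sqrt{\Phimax}$ per squared norm. A second point requiring care is that the constant $\sqrt 2$ (rather than $1$) in the final bound is not an artifact to be removed but is exactly what the $2ab \le a^2 + b^2$ / AM--GM step produces when passing from the additive squared-norm bound to the norm bound; I would present that conversion cleanly rather than attempting a sharper constant. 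Everything else — the union bound bookkeeping and the elementary norm inequalities — is routine.
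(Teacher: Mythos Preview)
Your approach has a genuine gap in the final step. After bounding $\sum_{\phi}|\hat\nu(\phi)-\nu(\phi)|$ by Cauchy--Schwarz, you correctly arrive at a deviation of order $\estat$ on the \emph{squared} norms, i.e.\ $\bigl|\,\|g\|^2_{\nu,h}-\|g\|^2_{\hat\nu,h}\,\bigr|=O(\estat)$. But in the next sentence you silently upgrade this to $\|g\|^2_{\nu,h}\le 2\|g\|^2_{\hat\nu,h}+O(\estat^2)$, claiming this is ``a rewriting \ldots\ absorbing the cross term via $2ab\le a^2+b^2$.'' There is no cross term to absorb: your bound is a flat additive $O(\estat)$, independent of $\|g\|_{\nu,h}$, because you dropped $g(\phi)^2$ via $|g(\phi)|\le 1$ before summing. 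Taking square roots then only gives $\|g\|_{\nu,h}\le \|g\|_{\hat\nu,h}+O(\sqrt{\estat})$, which is strictly weaker than the lemma's $O(\estat)$ additive term.

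The paper's proof avoids this by applying Bernstein directly to the random variables $g(\sx_i)^2$ for a \emph{fixed} $g$: the variance is $\sum_{\sx}\nu(\sx)g(\sx)^4\le \|g\|^2_{\nu,h}$ (using $|g|\le 1$), so the Bernstein deviation is $O\bigl(\sqrt{\|g\|^2_{\nu,h}\log(1/\delta')/n}+\log(1/\delta')/n\bigr)$, and AM--GM on the first term produces the self-bounding inequality $\|g\|^2_{\nu,h}\le 2\|g\|^2_{\hat\nu,h}+O(\log(1/\delta')/n)$. Because this is per-$g$, the paper then takes an $\epsilon$-net over $\gG(f,f')$ of size $n^{O(\Phimax)}$, which is where the $\Phimax\log n$ in $\estat$ comes from. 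Your per-aggregation route can in fact be repaired to avoid the net: keep the $g(\phi)^2$ factor and bound $\sum_\phi\sqrt{\nu(\phi)}\,g(\phi)^2\le \sum_\phi\sqrt{\nu(\phi)g(\phi)^2}\le \sqrt{\Phimax}\,\|g\|_{\nu,h}$ (using $g(\phi)^2\le|g(\phi)|$ and Cauchy--Schwarz), which reintroduces the cross term $O(\estat\,\|g\|_{\nu,h})$ that AM--GM can absorb into $\tfrac{1}{2}\|g\|^2_{\nu,h}+O(\estat^2)$. The step you described, however, drops exactly the structure needed for this.
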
 
\begin{proofof}[\pref{lem: empirical and true}]
    Fix $f, f'$ and $g$. 
    By Bernstein's inequality, with probability at least $1-\delta'$, 
    \begin{align*}
        \left|\|g\|_{\nu, h}^2 - \|g\|_{\hat{\nu}, h}^2\right| &= \left| 
\frac{1}{n} \sum_{i=1}^{n'} \left(g(\sx_i)^2 - \sum_{s} \nu(\sx) g(\sx)^2\right) \right| \\
&\leq \order\left(\sqrt{\frac{1}{n} \sum_{s} \nu(\sx)g(\sx)^4 \log(1/\delta')} + \frac{\log(1/\delta')}{n}\right) \\
&\leq  \sum_s \nu(\sx) g(\sx)^2 + \order\left(\frac{\log(1/\delta')}{ n}\right)   \tag{AM-GM} \\
&=  \|g\|_{\nu, h}^2 + \order\left(\frac{\log(1/\delta')}{n}\right). 
    \end{align*}
    Rearranging this gives 
    \begin{align*}
        \|g\|^2_{\nu, h} \leq 2\|g\|^2_{\hat{\nu}, h} + \order\left(\frac{\log(1/\delta')}{ n}\right)\qquad \text{and}\qquad \|g\|^2_{\hat{\nu}, h} \leq 2\|g\|^2_{\nu, h} + \order\left(\frac{\log(1/\delta')}{ n}\right). 
    \end{align*}
    Next, we take union bounds over $f, f'$ and $g$. Notice that for every pair of $(f, f')$, the value of $g\in\gG(f,f')$ on $x\in\gX_h$ is determined by the values of $\{g(\phi)\}_{\phi\in\Phi_h(f,f')}$, where $g(\phi)$ is the value of $g(\sx)$ for $x\in\phi$. Therefore, an $\epsilon$-net of $\gG(f,f')$ on layer $h$ can be constructed by discretizing each value of $\{g(\phi)\}_{\phi\in\Phi_h(f,f')}$, and its size is at most $(1/\epsilon)^{\order(|\Phi_h(f,f')|)} \leq (1/\epsilon)^{\order(\Phimax)}$. It suffices to pick $\epsilon=\frac{1}{n}$ and bound the discretization error by $\order(\frac{1}{n})$. Overall, the union bound is taken over $|\gF|^2 n^{\Phimax}$ instances. Therefore, we pick $\delta'=\frac{\delta}{|\gF|^2 n^{\Phimax}}$, which gives 
    \begin{align*}
        \|g\|^2_{\nu, h} &\leq 2|g\|^2_{\hat{\nu}, h} + \order\left(\frac{\Phimax\log(n|\gF|/\delta)}{ n}\right),  \quad \text{and}\\
        \|g\|^2_{\hat{\nu}, h} &\leq 2\|g\|^2_{\nu, h} + \order\left(\frac{\Phimax\log(n|\gF|/\delta)}{ n}\right). 
    \end{align*}
    Finally, taking square root on both sides and recalling that $\estat=\sqrt{\frac{\Phimax\log(n\Phimax|\gF|/\delta)}{n}}$ finishes the proof. 
    
\end{proofof}

\begin{lemma}\label{lem: BVFT operator closeness} 
    With probability at least $1-\delta$, for all $f, f'\in\gF$, and \(h \leq H\), 
    \begin{align*}
    \|\gT_{\gG(f,f')}f - \That_{\gG(f,f')} f\|_{\nu, h} \le \cO\left(\estat\right).
    \end{align*}
\end{lemma}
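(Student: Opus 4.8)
The plan is to exploit the closed-form, piecewise-constant descriptions of both projections and reduce the claim to a per-aggregation concentration of empirical averages against conditional means. Fix $f, f' \in \gF$ and a layer $h$, and write $\Phi_h = \Phi_h(f,f')$. Every $g \in \gG(f,f')$ is constant on each aggregation $\phi \in \Phi_h$, so minimizing the two objectives in \pref{def: Bellman operator def} decouples across aggregations. On $\phi$, \pref{lem: the form of TGf} gives $\gT_{\gG}f(\phi) = \frac{\sum_{\sx \in \phi}\nu(\sx)\gT f(\sx)}{\sum_{\sx\in\phi}\nu(\sx)}$, which, since $\gT f(\sx) = \E[\sr + f(\sx') \mid \sx]$, is exactly the true $\nu$-conditional mean of the Bellman target $Y = \sr + f(\sx')$ over $\phi$; likewise $\That_{\gG}f(\phi)$ is the empirical average of $Y$ over the post-preprocessing samples (\pref{line: preprocess}) whose state lands in $\phi$. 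Expanding the weighted norm from \pref{def: bar norm},
\[
\|\gT_{\gG}f - \That_{\gG}f\|_{\nu,h}^2 = \sum_{\phi\in\Phi_h} \nu(\phi)\,\bigl(\gT_{\gG}f(\phi) - \That_{\gG}f(\phi)\bigr)^2, \qquad \nu(\phi) = \sum_{\sx\in\phi}\nu(\sx),
\]
so it suffices to control each squared deviation between a conditional mean and its empirical counterpart.

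The core estimate is a concentration bound per aggregation. Conditioned on the count $n_\phi$ of retained samples in $\phi$, these samples are i.i.d.\ with target mean $\gT_\gG f(\phi)$ and bounded range (rewards lie in $[-1,1]$ and $f$ is bounded by \pref{ass: approximate realizability}), so Hoeffding's inequality yields $\bigl|\gT_\gG f(\phi) - \That_\gG f(\phi)\bigr| \le \cO\bigl(\sqrt{\log(1/\delta')/n_\phi}\bigr)$. To convert the random $n_\phi$ into the deterministic mass $\nu(\phi)$, I would split $\Phi_h$ into heavy aggregations with $n\nu(\phi) \gtrsim \log(1/\delta')$ and light ones. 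For heavy $\phi$, a multiplicative Chernoff bound guarantees $n_\phi \ge \tfrac12 n\nu(\phi)$, whence $\nu(\phi)\bigl(\gT_\gG f(\phi) - \That_\gG f(\phi)\bigr)^2 \le \cO(\log(1/\delta')/n)$; summing over at most $\Phimax$ such aggregations contributes $\cO(\Phimax\log(1/\delta')/n)$. For light $\phi$, where the empirical average is based on very few—possibly zero—samples, I would use only the trivial bound $\bigl|\gT_\gG f(\phi) - \That_\gG f(\phi)\bigr| \le \cO(1)$, so their total contribution is at most $\cO(1)\sum_{\text{light}}\nu(\phi) \le \cO\bigl(\Phimax \log(1/\delta')/n\bigr)$, using that there are at most $\Phimax$ of them, each of mass below $\log(1/\delta')/n$, and that $\sum_{\phi}\nu(\phi)\le 1$.

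Finally I would take a union bound over all $f, f' \in \gF$, all $h \in [H]$, and the at-most-$\Phimax$ aggregations per triple, setting $\delta' = \delta/\mathrm{poly}(|\gF|, H, \Phimax, n)$ so that $\log(1/\delta') = \cO(\log(n\Phimax|\gF|/\delta))$. Both contributions then sum to $\cO\bigl(\Phimax \log(n\Phimax|\gF|/\delta)/n\bigr) = \cO(\estat^2)$ by \pref{def: estat}, and taking square roots gives the claim. The main obstacle is the uniform control: both the aggregation scheme $\Phi(f,f')$ and the per-aggregation counts $n_\phi$ are themselves determined by $f,f'$ (and, for the counts, by the data), so the empirical averages cannot be treated as a single fixed empirical mean. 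The heavy/light decomposition together with the multiplicative count concentration is precisely what tames the unreliable low-mass cells while keeping the union-bound cardinality at the $\mathrm{poly}(|\gF|,H,\Phimax,n)$ level that the definition of $\estat$ can absorb.
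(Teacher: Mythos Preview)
Your proposal is correct and follows essentially the same route as the paper: both proofs use the closed-form per-aggregation description of the two projections, split $\Phi_h$ into heavy and light cells at the threshold $\nu(\phi)\gtrsim \log(1/\delta')/n$, bound the light cells trivially, apply concentration on the heavy cells, and finish with a union bound over $(f,f')$ and aggregations. The only technical difference is that the paper treats $\gT_\gG f(\phi)-\That_\gG f(\phi)$ as $Y(\phi)/Z(\phi)-\Yhat(\phi)/\Zhat(\phi)$ and applies Bernstein's inequality separately to the numerator and denominator sums (over all $n$ samples with indicators), whereas you condition on the count $n_\phi$, apply Hoeffding to the resulting i.i.d.\ conditional sample, and use a multiplicative Chernoff bound to relate $n_\phi$ to $n\nu(\phi)$; both variants give the same $\cO(\estat^2)$ bound.
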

\begin{proofof}[\pref{lem: BVFT operator closeness} ]
    We first fix the function pair $f,f'$ and define additional notation. Let $\Phi=\Phi(f,f')$ and $\gG=\gG(f,f')$. For every $\phi\in\Phi_h$, define  
    \begin{align*}
        Y(\phi) &= \sum_{\sx\in \phi}\nu(\sx) \left(r(\sx,\pieval) + \sum_{\sx'} T(\sx'\mid{} \sx, \pieval) f(\sx')\right), \ \  && Z(\phi) = \sum_{\sx\in\phi} \nu(\sx),  \\
        \Yhat(\phi) &= \frac{1}{n} \sum_{i=1}^{n'}  \mathbb{I}\{\sx_i\in \phi\}  \left(r(\sx_i, \pieval) + f(\sx_{i}')\right), &&  \Zhat(\phi) = \frac{1}{n}\sum_{i=1}^{n'}  \mathbb{I}\{\sx_i\in \phi\}. 
    \end{align*} 
Additionally, let $\delta'=\frac{\delta}{|\gF|^2\Phimax}$,  $\epsilon_0 = \frac{\log(1/\delta')}{n}$, and $\Phi_h'=\{\phi\in\Phi_h:~ Z(\phi)\geq \epsilon_0\}$. 
        
Using \pref{lem: the form of TGf} together with definitions above, we get that for all $\sx\in \phi$, 
    \begin{align*}
        \gT_{\gG}f(\sx) = \frac{Y(\phi)}{Z(\phi) } \quad  \text{and} \quad   
        \That_{\gG}f(\sx) = \frac{\Yhat(\phi)}{\Zhat(\phi) }. 
    \end{align*}
    Thus, we have 
    \begin{align} 
         &\|\gT_\gG f - \That_\gG f\|_{\nu,h}^2 \\
         &=\sum_{\sx\in\gX_h} \nu(\sx) \left(\gT_\gG f(\sx) - \That_\gG f(\sx)\right)^2 \nonumber \\
         &=  \sum_{\phi\in\Phi_h} \sum_{\sx\in\phi} \nu(\sx) \left(\frac{Y(\phi)}{Z(\phi)} - \frac{\Yhat(\phi)}{ \Zhat(\phi) }\right)^2 \nonumber \\
         &\leq  \sum_{\phi\in\Phi_h'} \sum_{\sx\in\phi} \nu(\sx) \left(\frac{Y(\phi)}{Z(\phi)} - \frac{\Yhat(\phi)}{ \Zhat(\phi) }\right)^2 + 4\sum_{\phi\in\Phi_h \setminus \Phi_h'} \sum_{\sx\in\phi} \nu(\sx)H^2   \tag{$\frac{Y(\phi)}{Z(\phi)} - \frac{\Yhat(\phi)}{ \Zhat(\phi) } \in [-2, 2]$} \\ 
         &\leq 2\sum_{\phi\in\Phi_h'} Z(\phi) \left(\frac{Y(\phi)}{Z(\phi)} - \frac{\Yhat(\phi)}{Z(\phi) }\right)^2 + 2  \sum_{\phi\in\Phi_h'} Z(\phi)  \left(\frac{\Yhat(\phi)}{Z(\phi) } - \frac{\Yhat(\phi)}{\Zhat(\phi) }\right)^2  + 4\sum_{\phi\in\Phi_h\setminus \Phi_h'} Z(\phi)     \\
         &\leq 2\sum_{\phi\in\Phi_h'} \frac{(Y(\phi) - \Yhat(\phi))^2}{Z(\phi)} + 2\sum_{\phi\in\Phi_h'}  \frac{\Yhat(\phi)^2}{\Zhat(\phi)^2} \frac{(Z(\phi) - \Zhat(\phi))^2}{Z(\phi)} + 4|\Phi_h|\epsilon_0 \tag{because $Z(\phi)\leq \epsilon_0$ for all $\phi\in \Phi_h\setminus \Phi_h'$}  \nonumber    \\  
         &\leq 2\sum_{\phi\in\Phi_h'} \frac{(Y(\phi) - \Yhat(\phi))^2 +  (Z(\phi) - \Zhat(\phi))^2}{Z(\phi)} + \frac{4\Phimax\log(1/\delta')}{n}, 
         & \label{eq: bound T and That diff} 
    \end{align}
   where the last line follows by observing that  \(\nicefrac{\Yhat(\phi)}{\Zhat(\phi)} \leq 1\) for any \(\phi \in \Phi\). 

    Next, using Bernstein's inequality for any \(\phi \in \Phi_h\),  with probability at least $1-\delta'$,  we have 
    \begin{align*}
        |Y(\phi) - \Yhat(\phi)| &\leq \order\left(\sqrt{\frac{Z(\phi)}{n} \log(1/\delta')} + \frac{\log(1/\delta')}{n}\right), \\
        |Z(\phi) - \Zhat(\phi)| &\leq \order\left(\sqrt{\frac{Z(\phi)}{n} \log(1/\delta')} + \frac{\log(1/\delta')}{n}\right).
    \end{align*}
    Plugging the above in \pref{eq: bound T and That diff}, and using a union bound over $\phi\in\Phi_h'$, we get that with probability at least $1-\delta'\Phimax$, 
    \begin{align*}
        \|\gT_\gG f - \That_\gG f\|_{\nu,h}^2 
        &\leq \cO\left( \sum_{\phi\in\Phi_h'}  \frac{\log(1/\delta')}{n} + \frac{\log^2(1/\delta')}{n^2Z(\phi)} + \frac{\Phimax\log(1/\delta')}{n} \right) \\ 
        &\leq \cO\left(\frac{ \Phimax\log(1/\delta')}{n} + \frac{\Phimax\log^2(1/\delta')}{n^2\epsilon_0} + \frac{\Phimax\log(1/\delta')}{n}\right)  \tag{for $\phi\in \Phi_h'$, $Z(\phi)\geq \epsilon_0$} \\
        &=  \cO\left(\frac{ \Phimax\log(|\gF|\Phimax/\delta)}{n}\right). \tag{by the definition of $\epsilon_0$ and $\delta'$}
    \end{align*}
    Finally, using a union bound over $(f,f')\in\gF\times \gF$ and recalling that $\estat=\sqrt{\frac{\Phimax\log(n|\gF|\Phimax/\delta)}{n}}$ finishes the proof. 
\end{proofof}

\begin{lemma}\label{lem: BVFT term 1} Let $\fout$ be the output of \pref{alg: BVFT} and $\fstar$ be defined in \pref{ass: approximate realizability}. 
\begin{enumerate}[label=\((\alph*)\)] 
\item   $\|\fout - \gT_{\gG(\fout, \fstar)} \fout\|_{\nu, h} \le \cO\left(\eappr + \estat\right)$. 
\item 
    $\|\gT_{\gG(\fout, \fstar)} \fout - \gT_{\gG(\fout, \fstar)} \fstar\|_{\agg{d}, h} \leq \|\fout-\fstar\|_{\agg{d}, h+1}$. 
\end{enumerate}
(recall the definition of $\agg{d}$ in \pref{def: aggregated transition})

\end{lemma}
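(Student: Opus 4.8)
\textbf{Proof plan for \pref{lem: BVFT term 1}.}

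The plan is to establish each of the two parts separately, leveraging the aggregation machinery and the algebraic identities already proved in \pref{lem: the form of TGf}, \pref{lem: property of G}, and \pref{lem: BVFT operator closeness}. For part (a), the natural decomposition is through the triangle inequality, writing
\begin{align*}
\|\fout - \gT_{\gG(\fout,\fstar)}\fout\|_{\nu,h}
&\leq \|\fout - \fstar\|_{\nu,h}
 + \|\fstar - \gT_{\gG(\fout,\fstar)}\fstar\|_{\nu,h}
 + \|\gT_{\gG(\fout,\fstar)}\fstar - \gT_{\gG(\fout,\fstar)}\fout\|_{\nu,h}.
\end{align*}
The first term is $\order(\eappr)$ directly from \pref{ass: approximate realizability} (since $\fstar$ is uniformly $\eappr$-close to $V^\pieval$, and $\fout$ will need to be compared to $V^\pieval$ via the optimality property below). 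To control the middle term I would observe that $\gT f^\star$ is close to $f^\star$ because $f^\star \approx V^\pieval$ and $V^\pieval$ is a Bellman fixed point; combining this with \pref{lem: property of G} (which bounds the projected Bellman error by the true Bellman error $\|\fstar - \gT\fstar\|_{\nu,h}$) gives an $\order(\eappr)$ bound. The key step, however, is to invoke the defining min-max optimality of $\fout$ in \pref{eq: BVFT}: since $\fout$ minimizes the worst-case empirical projected Bellman error, and $\fstar$ is a feasible competitor whose empirical error is small (by the realizability plus concentration from \pref{lem: empirical and true}), the empirical error of $\fout$ paired with $\fstar$ is at most $\order(\eappr + \estat)$. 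Translating this empirical bound into a population bound uses \pref{lem: BVFT operator closeness} and \pref{lem: empirical and true} to pass between $\|\cdot\|_{\hat\nu,h}$ and $\|\cdot\|_{\nu,h}$ and between $\That_\gG$ and $\gT_\gG$.

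For part (b), the statement is a deterministic contraction-type inequality under the aggregated occupancy measure $\agg{d}$. The plan is to use the explicit form of the projected operator from \pref{lem: the form of TGf}: on each aggregation $\phi \in \Phistar_h$, both $\gT_{\gG}\fout(\phi)$ and $\gT_{\gG}\fstar(\phi)$ are $\nu$-weighted averages over $\sx \in \phi$ of $\gT\fout(\sx)$ and $\gT\fstar(\sx)$ respectively, where the true Bellman backup $\gT f(\sx) = r(\sx,\pieval) + \sum_{\sx'} T(\sx'\mid\sx,\pieval) f(\sx')$ differs between $\fout$ and $\fstar$ only through $f(\sx')$. Thus the per-aggregation difference is a $\nu$-weighted average of $\sum_{\sx'} T(\sx'\mid\sx,\pieval)(\fout(\sx') - \fstar(\sx'))$, and since $\fout - \fstar$ is constant on each aggregation $\phi' \in \Phistar_{h+1}$ (both are measurable w.r.t.\ $\Phistar$), this collapses to $\sum_{\phi'} \agg{T}(\phi'\mid\phi)(\fout(\phi') - \fstar(\phi'))$ using the definition of $\agg{T}$ in \pref{def: aggregated transition}. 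Squaring, applying Jensen's inequality to move the square inside the $\agg{T}(\phi'\mid\phi)$-average, weighting by $\agg{d}(\phi)$, and finally using the flow identity $\sum_{\phi} \agg{d}(\phi)\agg{T}(\phi'\mid\phi) = \agg{d}(\phi')$ yields exactly $\|\fout - \fstar\|_{\agg{d},h+1}^2$.

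The main obstacle I expect is part (a): carefully chaining the empirical optimality of $\fout$ through the two concentration lemmas while keeping track of which function pair $(f,f')$ witnesses the worst case, and ensuring that the $\fstar$-competitor bound genuinely controls $\fout$ rather than just $\fstar$. In particular one must verify that $V^\pieval$ (not just $\fstar$) is an approximate fixed point so that the true Bellman error of $\fstar$ is $\order(\eappr)$, and that the min-max structure does not introduce an extra factor when specializing the inner maximization to $f' = \fstar$. Part (b) is essentially a clean Jensen-plus-flow-conservation computation and should present no difficulty, though I would be careful that the aggregation $\Phistar = \Phi(\fout,\fstar)$ makes both $\fout$ and $\fstar$ constant on each $\phi$, which is precisely what guarantees $\fout - \fstar$ is $\Phistar$-measurable at layer $h+1$.
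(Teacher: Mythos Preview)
Your plan for part (b) is correct and coincides with the paper's argument: the explicit form from \pref{lem: the form of TGf}, Jensen's inequality, and the flow identity $\sum_{\phi\in\Phistar_h}\agg{d}(\phi)\agg{T}(\phi'\mid\phi)=\agg{d}(\phi')$ give the claimed contraction.

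For part (a), however, your opening three-term triangle decomposition is a false start. The first term $\|\fout-\fstar\|_{\nu,h}$ and the third term $\|\gT_{\gG}\fstar-\gT_{\gG}\fout\|_{\nu,h}$ (which is controlled by $\|\fout-\fstar\|$ at layer $h+1$ under the pushforward of $\nu$) are both distances between $\fout$ and $\fstar$. These are \emph{not} $\order(\eappr)$ from \pref{ass: approximate realizability} --- that assumption bounds only $\|\fstar-V^\pieval\|_\infty$, and you have no a~priori control on $\fout$. Bounding $\|\fout-\fstar\|$ is precisely what the downstream \pref{lem: upper bound main lemma} accomplishes, using part (a) of the present lemma as input, so invoking it here would be circular.

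The good news is that what you describe under ``the key step, however'' is the correct argument and is exactly what the paper does; it simply does not require your triangle decomposition at all. One bounds $\|\fout-\gT_{\gG(\fout,\fstar)}\fout\|_{\nu,h}$ directly: pass to $\That$ via \pref{lem: BVFT operator closeness}, then to the empirical norm $\hat\nu$ via \pref{lem: empirical and true}, then bound by $\max_{f'}\|\fout-\That_{\gG(\fout,f')}\fout\|_{\hat\nu,h}$, then use the min-max optimality of $\fout$ in \pref{eq: BVFT} to replace $\fout$ by $\fstar$ in the outer role, and finally unwind (back to $\nu$, back to $\gT$, and apply \pref{lem: property of G}) to reach $\|\fstar-\gT\fstar\|_{\nu,h}\leq 2\eappr$. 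Drop the three-term decomposition and execute this chain instead.
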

\begin{proofof}[\pref{lem: BVFT term 1}] 
We prove the two parts separately below. 
\begin{enumerate}[label=\((\alph*)\)] 

\item Note that 
    \begin{align*}
        \hspace{0.3in}&\hspace{-0.3in}\max_h \|\fout - \gT_{\gG(\fout, \fstar)} \fout\|_{\nu, h} \\
        &\leq \max_h 
 \|\fout - \That_{\gG(\fout, \fstar)}\fout\|_{\nu, h} + \|\That_{\gG(\fout, \fstar)}\fout - \gT_{\gG(\fout, \fstar)}\fout\|_{\nu, h}  \tag{triangle inequality}\\
        &\le \max_{f'\in\gF} \max_h \|\fout - \That_{\gG(\hat f, f')} \fout\|_{\nu, h} + \order\left(\estat\right)   \tag{by Lemma~\ref{lem: BVFT operator closeness}} \\
         &\le \sqrt{2}\max_{f'\in\gF}\max_h  \|\fout - \That_{\gG(\hat f, f')} \fout\|_{\hat{\nu}, h} + \order\left(\estat\right)  \tag{by Lemma~\ref{lem: empirical and true}} \\
        &\leq \sqrt{2}\max_{f'\in\gF} \max_h \|f^\star - \That_{\gG(f^\star, f')} f^\star\|_{\hat{\nu}, h} + \order\left(\estat\right) \tag{by the choice of $\fout$ in \pref{eq: BVFT}}\\
         &\leq 2\max_{f'\in\gF} \max_h \|f^\star - \That_{\gG(f^\star, f')} f^\star\|_{\nu, h} + \order\left(\estat\right) \tag{by Lemma~\ref{lem: empirical and true}}\\
        &\le 2\max_{f'\in\gF} \max_h \|f^\star - \gT_{\gG(f^\star, f')} f^\star\|_{\nu, h} + \order\left(\estat\right) \tag{by Lemma~\ref{lem: BVFT operator closeness}} \\
        &\le 2\max_h \|\fstar - \gT \fstar\|_{\nu,h} + \order\left(\estat\right).   \tag{by Lemma~\ref{lem: property of G}}
    \end{align*} 

 \item For the ease of notation, let $\gG=\gG(\fout, \fstar)$. Notice that by \pref{lem: the form of TGf}, 
    \begin{align*}
        \gT_{\gG}\fout(\phi) - \gT_{\gG}\fstar(\phi) &= \frac{\sum_{\sx \in \phi} \nu(\sx) \sum_{\sx'\in\gX_{h+1}}T(\sx'\mid{} \sx,  \pieval)\left(\fout(\sx') -  \fstar(\sx')\right)}{\sum_{\sx \in \phi} \nu(\sx)}. 
    \end{align*}
    Therefore, 
    \begin{align*}
        &\|\gT_{\gG} \fout - \gT_{\gG} \fstar\|^2_{\agg{d}, h}  \\
        &= \sum_{\phi\in\Phi_h }\agg{d}(\phi)\left( \gT_{\gG}\fout(\phi) - \gT_{\gG}\fstar(\phi) \right)^2 \\
        &= \sum_{\phi\in\Phi_h}\agg{d}(\phi)\left( \frac{\sum_{s \in \phi} \nu(\sx)\sum_{\sx'\in\gX_{h+1}}T(\sx'\mid{} \sx, \pieval)\left(\fout(\sx') -  \fstar(\sx')\right)}{\sum_{s \in \phi} \nu(\sx)} \right)^2  \\
        &\leq \sum_{\sx'\in\gX_{h+1}} 
  \sum_{\phi\in\Phi_h} \agg{d}(\phi)  \left( \frac{\sum_{s \in \phi} \nu(\sx)T(\sx'\mid{} \sx, \pieval) }{\sum_{s \in \phi} \nu(\sx)} \right)\left(\fout(\sx') -  \fstar(\sx')\right)^2 \tag{Jensen's inequality}  \\
        &= \sum_{\phi'\in\Phi_{h+1}} \sum_{\sx'\in\phi'} 
  \sum_{\phi\in\Phi_h} \agg{d}(\phi)  \left( \frac{\sum_{s \in \phi} \nu(\sx)T(\sx'\mid{} \sx, \pieval) }{\sum_{s \in \phi} \nu(\sx)} \right)\left(\fout(\phi') -  \fstar(\phi')\right)^2    \\
        &= \sum_{\phi'\in\Phi_{h+1}} 
  \sum_{\phi\in\Phi_h} \agg{d}(\phi)  \agg{T}(\phi'\mid{}\phi)\left(\fout(\phi') -  \fstar(\phi')\right)^2   \\
  &\leq \sum_{\phi'\in\Phi_{h+1}} 
  \agg{d}(\phi') \left(\fout(\phi') -  \fstar(\phi')\right)^2   \\
        &= \|\fout - \fstar\|_{\agg{d}, h+1}^2. 
    \end{align*}

 \end{enumerate}
\end{proofof}

\begin{lemma}\label{lem: conversion lemma}
    With the $\Phistar$ and $\aggCstar_\epsilon$ defined in \pref{def: aggregated transition} and \pref{def: cbar upper bound}, there exist $\gJ_1,\ldots, \gJ_H$ such that 
    \begin{itemize}
        \item $\gJ_h \subset \Phistar_h$ 
        \item $\sum_{\phi\in \gJ_h} \agg{d}(\phi) < \epsilon$
        \item $\max_h \max_{\phi \in \Phi_h\setminus \gJ_h}\frac{\agg{d}(\phi)}{\agg{\nu}(\phi)}\leq \aggCstar_\epsilon$
    \end{itemize}
\end{lemma}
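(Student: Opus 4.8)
The plan is to define the exceptional sets $\gJ_h$ to be exactly the partitions in layer $h$ whose aggregated occupancy exceeds $\aggCstar_\epsilon$ times their aggregated coverage, and then verify the three bullets in turn. Concretely, for each $h \in [H]$ I would set
\begin{align*}
\gJ_h \ldef{} \crl*{\phi \in \Phistar_h : \agg{d}(\phi) > \aggCstar_\epsilon \cdot \agg{\nu}(\phi)}.
\end{align*}
The first bullet ($\gJ_h \subset \Phistar_h$) holds by construction, and the third bullet is immediate as well: any $\phi \in \Phistar_h \setminus \gJ_h$ satisfies $\agg{d}(\phi) \leq \aggCstar_\epsilon \agg{\nu}(\phi)$, i.e. $\agg{d}(\phi)/\agg{\nu}(\phi) \leq \aggCstar_\epsilon$ whenever $\agg{\nu}(\phi) > 0$ (with the usual convention that partitions having $\agg{\nu}(\phi)=\agg{d}(\phi)=0$ contribute nothing to the ratio). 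Writing the complement condition in the product form $\agg{d}(\phi) > \aggCstar_\epsilon \agg{\nu}(\phi)$ rather than as a ratio is what lets me sidestep division-by-zero annoyances cleanly.

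The only real content is the second bullet, $\sum_{\phi \in \gJ_h} \agg{d}(\phi) < \epsilon$, which I would prove by contradiction for each fixed $h$. Suppose instead $\sum_{\phi \in \gJ_h} \agg{d}(\phi) \geq \epsilon$. Then $\gJ_h$ is a legitimate candidate for the set $\gI$ appearing in the definition \pref{eq: Cupper def} of $\aggCstar_\epsilon$ at layer $h$, so by the maximality of $\aggCstar_\epsilon$ we obtain $\aggCstar_\epsilon \geq \frac{\sum_{\phi\in\gJ_h}\agg{d}(\phi)}{\sum_{\phi\in\gJ_h}\agg{\nu}(\phi)}$ (in particular $\gJ_h \neq \emptyset$). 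On the other hand, summing the defining strict inequality $\agg{d}(\phi) > \aggCstar_\epsilon \agg{\nu}(\phi)$ over all $\phi \in \gJ_h$ yields $\sum_{\phi \in \gJ_h}\agg{d}(\phi) > \aggCstar_\epsilon \sum_{\phi \in \gJ_h}\agg{\nu}(\phi)$, i.e. $\frac{\sum_{\phi\in\gJ_h}\agg{d}(\phi)}{\sum_{\phi\in\gJ_h}\agg{\nu}(\phi)} > \aggCstar_\epsilon$. These two inequalities contradict each other, so the supposition fails and $\sum_{\phi \in \gJ_h}\agg{d}(\phi) < \epsilon$.

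A couple of edge cases I would dispatch explicitly. If $\aggCstar_\epsilon = \infty$ the statement is vacuous—take $\gJ_h = \emptyset$, and the third bullet holds trivially—so I may assume $\aggCstar_\epsilon < \infty$; this guarantees both that a set $\gI$ with positive aggregated occupancy and zero aggregated coverage cannot attain the maximum in \pref{eq: Cupper def}, and that any partition with $\agg{\nu}(\phi)=0$ but $\agg{d}(\phi)>0$ necessarily lands in $\gJ_h$ (so it is correctly removed). The strict inequality in the definition of $\gJ_h$ is essential: it is what makes the summed bound strict and hence genuinely contradictory with the (non-strict) maximality bound. I do not anticipate any real obstacle here—the argument is in essence the elementary ``maximum of ratios dominates the ratio of sums'' comparison—and the only care required is the bookkeeping around empty sets and zero denominators, which the product-form definition of $\gJ_h$ absorbs.
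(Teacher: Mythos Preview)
Your proof is correct, and it is in fact a genuinely simpler route than the one in the paper. The paper argues constructively: for each $h$ it takes the maximizer $\gI_h$ in \pref{eq: Cupper def} (with minimal cardinality among ties), orders its elements by decreasing ratio $\agg{d}/\agg{\nu}$, and then sets $\gJ_h = \gI_h$ minus the element with smallest ratio. Verifying the two nontrivial bullets then requires separate case analyses on $|\gI_h|$ and a couple of exchange-argument contradictions involving $\gI_h\cup\{\phi_h'\}$ and $\gI_h\setminus\{\phi_{h,N_h}\}$.

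Your approach cuts all of that out by defining $\gJ_h$ directly as the set of ``bad'' partitions, so bullets one and three are immediate by design, and bullet two reduces to the single observation that the ratio of sums over $\gJ_h$ is bounded below by $\aggCstar_\epsilon$ (strictly, via the product-form defining inequality) yet bounded above by $\aggCstar_\epsilon$ (via the maximum in \pref{eq: Cupper def}). The trade-off is minor: the paper's $\gJ_h$ is always contained in a maximizer $\gI_h$, which gives a slightly more explicit description, but this extra structure is never used downstream in \pref{lem: upper bound main lemma}. Your handling of the $\aggCstar_\epsilon=\infty$ and $\agg{\nu}(\phi)=0$ edge cases via the product form is clean and suffices for the application.
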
 
\begin{proofof}[\pref{lem: conversion lemma}]
    Define 
    \begin{align}
        \gI_h = \argmax\left\{\frac{\sum_{\phi\in\gI}\agg{d}(\phi)}{\sum_{\phi\in\gI}\agg{\nu}(\phi)}: \quad \gI\subset \Phistar_h, \quad \sum_{\phi\in\gI} \agg{d}(\phi) \geq \epsilon \right\}.   \label{eq: IH def}
    \end{align}
    If $\gI_h$ has more than one solution, we pick one such that $|\gI_h|$ is the smallest. 
    By the definition of $\aggCstar_\epsilon$, we know that $\frac{\sum_{\phi\in\gI_h}\agg{d}(\phi)}{\sum_{\phi\in\gI_h}\agg{\nu}(\phi)}\leq \aggCstar_\epsilon$ for all $h$.  

    Assume $\gI_h = \{\phi_{h,1}, \ldots, \phi_{h,N_h}\}$ where $N_h=|\gI_h|$, and assume without loss of generality that $$\frac{\agg{d}(\phi_{h,1})}{\agg{\nu}(\phi_{h,1})} \geq \frac{\agg{d}(\phi_{h,2})}{\agg{\nu}(\phi_{h,2})} \geq \cdots \geq \frac{\agg{d}(\phi_{h,N_h})}{\agg{\nu}(\phi_{h,N_h})}.$$ 
    If $N_{h}=1$, it is  easy to see that $\phi_{h,1} = \argmax_{\phi \in \Phistar_h}\frac{\agg{d}(\phi)}{\agg{\nu}(\phi)}$. This is because if not, then $$\left\{\phi_{h,1}, \  \argmax_{\phi \in \Phistar_h}\frac{\agg{d}(\phi)}{\agg{\nu}(\phi)}\right\}$$ will be a better solution for $\gI_h$ in \pref{eq: IH def} than $\{\phi_{h,1}\}$, contradicting that $\gI_h=\{\phi_{h,1}\}$. Thus, in this case, $\gI_h=\left\{\argmax_{\phi \in \Phistar_h}\frac{\agg{d}(\phi)}{\agg{\nu}(\phi)}\right\}$.  Then choosing $\gJ_h=\emptyset$ satisfies all conditions in the lemma.  
    
    If $N_h\geq 2$, we define $\gJ_h = \gI_h \setminus \{\phi_{h,N_h}\} = \{\phi_{h,1}, \ldots, \phi_{h,N_h-1}\}$. Below we verify that it satisfies the two inequalities in the lemma. 

    First, we prove $\sum_{\phi\in \gJ_h} \agg{d}(\phi) < \epsilon$ by contradiction. Suppose that $\sum_{\phi\in \gJ_h} \agg{d}(\phi) \geq \epsilon$. By the assumption that $$\frac{\agg{d}(\phi_{h,1})}{\agg{\nu}(\phi_{h,1})}  \geq \cdots \geq \frac{\agg{d}(\phi_{h,N_h})}{\agg{\nu}(\phi_{h,N_h})},$$ we have $$\frac{\sum_{\phi\in\gJ_h} \agg{d}(\phi)}{\sum_{\phi\in\gJ_h} \agg{\nu}(\phi)} \geq \frac{\sum_{\phi\in\gI_h} \agg{d}(\phi)}{\sum_{\phi\in\gI_h} \agg{\nu}(\phi)},$$ and thus $\gJ_h$ is also a solution of \pref{eq: IH def}. However, $|\gJ_h| < |\gI_h|$, contradicting the assumption that $|\gI_h|$ is the smallest. 

    Next, we prove $\max_h \max_{\phi \in \Phistar_h\setminus \gJ_h}\frac{\agg{d}(\phi)}{\agg{\nu}(\phi)}\leq \aggCstar_\epsilon$. Define $$\phi_h' := \argmax_{\phi\in\Phistar_h\setminus \gJ_h} \frac{\agg{d}(\phi)}{\agg{\nu}(\phi)}.$$ If $\phi_h'=\phi_{h,N_h}$, then we have $$\frac{\agg{d}(\phi_h')}{\agg{\nu}(\phi_h')}=\frac{\agg{d}(\phi_{h,N_h})}{\agg{\nu}(\phi_{h,N_h})}\leq \frac{\sum_{\phi\in\gI_h}\agg{d}(\phi)}{\sum_{\phi\in\gI_h}\agg{\nu}(\phi)} \leq \aggCstar_\epsilon.$$ If $\phi_h'\neq \phi_{h,N_h}$ and $\frac{\agg{d}(\phi_h')}{\agg{\nu}(\phi_h')} > \aggCstar_\epsilon$, then we have $$\frac{\sum_{\phi\in\gI_h} \agg{d}(\phi) + \agg{d}(\phi_h')}{\sum_{\phi\in\gI_h} \agg{\nu}(\phi) + \agg{\mu}(\phi_h')} >  \frac{\sum_{\phi\in\gI_h} \agg{d}(\phi)}{\sum_{\phi\in\gI_h} \agg{\nu}(\phi)}$$ because $\frac{\sum_{\phi\in\gI_h} \agg{d}(\phi)}{\sum_{\phi\in\gI_h} \agg{\nu}(\phi)}\leq \aggCstar_\epsilon$. This implies that $\gI_h\cup \{\phi_h'\}$ is a better solution than $\gI_h$ in \pref{eq: IH def}, which is a contradiction. This concludes the proof.   
\end{proofof}

\subsection{Proof of \pref{thm:main_BVFT_upper_bound}} \label{app: BVFT proof of main}

\begin{lemma}\label{lem: upper bound main lemma} Let $\aggCstar=\aggCstar_{\epsilon^2/H^2}$. With probability $\geq 1-\delta$, 
\begin{align*}
   \E_{\sx\sim \rho}[|\fout(\sx) - \fstar(\sx)|] \leq \cO\left(H\eappr\sqrt{\aggCstar} + H\sqrt{\frac{\aggCstar\Phimax \log(n\Phimax|\gF|/\delta)}{n}} + \epsilon \right). 
\end{align*} 
\end{lemma}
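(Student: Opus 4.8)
The plan is to control the aggregated error $e_h \ldef \nrm{\fout - \fstar}_{\agg{d}, h}$ layer by layer in the aggregated MDP of \pref{def: aggregated transition}, and then transfer the layer-$1$ bound to the initial distribution. The starting observation (implicit in \pref{lem: the form of TGf}) is that, on functions constant on the cells of $\Phistar=\Phi(\fout,\fstar)$, the projected operator $\gT_{\gG(\fout,\fstar)}$ is exactly the Bellman backup of the aggregated MDP with transition $\agg{T}$ and occupancy $\agg{d}$. Writing $\gG=\gG(\fout,\fstar)$, I would decompose cell-wise
\[
\fout - \fstar = \underbrace{(\fout - \gT_\gG \fout)}_{\text{(I)}} + \underbrace{(\gT_\gG \fout - \gT_\gG \fstar)}_{\text{(II)}} + \underbrace{(\gT_\gG \fstar - \fstar)}_{\text{(III)}}
\]
and bound each piece in the norm in which I have control.

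For (II), \pref{lem: BVFT term 1}(b) supplies the key non-expansion $\nrm{\gT_\gG\fout - \gT_\gG\fstar}_{\agg{d},h}\le \nrm{\fout-\fstar}_{\agg{d},h+1}=e_{h+1}$, which is what makes the telescoping in $h$ work. For (I) and (III) I only have control in the data norm $\nrm{\cdot}_{\nu}$: \pref{lem: BVFT term 1}(a) gives $\nrm{\fout-\gT_\gG\fout}_{\nu,h}=\cO(\eappr+\estat)$ (this is where the min--max optimality of $\fout$ in \pref{eq: BVFT} and the empirical-to-population transfer of \pref{lem: empirical and true} and \pref{lem: BVFT operator closeness} get used), while for (III) I would first apply \pref{lem: property of G} to the pair $(\fstar,\fout)$, using $\gG(\fstar,\fout)=\gG(\fout,\fstar)$, to get $\nrm{\gT_\gG\fstar-\fstar}_{\nu,h}\le \nrm{\fstar-\gT\fstar}_{\nu,h}$, and then combine the Bellman identity $V^\pieval=\gT V^\pieval$ with approximate realizability (\pref{ass: approximate realizability}) to conclude $\nrm{\fstar-\gT\fstar}_{\nu,h}=\cO(\eappr)$.

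The crux is converting the $\nu$-norm control on (I) and (III) into $\agg{d}$-norm contributions, which is exactly where \pref{lem: conversion lemma} and the aggregated concentrability enter. For any cell-constant $u$ I would split $\Phistar_h=\gJ_h\sqcup(\Phistar_h\setminus\gJ_h)$: on $\Phistar_h\setminus\gJ_h$ the bound $\agg{d}(\phi)\le \aggCstar\,\agg{\nu}(\phi)$ lets me pass from $\agg{d}$ to $\agg{\nu}=\nu$, routed through Cauchy--Schwarz so that the concentrability appears as $\sqrt{\aggCstar}$ rather than linearly; on the small-mass set $\gJ_h$ the total weight $\sum_{\phi\in\gJ_h}\agg{d}(\phi)<\epsilon^2/H^2$ is tiny, so the boundedness of the value functions makes this contribution negligible, and it is precisely the threshold $\epsilon^2/H^2$ in $\aggCstar=\aggCstar_{\epsilon^2/H^2}$ that forces the accumulated small-mass error to collapse into the additive $\cO(\epsilon)$ term. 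This yields a recursion of the form $e_h\le e_{h+1}+\sqrt{\aggCstar}\,\cO(\eappr+\estat)+(\text{small-mass term})$; telescoping from $h=H$ down to $h=1$ (with $e_{H+1}=0$) produces the overall factor $H$, and a final Cauchy--Schwarz step transfers the layer-$1$ estimate to $\E_{\sx\sim\rho}\brk{\abs{\fout-\fstar}}$ via the identity $\agg{d}(\phi)=\tfrac1H\sum_{\sx\in\phi}\rho(\sx)$ on $\Phistar_1$.

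The main obstacle I anticipate is not any single inequality but the bookkeeping of the $H$-powers and the $\epsilon$-threshold: I must calibrate the cutoff defining $\gJ_h$ (hence the subscript $\epsilon^2/H^2$) so that the accumulated boundary error telescopes to $\cO(\epsilon)$, and I must make sure every $\agg{d}\to\nu$ conversion goes through Cauchy--Schwarz so the final dependence is $\sqrt{\aggCstar}$. An additional subtlety is that the three pieces (I)--(III) naturally live in different weighted norms ($\nu$ for (I) and (III), $\agg{d}$ for (II)), so the triangle inequality must be applied to the cell-wise decomposition before norms are taken, and the $\nu$-versus-$\agg{d}$ mismatch must be absorbed at each layer rather than once at the end.
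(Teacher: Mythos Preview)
Your proposal is correct and matches the paper's proof essentially step for step: the same three-term decomposition under the $\agg{d}$-norm, the same invocation of \pref{lem: BVFT term 1}(a),(b), \pref{lem: property of G}, and \pref{lem: conversion lemma}, the same per-layer recursion telescoped over $H$, and the same final Cauchy--Schwarz transfer to $\E_{\sx\sim\rho}$. The only nitpick is terminological: the conversion from $\agg{d}$ to $\nu$ on $\Phistar_h\setminus\gJ_h$ is not really Cauchy--Schwarz but a direct pointwise weight bound on the \emph{squared} norms, $\sum_{\phi}\agg{d}(\phi)u(\phi)^2\le \aggCstar\sum_{\phi}\agg{\nu}(\phi)u(\phi)^2$, whose square root produces the $\sqrt{\aggCstar}$ --- the outcome you describe is identical.
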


\begin{proofof}[\pref{lem: upper bound main lemma}]
    In this proof, we denote $\gG=\gG(\fout, \fstar)$. Recall the definitions of $\Phistar$, $\agg{d}$, and $\agg{\nu}$ in \pref{def: aggregated transition}. Notice that $\fout, \fstar, \gT_\gG \fout, \gT_\gG \fstar \in \gG$. For any $g\in\gG$ and $\phi\in\Phistar$,  we use $g(\phi)$ to represent the value of $g(\sx)$ for those $\sx\in\phi$. 
    Using \pref{def: bar norm}, we have 
    \begin{align}
        \| \fout-\fstar\|_{\agg{d}, h} &\leq \| \fstar - \gT_{ \gG} \fstar\|_{\agg{d}, h} + \| \fout - \gT_{ \gG}  \fout\|_{\agg{d}, h} + \|\gT_{ \gG} \fout -\gT_{\gG} \fstar\|_{\agg{d}, h}. 
 \label{eq: three main term BVFT} 
    \end{align}
    Before bounding the three terms above, we notice that by \pref{lem: conversion lemma}, there exist $\{\gJ_h\}_{h\in[H]}$ such that 
    \begin{itemize}
        \item $\gJ_h \subset \Phistar_h$ 
        \item $\sum_{\phi\in \gJ_h} \agg{d}(\phi) < \frac{\epsilon^2}{H^2} $
        \item $\max_h \max_{\phi\in\Phistar_h\setminus \gJ_h} \frac{\agg{d}(\phi)}{\agg{\nu}(\phi)}\leq \aggCstar_{\epsilon^2/H^2}$
    \end{itemize}

    Below we denote $\aggCstar=\aggCstar_{\epsilon^2/H^2}$. 
    Now we bound the three terms in \pref{eq: three main term BVFT}. The square of the first term can be upper bounded as 
    \begin{align*}
       \|\fstar - \gT_{\gG} \fstar  \|_{\agg{d}, h}^2 
       &=  \sum_{\phi\in\Phistar_h} \agg{d}(\phi)\left(\fstar(\phi) - \gT_\gG \fstar (\phi)\right)^2 \\
       &\leq \sum_{\phi\in\Phistar_h\setminus \gJ_h} \agg{d}(\phi)\left(\fstar(\phi) - \gT_\gG \fstar (\phi)\right)^2 + \frac{\epsilon^2}{H^2}  \tag{$\sum_{\phi\in\gJ_h}\agg{d}(\phi)\leq \frac{\epsilon^2}{H^2}$} \\
       &\leq \aggCstar\sum_{\phi\in\Phistar_h\setminus \gJ_h} \agg{\nu}(\phi)\left(\fstar(\phi) - \gT_\gG \fstar (\phi)\right)^2 + \frac{\epsilon^2}{H^2}  \tag{by the definition of $\aggC$}  \\
       &\leq \aggCstar\|\fstar - \gT_{\gG} \fstar  \|_{\agg{\nu}, h}^2 + \frac{\epsilon^2}{H^2} \\
       &\leq \aggCstar\|\fstar - \gT_{\gG} \fstar  \|^2_{\nu, h}   + \frac{\epsilon^2}{H^2}  \tag{because $\fstar, \gT_\gG \fstar\in \gG$}\\
       &\leq \aggC\|\fstar - \gT \fstar  \|_{\nu, h}^2 + \frac{\epsilon^2}{H^2} 
       \tag{by Lemma~\ref{lem: property of G}}\\ 
       &\leq \aggCstar\eappr^2 + \frac{\epsilon^2}{H^2}. \tag{by the definition of $\eappr$}
    \end{align*} 
    The square of the second term can be upper bounded as 
    \begin{align*}
        \| \fout - \gT_{ \gG}  \fout\|^2_{\agg{d}, h} 
        &= \sum_{\phi\in\Phistar_h}  \agg{d}(\phi) \left( \fout(\phi) - \gT_{ \gG}  \fout(\phi)\right)^2 \\
        &= \sum_{\phi\in\Phistar_h \setminus \gJ_h}  \agg{d}(\phi) \left( \fout(\phi) - \gT_{ \gG}  \fout(\phi)\right)^2 + \frac{\epsilon^2}{H^2}
        \\ 
        &\leq \aggCstar\sum_{\phi\in\Phistar_h \setminus \gJ_h}  \agg{\nu}(\phi) \left( \fout(\phi) - \gT_{ \gG}  \fout(\phi)\right)^2 + \frac{\epsilon^2}{H^2}\tag{by the definition of $\aggCstar$} \\
        &\leq \aggCstar \|\fout - \gT_{\gG} \fout\|_{\agg{\nu},h}^2 + \frac{\epsilon^2}{H^2} \\
        &= \aggCstar\|\fout - \gT_{\gG} \fout\|_{\nu,h}^2 + \frac{\epsilon^2}{H^2} \tag{because $\fout, \gT_\gG\fout\in\gG$} \\
        &\leq \cO\left(\aggCstar\eappr^2 + \aggCstar\estat^2 \right) + \frac{\epsilon^2}{H^2}. \tag{by \pref{lem: BVFT term 1}-(a)} 
    \end{align*}
    
    Next, again using \pref{lem: BVFT term 1}-(b), the last term can be upper bounded as 
    \begin{align*}
        \|\gT_{ \gG} \fout -\gT_{\gG} \fstar\|_{\agg{d}, h} \leq \|\fout - \fstar\|_{\agg{d}, h+1}. 
    \end{align*}

    Combining all above, we get 
    \begin{align*}
        \| \fout-\fstar\|_{\agg{d}, h} \leq \| \fout-\fstar\|_{\agg{d}, h+1} + \cO\left(\sqrt{\aggCstar}\eappr + \sqrt{\aggCstar}\estat  + \frac{\epsilon}{H}\right), 
    \end{align*}
    which gives $\|\fout - \fstar\|_{\agg{d}, 1}\leq \cO\left(H\sqrt{\aggCstar}\eappr + H\sqrt{\aggCstar}\estat + \epsilon\right)$ after recursively applying the inequality and using Cauchy-Schwarz inequality. The desired inequality follows by noticing that $\E_{\sx\sim \rho}[|\fout(\sx) - \fstar(\sx)|]\leq \|\fout-\fstar\|_{\agg{d}, 1}$ by Cauchy-Schwarz.  
\end{proofof}

\begin{proofof}[\pref{thm:BVFT_upper_bound} (\pref{thm:main_BVFT_upper_bound} in the main body)] 
    In the \emph{fully realizable} setting (i.e., $V^\pieval\in\gF$), in order to control the magnitude of $\Phimax$, we discretize the function set $\gF$, and make it an \emph{approximate realizable} case. 

    For each $f\in\gF$, we round the value of $f(s)$ to the nearest multiple of $\frac{\epsilon}{H\sqrt{\aggC}}$. This way, we have $\eappr=\frac{\epsilon}{H\sqrt{\aggC}}$ and $\Phimax = \order\left(\left(\frac{H}{\eappr}\right)^2\right) = \order\left(\frac{H^4\aggC}{\epsilon^2}\right)$. Thus, by \pref{lem: upper bound main lemma}, 
    \begin{align*}
        \|\fout - \fstar\|_{\agg{d}, 1} &\leq \cO\left(H\sqrt{\aggCstar}\eappr + H\sqrt{\frac{\aggCstar\Phimax \log(n|\gF|\Phimax/\delta)}{n}} + \epsilon\right) \\
        &\leq \cO\left(H \sqrt{\frac{\aggC^2 H^4  \log(n|\gF|H\aggC/\epsilon\delta)}{\epsilon^2 n}} + \epsilon\right).
    \end{align*}
    In order to make the last expression to be $\cO(\epsilon)$, we need 
    \begin{align*}
         n \geq \widetilde{\Omega}\left( 
 \frac{\aggC^2 H^6 \log(|\gF|/\delta)}{\epsilon^4} \right). 
    \end{align*} 
\end{proofof}

\subsection{Implications of the BVFT Upper Bound}\label{app: Cpush}

\begin{lemma}\label{lem: C and Cpush}
    For any $\epsilon$, $\aggC_\epsilon\leq \Cpush$, where $\Cpush$ is defined in \pref{def:strong-coverability}. 
\end{lemma}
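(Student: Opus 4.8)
<br>

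The goal is to show that the aggregated concentrability coefficient $\aggC_\epsilon$ is bounded above by the pushforward concentrability coefficient $\Cpush = \CpushX \CpushA$ for any aggregation scheme and any $\epsilon$. My plan is to bound the aggregated numerator (the aggregated occupancy measure $\agg{d}_h^\pieval$) and the aggregated denominator (the summed offline mass $\sum_{\phi \in \cI} \sum_{\sx \in \phi} \mu_h(\sx, \pieval(\sx))$) separately, and then combine them. Fix a layer $h$ and a subset $\cI \subseteq \Phi_h$ achieving (or approaching) the maximum in \pref{def:agg-concentrability}.

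First I would control the numerator. The aggregated occupancy $\agg{d}_h^\pieval(\phi)$ is defined recursively through the aggregated transition $\agg{T}$ in \pref{eq:agg-transition-model}. The key idea is to relate this aggregated occupancy back to a pushforward of the offline distribution $\mu$. Concretely, since $\agg{T}(\phi' \mid \phi, \pieval)$ is a $\mu$-weighted average of the true transitions $T(\sx' \mid \sx, \pieval)$, I expect to show by induction on $h$ that $\agg{d}_h^\pieval(\phi) \le \CpushX \sum_{\sx \in \phi} \mu_h(\sx)$ — i.e., the aggregated occupancy on $\phi$ is controlled by the offline state-mass on $\phi$ up to the pushforward factor $\CpushX$. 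The base case uses $\rho(\sx)/\mu(\sx) < \CpushX$ from \pref{def:strong-coverability}(b), and the inductive step pushes the bound through one aggregated transition step, using $T(\sx' \mid \sx, \sa)/\mu(\sx') < \CpushX$. Summing over $\phi \in \cI$ gives $\sum_{\phi \in \cI} \agg{d}_h^\pieval(\phi) \le \CpushX \sum_{\phi \in \cI} \sum_{\sx \in \phi} \mu_h(\sx)$.

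Next I would relate $\mu_h(\sx)$ (the state marginal) to $\mu_h(\sx, \pieval(\sx))$ (the state-action mass that appears in the denominator of $\aggC_\epsilon$). This is exactly where the action concentrability $\CpushA$ enters: \pref{def:strong-coverability}(a) gives $\mu(\sa \mid \sx) \ge 1/\CpushA$ for all $\sa$, so in particular $\mu_h(\sx, \pieval(\sx)) = \mu_h(\sx)\,\mu(\pieval(\sx) \mid \sx) \ge \mu_h(\sx)/\CpushA$, whence $\mu_h(\sx) \le \CpushA\, \mu_h(\sx, \pieval(\sx))$. Combining with the numerator bound yields
\begin{align*}
\sum_{\phi \in \cI} \agg{d}_h^\pieval(\phi) \le \CpushX \CpushA \sum_{\phi \in \cI} \sum_{\sx \in \phi} \mu_h(\sx, \pieval(\sx)),
\end{align*}
and dividing through gives the ratio bound $\aggC_\epsilon \le \CpushX \CpushA = \Cpush$, uniformly over $h$ and $\cI$ (the $\epsilon$-constraint plays no role, so the bound holds for every $\epsilon$).

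The main obstacle I anticipate is making the inductive numerator bound fully rigorous through the aggregated transition. The subtlety is that $\agg{T}(\phi' \mid \phi, \pieval)$ is a \emph{normalized} $\mu$-weighted average, so when I propagate $\agg{d}_h^\pieval(\phi) \le \CpushX \sum_{\sx \in \phi}\mu_h(\sx)$ forward I must carefully track how the normalization denominators interact with the pushforward inequality $T(\sx' \mid \sx, \sa) < \CpushX\, \mu(\sx')$. I would verify that the $\mu$-weights in the numerator and denominator of $\agg{T}$ telescope correctly so that the factor $\CpushX$ does not accumulate across layers (it should appear only once, not raised to the power $h$); this requires expanding $\agg{d}_{h+1}^\pieval(\phi') = \sum_{\phi} \agg{d}_h^\pieval(\phi)\agg{T}(\phi' \mid \phi,\pieval)$, substituting the definition of $\agg{T}$, and bounding each transition term by $\CpushX \mu_{h+1}(\sx')$ directly rather than chaining the inductive hypothesis — in other words, the cleanest route may be a one-shot bound on $\agg{d}_{h+1}^\pieval(\phi')$ using the pushforward property at the final step, avoiding an inductive blow-up altogether.
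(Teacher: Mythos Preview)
Your proposal is correct and converges to essentially the paper's argument: the key bound $\agg{d}_h^\pieval(\phi) \le \CpushX \sum_{\sx\in\phi}\mu_h(\sx)$ is obtained by a one-step unrolling of the aggregated recursion and applying $T(\sx'\mid\sx,\pieval)\le \CpushX\,\mu(\sx')$ directly (using only $\sum_{\phi'}\agg{d}_{h-1}(\phi')\le 1$, not the inductive hypothesis), then the $\CpushA$ factor is picked up from $\mu(\pieval(\sx)\mid\sx)\ge 1/\CpushA$. Your self-correction in the last paragraph---abandoning the chained induction (which would indeed accumulate factors, since the weights in $\agg{T}$ are $\nu(\sx)=\mu_h(\sx,\pieval(\sx))$ rather than $\mu_h(\sx)$) in favor of the one-shot pushforward bound at the final transition step---is exactly the move the paper makes.
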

\begin{proofof}[\pref{lem: C and Cpush}]
    By the definition of $\aggC_\epsilon$, 
    \begin{align*}
        \aggC_\epsilon &\leq \max_h \max_{\phi\in\Phi_h} \frac{\agg{d}(\phi)}{\agg{\nu}(\phi)} \\
        &= \max_h \max_{\phi\in \Phi_h} \frac{\sum_{\phi'\in\Phi_{h-1}}   \agg{d}(\phi') \agg{T}(\phi\mid \phi', \pieval) }{\sum_{\sx\in\phi} \mu(\sx, \pieval(\sx))}  \tag{by the definitions of $\agg{d}$ and $\agg{\nu}$ and $\nu$}  \\
        &= \max_h \max_{\phi\in \Phi_h} \frac{\sum_{\phi'\in\Phi_{h-1}}  \agg{d}(\phi') \agg{T}(\phi\mid \phi', \pieval) }{\sum_{\sx\in\phi}\mu(\sx)  \mu(\pieval(\sx)\mid\sx)}   \tag{by the definition of $\mu(\sa\mid \sx)$} \\
        &\leq \max_h \max_{\phi\in \Phi_h} \frac{\sum_{\phi'\in\Phi_{h-1}}  \agg{d}(\phi') \agg{T}(\phi\mid \phi', \pieval) }{\sum_{\sx\in\phi}\mu(\sx)} \cdot \CpushA \\
        &\leq \max_h \max_{\phi\in \Phi_h} \frac{ 1}{\sum_{\sx\in\phi} \mu(\sx)} \sum_{\phi'\in\Phi_{h-1}} \agg{d}(\phi') \frac{\sum_{\sx'\in\phi'} \nu(\sx')\sum_{\sx\in\phi} T(\sx\mid \sx', \pieval) }{\sum_{\sx'\in\phi'} \nu(\sx') } \cdot \CpushA \tag{by the definition of $\agg{T}$} \\
        &\leq \max_h \max_{\phi\in \Phi_h} \max_{\sx'\in\gX_{h-1}} \frac{\sum_{\sx\in \phi} T(\sx\mid \sx', \pieval)}{\sum_{\sx\in\phi}\mu(\sx)} \cdot \CpushA \\
        &\leq \max_h \max_{\sx\in \gX_h} \max_{\sx'\in\gX_{h-1}} \frac{T(\sx\mid \sx', \pieval)}{\mu(\sx)} \cdot \CpushA \\
        &\leq \CpushX \cdot \CpushA \\
        &= \Cpush. 
    \end{align*}
\end{proofof}

\begin{lemma}\label{lem: C and BH}
    Let the offline data distribution $\mu$ be admissible, i.e., $\mu(\sx, \sa) = d^{\pioff}(\sx)\pioff(\sa\mid \sx)$ for some $\pioff$\,. Suppose that $\frac{1}{\mu(\pieval(\sx)\mid \sx)}=\frac{1}{\pioff(\pieval(\sx)\mid \sx)}\leq \CpushA$ for all $\sx\in \gX$. Then for any $\epsilon$, $\aggC_\epsilon\leq (\CpushA)^H$. 
\end{lemma}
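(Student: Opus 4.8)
The plan is to reduce the set-maximum in the definition of $\aggC_\epsilon$ to a pointwise maximum and then control that pointwise ratio by a layer-by-layer induction that loses a factor of $\CpushA$ per step. Throughout, write $\agg{\nu}_h(\phi) \ldef{} \sum_{\sx\in\phi}\mu_h(\sx,\pieval(\sx))$, so that the denominator appearing in \pref{def:agg-concentrability} is exactly $\sum_{\phi\in\gI}\agg{\nu}_h(\phi)$. Since the mediant inequality gives $\frac{\sum_{\phi\in\gI}\agg{d}_h^\pieval(\phi)}{\sum_{\phi\in\gI}\agg{\nu}_h(\phi)}\le \max_{\phi\in\gI}\frac{\agg{d}_h^\pieval(\phi)}{\agg{\nu}_h(\phi)}$, the first step is to observe
\begin{align*}
\aggC_\epsilon \;\le\; \max_h\max_{\phi\in\Phi_h}\frac{\agg{d}_h^\pieval(\phi)}{\agg{\nu}_h(\phi)},
\end{align*}
so it suffices to bound each individual ratio by $(\CpushA)^H$. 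Note also that $\CpushA\ge 1$ since $\pioff(\pieval(\sx)\mid\sx)\le 1$, so $(\CpushA)^h\le(\CpushA)^H$ for every $h\le H$.

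The crucial step is a flow identity connecting the $\agg{\nu}$-mass, pushed one layer forward through the aggregated $\pieval$-dynamics, to the true offline occupancy. Writing $D_{h}(\phi)\ldef{}\sum_{\sx\in\phi} d_h^\pioff(\sx;M)$ for the sum of genuine offline occupancies over $\phi$, I would compute directly from \pref{eq:agg-transition-model} with $\pi=\pieval$ deterministic that
\begin{align*}
\sum_{\phi\in\Phi_h}\agg{\nu}_h(\phi)\,\agg{T}(\phi'\mid\phi,\pieval)
=\sum_{\sx'\in\phi'}\sum_{\sx\in\gX_h}\mu_h(\sx,\pieval(\sx))\,T(\sx'\mid\sx,\pieval(\sx)).
\end{align*}
Using admissibility, $\mu_h(\sx,\pieval(\sx))=d_h^\pioff(\sx)\pioff(\pieval(\sx)\mid\sx)$, and keeping only the $a=\pieval(\sx)$ term in the offline transition, the inner sum is at most $\sum_{\sx}d_h^\pioff(\sx)\sum_a\pioff(a\mid\sx)T(\sx'\mid\sx,a)=d_{h+1}^\pioff(\sx')$, so the whole expression is $\le D_{h+1}(\phi')$. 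Combining this with the lower bound $\agg{\nu}_{h+1}(\phi')=\sum_{\sx'\in\phi'}d_{h+1}^\pioff(\sx')\pioff(\pieval(\sx')\mid\sx')\ge \tfrac{1}{\CpushA}D_{h+1}(\phi')$ yields the key estimate
\begin{align*}
\sum_{\phi\in\Phi_h}\agg{\nu}_h(\phi)\,\agg{T}(\phi'\mid\phi,\pieval)\;\le\;\CpushA\cdot\agg{\nu}_{h+1}(\phi').
\end{align*}

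With this in hand the induction closes immediately. Setting $r_h\ldef{}\max_{\phi\in\Phi_h}\agg{d}_h^\pieval(\phi)/\agg{\nu}_h(\phi)$, the recursion $\agg{d}_{h+1}^\pieval(\phi')=\sum_\phi \agg{d}_h^\pieval(\phi)\agg{T}(\phi'\mid\phi,\pieval)\le r_h\sum_\phi\agg{\nu}_h(\phi)\agg{T}(\phi'\mid\phi,\pieval)\le r_h\,\CpushA\,\agg{\nu}_{h+1}(\phi')$ gives $r_{h+1}\le \CpushA\, r_h$. The base case is $r_1\le\CpushA$, since $\agg{d}_1^\pieval(\phi)=\agg{\rho}(\phi)=\sum_{\sx\in\phi}\rho(\sx)$ while $\agg{\nu}_1(\phi)=\sum_{\sx\in\phi}\rho(\sx)\pioff(\pieval(\sx)\mid\sx)\ge\tfrac1{\CpushA}\agg{\rho}(\phi)$ (using $d_1^\pioff=\rho$). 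Hence $r_h\le(\CpushA)^h\le(\CpushA)^H$, and the first displayed inequality finishes the proof. The only delicate point is the flow identity: one must be careful that the aggregated transition is weighted by $\mu$ rather than by the $\pieval$-occupancy, which is exactly why the importance ratio $1/\pioff(\pieval(\sx)\mid\sx)$ enters once per layer, and that dropping the non-$\pieval$ actions in $d^\pioff_{h+1}$ is what makes the inequality go in the correct direction.
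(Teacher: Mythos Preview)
Your proof is correct and follows essentially the same route as the paper. Both arguments reduce $\aggC_\epsilon$ to the pointwise maximum $\max_{h,\phi}\agg{d}_h^\pieval(\phi)/\agg{\nu}_h(\phi)$ via the mediant inequality, then establish the one-step recursion $r_{h+1}\le \CpushA\, r_h$ by combining the identity $\sum_{\phi}\agg{\nu}_h(\phi)\agg{T}(\phi'\mid\phi,\pieval)=\sum_{\sx'\in\phi'}\sum_{\sx}\mu_h(\sx,\pieval(\sx))T(\sx'\mid\sx,\pieval(\sx))$ with admissibility of $\mu$ and the action-coverage bound $\pioff(\pieval(\sx)\mid\sx)\ge 1/\CpushA$; the only cosmetic difference is that the paper writes the chain of inequalities starting from the ratio at layer $h$ and peeling back to layer $h-1$, whereas you isolate the flow inequality first and then run an explicit forward induction.
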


\begin{proofof}[\pref{lem: C and BH}]
    For a fixed $h$, we have  
    \begin{align*}
        &\max_{\phi\in\Phi_h} \frac{\agg{d}(\phi)}{\agg{\nu}(\phi)} \\
        &= \max_{\phi\in \Phi_h} \frac{\sum_{\phi'\in\Phi_{h-1}}   \agg{d}(\phi') \agg{T}(\phi\mid \phi', \pieval) }{\sum_{\sx\in\phi} \mu(\sx, \pieval(\sx))}  \tag{by the definitions of $\agg{d}$ and $\agg{\nu}$ and $\nu$}  \\
        &=  \max_{\phi\in \Phi_h} \frac{\sum_{\phi'\in\Phi_{h-1}}  \agg{d}(\phi') \agg{T}(\phi\mid \phi', \pieval) }{\sum_{\sx\in\phi}\mu(\sx)  \mu(\pieval(\sx)\mid\sx)}   \tag{by the definition of $\mu(\sa\mid \sx)$} \\
        &= \max_{\phi\in \Phi_h} \frac{\sum_{\phi'\in\Phi_{h-1}}  \agg{d}(\phi') \agg{T}(\phi\mid \phi', \pieval) }{\sum_{\sx\in\phi}\sum_{\sx'\in\gX_{h-1}}\sum_{\sa'\in \gA} \mu(\sx', \sa') T(\sx\mid \sx', \sa') \mu(\pieval(\sx)\mid\sx) } \tag{by the fact that $\mu$ is an occupancy measure} \\
        &\leq  \max_{\phi\in \Phi_h} \frac{\sum_{\phi'\in\Phi_{h-1}}  \agg{d}(\phi') \agg{T}(\phi\mid \phi', \pieval) }{\sum_{\sx\in\phi}\sum_{\sx'\in\gX_{h-1}} \mu(\sx', \pieval(\sx')) T(\sx\mid \sx', \pieval(\sx')) } \cdot \CpushA \\
        &= \max_{\phi\in \Phi_h} \frac{\sum_{\phi'\in\Phi_{h-1}}  \agg{d}(\phi') \agg{T}(\phi\mid \phi', \pieval) }{\sum_{\phi'\in\Phi_{h-1}}\sum_{\sx\in\phi}\sum_{\sx'\in\phi'} \mu(\sx', \pieval(\sx')) T(\sx\mid \sx', \pieval(\sx')) } \cdot \CpushA \\
        &=  \max_{\phi\in \Phi_h} \frac{\sum_{\phi'\in\Phi_{h-1}}  \agg{d}(\phi')  \agg{T}(\phi\mid \phi', \pieval) }{\sum_{\phi'\in\Phi_{h-1}} \left( \sum_{\sx'\in\phi'} \mu(\sx', \pieval(\sx')) \right)\agg{T}(\phi\mid \phi', \pieval)  } \cdot \CpushA   \tag{by the definition of $\agg{T}$} \\
        &=  \max_{\phi\in \Phi_h} \frac{\sum_{\phi'\in\Phi_{h-1}}  \agg{d}(\phi')  \agg{T}(\phi\mid \phi', \pieval) }{\sum_{\phi'\in\Phi_{h-1}}  \agg{\nu}(\phi') \agg{T}(\phi\mid \phi', \pieval)  } \cdot \CpushA  \tag{by the definition of $\agg{\nu}$}  \\
        &\leq \max_{\phi'\in\Phi_{h-1}} \frac{\agg{d}(\phi')}{\agg{\nu}(\phi')} \cdot \CpushA. 
    \end{align*}
    Recursively applying this we get $\max_h \max_{\phi\in\Phi_h} \frac{\agg{d}(\phi)}{\agg{\nu}(\phi)}\leq (\CpushA)^H$. Then by just noticing that $\aggC_\epsilon \leq \max_h \max_{\phi\in\Phi_h} \frac{\agg{d}(\phi)}{\agg{\nu}(\phi)}$ finishes the proof. 
\end{proofof} 

\clearpage

\section{Role of Realizable Value Function Class in Offline RL} \label{app:realizable_function_class}
So far, we have considered offline policy evaluation problems where in addition to an offline data distribution \(\mu\) (that satisfies concentrability), the learner is also given a function class \(\cF\) that contains \(Q^{\pieval}\)---the state-action value function corresponding to the policy \(\pieval\) that the learner wishes to evaluate. To understand the role of value function class in offline RL, in this section, we ask: 

\begin{center}
\begin{minipage}{0.8\textwidth} 
\begin{center}
\textit{Is statistically efficient offline policy evaluation feasible without access to a realizable value function class?} 
\end{center} 
\end{minipage}
\end{center}

We answer this question negatively for both admissible and trajectory data. Our first result in \pref{thm:realizable-H} (below) shows that given only admissible offline data, offline policy evaluation is intractable without a realizable value function class, even when we have bounded pushforward concentrability coefficient. 


\begin{theorem}\label{thm:realizable-H} 
For any positive integer $N$, there exists a class $\mathcal{M}$ of \MDP{}s with shared state space $\calX_N$, action space $\calA = \crl{\aone, \atwo}$ and horizon $H = 3$, a deterministic evaluation policy \(\pi_e\), and an exploration policy \(\piexp\) with $\Pr\prn*{\pieval(\sx) = \pioff(\sx)} \geq \nicefrac{1}{2}$ for all $\sx\in\calX$ such that any algorithm that estimate the value \(V^{\pi_e}(\rho, M)\) up to error $\nicefrac{1}{2}$ for all \MDP{}s \(M \in \cM\) must use $\Omega(N)$ many admissible samples in some \MDP in \(\cM\). 
\end{theorem}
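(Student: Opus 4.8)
The plan is to prove \pref{thm:realizable-H} by an information-theoretic (indistinguishability) argument: I will exhibit a family $\mathcal{M}$ of MDPs that are nearly indistinguishable from $n$ admissible samples whenever $n=o(N)$, yet whose values $V^{\pieval}(\rho;M)$ span an interval of width at least $1$, so that no single estimate can be $\tfrac12$-accurate on all of them. Concretely, I would build an $H=3$ skeleton with a single initial state, a middle layer containing $N$ states whose occupancies under both $\pieval$ and $\piexp$ are (close to) uniform $\Theta(1/N)$, and a terminal reward layer; $\pieval$ plays $\aone$ everywhere and $\piexp$ plays $\pieval$'s action with probability exactly $\tfrac12$ at every state, which immediately gives the required agreement $\Pr(\pieval(\sx)=\piexp(\sx))\ge\tfrac12$ and an action concentrability $\CpushA\le 2$. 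With the uniform middle-layer occupancies and a matched terminal layer, one checks that the pushforward quantities $\CpushX,\CpushA$ of \pref{def:strong-coverability} are $O(1)$.

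The key steps, in order, are: (i) specify the shared skeleton and verify bounded pushforward concentrability together with the $\tfrac12$-agreement condition; (ii) encode a hidden bit into the family so that $V^{\pieval}(\rho;M)$ equals (say) $1$ on half the family and $0$ on the other half, by placing the signal in a \emph{cross-layer} quantity --- the alignment between the layer-$1\!\to\!2$ transition induced by $\pieval$ and a hidden reward labelling of the middle layer --- rather than in any single state's reward; (iii) show that the per-layer admissible sampling laws (the product over $h$ of the laws of $(\sx_h,\sa_h,\sr_h,\sx_{h+1}')$ under $\mu_h=d^{\piexp}_h$) agree across the family except on an event of probability $\Theta(1/N)$ per sample, so that the $\TV$ distance between any two members over $n$ samples is $O(n/N)$; and (iv) invoke Le~Cam's two-point method (or Fano across the $N$ coordinates) to conclude that any estimator incurs error $\ge\tfrac12$ on some $M\in\mathcal{M}$ unless $\TV=\Omega(1)$, i.e.\ unless $n=\Omega(N)$.

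The crux --- and the step I expect to be hardest --- is reconciling the $\Omega(1)$ value gap in (ii) with the detection-hardness in (iii) under bounded pushforward concentrability. Since a state's contribution to $V^{\pieval}(\rho)$ is at most its $\pieval$-occupancy, and concentrability forces $\mu$ to cover any state of $\Omega(1)$ occupancy, the distinguishing signal cannot live at a single rare state; it must be diffuse, and a naive diffuse reward construction is defeated by \emph{imputation} (guessing the prior mean at unobserved states makes the value concentrate). The construction must therefore hide the signal in a joint statistic that admissible data can only access by matching the same middle-layer state in its layer-$1$ role (as an observed next-state) and its layer-$2$ role (as an observed current-state with its reward) --- a cross-layer ``join'' that trajectory data would supply for free via importance sampling but that per-layer admissible data makes rare. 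Carrying out (ii)--(iii) so that all marginal statistics coincide, the value gap stays $\Omega(1)$, every coverage ratio stays $O(1)$, and no imputation shortcut exists, is the delicate part; the remaining Le~Cam/Fano bookkeeping in (iv) is standard. This is also precisely where the contrast with the realizable setting (\pref{thm:main_BVFT_upper_bound}) and with trajectory data enters, explaining why the lower bound is special to agnostic OPE with admissible data.
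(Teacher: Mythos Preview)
Your plan is the paper's own approach, and you have correctly identified the mechanism: a three-layer skeleton with a single start state, a large middle layer split by a hidden balanced partition into halves $\calX_2^1,\calX_2^2$, with $T(\cdot\mid x_1,\aone)=\mathrm{Unif}(\calX_2^1)$ and $T(\cdot\mid x_1,\atwo)=\mathrm{Unif}(\calX_2^2)$; the two MDPs per partition place reward $1$ on $\calX_2^1$ or on $\calX_2^2$, giving $V^{\pieval}(\rho)\in\{0,1\}$. Your diagnosis that the distinguishing signal must be a cross-layer ``join'' (layer-$1$ next-states against layer-$2$ current-states) and cannot be recovered by imputation is exactly right, and the paper makes the same point: since $d_2^{\piexp}=\mathrm{Unif}(\calX_2)$, layer-$2$ samples reveal rewards at states whose partition label is unknown, and only a collision with a layer-$1$ next-state ties the two together.

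The one slip is in the middle-layer size and the resulting $\TV$ count. With $n$ admissible samples you observe at most $n$ layer-$1$ next-states and $n$ independently drawn layer-$2$ current-states; conditional on these two size-$n$ sets being disjoint, the data law (averaged over the uniform prior on the partition) is identical for the two reward hypotheses. But over a middle layer of size $m$ the probability of at least one collision between two such random size-$n$ sets is $\Theta(n^{2}/m)$, not $\Theta(n/m)$: your ``$\TV=O(n/N)$'' claim is off by a factor of $n$, and with $m=N$ states you would only obtain an $\Omega(\sqrt{N})$ lower bound. The paper takes $|\calX_2|=4N^{2}$ precisely so that the collision probability is $O(n^{2}/N^{2})$ and stays below $\tfrac12$ whenever $n=o(N)$. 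With that adjustment your Le~Cam step goes through; the paper argues the collision bound directly rather than through a formal $\TV$ calculation, but the content is the same.
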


\pref{thm:realizable-H} suggests the intractability of offline policy evaluation without a realizable function class since the result holds for any positive integer \(N\).

\begin{remark}
    Note that since $H = 3$ in the construction of \pref{thm:realizable-H}, the property $\Pr\prn*{\pieval(\sx) = \pioff(\sx)} \geq \nicefrac{1}{2}$ indicates that the pushforward concentrability coefficient  $\Cpush\le 8$ w.r.t.~the  admissible distribution $\mu_h(\sx, \sa; M) = d^\piexp(\sx, \sa; M)$  (see \pref{def:strong-coverability}). Thus, using \pref{lem: C and Cpush}, the aggregated concentrability coefficient \(\aggC \leq 8\)  for any aggregation scheme on the underlying MDPs. 
\end{remark} 

On the other hand, under access to a realizable state-action value function class  (\pref{ass: approximate realizability}), BVFT algorithm (\pref{thm:BVFT_upper_bound} + \pref{lem: C and BH})  obtains a sample complexity upper bound of \(O(\poly(2^H, \log(\abs{\cF})))\) which is tractable for \(H = 3\). This highlights the role of a realizable value function class in offline policy evaluation with admissible data. Our next result extends this to trajectory offline data. 


\begin{theorem}\label{thm:realizable-strong}
    There exists a class $\mathcal{M}$ of \MDP{}s with shared state space $\calX$, action space $\calA = \crl{\aone, \atwo}$, and horizon $H$, a deterministic evaluation policy \(\pi_e\) and an exploration policy \(\piexp\) such that the pushforward concentrability coefficient  $\Cpush\le 4$ for the offline distribution  $\mu_h(\sx, \sa; M) = d^{\piexp}_h(\sx, \sa; M)$  (see \pref{def:strong-coverability}). 

    Furthermore, any algorithm that estimate the value \(V^{\pi_e}(\rho, M)\)  up to precision $\nicefrac{1}{2}$ for all \MDP{s}  $M\in\mathcal{M}$ must use  $\Omega(2^H)$ many offline trajectories in some \MDP in \(\cM\). 
\end{theorem}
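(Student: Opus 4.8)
The plan is to give a direct construction of a symmetric two-lane block MDP and to prove the bound by a two-point (Le Cam) information argument, mirroring the way importance sampling attains the matching $(\CpushA)^H = 2^H$ upper bound noted after \pref{thm:BVFT_upper_bound}. Concretely, I would place two latent states $A_h, B_h$ at every layer $h \in [H]$, each lifted to a large block of fresh rich observations, with the initial distribution supported on lane $A$. The evaluation policy $\pieval$ always plays $\aone$, which advances \emph{within} the current lane (spread uniformly over the next block), whereas $\atwo$ sends every state to the balanced mixture $\tfrac12 A_{h+1} + \tfrac12 B_{h+1}$ (again spread over blocks); the behavior policy $\piexp$ plays each action with probability $\tfrac12$, so $\Pr(\pieval(\sx)=\piexp(\sx)) = \tfrac12$. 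Because every one-step transition is spread uniformly over a block whose latent occupancy under $\piexp$ is $\Theta(1)$, I expect $\CpushX \le 2$, and the uniform $\piexp$ gives $\CpushA \le 2$, so that $\Cpush \le 4$ as required (cf.\ \pref{def:strong-coverability}).

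The two hypotheses $M^+$ and $M^-$ are made to differ only in the emission law of the final latent states: $A_H$ emits uniformly on a fixed set $S$ of observations under $M^+$ and uniformly on $S^c$ under $M^-$, with $B_H$ emitting the complementary set, while the reward is the \emph{same} fixed function $r(\sx)=+1$ on $S$ and $r(\sx)=-1$ on $S^c$ in both. Since $\pieval$ stays in lane $A$, this yields $V^\pieval = +1$ under $M^+$ and $V^\pieval = -1$ under $M^-$. The crucial structural claim is that once a trajectory plays $\atwo$ even once its lane becomes an exact $\tfrac12/\tfrac12$ mixture of $A$ and $B$ and stays balanced thereafter (since $\aone$ preserves and $\atwo$ re-balances the lane); hence its layer-$H$ observation is uniform over $S \cup S^c$ under \emph{both} hypotheses, and so is its reward. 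Therefore the law of every trajectory containing at least one $\atwo$ is identical under $M^+$ and $M^-$, and the two instances can be told apart only from the \emph{all}-$\aone$ trajectories, which are the only ones that certify membership in lane $A$ at layer $H$.

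With this in hand the lower bound follows from a routine coupling step: an all-$\aone$ trajectory has probability exactly $2^{-(H-1)}$ under $\piexp$, so coupling the two $n$-sample data distributions to agree on all non-all-$\aone$ trajectories gives $\TV(\text{data}^+_n,\text{data}^-_n) \le n\,2^{-(H-1)}$; since $V^+ - V^- = 2$, any estimator that is $\tfrac12$-accurate on both instances would distinguish them, forcing $n = \Omega(2^H)$. The agnostic assumption (no value-function class) is exactly what makes the reduction to all-$\aone$ trajectories unavoidable: because observations are fresh and unrepeated, the learner has no means to generalize the certified lane-$A$ signal from one observation to another or to ``stitch'' a deviating trajectory back onto $\pieval$'s path, so it cannot beat importance sampling.

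The part I expect to be delicate is maintaining all three requirements simultaneously: the transitions must be spread enough to keep $\Cpush \le 4$ (which forces every visited latent state, including the $\pieval$-path states, to be $\piexp$-well-covered), yet the distinguishing information must stay confined to the exponentially rare all-$\aone$ trajectories. This is precisely why the signal is placed in the \emph{emission labeling} of $A_H$ versus $B_H$ --- a difference invisible to any lane-balanced, hence off-path, trajectory --- rather than in a reward or transition at a rarely visited state; the latter would either be observable from well-covered off-path trajectories (killing hardness) or would require a concentrated transition into a rare state (killing the pushforward bound). Verifying the exact equality of the off-path trajectory laws through the symmetric-lane coupling, and checking the $\Cpush$ bound at the emission-perturbed final layer, are the main technical steps; reducing the construction to the stated binary-action, shared-state-space form is then bookkeeping.
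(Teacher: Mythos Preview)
Your high-level construction --- two lanes, $\aone$ stays in lane, $\atwo$ mixes to $\tfrac12 A+\tfrac12 B$, reward at the final layer --- matches the paper's exactly. But the two-point Le Cam reduction has a gap: your central claim, that every trajectory containing at least one $\atwo$ has identical law under $M^+$ and $M^-$, is false.

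Here is the issue. By your construction, $M^+$ and $M^-$ share the emission scheme at every layer $h<H$; only the layer-$H$ emission is swapped. Hence the lane $L\in\{A,B\}$ of any observation $\sx_{H-1}$ is the \emph{same fixed function} of $\sx_{H-1}$ under both hypotheses. Take a trajectory with some $\atwo$ at a step $j<H-1$ but with $\sa_{H-1}=\aone$ (this event has probability close to $\tfrac12$ under $\piexp$). Conditioned on $\sx_{H-1}$ being in lane $L$, the law of $\sx_H$ under action $\aone$ is $\mathrm{Uniform}(S)$ versus $\mathrm{Uniform}(S^c)$ depending on the hypothesis --- disjoint supports. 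Concretely, for $H=3$ the trajectory $(\sx_1,\atwo,\sx_2,\aone,\sx_3)$ contains an $\atwo$, yet $\sx_3\mid\sx_2$ has disjoint support under $M^+$ and $M^-$. So the single-trajectory $\TV$ is $\Omega(1)$, not $2^{-(H-1)}$. Your marginal statement (the lane at $H$ is balanced after an $\atwo$, so the marginal of $\sx_H$ is uniform) is correct, but TV is controlled by the \emph{joint} law of $(\sx_{H-1},\sx_H)$, not the marginal of $\sx_H$.

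The paper avoids this by abandoning the two-point comparison and using a large class $\cM=\bigcup_\phi\{M^{(1)}_\phi,M^{(2)}_\phi\}$, where $\phi$ ranges over \emph{all} equi-sized partitions of \emph{every} intermediate layer $\cX_2,\dots,\cX_H$ (with $|\cX_h|=4N^2$, $N=2^H$). Randomizing the partition at every layer is exactly what makes the lane of any post-$\atwo$ observation information-theoretically unknown, so that only the all-$\aone$ trajectory certifies lane $1$ at layer $H$. Your ``fresh observations, no stitching'' intuition is the right picture; it just has to be enforced by varying the partition across the class rather than fixing it in a two-point comparison.
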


The above shows that agnostic offline policy evaluation is not statistically tractable even when given trajectory offline data. On the other hand, recall that under access to a realizable state-action value function class (\pref{ass: approximate realizability}) and bounded pushforward concentrability coefficient, the  BVFT algorithm in \citet{xie2021batch} enjoys a \(\poly(\Cpush, H, \log(\abs{\cF}), \nicefrac{1}{\epsilon})\) sample complexity (even without access to trajectory data). 

\begin{remark} For the lower bound \MDP construction in \pref{thm:realizable-strong}, recall that \(\pioff(\sx) = \mathrm{Uniform}(\cA)\) for any \(\sx \in \cX\). Thus, given trajectory data, the classical importance sampling algorithm from \citet{kearns1999approximate, agarwal2019reinforcement} can evaluate the value of \(\pieval\) upto precision \(\epsilon\) after collecting $\mathcal{O}\prn*{\frac{2^H}{\epsilon^2}}$ many offline trajectories from \(\pioff\).  
\end{remark}


\subsection{Proof of Lower Bounds}  

\begin{figure}[h!]
    \centering
    \includegraphics[width=0.9\textwidth]{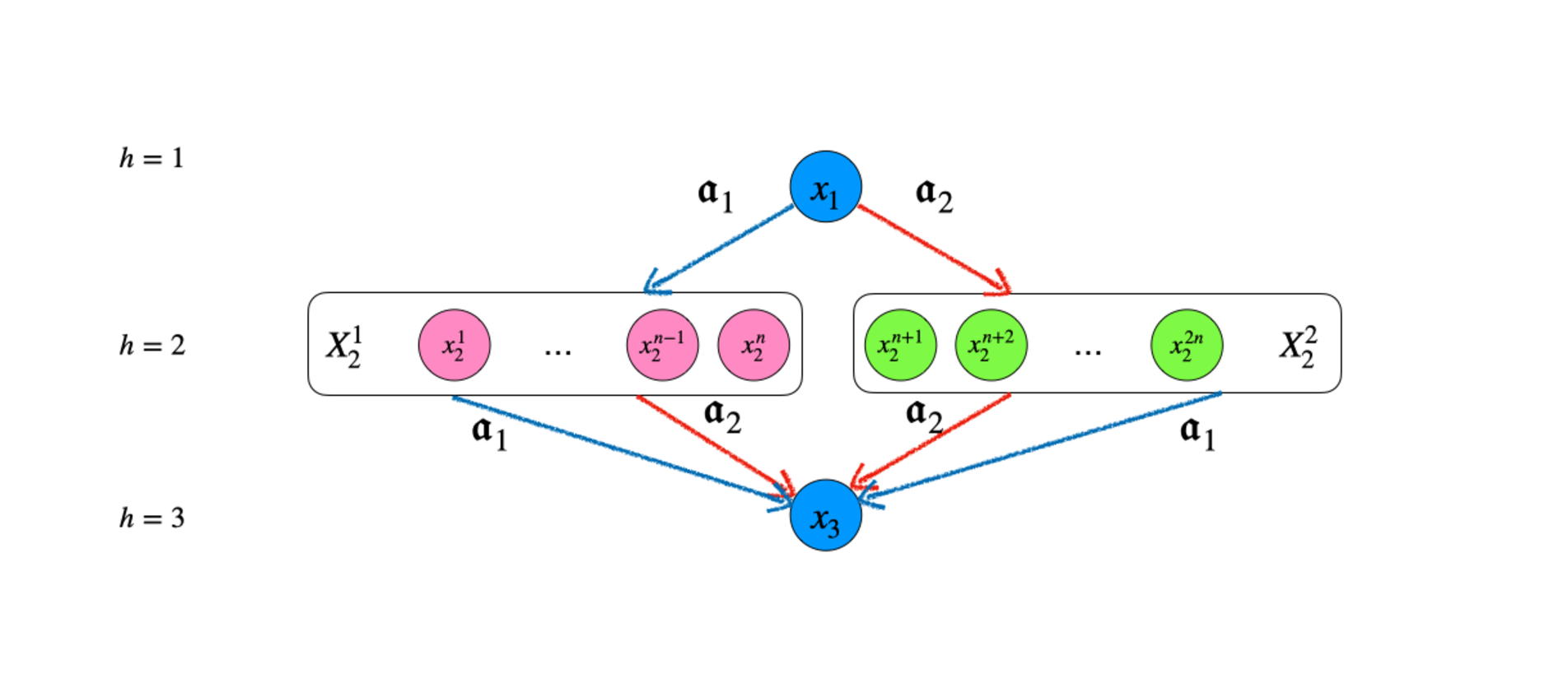} 
    \caption{Lower bound construction in  \pref{thm:realizable-H}. The {blue} arrows represent the transitions under the action $\aone$, and the {red} arrows represent the transitions under the action $\atwo$. In the middle layer, the arrows to the blocks \(\cX_2^1\) and \(\cX_2^2\) denote uniform transitions to the states within those blocks.} 
    \label{fig: 3layer}
\end{figure}

To avoid redundancy, we only provide an informal construction of the lower bound here. A formal lower bound construction can be obtained by following arguments similar to that in \pref{app:proof1}. 

\begin{proofof}[\pref{thm:realizable-H}] Let \(N\)  be a positive integer, and consider a state space \(\cX = \cX_1 \cup \cX_2\cup \cX_3\) where \(\cX_1 = \crl{\sx_1}\), $\cX_3 = \{\sx_3\}$ and \(\cX_2\) is of size \(4N^2\). Consider a partition \(\phi\) of \(\cX_2\) that divides it into two parts \(\cX_2^1\) and \(\cX_2^2\), each of size \(2N^2\). For any such partition \(\phi\), we define two MDPs \(M\ind{1}\) and \(M\ind{2}\), where the \(M_\phi\ind{i} = MDP(\cX, \cA, H, T\ind{i}, r\ind{i}, \rho)\) is defined such that (this construction can be viewed in \pref{fig: 3layer}):   
\begin{enumerate}[label=\(\bullet\)]
    \item Horizon \(H=3\), action space \(\cA = \crl{\aone, \atwo}\), and initial distribution \(\rho = \delta_{\sx_1}(\cdot)\). 
    \item Transition dynamics \(T\ind{i}\) is defined such that $T\ind{i}(\cdot \mid \sx_2) = \delta_{\sx_3}(\cdot)$ for any $\sx_2\in\calX_2$, and
    \begin{align*}
T\ind{i}(\cdot \mid \sx_1, \sa)  = \begin{cases}
    \mathrm{Uniform}(\cX_2^1) &\text{if} \quad \sa= \aone,\\
    \mathrm{Uniform}(\cX_2^2) &\text{if} \quad \sa= \atwo.
\end{cases}
\end{align*}

\item Reward function is defined for any $\sa\in \{\aone, \atwo\}$,
\begin{align*}
r\ind{i}(\sx, \sa) = \begin{cases} 
    0 &\text{if} \quad \sx = \sx_1,\\ 
    1 &\text{if} \quad \sx \in \cX\ind{i}_2,\\
    0 &\text{otherwise}.
\end{cases}
\end{align*}
\end{enumerate}

We thus define the class \(\cM\) as 
\begin{align*}
\cM = \bigcup_{\phi \in \Phi} (M\ind{1}_\phi, M\ind{2}_\phi),  
\end{align*}
where \(\Phi\) denotes the set of all feasible partitions for \(\cX_2\) into \(\cX_2^1\) and \(\cX_2^2\) of the same size, and satisfies \(\abs{\Phi} = 2^{O(N\log(N))}\). 

We further define the evaluation policy \(\pieval\) and \(\piexp\) such that for all \(\sx \in \cX\), 
\begin{align*}
\pieval(\sx) = \delta_{\aone}(\cdot), \qquad \text{and} \qquad \piexp(\sx) = \mathrm{Uniform}(\crl{\aone, \atwo}). 
\end{align*} 
Then for any $M\in\cup_{\phi\in\Phi}\mathcal{M}\ind{1}$, we have $V^\pieval(\rho; M) = 1$ and for any $M\in\cup_{\phi\in\Phi}\mathcal{M}\ind{2}$, we have $V^\pieval(\rho; M) = 0$. Hence if the algorithm cannot tell whether $M\in\cup_{\phi\in\Phi}\mathcal{M}\ind{2}$ or $M\in\cup_{\phi\in\Phi}\mathcal{M}\ind{2}$, then the algorithm must fail to output $\nicefrac{1}{2}$-accurate estimation in at least one case among $\mathcal{M}$.

The key intuition behind the proof is that given some dataset \(\cD_h = \crl{(\sx_h, \sa_h, r_h, \sx_{h+1})}\), the learner can not identify which states belong to \(\cX_2^1\) vs \(\cX_2^2\) in the second layer. Since, the reward model depends on whether \(\sx \in \cX_2^1\) or \(\sx \in \cX_2^2\), inability to identify the partition \(\phi\) which split \(\cX\) into \(\cX_2^1\) and \(\cX_2^2\), will lead to an error in evaluating \(V^{\pieval}(\rho; M)\) with probability \(\nicefrac{1}{2}\) since we consider \(\Phi\) to be the set of all possible partitions of equal size. 

To see the above, note that for any $M\in\mathcal{M}$, the marginal occupancy measure at layer $2$ is 
    $$d_2^\piexp(\cdot; M) = \mathrm{Uniform}(\cX_2 \times \{\aone, \atwo\}).$$
    Hence, samples $(\sx_2, \sa_2, r_2, \sx_3)$ where $\sx_2 \in \cX_2$ cannot provide useful any information unless we know whether $\sx_2 \in \cX_2^1$ or \(\cX_2^2\). However, while collecting admissible samples of the second layer, the distribution we samples from is $d^\piexp_2(\cdot; M)$, i.e. $\mathrm{Uniform}(\calX_2)$, which reveals no information on $\calX_2^1$ and $\calX_2^2$ unless some state $\sx_2$ appears both in samples $(\sx_1, a_1, r_1, \sx_2)$ and $(\sx_2, a_2, r_2)$. According to our choice of $N$, this happens with probability at most 
    $$1 - \prod_{i=1}^{2N}\left(1 - \frac{2N}{4N^2}\right)^{N}\le \frac{1}{2}.$$
    Hence any algorithm must fail to output $\nicefrac{1}{2}$-accurate estimation of $V^\pieval(\rho; M)$ in at least one $M\in\mathcal{M}$ with probability at least $\nicefrac{1}{2}$.
\end{proofof} 


\begin{figure}[h!]
    \centering
    \includegraphics[width=0.85\textwidth]{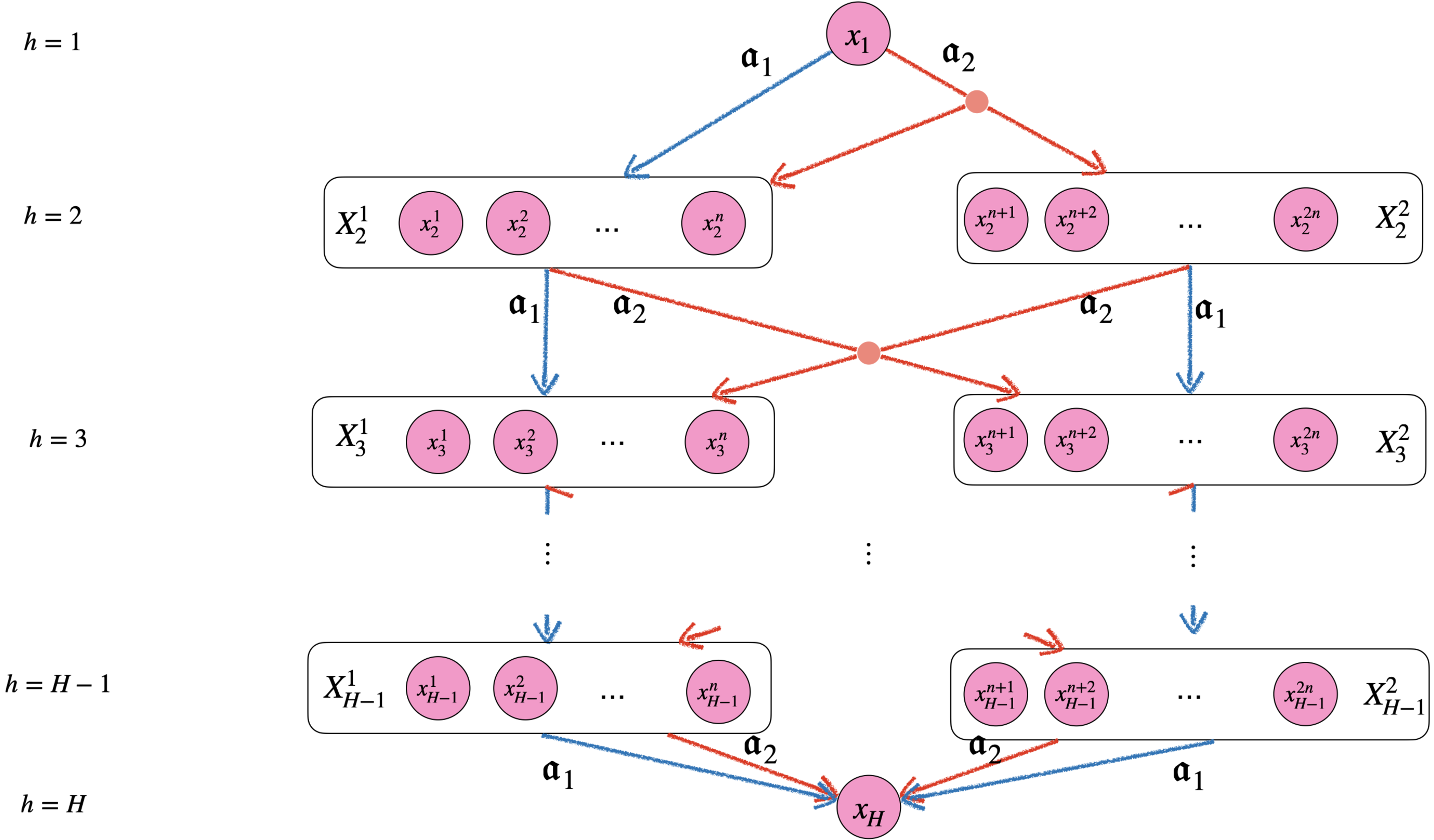} 
    \caption{Lower bound construction in  \pref{thm:realizable-strong}. The {blue} arrows represent the transitions under the action $\aone$, and the {red} arrows represent the transitions under the action $\atwo$. In layers for \(h = 2\) to \(H-1\), the arrows to the blocks  \(\cX_h^1\) and \(\cX_h^2\) 
 denote uniform transitions to the states within those blocks.} 
    \label{fig: Hlayer}
\end{figure} 

\begin{proofof}[\pref{thm:realizable-strong}] 
    Let $N = 2^H$ and consider a state space \(\cX = \cup_{h=1}^H \cX_h\) where \(\cX_1 = \crl{\sx_1}\) and \(\cX_2, \cdots, \cX_H\) are of size \(4N^2\) each. Consider $\phi$ to be a partition of \(\cX_2, \cX_3, \cdots, \cX_H\) that divides each $\cX_h$ ($2\le h\le H$) into two parts \(\cX_h^1\) and \(\cX_h^2\), each of size \(2N^2\). For any such \(\phi\), we define two MDPs \(M\ind{1}\) and \(M\ind{2}\), where the \(M_\phi\ind{i} = MDP(\cX, \cA, H, T\ind{i}, r\ind{i}, \rho)\) is defined such that (this construction can be viewed in \pref{fig: Hlayer}):   
    \begin{enumerate}[label=\(\bullet\)]
        \item Horizon \(H\), action space \(\cA = \crl{\aone, \atwo}\), and initial distribution \(\rho = \delta(\sx_1)\). 
        \item Transition dynamics \(T\ind{i}: \cX_h \times \cA \mapsto \cX_{h+1}\) is defined such that 
        \begin{align*}
            T\ind{i}(\cdot \mid \sx_1, \sa)  = \begin{cases}
                \mathrm{Uniform}(\cX_2^1) &\text{if} \quad \sa= \aone,\\
                \mathrm{Uniform}(\cX_2^1\cup\cX_2^2) &\text{if} \quad \sa= \atwo,
            \end{cases}
        \end{align*}
        and for $\sx\in \calX_h^1$ with $h\in [H]$,
        \begin{align*}
            T\ind{i}(\cdot \mid \sx, \sa)  = \begin{cases}
                \mathrm{Uniform}(\cX_2^1) &\text{if} \quad \sa= \aone,\\
                \mathrm{Uniform}(\cX_2^1\cup\cX_2^2) &\text{if} \quad \sa= \atwo,
            \end{cases} 
        \end{align*}
        for $\sx\in \calX_h^2$ with $h\in [H]$,
        \begin{align*}
            T\ind{i}(\cdot \mid \sx, \sa)  = \begin{cases}
                \mathrm{Uniform}(\cX_2^2) &\text{if} \quad \sa= \aone,\\
                \mathrm{Uniform}(\cX_2^1\cup\cX_2^2) &\text{if} \quad \sa= \atwo.
            \end{cases}
        \end{align*}
    
    \item Reward function: for any $\sa\in\{\aone, \atwo\}$, 
    \begin{align*}
    r\ind{i}(\sx, a) = \begin{cases} 
        1 &\text{if} \quad \sx \in \cX\ind{i}_H,\\ 
        0 &\text{otherwise}.
    \end{cases}
    \end{align*}
    \end{enumerate}
    We thus define the class \(\cM\) as 
    \begin{align*}
    \cM = \bigcup_{\phi \in \Phi} (M\ind{1}_\phi, M\ind{2}_\phi),  
    \end{align*}
    where \(\Phi\) denotes the set of all feasible partitions for \(\cX_2, \cdots, \cX_H\) into \(\cX_h^1\) and \(\cX_h^2\) of the same size. We further define the evaluation policy \(\pieval\) and \(\piexp\) such that for all \(\sx \in \cX\), 
    \begin{align*}
    \pieval(\sx) = \delta_{\aone}(\cdot), \qquad \text{and} \qquad \piexp(\sx) = \mathrm{Uniform}(\crl{\aone, \atwo}). 
    \end{align*} 

    For any \MDP $M\in\mathcal{M}$, we observe that the occupancy measure $d_h^\piexp(\cdot; M)$ is
    $$d_h^\piexp(\cdot; M) = \mathrm{Uniform}(\calX_h\times \mathcal{A}).$$
    Hence it is easy to verify that the strong coverability coefficient? of any instances in $\mathcal{M}$ is upper bounded by $4$. Additionally, we also have for any \MDP $M\in\bigcup_{\phi\in\Phi}M_\phi\ind{1}$, $V^\pieval(\rho; M) = 1$ and for any $M\in\bigcup_{\phi\in\Phi}M_\phi\ind{1}$, $V^\pieval(\rho; M) = 0$. 
        
    The only way to tell whether a case $M\in\bigcup_{\phi\in\Phi}M_\phi\ind{1}$ or $M\in\bigcup_{\phi\in\Phi}M_\phi\ind{2}$ is through the reward function in the last layer. However, if action $\atwo$ is taken in any step within the whole trajectory, the last layer distribution will be $\mathrm{Uniform}(\calX_H)$, which will induce the same reward distribution no matter the \MDP is $M_\phi\ind{1}$ or $M_\phi\ind{2}$. 
    
    Hence as long as none of the trajectory collected takes only action $1$ among the trajectory, the learner will fail to output $\nicefrac{1}{2}$-accurate estimation in at least one \MDP in $\mathcal{M}$ with probability at least $\nicefrac{1}{2}$. And using $o(2^H)$ trajectories, the learner will fail to output $\nicefrac{1}{2}$-accurate estimation in at least one \MDP in $\mathcal{M}$ with probability at least $\nicefrac{1}{2} - o(2^H)\cdot \nicefrac{1}{2^H}\ge \nicefrac{1}{4}$.
\end{proofof}

\end{document}